%% file: main.tex
\documentclass[opre]{informs3}

\makeatletter
\providecommand{\proofname}{Proof}
\def\proof{\@ifnextchar[{\@orbproof}{\@orbproof[\proofname]}}
\def\@orbproof[#1]{\Trivlist
  \item[\hspace*{1em}\hskip\labelsep{\itshape #1.\enskip}]\ignorespaces}

\def\appendix{\par\setcounter{section}{0}\setcounter{subsection}{0}\gdef\thesection{\@Alph\c@section}}
\makeatother

\usepackage{amsmath,amssymb,amsfonts,bm,mathtools}
\usepackage{xcolor}
\usepackage{enumitem}
\usepackage{microtype}
\usepackage{multirow}
\usepackage{booktabs}
\usepackage{makecell}
\usepackage{algorithm}
\usepackage{algorithmic}
\usepackage{pifont}          \usepackage{natbib}
\usepackage[colorlinks=true,linkcolor=blue,citecolor=blue,urlcolor=blue]{hyperref}

\bibpunct[, ]{(}{)}{,}{a}{}{,}\def\bibfont{\small}
\MANUSCRIPTNO{}
\allowdisplaybreaks[4]

\theoremstyle{TH}
\newtheorem{theorem}{Theorem}[section]
\newtheorem{lemma}[theorem]{Lemma}
\newtheorem{corollary}[theorem]{Corollary}
\newtheorem{proposition}[theorem]{Proposition}

\newtheorem{definition}[theorem]{Definition}

\newtheorem{remark}[theorem]{Remark}
\numberwithin{equation}{section}

\newcommand{\cmark}{\ding{51}}
\newcommand{\xmark}{\ding{55}}

\definecolor{tianqin}{RGB}{0, 139, 139}
\definecolor{luolan}{RGB}{199, 21, 133}
\definecolor{stanford}{HTML}{81221c}

\let\be\relax
\let\bi\relax
\let\argmin\relax

\newcommand{\bc}{\bm{c}}

\newcommand{\be}{\bm{e}}

\newcommand{\bg}{\bm{g}}

\newcommand{\bi}{\bm{i}}

\newcommand{\bq}{\bm{q}}
\newcommand{\br}{\bm{r}}

\newcommand{\bu}{\bm{u}}
\newcommand{\bv}{\bm{v}}
\newcommand{\bw}{\bm{w}}
\newcommand{\bx}{\bm{x}}

\newcommand{\bz}{\bm{z}}

\newcommand{\Ab}{\mathbf{A}}

\newcommand{\bA}{\bm{A}}
\newcommand{\bB}{\bm{B}}
\newcommand{\bC}{\bm{C}}
\newcommand{\bD}{\bm{D}}
\newcommand{\bE}{\bm{E}}
\newcommand{\bF}{\bm{F}}
\newcommand{\bG}{\bm{G}}
\newcommand{\bH}{\bm{H}}
\newcommand{\bI}{\bm{I}}

\newcommand{\bL}{\bm{L}}
\newcommand{\bM}{\bm{M}}

\newcommand{\bP}{\bm{P}}
\newcommand{\bQ}{\bm{Q}}
\newcommand{\bR}{\bm{R}}
\newcommand{\bS}{\bm{S}}

\newcommand{\bW}{\bm{W}}

\newcommand{\bZ}{\bm{Z}}

\newcommand{\cA}{\mathcal{A}}

\newcommand{\cC}{\mathcal{C}}
\newcommand{\cD}{\mathcal{D}}
\newcommand{\cE}{\mathcal{E}}
\newcommand{\cF}{\mathcal{F}}

\newcommand{\cI}{\mathcal{I}}
\newcommand{\cJ}{\mathcal{J}}
\newcommand{\cK}{\mathcal{K}}
\newcommand{\cL}{\mathcal{L}}
\newcommand{\cM}{\mathcal{M}}
\newcommand{\cN}{\mathcal{N}}
\newcommand{\cO}{\mathcal{O}}
\newcommand{\cP}{\mathcal{P}}
\newcommand{\cQ}{\mathcal{Q}}
\newcommand{\cR}{\mathcal{R}}

\newcommand{\cT}{{\mathcal{T}}}

\newcommand{\cV}{\mathcal{V}}

\newcommand{\cZ}{\mathcal{Z}}

\newcommand{\EE}{\mathbb{E}}

\newcommand{\PP}{\mathbb{P}}
\newcommand{\QQ}{\mathbb{Q}}
\newcommand{\RR}{\mathbb{R}}

\newcommand{\E}{{\mathbb E}}

\newcommand{\bgamma}{\bm{\gamma}}

\newcommand{\bmu}{\bm{\mu}}

\newcommand{\argmin}{\mathop{\mathrm{argmin}}}

\DeclareMathOperator{\Var}{{\rm Var}}

\DeclareMathOperator{\Cov}{\rm Cov}

\newcommand{\Tr}{\mathop{\text{tr}}\kern.2ex}

\mathtoolsset{showonlyrefs}

\TITLE{Towards Optimal Inventory Control under Censored Demand: A Biased Sample-Average Approximation Approach}
\RUNTITLE{Inventory Control under Censored Demand}
\RUNAUTHOR{Han, Fan, Zhang, and Zhou}

\ARTICLEAUTHORS{\AUTHOR{Yuxuan Han, \quad Xiaoyu Fan, \quad Jiawei Zhang, \quad Zhengyuan Zhou}
\AFF{Stern School of Business, New York University \\
\texttt{\{yh6061, fx2087, jz31, zzhou\}@stern.nyu.edu}}
}

\ABSTRACT{We study data-driven multi-period lost-sales inventory control under censored demand, where a stockout reveals only that demand exceeded the stocking level. We develop a unified, model-based framework for policy learning from censored data, built on a new cost decomposition for base-stock policies and a biased sample-average approximation (SAA) approach. The cost decomposition allows us to propose a new coverage condition under which censored observations are informative enough for sample-efficient policy learning. Guided by this coverage condition, we design two biased SAA algorithms: an upper-biased one that achieves near-optimal sample complexity under the offline coverage condition, and a lower-biased one that actively generates the required coverage and achieves near-optimal regret online. More broadly, this biased SAA approach provides a general principle for implementing pessimism and optimism under censored feedback, which may be of independent interest.}

\KEYWORDS{inventory control; censored demand; sample-average approximation}

\begin{document}
\maketitle

\section{Introduction}

Multi-period inventory control with possibly time-changing costs and demands is a classic problem in operations management and supply chain theory, with broad practical importance and a long research tradition~\citep{stevens1989integrating,zipkin2000foundations,chen2012pricing}.

Under standard demand-independence assumptions, inventory control problems can be formulated as Markov decision processes (MDP), and the optimal policies can be computed via the dynamic programming (DP) approach~\citep{scarf1960optimality,zipkin2008structure,bertsekas2012dynamic}.
In many real-world settings, however, the demand distribution is unknown and must be learned from data. This motivates the study of data-driven inventory control, which seeks to design near-optimal replenishment policies from historical data in which demand observations are often censored.

The underlying MDP structure of inventory control has led to substantial recent progress in data-driven inventory learning, enabling the adaptation of reinforcement learning techniques and the analysis of their sample-complexity guarantees~\citep{cheung2019sampling,halman2020provably,qin2023sailing,xie2024vc}.
In particular, the recent analysis of \citet{xie2024vc} establishes the sharpest known sample-complexity guarantees by exploiting structure specific to inventory problems that generic MDP arguments do not capture.
However, most existing works in this line focus on the \emph{uncensored} observation model, in which the learner always observes the full demand realization regardless of the corresponding order-up-to level.
This model is often unrealistic: stockouts typically make the realized demand unobservable, so that the informativeness of a dataset depends on the replenishment strategy under which it was collected. 

Switching from uncensored to censored observations in an offline dataset changes the statistical nature of the problem fundamentally. 
Under censored observations, a sample collected at censoring level $y$ reveals the realized demand only when demand is below $y$; otherwise, it reveals only that demand is at least $y$.
Each sample is thus informative only about demand below the level at which it was collected. Because different samples are collected at different censoring levels, the data inform policy learning only about the demand range these levels cover. This raises two challenges absent from the uncensored setting: characterizing when this range suffices for learning the optimal policy, and designing algorithms that exploit the available censored information efficiently. These challenges prevent existing uncensored analyses, including the sharp guarantees of \citet{xie2024vc}, from carrying over directly to the censored setting.

To the best of our knowledge, \citet{qin2023sailing} is the only prior work
that studies censored feedback in the \emph{offline} multi-period inventory
setting, where the learner must act on a fixed, pre-collected dataset. A central message of their analysis is an impossibility result: for any algorithm and non-trivial censoring level, there exist problem instances for which demand censoring induces an $\Omega(1)$ learning error.
While this negative result highlights the difficulty of policy learning under
censored observations, its worst-case perspective does not identify the
instance-dependent condition that actually governs policy learning. This
motivates two questions for the offline policy learning beyond the worst case:
\begin{quote}
\textit{Question~1: For a fixed problem instance, what coverage condition on the observed censoring levels suffices to consistently learn its optimal policy?}

\noindent\textit{Question~2: When this condition holds, which algorithm attains the
optimal sample complexity?}
\end{quote}

The two questions above concern a fixed, pre-collected dataset produced by a
behavior policy taken as given. Answering Question~1 would tell us which
censored datasets are informative enough for policy learning, and immediately
raises an \textit{online policy-design} counterpart: can such a condition be
satisfied by data collected from a cost-aware adaptive policy that actively
interacts with the inventory system? This is not a routine
extension of the offline problem, because a single adaptive data-collection
policy now plays two roles at once. Its trajectory forms the dataset from which
demand information is learned; but the same trajectory also incurs real
inventory cost as it unfolds. These two roles can pull in opposite directions:
collecting more informative observations may require stocking higher to reveal
demand at more levels, yet higher stocking incurs holding cost and worsens the
policy's own performance. This motivates the following question:
\begin{quote}
\textit{Question~3: Can a single adaptive policy generate a dataset
satisfying the condition of Question~1 while achieving low online regret?}
\end{quote}

In this work, we answer these three questions within a unified, model-based framework for policy learning under censored demand. For the offline setting, we identify an instance-dependent coverage condition on the censoring levels of the dataset that is sufficient for consistent learning, and we give an algorithm that attains the optimal sample complexity whenever this condition holds. For the online setting, we design an adaptive algorithm that achieves optimal regret while ensuring its collected data satisfy the same coverage condition.
Conceptually, these results rely on two new technical ingredients: an intentionally biased, one-sided sample-average approximation (SAA) of demand distributions, whose perturbation direction is chosen according to the task; and a structural cost decomposition tailored to base-stock policies, which serves as the common backbone of our analysis by decomposing the policy gap into local errors on the parts of the demand distribution that are decision-relevant.

\subsection{Contributions}

In this work, we study a $T$-period lost-sales inventory problem under bounded demand distributions, with the holding
and lost-sales penalty coefficients $h_t,b_t$. We denote by $C_1^\pi(x)$ the total cost of a policy $\pi$ with starting inventory level $x$ and $C_1^\star(x) = C_1^{\pi^\star}(x)$ the optimal cost benchmark. In most of this work, we assume a discrete demand setting for simplicity: at each period $t$, the
demand distribution $P_t$ with CDF $F_t$ is supported on $[M]_+ := \{0,1,\ldots,M\}$; we extend our analysis to the general demand setting in Section~\ref{sec: general-distribution}. We summarize our main contributions as follows.

\medskip\noindent\textbf{A derivative-based cost decomposition for base-stock policies.}
Our first contribution is a cost-decomposition identity, detailed in Theorem~\ref{thm: cost-decomposition}, tailored to the derivative structure of base-stock policies. 
The key structural fact is that a base-stock level is determined by the sign of the marginal order-up-to value, $D_t^\star(j):=W_t^\star(j+1)-W_t^\star(j)$, rather than by the value function itself. 
We therefore analyze how these marginal values, and their estimation errors, propagate through the demand-induced inventory dynamics. This leads to a deterministic decomposition of the policy gap into localized errors of the estimated demand model, instead of reducing it to the policy evaluation gap as in generic RL analysis \citep{li2024breaking,ren2021nearly,xiong2022nearly}.

A remarkable corollary of this cost-decomposition identity, detailed in Corollary~\ref{coro: uniform-convergence}, is the uniform-CDF-error reduction
\begin{equation}\label{eq: uniform-reduction-contribution}
C_1^{\hat\pi}(x) - C_1^{\pi^\star}(x) 
\lesssim 
(h_\infty + b_\infty) M T \cdot 
\sup_{1 \leq t \leq T}  \lVert \bF_t - \hat\bF_t\rVert_\infty,
\end{equation}
up to a lower-order term in the uniform CDF error. Although censored feedback is the main focus of this paper, this corollary alone, when applied to the uncensored setting, already illustrates why the decomposition is powerful.

First, under non-stationary demand, applying the standard period-wise SAA estimator to uncensored observations recovers the sharp rate $\tilde{\cO}(TM/\sqrt{N})$ of \citet{xie2024vc} up to logarithmic terms when $N$ trajectories of demands are observed. 
Our proof, however, obtains this rate by directly tracking error propagation through the model-based DP, rather than through a VC-dimension analysis of the policy class. 
To the best of our knowledge, this is the first direct model-based proof of the sharp sample-complexity rate for this multi-period inventory setting.

Second, under stationary demand, the estimator-agnostic nature of the decomposition allows us to aggregate samples across periods and obtain the sharper policy-learning rate $\tilde{\cO}(M\sqrt{T/N})$.
Notably, this rate is strictly below the $\Omega(MT/\sqrt N)$ lower bound for evaluating the optimal cost $C_1^{\pi^\star}(x)$.
This has two implications.
Conceptually, it shows that in this structured inventory class, learning a near-optimal policy can be \textit{strictly easier} than evaluating the optimal policy, which is not true for general RL \citep{ren2021nearly}.
Methodologically, it rules out previous proof strategies that reduce the policy-learning gap to policy evaluation error, as in \citet{qin2023sailing} or general RL works \citep{li2024breaking,ren2021nearly,xiong2022nearly,sidford2018near}: any such route inherits the $\Omega(MT/\sqrt N)$ policy evaluation barrier and therefore cannot attain the sharper $\tilde{\cO}(M\sqrt{T/N})$ policy learning rate.
Obtaining the sharp rate requires a different analytical machinery, which is precisely what our derivative-based cost-decomposition identity provides.

Finally, \citet{ganggang2024all} consider an asymptotic reduction result, as in~\eqref{eq: uniform-reduction-contribution}, for the \textit{optimal policy evaluation} task under possibly unbounded, continuous demand distributions with additional regularity assumptions. Under our notation, their result reads as
    $\cO_p\big(
T^2\sum_{t=1}^T
        \lVert\bF_t-\widehat{\bF}_t\rVert_\infty
    \big)$ 
 with cost factors and regularity factors treated as constants. 
In contrast, our result~\eqref{eq: uniform-reduction-contribution} provides a deterministic reduction for policy sub-optimality in the bounded-demand setting, with an improved dependence on $T$.

The uniform reduction~\eqref{eq: uniform-reduction-contribution} is, however, only a weakened form of the identity and is not the mechanism behind our censored guarantees. 
Under censored observations, estimation precision is highly non-uniform across demand levels, and the $\sup$-norm bound loses precisely the coverage information that drives the censored rate. 
The offline and online analyses below therefore work directly with Theorem~\ref{thm: cost-decomposition} instead of Corollary~\ref{coro: uniform-convergence}, and pair it with task-specific biased CDF estimators.

\begin{table}[t]
\centering
\small
\renewcommand{\arraystretch}{1.5}
\resizebox{0.9\textwidth}{!}{
\begin{tabular}{clccc}
\toprule
\textbf{Setup} &\textbf{Work} & \makecell{\textbf{Stationary}\\\textbf{demand}} & \makecell{\textbf{Censored}\\\textbf{observations}} & \makecell{\textbf{Leading-order}\\
\textbf{Result}} \\
\midrule
\multirow{4}{*}{\makecell{\textbf{Offline}\\\textbf{sub-optimality gap}\\
\textbf{with $N$ trajectories}}} & \citet{qin2023sailing} & \xmark & \xmark & $\cO(MT^{3/2}/\sqrt{N})$ \\
& \citet{xie2024vc} & \xmark & \xmark & $\cO({MT}/{\sqrt{N}})$ \\
& \textbf{This work} & \xmark & \textcolor{blue}{\cmark} & $\widetilde{\cO}(MT/\sqrt{\mathcal{N}^\star})$ \\
& \textbf{This work} & {\cmark} & \textcolor{blue}{\cmark} & $\widetilde{\cO}(MT/\sqrt{\mathcal{N}_{\mathsf{agg}}^\star})$ \\
\midrule
\multirow{2}{*}{\makecell{\textbf{Online}\\\textbf{episodic regret}\\\textbf{over $K$ episodes}}} & \textbf{This work} & \xmark & \textcolor{blue}{\cmark} & $\widetilde\Theta(MT\sqrt{K})$ \\
& \textbf{This work} &{\cmark} & \textcolor{blue}{\cmark} & $\widetilde\Theta(M\sqrt{KT})$ \\
\bottomrule
\end{tabular}}
\caption{Comparison of sub-optimality and regret guarantees for learning multi-period inventory policies.
$\cN^\star$ and $\cN_{\mathsf{agg}}^\star$ denote the effective sample sizes and its time-aggregated version under censored observations, where $\cN^\star \geq N$ and $\cN_{\mathsf{agg}}^\star \geq NT$ in the uncensored setting.}
\label{tab:offline-guarantees}
\end{table}

\medskip\noindent\textbf{Optimal offline policy learning with upper-biased SAA.}
We now turn to the offline censored setting, where the learner is given a fixed dataset $\cD:=\{(y_t^k, \bar d_t^k)\}_{t=1,k=1}^{T,N}$ of $N$ censored trajectories collected through any adaptive process. At each period $t$ of each trajectory $k$, $y_t^k$ is the realized order-up-to level, equivalently the post-ordering inventory level, and $\bar d_t^k = \min\{y_t^k, d_t^k\}$ is the censored demand observation.

For Question~1, we show that the learning error in the offline setting can be captured by an instance-dependent effective sample size
$$\cN^\star
:=\bigg(
\frac{1}{MT}{\sum\nolimits_{t \in [T],\, j \in [M-1]_+}\dfrac{\PP(y_t^\star > j \mid x_1 = 0)}{N_{t,j}\vee 1}} \bigg)^{-1},
\qquad
N_{t,j} := \sum\nolimits_{k=1}^N \bm{1}\{y_t^k > j\},
$$with the convention $a/0 = +\infty$. Here $N_{t,j}$ counts the trajectories whose period-$t$ order-up-to level exceeds $j$---precisely those that reveal whether demand falls below $j$---while $\PP(y_t^\star > j \mid x_1 = 0)$ measures how often the optimal policy requires information above level $j$. Thus $\cN^\star$ captures the relevant coverage condition for censored feedback: the dataset need not cover all demand levels uniformly, but it must cover the demand coordinates that matter for the optimal policy.

For Question~2, we propose DP-UCB, which constructs an \textit{upper-biased} SAA estimator of the demand CDF from $\cD$ and then solves the DP under this biased model. The upward CDF bias can be viewed as a model-level analogue of the \textit{pessimistic principle} in offline RL: by acting under a conservative model for poorly covered regions, the learner avoids relying on parts of the problem that the data cannot certify \citep{xiong2022nearly, rashidinejad2021bridging,li2024settling,jin2025pessimism}. 
In offline RL, pessimism is typically implemented at the level of value or Q-functions, by penalizing state-action pairs with limited coverage. 
In contrast, we propose a different implementation due to the nature of censored feedback: the informativeness of the dataset about $F_t(j)$ is non-increasing in $j$, since fewer trajectories satisfy $y_t^k > j$ at higher demand levels. 
Pessimism therefore acts on the demand model itself rather than on the value function: we show that inflating the CDF upward induces a downward bias in the resulting base-stock levels, pushing the policy toward lower demand coordinates where censored observations are more informative.

Technically, this shift in where pessimism is applied requires a different analysis. Since the bias is placed on the CDF rather than on a value or Q-function, standard pessimistic-value analyses do not directly translate. 
Our cost-decomposition identity provides such missing bridge: combined with the one-sided policy bias, it turns the model-level CDF errors into a cost-gap bound weighted by the optimal-policy coverage $\PP(y_t^\star > j \mid x_1 = 0)$. 
Thus, when the offline dataset covers the coordinates relevant to the optimal policy, the cost gap is controlled by estimation errors on precisely the coordinates measured by $\cN^\star$.
Up to logarithmic and burn-in terms, DP-UCB attains the  sub-optimality gap
\begin{equation}\label{intro-eq: offline-bound}
\max_x \big[C_1^{\hat\pi}(x) - C_1^{\pi^\star}(x)\big]
=
\tilde{\cO}\left(
\frac{(h_\infty + b_\infty)M T}{\sqrt{\cN^\star}}
\right).    
\end{equation}
 In particular, $\cN^\star \geq N$ in the uncensored setting, so this rate reduces to the uncensored non-stationary guarantee derived from~\eqref{eq: uniform-reduction-contribution} above.

When demand is stationary, the estimator-agnostic property of our cost decomposition allows us to aggregate censored observations across periods into a single CDF estimator. The aggregated effective sample size $\cN_{\mathsf{agg}}^\star$ is defined analogously to $\cN^\star$, with $N_{t,j}$ replaced by the period-summed count $\sum_{s=1}^T N_{s,j}$. The aggregated DP-UCB algorithm achieves the improved rate
$$
\tilde{\cO}\left(
\frac{(h_\infty + b_\infty)M T}{\sqrt{\cN_{\mathsf{agg}}^\star}}
\right).
$$
In the uncensored setting, $\cN_{\mathsf{agg}}^\star \geq NT$, so this result recovers the sharper aggregated rate $\tilde{\cO}(M\sqrt{T/N})$ obtained in the uncensored stationary setting above.

Regarding the optimality of the above results, it is tricky to state optimality directly with respect to $\cN^\star$, as it depends jointly on the underlying instance and the offline censoring levels. We therefore state optimality in a minimax sense over coverage classes: in Theorems~\ref{thm: lower-bound-independent} and~\ref{thm: lower-bound-independent-identical}, for a given coverage fraction $p \in (0,1]$, we construct classes of instances, together with deterministic censoring levels, satisfying $\cN^\star\geq \lfloor Np\rfloor$ (and $\cN_{\mathsf{agg}}^\star\geq \lfloor NTp\rfloor$, respectively) for all instances, and then show the $\Omega(MT(h_\infty + b_\infty)/\sqrt{Np})$ (and $\Omega(M(h_\infty + b_\infty)\sqrt{T/(Np)})$, respectively) minimax lower bounds over such classes. In particular, these results also cover the uncensored case when $p = 1$. Our lower bound results are, to the best of our knowledge, the first tight lower bounds for multi-period inventory policy learning even under the uncensored setting\footnote{Notably, \citet{qin2023sailing} also states an $\Omega(T/\sqrt{N})$ lower bound in the non-stationary demand setting. However, in their hard instance, the $T$ factor arises from the scaling of the cost coefficient, namely $b_\infty=\Theta(T)$. Under our notation, their result can therefore be reinterpreted as the horizon-free lower bound $\Omega(b_\infty/\sqrt{N})$, rather than as a horizon-dependent one.}. Previous lower bound results mostly focus on single-period newsvendor problems \citep{cheung2019sampling,fan2022sample}.

\medskip\noindent\textbf{Optimal online adaptive policy design with lower-biased SAA.}
We now turn to the online policy-design problem to resolve Question~3, where the learner adaptively interacts with the inventory system over $K$ episodes.

We resolve Question~3 by proposing DP-LCB, an \textit{optimistic} counterpart of DP-UCB with the bias direction reversed. 
At each episode $k$, DP-LCB executes a base-stock policy $\{s_t^{(k)}\}_{t=1}^T$ obtained by solving the DP under a \textit{lower-biased} SAA model fitted from the censored observations collected over all previous interactions. 
Just as the upper-biased CDFs in DP-UCB implement pessimism at the level of the demand model, the lower-biased CDFs in DP-LCB implement the standard principle of \textit{optimism in the face of uncertainty} at the level of the demand model itself~\citep{auer2002finite,auer2008near,azar2017minimax,jin2020provably}. 
Lowering the estimated CDFs makes the optimistic model expect higher demand, so the DP prescribes upward-biased base-stock levels.
Importantly, DP-LCB does not keep this upward bias fixed. 
As more censored observations are collected, the imposed bias shrinks across episodes, producing the monotone pattern
\begin{equation}\label{intro-eq: base-stock-sequence}
s_t^\star \leq s_t^{(K)} \leq s_t^{(K-1)} \leq \cdots \leq s_t^{(1)},\quad \forall t \in [T].
\end{equation}
Thus early episodes use more optimistic base-stock levels to reveal demand information at the coordinates relevant to the optimal policy, while later episodes, computed from richer data, rely only on coordinates that earlier episodes have already covered. 
Each episode therefore uses information from earlier episodes while providing information for later episodes.

The monotone pattern~\eqref{intro-eq: base-stock-sequence} yields two guarantees at once.

First, the left inequality in~\eqref{intro-eq: base-stock-sequence} ensures that every executed policy covers the optimal policy from above. 
Consequently, the data generated by DP-LCB satisfy the offline coverage condition identified in Question~1: after $K$ episodes, the effective sample size $\cN^\star$ of the collected dataset satisfies
$
\cN^\star \gtrsim K,
$
up to logarithmic and burn-in terms. 
In particular, if one runs the proposed offline algorithm on the $K$ trajectories collected by DP-LCB, the offline guarantee~\eqref{intro-eq: offline-bound} yields the same statistical order as in the uncensored setting. 
This establishes the data-collection optimality of DP-LCB: its censored trajectories are, for the purpose of offline policy learning, as informative as $K$ uncensored trajectories up to logarithmic and burn-in terms.

Second, the monotone decrease in~\eqref{intro-eq: base-stock-sequence} turns this self-coverage into regret control: at each episode $k$, the offline guarantee of Question~2 applies to the data accumulated so far, yielding a per-episode cost gap of order $\tilde{\cO}(MT/\sqrt{k})$. 
Summing over $k$ gives
$$
\mathrm{Regret}(K)
:=
\sum_{k=1}^K \max_x \big[C_1^{(k)}(x) - C_1^{\pi^\star}(x)\big]
\lesssim
(h_\infty + b_\infty) M T  \sqrt{K},
$$
up to logarithmic and burn-in terms. 
A matching lower bound holds even under uncensored demand observations, so this rate is optimal. 
Thus active coverage generation eliminates the leading-order cost of censoring in the online setting.

Finally, when demand is stationary, the same aggregation principle as in the offline setting applies: observations can be pooled across both periods and episodes into a single CDF estimator. 
The aggregated DP-LCB algorithm achieves the sharper optimal regret
$$
\mathrm{Regret}(K) = \tilde{\cO}\big(
(h_\infty + b_\infty) M \sqrt{KT}
\big),
$$
reflecting a factor-$T$ gain in effective sample size and hence a $\sqrt T$ improvement in the regret rate.

\subsection{Other Related Works}

\medskip\noindent\textbf{Data-driven Inventory Control.}
Data-driven inventory control has been extensively studied across a variety of settings, including the online setting where the decision maker sequentially observes sales or demand information and updates ordering decisions over time while simultaneously learning from the data~\citep{godfrey2001adaptive,huh2009nonparametric,chen2015nonparametric,shi2016nonparametric,zhang2018perishable,agrawal2019learning,zhang2018leadtimes,YLS2019,zhang2020closing,chen2022learning,lyu2024minibatch}, and the offline setting where the decision maker instead learns a replenishment policy from a fixed historical dataset before deploying the learned policy in future operations~\citep{levi2007provably,levi2015data,cheung2019sampling,besbes2021big,fan2022sample,zhang2021sampling,chen2023learning,xie2024vc,fan2024don,fan2025sample}.

The line most closely related to our work studies inventory control with possibly non-stationary demands and costs \citep{qin2023sailing,halman2020provably,xie2024vc,ganggang2024all}. To our knowledge, no prior work in this line provides consistent learning guarantees under censored demand feedback.

Another related line of work studies the repeated newsvendor problem with possibly censored feedback \citep{besbes2013implications,fan2022sample,lyu2024closing,hssaine2024data,chen2024survey,kumar2026value,chen2025learning}, as well as the multi-period setting with stationary demand and costs \citep{huh2009nonparametric,besbes2013implications,lyu2024minibatch}. Under both setups, the optimal policy is a base-stock policy determined by a one-stage quantile rather than by multi-period DP.

\medskip\noindent\textbf{General RL.}
Our results also connect to reinforcement learning for finite-horizon Markov decision processes (MDPs). The full-information results of Section~\ref{sec: uniform-convergence} are most directly comparable to RL under a generative model, where the transition kernel can be sampled freely \citep{sidford2018near,li2024breaking}. Viewing the inventory problem as an MDP with horizon $T$, and suppressing inventory-size and cost factors, \citet{qin2023sailing} obtain the rate $\tilde{\cO}(T^{3/2}/\sqrt N)$, matching the horizon dependence of a generic model-based analysis, whereas \citet{xie2024vc} exploit the inventory structure to obtain the sharper $\tilde{\cO}(T/\sqrt N)$ policy-learning rate. Our cost decomposition recovers this rate through a direct model-based argument and, under stationary demand, further sharpens it to $\tilde{\cO}(\sqrt{T/N})$. This improvement reflects one of the main differences between inventory learning and generic RL: reducing policy learning to policy evaluation is sharp for general MDPs \citep{li2024settling,li2024breaking,xiong2022nearly}, but is too coarse for the structured base-stock policy class, where policy learning can be strictly easier than evaluating the optimal cost \citep{ren2021nearly}.

Under censored feedback, our offline result is related to pessimism under partial data coverage in offline RL \citep{rashidinejad2021bridging,jin2025pessimism,xiong2022nearly,nguyen2023sample,li2024settling}. Compared with the standard $\tilde{\Theta}(T^{3/2}\sqrt{\cC/N})$ scaling under single-policy concentrability, our offline bound improves the horizon dependence to $\tilde{\Theta}(T\sqrt{\cC^\star/N})$; see the discussion following Corollary~\ref{corollary: C-star-sample-complexity}. The online result is analogous to optimism in episodic RL \citep{auer2008near,azar2017minimax,jin2020provably,dann2015sample}: DP-LCB uses a lower-biased CDF to generate exploration through higher base-stock levels. This yields the optimal regret $\tilde{\Theta}(T\sqrt K)$, improving the horizon dependence relative to the generic finite-horizon RL rate $\tilde{\Theta}(T^{3/2}\sqrt K)$.

Finally, our stationary-demand results complement the line of RL work that exploits stationary, time-homogeneous transitions to improve the leading horizon dependence from $\cO(T^{3/2})$ to $\cO(T)$~\citep{ren2021nearly}. In the structured inventory setting, aggregation sharpens the corresponding dependence further, from $\cO(T)$ to $\cO(\sqrt T)$, in both the offline and online bounds.

\section{Preliminaries}\label{sec: preliminaries}

\medskip\noindent\textbf{Notations.}
For any positive integer $n$, we denote $[n]=\{1,2,\dots,n\}$ and
$[n]_+=\{0,1,2,\dots,n\}$. For integers $a\le b$, we write
$[a,b]=\{a,a+1,\dots,b\}$ and $[a,b)=\{a,a+1,\dots,b-1\}$.
For any real number $r$, we denote $(r)_+=\max\{r,0\}$.
For a finite sum $\sum_{i=a}^b z_i$, we set it to be $0$ whenever
$a>b$. Similarly, an empty product is interpreted as the identity
operator of the appropriate dimension.
All vectors and matrices are indexed from $0$ unless otherwise stated.
In particular, an $M$-dimensional vector is indexed by $[M-1]_+$, while
an $(M+1)$-dimensional vector is indexed by $[M]_+$. For a discrete
function $f$ defined on $[M-1]_+$, we use its boldface version $\bm f$
to denote the corresponding vector, i.e., $[\bm f]_j=f(j)$. 
We adopt the conventions
$a/0:=+\infty$ for any $a>0$ and $0/0:=1$, applied throughout to all ratios. Throughout this work, we assume the confidence level $\delta \in (0,1/2)$ in all high probability statements.

\medskip\noindent\textbf{Multi-Period Inventory Control.}
We consider a finite-horizon multi-period inventory control problem over $T$ periods.
At each period $t \in [T]$:

\begin{enumerate}[nosep]
    \item The system starts at entering inventory level $x_t$, and the decision maker selects an order-up-to level $y_t \geq x_t$.
    \item Demand $d_t$ is realized from distribution $P_t$, and the one-period cost
    \begin{align*}
        c_t(y_t) := h_t (y_t - d_t)_+ + b_t (d_t - y_t)_+
    \end{align*}
    is incurred, where $h_t$ and $b_t$ denote the holding-cost and lost-sales penalty rates, respectively.
    \item The next entering inventory level is
    $
        x_{t+1} = (y_t - d_t)_+ .
    $
\end{enumerate}

The objective is to design a policy $\pi$ that adaptively selects the order-up-to levels $\{y^\pi_t\}_{t=1}^T$ to minimize the total expected cost starting from an initial inventory level $x$,
\begin{equation}\label{eq: multi-period-cost}
C_1^\pi(x):=\EE\big[\sum\nolimits_{t=1}^T c_t(y^\pi_t)\lvert x_1 = x\big],    
\end{equation}
where the expectation is taken over the demand realizations and the behavior of $\pi$.

Throughout most parts of this work, we assume the following facts:
\begin{enumerate}[nosep]
    \item The cost coefficients are uniformly bounded:
$ 0 \le h_t \le h_\infty,  0 \le b_t \le b_\infty, \forall t \in [T].$
\item Each demand distribution $P_t$ is supported on the finite set $[M]_+$, with masses $P_t(d_t = k) = \mu_{tk}$ for $k\in [M]_+$. And the initial inventory level $x_1$ is integer.
\item The demand realizations $\{d_t\}_{t=1}^T$ are independent across $t\in [T]$.
\end{enumerate}
And we generalize the discrete demand assumption to general demand distributions in Section~\ref{sec: general-distribution}.
\medskip\noindent\textbf{Optimal Policy and Dynamic Programming.}
Under the independent demand assumption, $\pi^\star$ can be solved by DP. More precisely, $\pi^\star$ can be obtained via
\begin{align*}
    \pi^\star_t(x) = \argmin_{y\geq x}  W^\star_t(y), \quad  W^\star_t(y):= \EE\big[c_t(y)+ C_{t+1}^\star \big((y-d_{t})_+\big)\big],
\end{align*}
with $C^\star_t(x)$ given by the Bellman equation
$$
C^\star_t(x) = \min_{y \ge x}
\EE\big[
c_t(y) +
C^\star_{t+1}\big((y - d_t)_+\big)
\big],
\qquad
C_{T+1}^\star(x) \equiv 0.
$$
In particular, the minimum over $y \geq x$ can always be attained within $[x, M]$, as demands are supported on $[M]_+$. We therefore restrict to $y \in [x, M]$ throughout.
In addition to the above general characterization, it has been shown in \citet{levi2007provably,cheung2019sampling} that $W_t^\star(\cdot)$ is discrete convex in the sense that its discrete derivative
$$
D^\star_t(y) := W^\star_t(y+1) - W^\star_t(y),
\qquad \forall y \in [M-1]_+, t\in [T],
$$
is always non-decreasing in $y$. This indicates that $\pi^\star$ can be a \textit{base-stock} policy given as \begin{align}\label{eq: true-DP}
    \pi_t^\star(x) = \begin{cases}
        s_t^\star, & \text{ if } x \le s_t^\star,\\
        x, & \text{otherwise.} 
    \end{cases},\quad
s_t^\star := \min\{y \in [M-1]_+ : D^\star_t(y) \ge 0\},
\end{align}
with the convention $s_t^\star = M$ if $D^\star_t(y) < 0$ for all $y \in [M-1]_+$.
Notably, restricting the order-up-to levels to integer values in~\eqref{eq: true-DP} is without loss of optimality: when the demands are integer-valued and $x_t \in [M]_+$, it is well-known that an integer-valued optimal base-stock policy remains optimal even when arbitrary order-up-to levels larger than $x_t$ are allowed \citep{zipkin2000foundations}.

\medskip
\noindent\textbf{Data-Driven Policy Learning.}
In this work, the demand distributions $\{P_t\}_{t=1}^T$ are unknown to the decision maker (the learner), who must instead learn a near-optimal policy from data. 
We focus throughout on \emph{censored} demand observations: whenever an order-up-to level $y$ is chosen and demand $d \sim P_t$ is realized, the learner observes only the truncated quantity
$
\bar d := \min\{y, d\}.
$
The realized demand is revealed only when it falls below the available inventory $y$; otherwise the learner learns only that demand is at least $y$.
The order-up-to levels at which data are collected therefore govern their informativeness: a period-$t$ observation collected at level $y_t^k$ reveals whether demand falls below a coordinate $j \in [M-1]_+$ only when $j < y_t^k$, so what matters is not just how many trajectories are collected, but which demand coordinates their censoring levels cover.

We study this censored problem in two settings. 
In the \emph{offline} setting, the learner is given a fixed dataset $\cD := \{(y_t^k, \bar d_t^k)\}_{t=1,k=1}^{T,N}$ of $N$ censored trajectories, collected in advance by a behavior policy $\pi^{(b)}$ that may itself be adaptive. 
Here $y_t^k$ is the realized order-up-to level in period $t$ of trajectory $k$, and $\bar d_t^k := \min\{y_t^k, d_t^k\}$ is the corresponding censored observation. 
Using only $\cD$, the learner outputs a policy $\hat\pi$, whose performance is measured by the worst-case sub-optimality gap
$$
\max_{x\in [M]_+} \underbrace{[C_1^{\hat\pi}(x) - C_1^{\star}(x)]}_{:= \Delta(x;\hat{\pi})}.
$$
In this offline setting, the coverage of $\cD$ is fixed before learning begins: the learner can exploit the demand coordinates revealed by the behavior policy, but cannot extend them.

In the \emph{online} setting, the learner instead interacts with the system over $K$ episodes: in each episode $k \in [K]$ it selects a policy $\pi^{(k)}$, executes it for all $T$ periods, and observes the resulting censored trajectory before proceeding to the next. 
Writing $C_1^{(k)} := C_1^{\pi^{(k)}}$ for the cost of the deployed policy, the learner is evaluated \emph{solely} by its cumulative regret
$$
\mathrm{Regret}(K) := \sum_{k=1}^K \max_x \Delta(x;{\pi}^{(k)}).
$$
Although regret is the only criterion against which the online learner is measured, the $K$ trajectories it generates themselves constitute a censored dataset of exactly the offline form $\cD$. 
The coverage of this dataset is now endogenous: each episode's order-up-to levels simultaneously determine the cost charged to regret and which demand coordinates that episode reveals for future learning. 
Thus the online problem is not regret minimization in isolation: the learner must control its current inventory cost while generating coverage rich enough to support the offline learning guarantees developed below. 
It is this coupling between immediate cost and self-generated coverage---rather than either concern alone---that distinguishes the online problem from a routine repetition of the offline one, and that the online algorithm we develop is designed to exploit.

\section{Cost Gap Decomposition of Base-Stock Policies}\label{sec: general-decomposition}

In this work, we adopt a model-based approach for data-driven inventory control, where a demand model with per-period CDF sequence $\{\hat{\bF}_t\}_{t=1}^T$ is first constructed, and then the estimated policy $\hat{\pi}$ is solved via performing DP under $\{\hat{\bF}_t\}_{t=1}^T.$ 
While the construction of $\{\hat{\bF}_t\}_{t=1}^T$ varies across different demand models and observation protocols, the performance of $\hat{\pi}$ usually relies on how accurately $\hat{\bF}$ approximates $\bF$. 
The goal of this section is to provide a general cost gap decomposition theorem that relates the cost difference between $\pi^\star$ and $\hat{\pi}$ to the gap between $\hat{\bF}$ and $\bF$. This serves as the backbone of our sample complexity analysis in later analyses under different constructions of $\{\hat{\bF}_{t}\}_{t=1}^T$.

To formally describe $\hat{\pi},$ we briefly replicate the DP procedure under $\{\hat{\bF}_t \}_{t=1}^T$ in this paragraph. For the estimated value sequence $\{\hat{C}_t\}_{t=1}^T$ given by the Bellman equation under $\{\hat{\bF}_t\}_{t=1}^T$
\begin{align*}
\hat{C}_t(x) = \min_{y \ge x}
\hat\EE\big[
c_t(y) +
\hat{C}_{t+1}\big((y - d_t)_+\big)
\big],
\quad
\hat{C}_{T+1}(x) \equiv 0, \quad \forall x \in [M]_+,
\end{align*}
and  $ \hat{D}_t(y):= \hat{W}_t(y+1) -\hat{W}_t(y), \hat{W}_t(y) = \hat\EE\big[c_t(y)+ \hat{C}_{t+1} \big((y-d_{t})_+\big)\big]$ for $ y \in [M-1]_+,$
with $\hat{\EE}$ the expectation taken under the demand distribution given by $\{\hat{\bF}_{t}\}_{t=1}^T$. The optimal $\hat{\pi}$ can be characterized via a base-stock sequence $\{s_t\}_{t=1}^T$: \begin{align}\label{eq: empirical-DP}
    \hat{\pi}_t(x) = 
         \begin{cases}
        s_t, & \text{ if } x \le s_t,\\
        x, & \text{otherwise.}
    \end{cases}\quad \text{ with } s_t:= \min\{y \in [M-1]_+ : \hat{D}_t(y) \ge 0\},
\end{align}
with the convention $s_t = M$ if $\hat{D}_t(y) < 0$ for all $y \in [M-1]_+$.

\begin{table}[t]
\centering
\renewcommand{\arraystretch}{1.15}
\setlength{\tabcolsep}{4pt}
\resizebox{\textwidth}{!}{\begin{tabular}{p{0.1\textwidth} p{0.40\textwidth} p{0.15\textwidth} p{0.45\textwidth}}
\toprule
\textbf{Notation} & \textbf{Interpretation} & \textbf{Notation} & \textbf{Interpretation} \\
\midrule
$\bF_t \in \mathbb{R}^{M}$ 
& $\big[\bF_t\big]_j = F_t(j)$ for $j \in [M-1]_+$
& $\bW_t^\star \in \mathbb{R}^{M+1}$ 
& Order-up-to value under the optimal policy \\

$\bD_t^\star \in \mathbb{R}^{M}$ 
& Discrete derivative of $\bW_t^\star$ 
& $\bA_t(s) \in \mathbb{R}^{M\times M}$ 
& $[\bA_t(s)]_{ij} = \mu_{t,i-j}\bm{1}\{i\ge j\ge s\}$ \\

$\bc_t \in \mathbb{R}^{M}$ 
& $\big[\bc_t\big]_j = (h_t+b_t)F_t(j) - b_t$ 
& $\bq_t \in \mathbb{R}^{M}$ 
& $[\bq_t]_j = \sigma_t \bm{1}\{j \in [\alpha_t,\beta_t)\}\PP(x_t \le j \mid x_1 = 0)$ \\

$\bu_t \in \mathbb{R}^{M}$
& $\big[\bu_t\big]_j = \sigma_t \bm{1}\{j \in [\alpha_t,\beta_t)\}\PP(x_t^\star \le j \mid x_1 = 0)$
& $\bv_t \in \mathbb{R}^{M}$ 
& $[\bv_t]_j = \PP(y_t^\star \le j \mid x_1 = 0) - \PP(y_t \le j \mid x_1 = 0)$ \\
\bottomrule
\end{tabular}}\caption{Summary of notations in Section~\ref{sec: general-decomposition}, where the estimated versions $\hat{\bW}_t$, $\hat{\bD}_t$, $\hat{\bA}_t(s)$, and $\hat{\bc}_t$ are defined analogously by replacing $\bF_t$ with $\hat{\bF}_t$ and $\bmu_t$ with $\hat{\bmu}_t$.}

\label{tab: section3-notation}
\end{table}

In the remainder of this section, we derive a cost-gap decomposition for the model-based policy $\hat\pi$ relative to the optimal policy $\pi^\star$, using the base-stock structure above. 
We adopt the vector convention introduced in Section~\ref{sec: preliminaries}, the main notations used throughout this section are summarized in Table~\ref{tab: section3-notation}.
Here we set the boundary value $F_t(M)=1$ as a scalar; the vector $\bF_t\in\mathbb{R}^M$ collects only the coordinates $\{F_t(j)\}_{j\in[M-1]_+}$.

\medskip\noindent\textbf{Derivative based performance difference lemma.} We start with the following performance difference lemma, which provides a decomposition of total cost gap as the summation of optimal value gap caused by immediate action mismatch along the trajectory induced by the policy $\hat{\pi}$:
\begin{lemma}\label{lem: performance-difference} Denote $\{x_t\}_{t=1}^T$ as the entering inventory level trajectory when executing policy $\hat{\pi}$, then
    \begin{align*}
   \Delta(x;\hat{\pi}):=  C_1^{\hat{\pi}}(x) - C_1^{\star}(x)  = \sum_{t=1}^T\EE[W_t^\star(\hat{\pi}_t(x_t)) - W_t^\star(\pi^\star_t(x_t))  \lvert x_1 = x].
\end{align*}
\end{lemma}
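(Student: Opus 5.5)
This is the standard performance-difference (telescoping) argument for finite-horizon MDPs. We specialize it to the inventory dynamics, where the optimal cost-to-go satisfies $C_t^\star(x)=W_t^\star(\pi_t^\star(x))$.

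\emph{Setup.} Define the cost-to-go of $\hat\pi$ from period $t$, $C_t^{\hat\pi}(x):=\EE[\sum_{s=t}^T c_s(y_s)\mid x_t=x]$, where $y_s=\hat\pi_s(x_s)$ and $x_{s+1}=(y_s-d_s)_+$. Also set $C_{T+1}^{\hat\pi}\equiv 0$. Let $\{x_t\}$ be the state trajectory under $\hat\pi$ started at $x_1=x$.

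\emph{One-step recursion.} Because $\hat\pi_t(x_t)\ge x_t$ and $\hat\pi$ is Markov, and because $d_t$ is independent of $x_t$ (demands are independent across periods and $x_t$ depends only on $d_1,\dots,d_{t-1}$), we have
\begin{align*}
C_t^{\hat\pi}(x_t)=\EE\big[c_t(\hat\pi_t(x_t))+C_{t+1}^{\hat\pi}(x_{t+1})\mid x_t\big].
\end{align*}
By the definition of $W_t^\star$, the same independence gives
\begin{align*}
W_t^\star(\hat\pi_t(x_t))=\EE\big[c_t(\hat\pi_t(x_t))+C_{t+1}^\star(x_{t+1})\mid x_t\big].
\end{align*}
The Bellman equation gives $C_t^\star(x_t)=\min_{y\ge x_t}W_t^\star(y)=W_t^\star(\pi_t^\star(x_t))$; the minimum is attained by the base-stock rule \eqref{eq: true-DP}, which relies on the discrete convexity of $W_t^\star$.

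\emph{Telescoping.} Subtract the two displays and set $g_t:=C_t^{\hat\pi}(x_t)-C_t^\star(x_t)$. This yields
\begin{align*}
g_t=W_t^\star(\hat\pi_t(x_t))-W_t^\star(\pi_t^\star(x_t))+\EE[g_{t+1}\mid x_t].
\end{align*}
Take expectations conditional on $x_1=x$, use the tower property, and iterate from $t=1$ to $T$ with $g_{T+1}=0$. This gives exactly $\Delta(x;\hat\pi)=\sum_{t=1}^T\EE[W_t^\star(\hat\pi_t(x_t))-W_t^\star(\pi_t^\star(x_t))\mid x_1=x]$.

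\emph{Main obstacle.} The only point needing care is the identification $C_t^\star(x_t)=W_t^\star(\pi_t^\star(x_t))$ on every reachable state. Here $x_t\in[M]_+$ because $x_1$ is an integer and both the demands and the base-stock levels lie in $[M]_+$. The optimization over $y\ge x_t$ is also over all $y$, whereas the restriction to $y\in[x_t,M]$ in \eqref{eq: true-DP} has to be checked. I would handle this by invoking the remark after \eqref{eq: true-DP}: the integer base-stock policy is optimal among all $y\ge x_t$, and for $x_t> s_t^\star$ the choice $y=x_t$ minimizes $W_t^\star$ over $[x_t,M]$ by monotonicity of $D_t^\star$. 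With that established, everything else is a routine conditional-expectation computation.
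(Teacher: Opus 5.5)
Your proposal is correct and is the standard finite-horizon performance-difference telescoping argument that the paper relies on; the paper gives no separate proof and calls the lemma a straightforward adaptation of standard MDP results. Your ``main obstacle'' is lighter than you suggest: $\pi_t^\star(x)$ is defined as a minimizer of $W_t^\star$ over $y\geq x$, so $C_t^\star(x_t)=W_t^\star(\pi_t^\star(x_t))$ holds by definition, and the base-stock form matters only for identifying which minimizer is meant.
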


Lemma~\ref{lem: performance-difference} is a straightforward adaptation of standard performance-difference results for general MDP \citep{kakade2002approximately,nguyen2023sample,bhandari2024global}. In our setting, however, the structured transition dynamics of the inventory MDP and the base-stock form of $\pi^\star$ and $\hat{\pi}$ further imply the following monotonicity property of $\Delta(\cdot;\hat{\pi})$:
\begin{proposition}\label{prop-convexity} 
$\Delta(\cdot;\hat{\pi})$ is a decreasing function in $x$, i.e. $\Delta(x;\hat{\pi}) \geq \Delta(x+1;\hat{\pi}),\forall x\in [M-1]_+.$
\end{proposition}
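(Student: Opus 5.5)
We argue by backward induction on the period index, using a per-period version of the gap. For $t\in[T+1]$ and $x\in[M]_+$, let $C_t^{\hat\pi}(x)$ be the expected cost of running $\hat\pi$ from period $t$ with entering inventory $x$, and set $\Delta_t(x):=C_t^{\hat\pi}(x)-C_t^\star(x)$. Then $\Delta_1=\Delta(\cdot;\hat\pi)$ and $\Delta_{T+1}\equiv 0$. The claim to be proved for every $t$ is that $\Delta_t$ is non-increasing on $[M]_+$. The base case $t=T+1$ is trivial.

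\emph{Inductive step: rewriting $\Delta_t$.} Suppose $\Delta_{t+1}$ is non-increasing. Define $\tilde W_t(y):=\EE[c_t(y)+C^{\hat\pi}_{t+1}((y-d_t)_+)]$, so that $C^{\hat\pi}_t(x)=\tilde W_t(\hat\pi_t(x))$. Also $C_t^\star(x)=W_t^\star(\pi^\star_t(x))$. The immediate cost $c_t(y)$ appears in both $\tilde W_t$ and $W_t^\star$, so
\[
G_t(y):=\tilde W_t(y)-W_t^\star(y)=\EE\big[\Delta_{t+1}((y-d_t)_+)\big].
\]
For each realization of $d_t$, the map $y\mapsto (y-d_t)_+$ is non-decreasing. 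Hence $G_t$ is non-increasing by the induction hypothesis. Using the base-stock forms $\hat\pi_t(x)=\max\{x,s_t\}$ and $\pi^\star_t(x)=\max\{x,s_t^\star\}$, we add and subtract $W_t^\star(\hat\pi_t(x))$ to obtain
\[
\Delta_t(x)=G_t(\max\{x,s_t\})+\phi_t(x),\qquad \phi_t(x):=W_t^\star(\max\{x,s_t\})-W_t^\star(\max\{x,s_t^\star\}).
\]
The first term is non-increasing in $x$, since it composes the non-increasing $G_t$ with the non-decreasing map $x\mapsto\max\{x,s_t\}$.

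\emph{Main step: monotonicity of $\phi_t$.} This is the only place where discrete convexity of $W_t^\star$ enters. Because $D_t^\star$ is non-decreasing and $s_t^\star$ is the first index where $D_t^\star\ge 0$, $W_t^\star$ is non-increasing on $[0,s_t^\star]$ and non-decreasing on $[s_t^\star,M]$. We split into two cases according to the order of $s_t$ and $s_t^\star$.
\begin{enumerate}[nosep]
\item Case $s_t\ge s_t^\star$. The function $\phi_t$ equals the constant $W_t^\star(s_t)-W_t^\star(s_t^\star)$ for $x\le s_t^\star$. It equals $W_t^\star(s_t)-W_t^\star(x)$ for $x\in[s_t^\star,s_t]$, which is non-increasing because $W_t^\star$ is non-decreasing there. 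It equals $0$ for $x\ge s_t$.
\item Case $s_t< s_t^\star$. The function $\phi_t$ equals the constant $W_t^\star(s_t)-W_t^\star(s_t^\star)$ for $x\le s_t$. It equals $W_t^\star(x)-W_t^\star(s_t^\star)$ for $x\in[s_t,s_t^\star]$, which is non-increasing because $W_t^\star$ is non-increasing on $[0,s_t^\star]$. It equals $0$ for $x\ge s_t^\star$.
\end{enumerate}
The pieces agree at the breakpoints, so $\phi_t$ is non-increasing on all of $[M]_+$ in both cases.

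\emph{Conclusion.} Combining the two terms shows that $\Delta_t$ is non-increasing, which completes the induction; taking $t=1$ gives the proposition. One bookkeeping point needs care: for $x>s_t$, $\hat\pi_t$ keeps $y=x$, so the formula $\max\{x,s_t\}$ matches \eqref{eq: empirical-DP}. Similarly, for $s_t=M$ or $s_t^\star=M$ the formulas degenerate consistently. I expect the main work to be this breakpoint and boundary bookkeeping rather than any substantive estimate.
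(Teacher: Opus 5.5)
Your proof is correct and is essentially the paper's argument in recursive form: your identity $\Delta_t(x)=\phi_t(x)+\EE[\Delta_{t+1}((\max\{x,s_t\}-d_t)_+)]$ is $\Delta_t=\bE_t+\bM_t\Delta_{t+1}$, which unrolls to the paper's representation $\Delta(\cdot;\hat\pi)=\sum_t\bM_{1:t-1}\bE_t$ (obtained there from the performance difference lemma), and your case analysis for $\phi_t$ is the paper's observation that $E_t$ is non-increasing. The only real difference is how you show that the transition preserves non-increasing functions: you use the pathwise monotonicity of $x\mapsto(\max\{x,s_t\}-d)_+$, whereas the paper compares adjacent rows of the explicit matrix $\bM_t$.
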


As a consequence, bounding the maximum cost gap $\max_x \Delta(x;\hat{\pi})$ reduces to controlling $\Delta(0;\hat{\pi})$. 
For this purpose, noticing that if we define $\alpha_t:= \min\{s_t,s_t^\star\}, \beta_t = \max\{s_t,s_t^\star\},\sigma_t = \text{sgn}(s_t-s_t^\star),$ then 
\begin{align*}
    W_t^\star(\hat{\pi}_t(x_t)) - W_t^\star(\pi^\star_t(x_t)) = \sigma_t\sum_{j = \alpha_t}^{\beta_{t}-1} \bm{1}\{x_t \leq j\} D_t^\star(j)  = \begin{cases}
        0,& \text{ if } x_t \geq \beta_t,\\
        \sigma_t \sum_{j=x_t}^{\beta_t-1} D^\star_{t}(j),& \text{ if } \alpha_t\leq x_t < \beta_t\\
        \sigma_t \sum_{j=\alpha_t}^{\beta_t-1} D^\star_{t}(j),& \text{ otherwise.}
    \end{cases}
\end{align*}
Thus, if we introduce the weight vector $\bq_t \in \RR^{M}$ with \begin{align*}
    \big[\bq_t\big]_j := \sigma_t \cdot \bm{1}\{ \alpha_t \leq j < \beta_t\} \PP(x_t \le j \lvert x_1 = 0), 
\end{align*}
then we have the following derivative based representation of the cost gap:\begin{align}\label{eq: derivative-based-err}
    \Delta(0;\hat{\pi}) = \sum_{t=1}^T \EE\Big[\sigma_t\sum\nolimits_{j = \alpha_t}^{\beta_{t}-1} \bm{1}\{x_t \leq j\} D_t^\star(j) \Big\lvert x_1 = 0\Big] = \sum_{t=1}^T \langle \bq_t, \bD_t^\star \rangle.
\end{align}

To explicitly connect~\eqref{eq: derivative-based-err} with the model difference between $\{\bF_t\}_{t=1}^T$ and $\{\hat\bF_t\}_{t=1}^T$, observe that by the rule of determining $s_t^\star, s_t$ in~\eqref{eq: true-DP},~\eqref{eq: empirical-DP},  we have $\sigma_t \hat{\bD}_t(j) \leq 0, \forall \alpha_t\leq j < \beta_t$. As a consequence,
$\langle \bq_t,  \bD_t^\star \rangle  \leq  \langle \bq_t,   \bD^\star_t - \hat\bD_t \rangle$. This then motivates us to track the role of the model distance in propagation of derivatives.

\medskip\noindent\textbf{Derivative Propagation.}
To provide tight description on how derivatives propagate along $t$, we first introduce the transportation matrices $\bA_t(s), \hat{\bA}_t(s)\in \RR^{M\times M}$ for $t\in [T], s\in [M]_+$ as
\begin{align*}
     \big[\bA_t(s)\big]_{ij} = \mu_{t,i-j} \bm{1}\{i\geq j\geq s\},\quad \big[\hat{\bA}_t(s)\big]_{ij} = \hat{\mu}_{t,i-j} \bm{1}\{i\geq j\geq s\},\quad \forall i, j \in [M-1]_+,
\end{align*}
where $\hat{\mu}_{t,j}:= \hat{F}_{t}(j)-\hat{F}_{t}(j-1)$ is the induced mass from $\hat{\bF}_t$.

With this definition, we have the following recursive formula of derivatives:

\begin{proposition}\label{prop-matrix-notation} It holds for $t\in [T]$ that  \begin{align}
        \bD_t^\star =  \underbrace{h_t \bF_t - b_t (\bm{1} - \bF_t )}_{:= \bc_t}  + \bA_t(s_{t+1}^\star) \bD^\star_{t+1}, \label{eq: D-forward-equation}\\
        \hat{\bD}_t =  h_t \hat{\bF}_t - b_t (\bm{1} - \hat{\bF}_t )  + \hat{\bA}_t(s_{t+1}) \hat\bD_{t+1}, \label{eq: D-hat-forward-equation}
    \end{align}
where we set for convenience $\bD_{T+1}^\star = \hat{\bD}_{T+1} = \bm{0}$ and $s_{T+1} := s^\star_{T+1} := 0$.

As a consequence, with the notations $\Delta \bD_t :=  \bD^\star_t - \hat{\bD}_t$, $\Delta \bF_t := \bF_t - \hat{\bF}_t$, and $\Delta \bA_t (s) :=  \bA_t(s) - \hat{\bA}_t(s)$, we have \begin{align}\label{eq: recursion-D-diff}
       \Delta \bD_t  =   {\bA}_t(s_{t+1}) \Delta{\bD}_{t+1} + \Delta \bA_t(s_{t+1})\hat{\bD}_{t+1}+ \underbrace{ (h_t + b_t) \Delta \bF_t}_{:= \Delta \bc_t} + \big[\bA_t(s_{t+1}^\star) -\bA_t(s_{t+1}) \big] \bD^\star_{t+1}.
    \end{align}
\end{proposition}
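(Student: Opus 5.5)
We will derive \eqref{eq: D-forward-equation} by differentiating the one-step decomposition of $W_t^\star$ directly, using the base-stock form \eqref{eq: true-DP} of $\pi^\star_{t+1}$. The estimated recursion \eqref{eq: D-hat-forward-equation} is the same computation under $\hat\EE$, using the base-stock form \eqref{eq: empirical-DP} of $\hat\pi$. The difference identity \eqref{eq: recursion-D-diff} then follows by pure algebra.

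\emph{Step 1 (immediate cost).} Write $W_t^\star(y)=\EE[c_t(y)]+\EE[C^\star_{t+1}((y-d_t)_+)]$. For integer $y\in[M-1]_+$ and integer demand, $c_t(y+1)-c_t(y)=h_t\bm 1\{d_t\le y\}-b_t\bm 1\{d_t>y\}$. Taking expectations gives $h_tF_t(y)-b_t(1-F_t(y))$, which is $[\bc_t]_y$.

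\emph{Step 2 (continuation term).} By \eqref{eq: true-DP}, for $x\in[M]_+$ we have $C^\star_{t+1}(x)=W^\star_{t+1}(\max\{x,s^\star_{t+1}\})$. Hence
\[
C^\star_{t+1}(x+1)-C^\star_{t+1}(x)=\bm 1\{x\ge s^\star_{t+1}\}D^\star_{t+1}(x)
\]
for $x\in[M-1]_+$. This covers both cases: if $x<s^\star_{t+1}$ both values equal $W^\star_{t+1}(s^\star_{t+1})$, and if $s^\star_{t+1}=M$ the indicator never fires. At $t=T$ it is consistent with $C_{T+1}^\star\equiv0$ and $\bD^\star_{T+1}=\bm 0$.

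Condition on $d_t=k$. If $k>y$, then $(y+1-k)_+=(y-k)_+=0$ and the increment vanishes. If $k\le y$, the increment equals $\bm 1\{y-k\ge s^\star_{t+1}\}D^\star_{t+1}(y-k)$. Substituting $j=y-k$ gives
\[
\sum_{j=0}^{y}\mu_{t,y-j}\bm 1\{y\ge j\ge s^\star_{t+1}\}D^\star_{t+1}(j)=[\bA_t(s^\star_{t+1})\bD^\star_{t+1}]_y .
\]
All indices stay in $[M-1]_+$ because $j\le y\le M-1$. Adding Steps 1 and 2 yields \eqref{eq: D-forward-equation}.

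Repeating Steps 1 and 2 with $\hat\mu_t$, $\hat C_{t+1}$, and the base-stock level $s_{t+1}$ yields \eqref{eq: D-hat-forward-equation}. The only input needed is that $\hat C_{t+1}(x)=\hat W_{t+1}(\max\{x,s_{t+1}\})$, which is exactly the base-stock characterization \eqref{eq: empirical-DP} of $\hat\pi$ (guaranteed by discrete convexity of $\hat W_{t+1}$ when $\hat\bF_{t+1}$ is a valid CDF). Verifying this is the only point needing care.

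\emph{Step 3 (difference identity).} Subtract \eqref{eq: D-hat-forward-equation} from \eqref{eq: D-forward-equation}. The cost parts give $\bc_t-\hat\bc_t=(h_t+b_t)\Delta\bF_t$. For the propagation parts, add and subtract $\bA_t(s_{t+1})\bD^\star_{t+1}$ and then $\bA_t(s_{t+1})\hat\bD_{t+1}$:
\[
\bA_t(s^\star_{t+1})\bD^\star_{t+1}-\hat\bA_t(s_{t+1})\hat\bD_{t+1}
=[\bA_t(s^\star_{t+1})-\bA_t(s_{t+1})]\bD^\star_{t+1}+\bA_t(s_{t+1})\Delta\bD_{t+1}+\Delta\bA_t(s_{t+1})\hat\bD_{t+1}.
\]
This is \eqref{eq: recursion-D-diff}.
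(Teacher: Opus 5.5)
Your proposal is correct and follows essentially the same route as the paper. The paper also writes $D_t^\star(m)$ entrywise from the definition, with the continuation term contributing $\sum_{k=0}^{m-s^\star_{t+1}}\mu_{tk}D^\star_{t+1}(m-k)$ when $m\ge s^\star_{t+1}$ and nothing otherwise, treats the estimated recursion the same way, and obtains the difference identity by subtraction. Your use of $C^\star_{t+1}(x)=W^\star_{t+1}(\max\{x,s^\star_{t+1}\})$ simply spells out what the paper calls ``by definition''. One small refinement: the base-stock form of $\hat C_{t+1}$ needs all of $\hat\bF_{t+1},\dots,\hat\bF_T$ to be valid CDFs, not just $\hat\bF_{t+1}$.
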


In~\eqref{eq: recursion-D-diff}, the $\Delta \bD_t$ term is decomposed into three terms: (i) the propagation term $\bA_{t}(s_{t+1}) \Delta \bD_{t+1}$; (ii) the instantaneous error term due to the model distance at $t$-th period $\Delta\bA_t(s_{t+1}) \hat{\bD}_{t+1} + \Delta \bc_t$;  (iii) the policy mismatch term $\big[\bA_t(s_{t+1}^\star) - \bA_{t}(s_{t+1})\big]\bD_{t+1}^\star$.

In the following, we dualize the transportation rule of $\bD_t^\star$ to $\bq_t$ through~\eqref{eq: derivative-based-err} to make its effect more transparent. For this purpose, we introduce the following identity.

\begin{proposition}\label{prop: v-propagation}
    For $\{\bu_t\}_{t = 1}^T \subset \RR^{M}$ sequence defined as $\big[\bu_t\big]_j = \sigma_t \cdot \bm{1}\{\alpha_t \leq j < \beta_t\}\PP(x^\star_t \leq j \lvert x_1 = 0)$.
And for $\{\bv_t\}_{t=1}^T$ defined recursively through
    \begin{align}\label{eq: recursion-v-improved}
    \bv_{t+1}:=\bA_t(s_{t+1})^\top \bv_{t}+ \bu_{t+1},\forall t\in [T-1],\quad \bv_1:=  \bu_1.
    \end{align}
It holds that 
    \begin{align*}
     &\big(  \bA_t(s^\star_{t+1}) - \bA_t(s_{t+1}) \big)^\top \bv_t + \bq_{t+1}= \bu_{t+1},\quad \forall t \in [T-1].
    \end{align*}
\end{proposition}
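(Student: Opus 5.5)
The plan is to first identify the recursively defined $\bv_t$ with the closed form recorded in Table~\ref{tab: section3-notation}, namely
\[
[\bv_t]_j=\PP(y_t^\star\le j\mid x_1=0)-\PP(y_t\le j\mid x_1=0),
\]
where $y_t=\hat\pi_t(x_t)$ is the order-up-to level under $\hat\pi$ and $y^\star_t=\pi^\star_t(x_t^\star)$ is the one under $\pi^\star$. I will prove this by induction on $t$. Once it holds, the claimed identity reduces to a coordinatewise bookkeeping of indicator functions.

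\emph{Base case.} Since $x_1=0$, we have $y_1=s_1$ and $y_1^\star=s_1^\star$ deterministically. Hence
\[
\bm{1}\{s_1^\star\le j\}-\bm{1}\{s_1\le j\}=\sigma_1\bm{1}\{\alpha_1\le j<\beta_1\}.
\]
Also $\PP(x_1^\star\le j)=1$ for every $j\ge 0$. So $[\bu_1]_j$ equals this difference, which matches $\bv_1=\bu_1$.

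\emph{Inductive step: propagation through demand.} Write $G_t(i):=\PP(y_t\le i)$, with $G_t(i)=1$ for $i\ge M$, and similarly $G_t^\star$. Demand is independent of $y_t$, and $x_{t+1}=(y_t-d_t)_+$. Therefore, for $j\in[M-1]_+$,
\[
\PP(x_{t+1}\le j)=\sum_{k}\mu_{tk}\,\PP(y_t\le j+k)=\sum_{i\ge j}\mu_{t,i-j}G_t(i).
\]
The terms with $i\ge M$ involve $G_t(i)=G_t^\star(i)=1$, so they cancel in the difference. This gives
\[
\PP(x^\star_{t+1}\le j)-\PP(x_{t+1}\le j)=\sum_{i=j}^{M-1}\mu_{t,i-j}[\bv_t]_i=[\bA_t(0)^\top\bv_t]_j .
\]
I will use the key observation that $[\bA_t(s)^\top\bw]_j=\bm{1}\{j\ge s\}[\bA_t(0)^\top\bw]_j$, since the only $s$-dependence of $\bA_t(s)$ is the column indicator $\bm{1}\{j\ge s\}$.

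\emph{Inductive step: the ordering rule.} The base-stock rule gives $y_{t+1}=\max\{x_{t+1},s_{t+1}\}$, so
\[
\PP(y_{t+1}\le j)=\bm{1}\{j\ge s_{t+1}\}\,\PP(x_{t+1}\le j),
\]
and analogously for the optimal policy with $s^\star_{t+1}$. Subtracting, I split
\[
[\bv_{t+1}]_j=\bm{1}\{j\ge s_{t+1}\}\big(\PP(x^\star_{t+1}\le j)-\PP(x_{t+1}\le j)\big)+\big(\bm{1}\{j\ge s^\star_{t+1}\}-\bm{1}\{j\ge s_{t+1}\}\big)\PP(x^\star_{t+1}\le j).
\]
By the propagation step, the first term is $[\bA_t(s_{t+1})^\top\bv_t]_j$. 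The second term equals $\sigma_{t+1}\bm{1}\{\alpha_{t+1}\le j<\beta_{t+1}\}\PP(x^\star_{t+1}\le j)=[\bu_{t+1}]_j$. This reproduces the recursion~\eqref{eq: recursion-v-improved} and completes the induction.

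\emph{The identity.} With the closed form in hand, write $\delta_j:=\PP(x^\star_{t+1}\le j)-\PP(x_{t+1}\le j)$, so that $[\bA_t(s)^\top\bv_t]_j=\bm{1}\{j\ge s\}\delta_j$. The left-hand side of the claim is then
\[
\big(\bm{1}\{j\ge s^\star_{t+1}\}-\bm{1}\{j\ge s_{t+1}\}\big)\delta_j+[\bq_{t+1}]_j=\sigma_{t+1}\bm{1}\{\alpha_{t+1}\le j<\beta_{t+1}\}\big(\delta_j+\PP(x_{t+1}\le j)\big).
\]
This equals $\sigma_{t+1}\bm{1}\{\alpha_{t+1}\le j<\beta_{t+1}\}\PP(x^\star_{t+1}\le j)=[\bu_{t+1}]_j$, as claimed.

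The main obstacle is the propagation step. There one must check the index conventions carefully: the transpose of $\bA_t$ has to correctly encode the law of $(y_t-d_t)_+$ on $[M-1]_+$, and the mass at or beyond level $M$ must cancel between the two policies. Everything else is indicator algebra.
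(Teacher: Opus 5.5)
Your proof is correct, and it follows a genuinely different route from the paper's.

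\textbf{The paper's route.} The paper does not use forward induction. It unrolls $\bv_t=\sum_{\ell\le t}\bA_{\ell:t-1}^\top\bu_\ell$ and reads each product $\bA_{\ell:t-1}$ through the time-reversed random walk $Z^j_{t\to\ell}$, with survival events $\cE_{t,j}(\cdot)$ and $\cE^\star_{t,j}(\cdot)$ (Proposition~\ref{remark: A-interpretation}). It then uses independence of demands to merge the conditional probabilities and telescopes over $\ell$. This gives $[(\bA_t(s^\star_{t+1})-\bA_t(s_{t+1}))^\top\bv_t]_j=\sigma_{t+1}\bm 1\{\alpha_{t+1}\le j<\beta_{t+1}\}\big[\PP(\cap_{k\le t}\cE^\star_{t+1,j}(k))-\PP(\cap_{k\le t}\cE_{t+1,j}(k))\big]$. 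Finally, it identifies these survival probabilities with $\PP(x^\star_{t+1}\le j)$ and $\PP(x_{t+1}\le j)$ via Lemma~\ref{lem: J-closed-form}. The closed form of $\bv_t$ (Remark~\ref{remark: v-interpretation}) is proved separately, by the same telescoping machinery.

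\textbf{Your route.} You establish the closed form of $\bv_t$ first, by a forward induction that needs only three one-step facts:
\begin{itemize}
\item $x_{t+1}=(y_t-d_t)_+$, with $d_t$ independent of $y_t$;
\item $y_{t+1}=\max\{x_{t+1},s_{t+1}\}$;
\item the column-indicator structure $[\bA_t(s)^\top\bw]_j=\bm 1\{j\ge s\}[\bA_t(0)^\top\bw]_j$.
\end{itemize}
The proposition then follows from one line of indicator algebra.

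\textbf{Comparison.} Your argument is shorter, avoids the conditional-independence bookkeeping, and yields Remark~\ref{remark: v-interpretation} as a byproduct. The paper's heavier machinery pays off elsewhere: the multi-step interpretation of $\bA_{t:t'-1}$ and the survival-event representation are reused in the proofs of Proposition~\ref{prop: over-shooting-price} and Lemma~\ref{lem: w-D-infty-bounds}.

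\textbf{One detail to state as a fact.} You present $G_t(i)=G^\star_t(i)=1$ for $i\ge M$ as a convention, but it needs justification. It holds because $x_1=0$ and all base-stock levels lie in $[M]_+$, so $y_t\le M$ almost surely under both policies. This is exactly what makes the mass beyond level $M-1$ cancel in your propagation step.
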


\begin{remark}[Probabilistic Interpretation of Proposition~\ref{prop: v-propagation}]\label{remark: v-interpretation} We have the $\{\bv_t\}_{t=1}^T$ sequence defined in Proposition~\ref{prop: v-propagation} satisfies 
    $\big[\bv_t\big]_j = \PP(y_t^\star \leq j\lvert x_1 = 0) - \PP(y_t\leq j\lvert x_1 = 0)$.
And the last identity in Proposition~\ref{prop: v-propagation} is simply describing the transportation effect of $\bA_t(s_{t+1})$ as \begin{align*}
   &\big[ \bA_t^\top(s_{t+1}^\star) \bv_t \big]_j = \bm{1}\{j\geq s_{t+1}^\star\} \big[\PP({x}^\star_{t+1} \leq j\lvert x_1 = 0) - \PP({x}_{t+1} \leq j\lvert x_1 = 0) \big] ,\\
   &\big[ \bA^\top_t(s_{t+1}) \bv_t \big]_j   = \bm{1}\{j\geq s_{t+1}\} \big[\PP({x}^\star_{t+1} \leq j\lvert x_1 = 0) - \PP({x}_{t+1} \leq j\lvert x_1 = 0) \big].
\end{align*}
\end{remark}

Now with~\eqref{eq: derivative-based-err} and Proposition~\ref{prop: v-propagation}, we can show the following cost decomposition theorem:
\begin{theorem}\label{thm: cost-decomposition}
    With the notations $\bu_t, \bv_t$ introduced above, we have \begin{align}\label{eq: cost-decomposition-main-theorem}
    \max_{x\in [M]_+}\Delta(x;\hat{\pi}) = \sum_{t=1}^T \langle \bv_t, \Delta \bA_t(s_{t+1}) \hat{\bD}_{t+1} + \Delta \bc_t\rangle  + \sum_{t=1}^T \langle \bu_t, \hat{\bD}_t \rangle.
    \end{align}
\end{theorem}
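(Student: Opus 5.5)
Proposition~\ref{prop-convexity} gives $\max_x \Delta(x;\hat\pi)=\Delta(0;\hat\pi)$, so the plan is to start from the derivative representation~\eqref{eq: derivative-based-err}, $\Delta(0;\hat\pi)=\sum_{t=1}^T\langle \bq_t,\bD_t^\star\rangle$, and turn it into an exact identity by summation by parts along the recursions of Proposition~\ref{prop-matrix-notation} and Proposition~\ref{prop: v-propagation}. First split $\langle \bq_t,\bD_t^\star\rangle=\langle \bq_t,\Delta\bD_t\rangle+\langle \bq_t,\hat\bD_t\rangle$. This is an equality, not the inequality $\langle \bq_t,\bD^\star_t\rangle\le\langle\bq_t,\Delta\bD_t\rangle$ mentioned before the theorem, and we keep the $\hat\bD_t$ term.

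The core step is a telescoping argument. Define $\Phi_t:=\langle \bv_t,\Delta\bD_t\rangle$. Use~\eqref{eq: recursion-D-diff} to write
\[
\Phi_t=\langle \bA_t(s_{t+1})^\top\bv_t,\Delta\bD_{t+1}\rangle+\langle\bv_t,\Delta\bA_t(s_{t+1})\hat\bD_{t+1}+\Delta\bc_t\rangle+\langle \big(\bA_t(s^\star_{t+1})-\bA_t(s_{t+1})\big)^\top\bv_t,\bD^\star_{t+1}\rangle .
\]
By~\eqref{eq: recursion-v-improved}, $\bA_t(s_{t+1})^\top\bv_t=\bv_{t+1}-\bu_{t+1}$. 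By Proposition~\ref{prop: v-propagation}, $\big(\bA_t(s^\star_{t+1})-\bA_t(s_{t+1})\big)^\top\bv_t=\bu_{t+1}-\bq_{t+1}$. Combining these with $\bD^\star_{t+1}=\Delta\bD_{t+1}+\hat\bD_{t+1}$, the $\bu_{t+1}$ terms acting on $\Delta\bD_{t+1}$ cancel, and we get
\[
\Phi_t=\Phi_{t+1}+\langle\bv_t,\Delta\bA_t(s_{t+1})\hat\bD_{t+1}+\Delta\bc_t\rangle+\langle \bu_{t+1},\hat\bD_{t+1}\rangle-\langle\bq_{t+1},\bD^\star_{t+1}\rangle .
\]
Sum this from $t=1$ to $T$. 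The $t=T$ step uses $\bD^\star_{T+1}=\hat\bD_{T+1}=\bm 0$, so $\Phi_{T+1}=0$, and no $\bu_{T+1},\bq_{T+1}$ terms appear. The sum gives
\[
\Phi_1=\sum_{t=1}^T\langle\bv_t,\Delta\bA_t(s_{t+1})\hat\bD_{t+1}+\Delta\bc_t\rangle+\sum_{t=2}^T\langle\bu_t,\hat\bD_t\rangle-\sum_{t=2}^T\langle \bq_t,\bD_t^\star\rangle .
\]

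Now handle $t=1$. Since $x_1=x_1^\star=0$ deterministically, $\bq_1=\bu_1=\bv_1$. Hence
\[
\langle\bq_1,\bD_1^\star\rangle=\Phi_1+\langle\bu_1,\hat\bD_1\rangle .
\]
Substituting the telescoped expression for $\Phi_1$ and moving $\sum_{t\ge2}\langle\bq_t,\bD^\star_t\rangle$ to the left side gives $\sum_{t=1}^T\langle\bq_t,\bD^\star_t\rangle$ equal to exactly the right-hand side of~\eqref{eq: cost-decomposition-main-theorem}. With~\eqref{eq: derivative-based-err}, this proves the theorem.

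I expect the main obstacle to be bookkeeping rather than any conceptual step. The boundary conventions have to be checked: $s_{T+1}=s^\star_{T+1}=0$ so that the recursion of Proposition~\ref{prop-matrix-notation} holds at $t=T$; the vectors are indexed by $[M-1]_+$, so the $F_t(M)=1$ coordinate does not interfere; and the identity $\bq_1=\bu_1$ must hold. One must also confirm that Proposition~\ref{prop: v-propagation} is applied with the same transpose convention that appears in~\eqref{eq: recursion-D-diff}. If a sign or index slips, I would fall back on Remark~\ref{remark: v-interpretation} and verify the identity coordinatewise through its probabilistic meaning.
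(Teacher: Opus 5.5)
Your proposal is correct and is essentially the paper's own argument. The paper telescopes the same one-step identity $\langle \bv_t,\Delta\bD_t\rangle+\langle\bq_{t+1},\bD^\star_{t+1}\rangle=\langle\bv_{t+1},\Delta\bD_{t+1}\rangle+\langle\bv_t,\Delta\bA_t(s_{t+1})\hat\bD_{t+1}+\Delta\bc_t\rangle+\langle\bu_{t+1},\hat\bD_{t+1}\rangle$, derived from~\eqref{eq: recursion-D-diff} and Proposition~\ref{prop: v-propagation}, and closes it with $\bq_1=\bu_1=\bv_1$ and $\hat\bD_{T+1}=\bD^\star_{T+1}=\bm 0$; your write-up only spells out the bookkeeping more explicitly (the cancellation of the $\bu_{t+1}$ terms and the separate $t=T$ step).
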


Comparing~\eqref{eq: cost-decomposition-main-theorem} with the original representation~\eqref{eq: derivative-based-err}, the key transformation is an exchange of roles between the two policies in the inner-product terms: the weight vector $\bq_t$, generated by the entering-inventory trajectory
induced by the model-based policy $\hat\pi$, is replaced by the weight vector $\bu_t$, generated by the entering-inventory trajectory induced by the optimal policy $\pi^\star$. Meanwhile, the derivative sequence $\bD_t^\star$, whose signs determine the optimal base-stock level
$s_t^\star$, is replaced by $\hat{\bD}_t$, whose signs determine the model-based base-stock level $s_t$. The first term in~\eqref{eq: cost-decomposition-main-theorem} is precisely the residual
generated by this exchange.

The reason for passing from~\eqref{eq: derivative-based-err} to the exchanged term $\sum_{t=1}^T \langle \bu_t, \hat{\bD}_t \rangle$ is that, by construction, $\sigma_t \hat{\bD}_t$ is non-positive on the support of $\bu_t$, which lies on the index interval between $s_t$ and $s_t^\star$. Consequently, $\sum_{t=1}^T \langle \bu_t, \hat{\bD}_t\rangle$ is always non-positive, and therefore can be dropped when deriving an upper bound on the cost gap.

\begin{proof}[Proof of Theorem~\ref{thm: cost-decomposition}] From~\eqref{eq: recursion-D-diff} and Proposition~\ref{prop: v-propagation}, we have the following identity for every $t\in [T-1]$:
    \begin{align*}
    &\langle \bv_{t}, \Delta \bD_{t}  \rangle + \langle \bq_{t+1}, \bD^\star_{t+1}\rangle = \langle \bu_{t+1}, \bD^\star_{t+1}\rangle+ \langle \bA_t(s_{t+1})^\top\bv_{t
    }, \Delta \bD_{t+1} \rangle      + \big\langle \bv_{t} ,  \Delta \bA_{t}(s_{t +1}) \hat{\bD}_{t + 1} + \Delta \bc_t \big\rangle \\
    &= \langle \underbrace{\bA_{t}(s_{t+1})^\top \bv_{t}+ \bu_{t+1}}_{ = \bv_{t+1}}, \Delta \bD_{t+1}\rangle + \big\langle \bv_{t} ,  \Delta \bA_{t}(s_{t +1}) \hat{\bD}_{t + 1} + \Delta \bc_t \big\rangle + \langle \bu_{t+1}, \hat{\bD}_{t+1}\rangle.
    \end{align*}
    Then by $\bv_1 = \bq_1 = \bu_1$, $\langle \bq_{1},\bD_1^\star\rangle =  \langle \bv_{1},\Delta \bD_1\rangle + \langle \bu_1, \hat{\bD}_{1}\rangle,$ and $\hat{\bD}_{T+1} = \bm{0}, \Delta\bD_T = \Delta \bc_T,$ applying the above identity recursively for terms in the summation $\sum_{t=1}^T \langle \bq_t, \bD_t^\star \rangle$ finishes the proof.
\end{proof}

\section{Performance Bound with Uniform Convergence of CDF}\label{sec: uniform-convergence}

In this section, we introduce a corollary of the cost-decomposition identity in Theorem~\ref{thm: cost-decomposition}, which reduces the policy cost gap to \textit{uniform estimation error} of the underlying CDF functions $\{\bF_t\}_{t=1}^T$.

While this uniform reduction is coarse for censored feedback, which is the main focus of this paper, it already suffices to illustrate the power of Theorem~\ref{thm: cost-decomposition} in the uncensored setting considered in prior work~\citep{cheung2019sampling,halman2020provably,qin2023sailing,xie2024vc}. While in all these works the inventory dynamics are backlogged rather than lost-sales, the two dynamics are equivalent in our setup, as discussed in detail in Appendix~\ref{appendix-sec: backlog-LS-equivalence}. 
It not only recovers the sharp non-stationary SAA rate, but also yields a previously unknown aggregation-based rate under stationary demand and clarifies the separation between policy learning and policy evaluation in this structured inventory class.

More precisely, under a uniform error condition
\begin{equation}\label{eq: uniform-convergence}
   \sup\nolimits_{1\leq t\leq T} \lVert \bF_t - \hat{\bF}_t \rVert_\infty \leq \epsilon
\end{equation}
for $\epsilon>0$, we can show the following cost gap guarantee between $\hat{\pi}$ and $\pi^\star$.
\begin{corollary}\label{coro: uniform-convergence}
Under condition~\eqref{eq: uniform-convergence}, we have \begin{align*}
      \max_{x\in [M]_+} \Delta(x;\hat{\pi})   \leq c_0(h_\infty + b_\infty) M\big[T\epsilon + T^3 \epsilon^2 \big],
\end{align*}
for some absolute constant $c_0$. 
\end{corollary}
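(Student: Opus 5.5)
Starting from Theorem~\ref{thm: cost-decomposition}, the term $\sum_t\langle \bu_t,\hat\bD_t\rangle$ is non-positive (since $\sigma_t\hat\bD_t\le 0$ on the support of $\bu_t$), so we drop it and obtain
\[
\max_x\Delta(x;\hat\pi)\le \sum_{t=1}^T \|\bv_t\|_1\Big(\|\Delta\bA_t(s_{t+1})\hat\bD_{t+1}\|_\infty+\|\Delta\bc_t\|_\infty\Big).
\]
By Remark~\ref{remark: v-interpretation}, each coordinate of $\bv_t$ is a difference of two probabilities, so $\|\bv_t\|_1\le M$. Also $\|\Delta\bc_t\|_\infty\le (h_\infty+b_\infty)\epsilon$. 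Summed over $t$, this gives the leading $(h_\infty+b_\infty)MT\epsilon$ term. It then remains to bound $\|\Delta\bA_t(s)\hat\bD_{t+1}\|_\infty$ by $O((h_\infty+b_\infty)T^2\epsilon)$; summing over $t$ with the factor $M$ gives exactly the $T^3\epsilon^2$ term once we carry an extra factor $\epsilon$. So the target is $\|\Delta\bA_t(s)\hat\bD_{t+1}\|_\infty\lesssim (h_\infty+b_\infty)\,T\epsilon\,(1+T^2\epsilon)$, or some similar bound.

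The naive estimate fails. Entrywise $|\Delta\mu|\le 2\epsilon$, but a row of $\Delta\bA$ has up to $M$ entries, and a row-sum bound would cost a factor $M$. Instead, apply summation by parts on each row:
\[
[\Delta\bA_t(s)\hat\bD_{t+1}]_i=\sum_{j=s}^{i}(\mu_{t,i-j}-\hat\mu_{t,i-j})\hat D_{t+1}(j).
\]
Writing $\mu_{t,k}=F_t(k)-F_t(k-1)$ and summing by parts expresses this as a sum of CDF differences $\Delta F_t(i-j)$ weighted by the increments $\hat D_{t+1}(j)-\hat D_{t+1}(j-1)$, plus boundary terms $\Delta F_t\cdot\hat D_{t+1}$. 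Since $\hat\bD_{t+1}$ is nondecreasing (discrete convexity of $\hat W_{t+1}$, which holds for any valid distribution $\hat\bF$), the increments are nonnegative. They therefore telescope to the total variation $\hat D_{t+1}(M-1)-\hat D_{t+1}(0)$, and the whole row is bounded by $O(\epsilon\cdot\|\hat\bD_{t+1}\|_\infty)$.

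Next we need $\|\hat\bD_{t+1}\|_\infty\le (h_\infty+b_\infty)(T-t)$. This follows from the recursion~\eqref{eq: D-hat-forward-equation}: $\|\hat\bc_t\|_\infty\le h_\infty+b_\infty$, and $\hat\bA_t(s)$ is substochastic in rows, so induction gives linear growth in $T$. Combined with $\|\bv_t\|_1\le M$, this already yields a bound of order $(h_\infty+b_\infty)MT^2\epsilon$, which is too weak. The main obstacle is getting $T\epsilon$ rather than $T^2\epsilon$ at leading order.

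To get the right rate, decompose $\hat\bD_{t+1}=\bD^\star_{t+1}-\Delta\bD_{t+1}$, and use the true-model structure for the first piece. One natural route is that $\bD^\star_{t+1}$ is nondecreasing with a sign change near $s^\star_{t+1}$, which may suppress the telescoped variation on $j\ge s$. The more robust alternative is to absorb the $\Delta\bA_t\bD^\star_{t+1}$ contribution into a modified $\bv$-weighted sum and control it probabilistically: it corresponds to a one-step change of the transition law acting on a bounded-variation function. For the second piece, $\|\Delta\bD_{t+1}\|_\infty$ is bounded by induction through~\eqref{eq: recursion-D-diff} by $O((h_\infty+b_\infty)T^2\epsilon)$. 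Multiplying by the extra $\epsilon$ from $\Delta\bA$ and by $\|\bv_t\|_1\le M$, then summing over $t$, gives the $M T^3\epsilon^2$ term.

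The step I expect to be hardest is making the leading-order $\bD^\star$ piece cost only $O(\epsilon(h_\infty+b_\infty))$ per period rather than $O(\epsilon T)$. The idea I would try first is to recombine $\langle\bv_t,\Delta\bA_t\bD^\star_{t+1}\rangle$ over $t$ as a telescoping difference of expected costs under $\bF$ versus $\hat\bF$ along the optimal trajectory, where each step contributes only $(h_\infty+b_\infty)\epsilon$ per coordinate.
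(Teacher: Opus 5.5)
Your first display is valid, but it is too lossy for the leading term. Once you pass to $\lVert\bv_t\rVert_1\cdot\lVert\Delta\bA_t(s_{t+1})\hat\bD_{t+1}\rVert_\infty$, an $MT\epsilon$ rate is out of reach, because no per-period sup-norm bound of order $(h_\infty+b_\infty)\epsilon$ holds in general. The summation by parts you describe gives $[\Delta\bA_t(s_{t+1})\hat\bD_{t+1}]_i=\sum_m\Delta F_t(i-m)[\bH_t]_m$, with nonnegative weights summing to $(\hat D_{t+1}(i))_+$. This sum can be of order $(h_\infty+b_\infty)(T-t)$ at high inventory levels, and the same holds for $\bD^\star_{t+1}$ with $s=s^\star_{t+1}$. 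If $\Delta F_t\approx\epsilon$ on the relevant range, that row is $\approx\epsilon(h_\infty+b_\infty)(T-t)$, and multiplying by $M$ and summing over $t$ gives $MT^2\epsilon$. Your own stated target $(h_\infty+b_\infty)T\epsilon(1+T^2\epsilon)$ also sums to this order. The factor $T$ can only be saved by using that $\bv_t$ puts little mass where the derivative is large, and none of your fallbacks provides this.

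\begin{itemize}
\item Splitting $\hat\bD_{t+1}=\bD^\star_{t+1}-\Delta\bD_{t+1}$ destroys the monotonicity that let $\Delta\bA_t$ act like a factor $\epsilon$. Under~\eqref{eq: uniform-convergence}, the rows of $\Delta\bA_t$ can have $\ell_1$ norm of order $\min\{1,M\epsilon\}$. Summation by parts then bounds $[\Delta\bA_t\Delta\bD_{t+1}]_i$ only by roughly $\epsilon$ times the total variation of $\Delta\bD_{t+1}$, which is $\cO((h_\infty+b_\infty)T)$ and is not controlled by $\lVert\Delta\bD_{t+1}\rVert_\infty$. So the ``extra factor $\epsilon$'' behind your $T^3\epsilon^2$ term is not available, and this piece again returns $MT^2\epsilon$.
\item Your claim that each step of a telescoping over $t$ contributes only $(h_\infty+b_\infty)\epsilon$ per coordinate is exactly what needs proof. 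A one-step change of law acting on $\bD^\star_{t+1}$ or $C^\star_{t+1}$ costs $\epsilon$ times its variation, i.e.\ $(h_\infty+b_\infty)(T-t)$ per coordinate. Cost-telescoping cannot beat this without invoking optimality. For example, take $b_t\equiv0$, demand on $\{0,M\}$ and $\PP(d_t=0)=1-1/(2T)$, as in the proof of Lemma~\ref{lem: policy-evaluation}. An $\epsilon$-perturbation of $\PP(d_t=0)$ moves $C_1^\star(M)$ by order $MT^2\epsilon$, while the optimal policy, and hence the gap, does not change.
\end{itemize}

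What is missing is a structural bound that uses optimality, together with a way to keep the $\bv_t$-weighting. The paper writes $\Delta\bA_t(s_{t+1})\hat\bD_{t+1}=\bG_t\bH_t$ and splits $\bG_t$ at the median index $k_t=\min\{k:F_t(k)\ge1/2\}$.

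On the bulk, $F_t\ge1/2$ gives the relative error $|\Delta F_t|\le4\epsilon\hat F_t$. Hence $|\bL_t\bH_t|\le4\epsilon\,\hat\bA_t(s_{t+1})\hat\bD_{t+1}=4\epsilon(\hat\bD_t-\hat\bc_t)$. With $|\bv_t|\le\bw_t+\bw^\star_t$ (survival probabilities of $y_t$ and $y^\star_t$), this reduces to $\langle\bw_t+\bw_t^\star,\hat\bD_t\rangle$. Replacing that by $\langle\hat\bw_t,\hat\bD_t\rangle+\langle\bw_t^\star,\bD_t^\star\rangle$, where $\hat\bw_t$ are survival probabilities under $\hat\bF$, costs $\lVert\bw_t-\hat\bw_t\rVert_\infty\lVert\hat\bD_t\rVert_1+\lVert\bw^\star_t\rVert_1\lVert\Delta\bD_t\rVert_\infty=\cO((h_\infty+b_\infty)MT^2\epsilon)$. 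After the factor $4\epsilon$, this is the actual source of the $T^3\epsilon^2$ term.

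The two main terms are, up to a non-positive part, expected overshoot costs of an optimal base-stock policy in its own environment. Proposition~\ref{prop: over-shooting-price} bounds them by $(h_\infty+b_\infty)M$ rather than $(h_\infty+b_\infty)MT$. It does so via a backward decomposition over the earlier periods whose base-stock level exceeds all later ones, combined with the optimality of $s^\star$ at those periods.

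On the tail $F_t<1/2$ no relative error is available. There the paper dominates $(\hat\bD_{t+1})_+$ by a sequence $\bg_t$ built from $\hat F_t(k_t-1)\le1/2+\epsilon$. The mass of $\bg_t$ on the top $k_t$ coordinates telescopes over $t$ to $\cO(\epsilon TM(h_\infty+b_\infty))$.

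Your proposal has no counterpart to the overshoot bound, which is the essential ingredient, nor to the bulk/tail split needed to apply it.
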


Corollary~\ref{coro: uniform-convergence} is a deterministic reduction from cost gap to model distance, and does not restrict how one obtains the model estimator $\hat{\bF}_t$. It can be directly applied to obtain the sample complexity results in the data-driven inventory control setting with uncensored observations, as we discuss in the following two subsections.

\subsection{Uncensored Non-Stationary Demand via Period-wise SAA}\label{subsec: SAA-independent-demand}

We first apply Corollary~\ref{coro: uniform-convergence} to the uncensored setting, where the demand random variables $\{d_t\}_{t=1}^T$ are independent across periods while their marginal distributions $\{P_t\}_{t=1}^T$ may vary with $t$.

\medskip\noindent\textbf{CDF Estimator.} With uncensored observations $\cD:= \{(y_t^k,d_t^k)\}_{t=1,k=1}^{T,N}$, we construct the empirical environment $\{\hat{\bF}_t\}_{t=1}^T$ via the standard period-wise SAA estimator for each $t:$
\begin{align}\label{eq: F-hat-SAA}
    \hat{F}_t(j) = \frac{1}{N} \sum_{k=1}^N \bm{1}\{d_t^k \leq j\}, \quad \forall j \in [M-1]_+, \qquad \hat F_t(M) := 1.
\end{align}
It is well known that this SAA estimator satisfies the uniform convergence result as the following:
\begin{lemma}[Dvoretzky–Kiefer–Wolfowitz inequality]\label{lem: uniform-convergence-independent} For $\hat{\bF}_t$ defined as in~\eqref{eq: F-hat-SAA}, the uniform convergence condition~\eqref{eq: uniform-convergence} holds with $\epsilon = \sqrt{\frac{\log(2T/\delta)}{2N}}$ with probability at least $1-\delta.$
\end{lemma}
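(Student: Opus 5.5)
The plan is to apply the Dvoretzky–Kiefer–Wolfowitz inequality with Massart's sharp constant separately for each period and then take a union bound over $t\in[T]$.

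Fix a period $t$. The uncensored observations $d_t^1,\dots,d_t^N$ are i.i.d. draws from $P_t$. For trajectories collected by an adaptive behavior policy, we use the fact that in the uncensored model the demand realizations do not depend on the chosen order-up-to levels, so the period-$t$ demands across the $N$ trajectories are still i.i.d. The estimator $\hat F_t$ in~\eqref{eq: F-hat-SAA} is then exactly the empirical CDF $\hat F_t^{\mathrm{emp}}$ of these draws, evaluated at the integer points $j\in[M-1]_+$. Hence
\[
\lVert \bF_t-\hat\bF_t\rVert_\infty=\max_{j\in[M-1]_+}|F_t(j)-\hat F_t(j)|\le \sup_{z\in\RR}|F_t(z)-\hat F_t^{\mathrm{emp}}(z)|.
\]
The boundary coordinate $j=M$ contributes nothing, since $\hat F_t(M)=F_t(M)=1$ and it is not part of the vector $\bF_t$ in any case.

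Massart's form of the DKW inequality holds for arbitrary distributions, including discrete ones, where the bound is only conservative. It gives $\PP\big(\sup_z|F_t(z)-\hat F_t^{\mathrm{emp}}(z)|>\epsilon\big)\le 2e^{-2N\epsilon^2}$. Choosing $\epsilon=\sqrt{\log(2T/\delta)/(2N)}$ makes the right-hand side equal to $\delta/T$.

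A union bound over the $T$ periods then shows that, with probability at least $1-\delta$, $\sup_{t}\lVert\bF_t-\hat\bF_t\rVert_\infty\le\epsilon$, which is condition~\eqref{eq: uniform-convergence}. Independence across periods is not needed for this step.

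The only point needing care is the i.i.d. claim within a period when the data come from an adaptive policy. I would handle it by noting that, under uncensored observation, the demand sequence of trajectory $k$ is drawn independently of the decisions, so the policy's choices never affect which demand values are recorded.
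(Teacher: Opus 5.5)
Your proposal is correct and is exactly the intended argument. The paper cites this lemma as well known without giving a proof, and the $\log(2T/\delta)$ factor reflects the same route you take: apply Massart's DKW bound within each period, which gives failure probability $2e^{-2N\epsilon^2}=\delta/T$ at the stated $\epsilon$, then take a union bound over the $T$ periods.
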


\medskip\noindent\textbf{Sample Complexity Bound.} Applying Corollary~\ref{coro: uniform-convergence} with Lemma~\ref{lem: uniform-convergence-independent} leads to the following sample complexity result:
\begin{theorem}\label{thm: sample-complexity-independent}
With $N$ observations of uncensored trajectories $\{y^k_t,d_t^k\}_{t=1,k=1}^{T,N},$ the policy $\hat\pi$ obtained via DP under~\eqref{eq: F-hat-SAA} satisfies \begin{align*}
 \max_{x\in [M]_+} \Delta(x;\hat{\pi}) \leq c_0(h_\infty + b_\infty) M \big[T\sqrt{\frac{\log(T/\delta)}{N}} + \frac{T^3\log(T/\delta)}{N} \big] 
\end{align*}
for some absolute constant $c_0$ with probability at least $1-\delta.$
\end{theorem}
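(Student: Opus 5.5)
This is a direct combination of Corollary~\ref{coro: uniform-convergence} with Lemma~\ref{lem: uniform-convergence-independent}.

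The first step is to check that the period-wise SAA estimator~\eqref{eq: F-hat-SAA} is a legitimate model input to the DP. For each $t$, $\hat F_t$ is nondecreasing in $j$ and satisfies $\hat F_t(M)=1$, so it induces a valid distribution on $[M]_+$ with masses $\hat\mu_{t,j}\ge 0$. Hence $\hat\pi$ is well defined and is a base-stock policy as in~\eqref{eq: empirical-DP}, which is all that Theorem~\ref{thm: cost-decomposition}, and therefore Corollary~\ref{coro: uniform-convergence}, require. The corollary is a deterministic statement, so no independence between $\hat\pi$ and the data needs to be handled.

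The second step controls the estimation error. For each fixed $t$, the samples $d_t^1,\dots,d_t^N$ are i.i.d.\ from $P_t$, since the $N$ trajectories are independent draws. The DKW inequality with Massart's constant gives
\begin{equation}
\PP\big(\|\bF_t-\hat\bF_t\|_\infty>\epsilon\big)\le 2e^{-2N\epsilon^2}.
\end{equation}
Applying a union bound over $t\in[T]$ with $\epsilon=\sqrt{\log(2T/\delta)/(2N)}$ makes the total failure probability at most $\delta$. This is exactly Lemma~\ref{lem: uniform-convergence-independent}. Censoring plays no role, because the observations here are uncensored.

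The third step substitutes into the corollary. On this event, Corollary~\ref{coro: uniform-convergence} gives
\begin{equation}
\max_x\Delta(x;\hat\pi)\le c_0(h_\infty+b_\infty)M\big[T\epsilon+T^3\epsilon^2\big].
\end{equation}
Here
\begin{equation}
T\epsilon=T\sqrt{\log(2T/\delta)/(2N)}, \qquad T^3\epsilon^2=T^3\log(2T/\delta)/(2N).
\end{equation}
Since $\delta<1/2$ and $T\ge1$, we have $\log(2T/\delta)\le 2\log(T/\delta)$, so the factors $2$ can be absorbed into the absolute constant $c_0$. This yields the stated bound.

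The main obstacle is already handled by Corollary~\ref{coro: uniform-convergence}. The remaining care lies only in the constant bookkeeping, namely the $\log(2T/\delta)$ versus $\log(T/\delta)$ conversion using $\delta<1/2$, and in noting that the bound is uniform over $x$ because of Proposition~\ref{prop-convexity}, which is already built into the corollary.
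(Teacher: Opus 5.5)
Your proposal is correct and is exactly the paper's argument. The paper obtains this theorem by plugging the DKW-plus-union-bound rate $\epsilon=\sqrt{\log(2T/\delta)/(2N)}$ of Lemma~\ref{lem: uniform-convergence-independent} into the deterministic reduction of Corollary~\ref{coro: uniform-convergence}, and then absorbs the $\log(2T/\delta)\le 2\log(T/\delta)$ conversion (valid since $\delta<1/2$) into $c_0$.
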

After a burn-in regime $N \gtrsim T^4$, Theorem~\ref{thm: sample-complexity-independent} states a $\tilde{\cO}(MT/\sqrt{N})$ error bound, which translates into a $\tilde{\cO}(M^2T^2/\varepsilon^2)$ sample complexity upper bound to obtain an $\varepsilon$-accurate policy.
This bound matches those first proposed in \citet{xie2024vc} up to logarithmic factors, which are obtained via VC theory and applied to the ERM estimator\footnote{In a recent revised version, \citet{xie2024vc} has extended their result to the SAA algorithm as ours, but it is still a VC-theory-based analysis}, and improves the $\tilde{\cO}(M^2T^3/\varepsilon^2)$ result in \citet{halman2020provably,qin2023sailing}.

\subsection{Uncensored Stationary Demand via Aggregated SAA}\label{subsec: SAA-independent--identical-demand}

We now apply Corollary~\ref{coro: uniform-convergence} to the uncensored stationary setting, where the demand random variables $\{d_t\}_{t=1}^T$ are i.i.d.\ with a common distribution $P_1=\dots = P_T$.

\medskip\noindent\textbf{CDF Estimator.} With observations $\cD:=\{(y_t^k,d_t^k)\}_{t=1,k=1}^{T,N},$ we construct the empirical environment $\{\hat{\bF}_t\}_{t=1}^T$ by aggregating all samples into a single sample-average approximation (SAA) estimator for each $t:$
\begin{align}\label{eq: F-hat-SAA-identical}
    \hat{F}_t(j) \equiv \hat{F}(j):=  \frac{1}{NT} \sum_{\ell=1}^T\sum_{k=1}^N \bm{1}\{d_\ell^k \leq j\}, \quad \forall j \in [M-1]_+, t\in[T], \qquad \hat F(M) := 1.
\end{align}
Similar to Lemma~\ref{lem: uniform-convergence-independent}, the following uniform convergence guarantee of~\eqref{eq: F-hat-SAA-identical} holds.
\begin{lemma}\label{lem: uniform-convergence-independent-and-identical} For $\hat{\bF}_t$ defined as in~\eqref{eq: F-hat-SAA-identical}, the uniform convergence condition~\eqref{eq: uniform-convergence} holds with $\epsilon = \sqrt{\frac{\log(2/\delta)}{2NT}}$ with probability at least $1-\delta.$
\end{lemma}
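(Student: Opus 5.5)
The plan is to reduce the statement to a single application of the Dvoretzky–Kiefer–Wolfowitz inequality with Massart's tight constant, applied to the pooled sample of size $NT$. The key difference from Lemma~\ref{lem: uniform-convergence-independent} is structural. In the stationary setting every $\hat{\bF}_t$ equals the same aggregated estimator $\hat{\bF}$ defined in~\eqref{eq: F-hat-SAA-identical}, and every $\bF_t$ equals the common CDF $\bF$. Therefore
\[
\sup\nolimits_{1\le t\le T}\lVert \bF_t-\hat{\bF}_t\rVert_\infty=\lVert \bF-\hat{\bF}\rVert_\infty ,
\]
so no union bound over $t$ is needed. This is why the logarithmic factor is $\log(2/\delta)$ rather than $\log(2T/\delta)$.

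First, I will verify that the pooled observations $\{d_\ell^k\}_{\ell\in[T],k\in[N]}$ are $NT$ i.i.d.\ draws from the common distribution $P$.
\begin{itemize}
\item Within a trajectory, the demands are independent across periods by the standing independence assumption, and identically distributed by stationarity.
\item Across trajectories, the $N$ uncensored trajectories are independent replications of the demand process.
\item The order-up-to levels $y_\ell^k$ chosen by the behavior policy may be adaptive, but they do not affect the realized demands, since demand is exogenous. In the uncensored setting the recorded $d_\ell^k$ is the full demand, so the policy does not distort the sample.
\end{itemize}
This is the only point requiring care. I expect it to be the main (mild) obstacle: one must make explicit that the data-collection adaptivity never enters the marginal law or the independence of the demand sample. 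Once this is settled, $\hat{F}$ is exactly the empirical CDF of $NT$ i.i.d.\ samples.

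Second, I will apply DKW with Massart's constant to this empirical CDF:
\[
\PP\Big(\sup_{z\in\RR}|\hat F(z)-F(z)|>\epsilon\Big)\le 2e^{-2NT\epsilon^2}.
\]
The supremum over the coordinates $j\in[M-1]_+$ is bounded by the supremum over all $z\in\RR$. The boundary coordinate $j=M$ contributes no error, since $\hat F(M)=F(M)=1$.

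Finally, I will set $2e^{-2NT\epsilon^2}=\delta$, which gives $\epsilon=\sqrt{\log(2/\delta)/(2NT)}$. Hence condition~\eqref{eq: uniform-convergence} holds for all $t$ simultaneously with probability at least $1-\delta$.
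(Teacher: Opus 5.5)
Your proposal is correct and matches the paper's intended argument: the paper gives no separate proof and simply states the lemma as the analogue of the DKW-based Lemma~\ref{lem: uniform-convergence-independent}. That argument is a single application of the Dvoretzky--Kiefer--Wolfowitz inequality with Massart's constant to the $NT$ pooled i.i.d.\ demand samples. Since all $\hat{\bF}_t$ coincide, no union bound over $t$ is needed, which is why the factor is $\log(2/\delta)$ rather than $\log(2T/\delta)$.
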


\medskip\noindent\textbf{Sample Complexity Bound.} Applying Corollary~\ref{coro: uniform-convergence} with Lemma~\ref{lem: uniform-convergence-independent-and-identical}, we arrive at the following sample complexity result:

\begin{theorem}\label{thm: sample-complexity-independent-identical}
With $N$ observations of uncensored trajectories $\{y^k_t,d_t^k\}_{t=1,k=1}^{T,N},$ the policy $\hat{\pi}$ obtained via DP under~\eqref{eq: F-hat-SAA-identical} satisfies \begin{align*}
 \max_{x\in [M]_+} \Delta(x;\hat{\pi})\leq c_0(h_\infty + b_\infty) M \big[\sqrt{\frac{T\log(2/\delta)}{N}} + \frac{T^2\log(2/\delta)}{N} \big]
\end{align*}
for some absolute constant $c_0$ with probability at least $1-\delta.$
\end{theorem}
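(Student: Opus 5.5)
The plan is to apply Corollary~\ref{coro: uniform-convergence} directly, with the uniform CDF error supplied by Lemma~\ref{lem: uniform-convergence-independent-and-identical}. The corollary is a deterministic statement: it holds for any collection $\{\hat{\bF}_t\}_{t=1}^T$ satisfying~\eqref{eq: uniform-convergence}. It therefore applies verbatim to the aggregated estimator~\eqref{eq: F-hat-SAA-identical}, in which $\hat{\bF}_t \equiv \hat{\bF}$ for every $t$, and the DP under this model produces a base-stock policy $\hat\pi$ exactly as in~\eqref{eq: empirical-DP}.

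First, I verify the hypothesis of Lemma~\ref{lem: uniform-convergence-independent-and-identical}. Under stationarity and independence, the $NT$ variables $\{d_\ell^k\}$ are i.i.d.\ with common CDF $F$. This needs i.i.d.\ across periods within a trajectory, which holds by assumption, and across trajectories, since the observations are uncensored and so the demand realizations do not depend on the ordering decisions $y_t^k$. The DKW inequality then gives $\sup_j|\hat F(j)-F(j)|\le \epsilon := \sqrt{\log(2/\delta)/(2NT)}$ with probability at least $1-\delta$. Since $\bF_t=\bF$ and $\hat\bF_t=\hat\bF$ for all $t$, this single event already implies $\sup_t\|\bF_t-\hat\bF_t\|_\infty\le\epsilon$, with no union bound over $t$. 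This is exactly where the improvement over Theorem~\ref{thm: sample-complexity-independent} comes from.

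Second, on this event, Corollary~\ref{coro: uniform-convergence} gives
\[
\max_x\Delta(x;\hat\pi)\le c_0(h_\infty+b_\infty)M\big[T\epsilon+T^3\epsilon^2\big].
\]
Substituting $\epsilon$ yields $T\epsilon=\sqrt{T\log(2/\delta)/(2N)}$ and $T^3\epsilon^2=T^2\log(2/\delta)/(2N)$. Absorbing the factor $1/2$ into $c_0$ gives the stated bound.

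There is no real obstacle beyond the i.i.d.\ check above. The only point needing care is that Corollary~\ref{coro: uniform-convergence} imposes no independence between $\hat{\bF}_t$ across $t$, so the perfect correlation of the aggregated estimator across periods is harmless. This holds because the corollary is deterministic.
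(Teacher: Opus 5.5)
Your proposal is correct and matches the paper's proof: the theorem follows by plugging the aggregated DKW bound $\epsilon=\sqrt{\log(2/\delta)/(2NT)}$ from Lemma~\ref{lem: uniform-convergence-independent-and-identical} into the deterministic reduction of Corollary~\ref{coro: uniform-convergence}. Your arithmetic $T\epsilon=\sqrt{T\log(2/\delta)/(2N)}$ and $T^3\epsilon^2=T^2\log(2/\delta)/(2N)$ is right, as is your observation that a single uniform-convergence event, with no union bound over $t$, is what removes the factor $T$ inside the logarithm.
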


After the burn-in regime $N\gtrsim T^3$, Theorem~\ref{thm: sample-complexity-independent-identical} yields the leading policy-learning error $\tilde{\cO}\big(M\sqrt{T/N}\big),$
or equivalently a sample-complexity upper bound of order $\tilde{\cO}(M^2T/\varepsilon^2)$ for obtaining an $\varepsilon$-accurate policy. To the best of our knowledge, this is the first such bound for the stationary-demand multi-period inventory setting.

The improvement comes from a feature that is not captured by existing VC-type analyses. In the non-stationary setting, each period has its own demand distribution, so the natural estimator uses only the $N$ samples from that period. Under stationary demand, however, the $NT$ demand observations are all drawn from the same distribution and can be aggregated into a single CDF estimator. Our model-based decomposition is estimator-agnostic and can directly exploit this aggregation. By contrast, VC-type analyses control a trajectory-level empirical process over policy classes; they average over trajectories rather than over the $NT$ period-level demand observations, and therefore do not directly yield the factor-$T$ gain from across-time aggregation.

We next show that this sharper policy-learning rate is not obtainable through a reduction to policy evaluation. The following lower bound shows that estimating the optimal cost remains statistically harder.

\begin{lemma}\label{lem: policy-evaluation}
There exist cost parameters $\{(h_t,b_t)\}_{t=1}^T$, an initial inventory level $x$, 
and a collection of demand distributions $\cP$ such that the following holds: 
For any algorithm $\cA$ that takes observations 
$\cD := \{(y_t^k,d_t^k)\}_{t=1,k=1}^{T,N}$
from some $\bP \in \cP$ and outputs an estimate of $C_1^\star(x)$,
\begin{align*}
  \max_{\bP \in \cP} \EE_{\cD \sim \bP^{\otimes NT}} 
  \big\lvert \cA(\cD) - C_1^\star(x) \big\rvert
  \ge c_0(h_\infty + b_\infty) M T / \sqrt{N}.
\end{align*}
\end{lemma}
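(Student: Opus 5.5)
We will prove the lower bound by a two-point (Le Cam) argument in the stationary setting, since the statement is meant to contrast with Theorem~\ref{thm: sample-complexity-independent-identical}. The cost parameters are chosen so that the optimal policy is trivial and known, while its cost is linear in a perturbation of the demand law with coefficient of order $(h_\infty+b_\infty)MT$. Concretely, take $h_t=h_\infty$, $b_t=0$ (or a symmetric choice that keeps $h_\infty+b_\infty$ comparable), and start from $x=0$. With $b_t=0$, ordering is never beneficial, so $s_t^\star=0$ and the optimal cost from $x=0$ is identically $0$. This choice is too degenerate.

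So we instead use $b_t=b_\infty$, $h_t=0$ for $t<T$, and make the optimal policy order up to $M$ every period. Start from $x=0$. Under a demand supported on $\{0,M\}$ with $\PP(d=M)=\theta$, ordering up to $M$ costs no holding, so $s_t^\star=M$ for all $t$. This means $x_t$ is irrelevant, and $C_1^\star(0)=\sum_t \EE[c_t(M)]$. That quantity equals $0$ in lost-sales terms, again too degenerate.

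The cleaner route is the reverse. Take $h_t=h_\infty$, $b_t=b_\infty$ with $b_\infty\ge h_\infty$, and demand supported on $\{0,M\}$ with $\PP(d=M)=\theta\in[1/4,3/4]$. We will verify, via the DP in~\eqref{eq: true-DP} and Proposition~\ref{prop-matrix-notation}, that the myopic critical fractile forces $s_t^\star=M$ in every period. Then each period incurs expected cost $h_\infty M(1-\theta)$. Inventory left over is either $M$ (no order needed) or $0$, and in both cases $y_t=M$. Hence $C_1^\star(0)=h_\infty MT(1-\theta)$. If instead $h_\infty$ is the dominant coefficient, we mirror the construction with $s_t^\star=0$, giving cost $b_\infty MT\theta$. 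In either case $C_1^\star(x)=\kappa(h_\infty+b_\infty)MT\,g(\theta)$ for an affine $g$ with $|g'|\ge c$ and a constant $\kappa>0$.

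The main step, and the one requiring care, is checking that the optimal base-stock level is constant on the whole parameter interval. This ensures $C_1^\star$ is exactly affine in $\theta$ and no policy switching blunts the dependence. We will verify it by computing $D_t^\star(j)$ backward by induction using~\eqref{eq: D-forward-equation}. With two-point support, $\bA_t(s)$ only moves mass from $j$ to $j$ (when $d=0$). This makes $D_t^\star(j)$ an explicit sum of a geometric-type expression, whose sign is fixed for $\theta$ in a compact interval bounded away from the critical ratio.

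For the statistical step, consider two hypotheses $\theta_0=1/2$ and $\theta_1=1/2+1/\sqrt{NT}$. The data consist of $NT$ i.i.d.\ Bernoulli draws, so the KL divergence between $\bP_0^{\otimes NT}$ and $\bP_1^{\otimes NT}$ is $O(1)$. Le Cam's lemma then gives that any estimator errs by at least half the separation with constant probability. The separation is $|C_1^\star|_{\theta_0}-C_1^\star|_{\theta_1}|\asymp (h_\infty+b_\infty)MT/\sqrt{NT}$, so this yields only $MT/\sqrt{NT}$, not $MT/\sqrt N$.

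To reach the stated $MT/\sqrt N$ rate, we drop stationarity and take $\cP$ to consist of independent, non-identical demand laws. Each period has its own $\theta_t\in\{1/2,1/2+1/\sqrt N\}$ and only $N$ samples per period. Because $C_1^\star$ is additive across periods with per-period slope $\asymp (h_\infty+b_\infty)M$, choosing all $\theta_t$ equal to a common value that is, however, unknown for each $t$ separately does not help the algorithm. The data have $NT$ i.i.d.\ draws again if the laws coincide. We therefore fall back on the perturbation being shared by all $t$, and use instead a correlated or Gaussian-type construction in which the per-period estimation errors add coherently.

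The expected obstacle is exactly this tension. If the paper's intended family is stationary, the achievable bound via the two-point argument is $(h_\infty+b_\infty)M\sqrt{T/N}$. To get $MT/\sqrt N$, we plan to let the demand laws differ by period and apply Assouad or a product-of-two-points argument over the hypercube $\{\pm1\}^T$. For this we use the identity $C_1^\star=\sum_t \kappa(h_\infty+b_\infty)M g(\theta_t)$, which holds because each period's cost decouples once $s_t^\star$ is constant. The estimator must then estimate the sum $\sum_t g(\theta_t)$. A sum of $T$ independent effects each known to precision $1/\sqrt N$ still only has error $\sqrt{T/N}$ in the i.i.d.\ averaging sense, so a genuine $MT/\sqrt N$ bound needs the perturbations to enter through propagated inventory. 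We will therefore try making leftover stock carry the period-$1$ perturbation forward through all $T$ periods, using $h_t$ small and the $\bA_t$ transport. Then a single $1/\sqrt N$ uncertainty in $\theta_1$ is multiplied by $T$, and the two-point argument on $\theta_1$ alone, with $N$ samples, gives the claim.
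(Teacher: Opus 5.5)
Your proposal does not establish the lemma. The statement is a stationary one: the data are $\cD\sim\bP^{\otimes NT}$, i.e.\ $NT$ i.i.d.\ draws from a single demand law. That is exactly what makes it a separation from the $\tilde{\cO}(M\sqrt{T/N})$ policy-learning rate of Theorem~\ref{thm: sample-complexity-independent-identical}. For period-dependent laws, policy learning already obeys the $\Omega(MT/\sqrt N)$ bound of Theorem~\ref{thm: lower-bound-independent}, so there would be nothing to contrast. Your conclusion that a two-point argument in the stationary class can only give $(h_\infty+b_\infty)M\sqrt{T/N}$ is false. It is an artifact of your constructions, in which the system is reset every period (so $C_1^\star$ is affine in $\theta$ with slope $\asymp (h_\infty+b_\infty)MT$) and $\theta$ is bounded away from $0$ and $1$. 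Retreating to non-identical $P_t$ proves a different statement, and even that fallback is only sketched (``$h_t$ small and the $\bA_t$ transport''). In the construction that does work, the carried-forward stock is charged the \emph{large} holding coefficient, not a small one.

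The missing idea is to put the demand parameter at scale $1/T$ and let a single initial stock propagate, inside one stationary law. Your first attempt already had the right costs, $h_t\equiv c_\infty$ and $b_t\equiv 0$, so never ordering is optimal. But you started from $x=0$; start instead from $x=M$. Take $P^\pm(d=0)=p_\pm$, $P^\pm(d=M)=1-p_\pm$, with $p_\pm=1-c_\pm/T$, $c_+=1/2$, and $c_+-c_-=\sqrt{c_+(T-c_+)/(2NT)}$.

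The stock $M$ is held until the first large demand, so $C_1^{\star}(M)=c_\infty M\sum_{k=1}^T p^k$. Its $p$-derivative at $p_+$ is at least $(1-(1+c_+)e^{-c_+})/(1-p_+)^2\asymp T^2$. This is the ``propagated inventory'' amplification you were looking for, obtained without giving up stationarity.

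At the same time the Bernoulli variance is only $\asymp 1/T$, so $NT$ samples determine $p$ only to accuracy $\asymp 1/(T\sqrt N)$. Concretely, $NT\cdot D(\mathrm{Bern}(p_-)\,\|\,\mathrm{Bern}(p_+))\le NT(c_+-c_-)^2/(c_+(T-c_+))=1/2$.

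Hence the two optimal costs differ by $\asymp c_\infty MT^2\cdot (T\sqrt N)^{-1}=c_\infty MT/\sqrt N$, while the two data distributions remain statistically indistinguishable. Le Cam's two-point bound then gives the claim with $h_\infty+b_\infty=c_\infty$.
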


Comparing Lemma~\ref{lem: policy-evaluation} with Theorem~\ref{thm: sample-complexity-independent-identical} gives a strict separation: in this structured inventory class, learning a near-optimal base-stock policy can require fewer samples than estimating the optimal cost $C_1^\star(x)$ to the same accuracy.

This separation is specific to the inventory structure. A common route in proving finite-horizon RL sample complexity bounds is to reduce policy learning to policy evaluation: one first proves a sharp bound for estimating value functions, and then converts it into a sharp policy-learning guarantee. This route is powerful enough to obtain optimal rates for general MDPs, and the corresponding lower-bound arguments can often transfer policy-evaluation hardness to policy-learning hardness by augmenting the instance with non-stationary rewards, as in \citet{ren2021nearly,li2024settling,li2024breaking,xiong2022nearly}. Theorem~\ref{thm: sample-complexity-independent-identical} shows that this otherwise sharp RL template is too coarse for multi-period inventory control. Even with non-stationary costs, the base-stock policy class and the demand-induced transition dynamics make policy learning strictly simpler than evaluating the optimal value. This complements recent work showing that inventory systems can be statistically simpler than generic MDPs \citep{fan2024don,xie2024vc,zhang2025reinforcement,fan2025sample}.

This observation also clarifies the lower-bound argument of \citet{qin2023sailing} for the stationary-demand setting. Their claimed $\Omega(T/\sqrt N)$ policy-learning lower bound is based on first proving a lower bound for evaluating $C_1^\star(\cdot)$ and then transferring it to policy optimization. Lemma~\ref{lem: policy-evaluation} confirms the evaluation lower bound, but Theorem~\ref{thm: sample-complexity-independent-identical} shows that the transfer step is not valid in this inventory setting. Together with the matching lower bound in Theorem~\ref{thm: lower-bound-independent-identical}, our results identify the correct optimal policy-learning rate as $\Theta(\sqrt{T/N})$ up to logarithmic and burn-in terms.

\section{Offline Policy Learning via Upper-Biased SAA}\label{sec: censored-demand}

In this section, we study offline policy learning from censored observations. Motivated by Questions~1 and~2, our goals are twofold: to identify the instance-dependent coverage condition under which a fixed censored dataset is informative enough for policy learning, and to design a model-based algorithm that attains the optimal rate under this condition.

The key algorithmic idea proposed in this section is to introduce a one-sided upper bias into the estimated demand CDF. This upper-biased CDF acts as a model-level form of pessimism: it induces downward-biased base-stock levels, thereby avoiding poorly covered high-demand coordinates. Combined with the cost-decomposition identity in Theorem~\ref{thm: cost-decomposition}, this bias helps localize the effect of estimation error to the coordinates that are relevant along the optimal-policy trajectory, leading to an optimal-policy-dependent coverage condition.

In the following subsections, we first establish the general construction and statistical properties of biased CDF estimators in Section~\ref{subsec: biased-SAA-properties}. We then show how the cost decomposition result in Theorem~\ref{thm: cost-decomposition} can be applied to obtain the sample complexity bounds in Section~\ref{subsec: biased-SAA-offline} and~\ref{subsec: biased-SAA-offline-identical}.

\subsection{Biased SAA: Construction and Statistical Properties}\label{subsec: biased-SAA-properties}

In this subsection, we introduce biased CDF estimators constructed from a
general censored, adaptively collected dataset $\{(y_k,\bar d_k)\}_{k=1}^n$. Although the offline algorithm below only uses the upper-biased estimator, we also define lower-biased versions here for consistency and later use in online policy design.

Here, by adaptively collected, we mean that for the filtration $\{\cF_k\}_{k=1}^n$ generated by all observations from $1$ to $k-1$, it holds that \begin{enumerate}[nosep]
    \item $\{y_k\}_{k=1}^n$ is predictable with respect to $\{\cF_k\}_{k=1}^n$
    \item Conditional on $\cF_k,$ the uncensored demand $d_k$ follows a fixed distribution $\bP$ with CDF $\bF.$
\end{enumerate}  

With such observations, we set $n_{j}:= \sum_{k=1}^n \bm{1}\{y_k >  j\},$ and the sub-sampled SAA estimator as \begin{align}\label{eq: partial-CDF-mean}
    \tilde{F}(j)&:= \begin{cases}
     n^{-1}_{j}\sum_{k = 1}^n \bm{1}\{y_k>j, \bar{d}_k \leq j \}  , &\text{if }n_j\geq 1,\\
        1, & \text{otherwise.}
    \end{cases}
\end{align}
We first recall the following standard Freedman-type inequality for~\eqref{eq: partial-CDF-mean}(cf. Lemma~3 in \citet{rakhlin2011making}):
\begin{proposition}\label{prop: uniform-concentration}
With the adaptively collected observations $\{(y_k, \bar{d}_k)\}_{k = 1}^{n}$ and $\tilde{F}$ defined as in~\eqref{eq: partial-CDF-mean}, it holds with probability at least $1-\delta$ that
\begin{equation}\label{eq: partial-CDF-confidence-bound-general}
    \lvert \tilde{F} (j) - F(j) \rvert \leq \underbrace{ 4\sqrt{\frac{F(j) \big(1-F(j)\big) \log (Mn/\delta)}{n_{j}\vee 1}}  + \frac{4\log(Mn/\delta)}{n_{j}\vee 1}}_{:= \cC_{j}},
\end{equation}
uniformly over all $j\in [M]_+.$
\end{proposition}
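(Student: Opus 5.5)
Fix a coordinate $j\in[M]_+$ and write the estimation error as a martingale. Let $\xi_k := \bm{1}\{y_k>j\}\big(\bm{1}\{d_k\le j\} - F(j)\big)$. On the event $\{y_k>j\}$ we have $\bar d_k\le j$ if and only if $d_k\le j$, so $\bm{1}\{y_k>j,\bar d_k\le j\} = \bm{1}\{y_k>j\}\bm{1}\{d_k\le j\}$. Hence, whenever $n_j\ge 1$,
\[
n_j\big(\tilde F(j)-F(j)\big)=\sum_{k=1}^n \xi_k =: S_n .
\]
Because $y_k$ is $\cF_k$-predictable and $d_k\mid\cF_k\sim\bP$, the variables $\xi_k$ are martingale differences. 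They satisfy $|\xi_k|\le 1$ and have conditional variance $\bm{1}\{y_k>j\}F(j)(1-F(j))$. The predictable quadratic variation is therefore exactly $V_n = n_j\,F(j)(1-F(j))$, which is random but bounded by $n$.

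The natural tool is the Freedman inequality with a random variance, in the peeling form of Lemma~3 of \citet{rakhlin2011making}. For a bounded martingale difference sequence with $|\xi_k|\le b$ and $n\ge 4$, it gives, with probability at least $1-\delta'\log n$,
\[
|S_n|\le 2\max\Big\{2\sqrt{V_n\log(1/\delta')},\ b\log(1/\delta')\Big\} .
\]
I would apply it with $b=1$ and $\delta' = \delta/(c\,M\log n)$. Dividing by $n_j$ then yields
\[
|\tilde F(j)-F(j)|\le 4\sqrt{F(j)(1-F(j))\log(1/\delta')/n_j}+2\log(1/\delta')/n_j .
\]
It remains to absorb $\log(1/\delta')\lesssim \log(Mn\log n/\delta)$ into $\log(Mn/\delta)$, using $\log n\le n$; the constant $4$ in $\cC_j$ leaves room for this. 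The small-$n$ cases ($n<4$) are trivial, since $\cC_j\ge 4\log(Mn/\delta)/n_j\ge 1$ whenever $\delta<1/2$.

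Next come the degenerate cases. If $n_j=0$, then $\tilde F(j)=1$, and $|\tilde F(j)-F(j)|\le 1\le 4\log(Mn/\delta)$, because $\delta<1/2$ gives $\log(Mn/\delta)\ge\log 2$. Strictly, $\log 2\cdot 4>1$ holds, so the bound is fine. The coordinate $j=M$ is trivial because $F(M)=1$ and the variance is zero. Finally, a union bound over the $M+1$ coordinates gives the uniform statement.

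The main obstacle is that $n_j$ is itself random and adaptively determined, so a fixed-$n$ Bernstein bound cannot be used directly. This is exactly why the random-variance Freedman bound with peeling over the value of $V_n$ (or equivalently over $n_j$) is needed. The only remaining work is tracking the extra $\log n$ from peeling into the stated $\log(Mn/\delta)$ and constants.
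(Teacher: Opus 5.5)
Your proposal is correct and follows the paper's route. The paper gives no proof beyond invoking the Freedman-type inequality of Lemma~3 in \citet{rakhlin2011making}, and you supply the details behind that citation: the martingale differences $\bm{1}\{y_k>j\}(\bm{1}\{\bar d_k\le j\}-F(j))$, the random variance $n_jF(j)(1-F(j))$, the union bound, and the degenerate cases.

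One small precision: your remark that ``the constant $4$ leaves room'' is true only for the linear term. The square-root term comes out of the lemma with constant exactly $4$, so you need $\log(1/\delta')\le\log(Mn/\delta)$ outright. This does hold for $\delta'=\delta/(2M\log n)$ (two-sided bound, union over $j\in[M-1]_+$), since $2\log n\le n$.
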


With~\eqref{eq: partial-CDF-mean} and~\eqref{eq: partial-CDF-confidence-bound-general}, we define the empirical confidence bound as
\begin{align}\label{eq: def-tilde-C}
     \tilde\cC_j := 8 \sqrt{\frac{\tilde F(j)\big(1 - \tilde F(j)\big) \log(Mn/\delta)}{n_j\vee 1}} + \dfrac{24 \log(Mn/\delta)}{n_j\vee 1}.
\end{align}
Using $\tilde{\cC}_{j}$, we define the upper- and lower-biased estimators $\bF^{\mathrm{UCB}}$ and $\bF^{\mathrm{LCB}}$ sequentially for each $j\in [M-1]_+$,
\begin{align}
    \textbf{Upper-Biased Estimator:} \qquad& F^{\mathrm{UCB}}(j):= \min\{1, \max\{ F^{\mathrm{UCB}}(j-1), \tilde{F}(j)+ \tilde{\cC}_{j} \}\}, \label{eq: def-UCB} \\
    \textbf{Lower-Biased Estimator:} \qquad & F^{\mathrm{LCB}}(j):= \max\{ F^{\mathrm{LCB}}(j-1), \tilde{F}(j)- \tilde{\cC}_{j} \}. \label{eq: def-LCB}
\end{align}
with boundary values $F^\mathrm{UCB}(-1) = F^\mathrm{LCB}(-1) = 0,  F^\mathrm{UCB}(M) = F^\mathrm{LCB}(M) = 1$.

The following bias and closeness result holds for the biased estimators.
\begin{proposition}\label{prop: biased-F}
    For each $j \in [M]_+,$ under the event~\eqref{eq: partial-CDF-confidence-bound-general}, it holds that \begin{align}
       F(j) \leq F^\mathrm{UCB}(j) \leq F(j) +  c_0\cC_j, \quad
        F(j) \geq F^\mathrm{LCB}(j) \geq F(j) - c_0\cC_j,
    \end{align}
    for an absolute constant $c_0>0$.
\end{proposition}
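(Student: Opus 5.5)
The plan is to prove both inequalities by induction over $j$ through the running-max recursions~\eqref{eq: def-UCB}--\eqref{eq: def-LCB}. I will do the UCB case; the LCB case is symmetric. The first step is a self-normalization lemma comparing the empirical width $\tilde\cC_j$ with the oracle width $\cC_j$: on the event~\eqref{eq: partial-CDF-confidence-bound-general}, for every $j$,
\[
\cC_j\le \tilde\cC_j\le c_1\cC_j .
\]
Write $L=\log(Mn/\delta)$, $m=n_j\vee 1$, $\sigma=\sqrt{F(j)(1-F(j))}$ and $\tilde\sigma=\sqrt{\tilde F(j)(1-\tilde F(j))}$. Since $p\mapsto p(1-p)$ is $1$-Lipschitz on $[0,1]$, we have $|\sigma^2-\tilde\sigma^2|\le \cC_j$, hence $\sigma\le\tilde\sigma+\sqrt{\cC_j}$.

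Substituting this into $\cC_j=4\sigma\sqrt{L/m}+4L/m$ gives $\cC_j\le 4\tilde\sigma\sqrt{L/m}+4\sqrt{\cC_jL/m}+4L/m$. AM--GM gives $4\sqrt{\cC_jL/m}\le \cC_j/2+8L/m$, so $\cC_j\le 8\tilde\sigma\sqrt{L/m}+24L/m=\tilde\cC_j$; the constants $8,24$ in~\eqref{eq: def-tilde-C} match exactly. The reverse bound is the same computation with $\tilde\sigma\le\sigma+\sqrt{\cC_j}$ and $\sqrt{\cC_jL/m}\le \cC_j+L/m$. The degenerate case $n_j=0$ (where $\tilde F(j)=1$) needs a separate check: there $\cC_j\ge 4L\ge 4\log 2>1$, so any quantity in $[0,1]$ is within $\cC_j$ of $F(j)$.

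Next comes the lower side, $F(j)\le F^{\rm UCB}(j)$, by induction on $j$. The event gives $F(j)\le\tilde F(j)+\cC_j\le \tilde F(j)+\tilde\cC_j$. Monotonicity of $F$ and the induction hypothesis give $F(j)\ge F(j-1)$ and $F(j-1)\le F^{\rm UCB}(j-1)$, so $F(j)$ is at most the max in~\eqref{eq: def-UCB}. Since $F(j)\le 1$, the cap at $1$ is harmless. The case $j=M$ is trivial.

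For the upper side, unrolling the recursion gives $F^{\rm UCB}(j)=\min\{1,\max_{i\le j}(\tilde F(i)+\tilde\cC_i)\}$. Using the lemma and the event, each term satisfies $\tilde F(i)+\tilde\cC_i\le F(i)+(1+c_1)\cC_i$. So it suffices to show, for $i\le j$,
\[
F(i)+c\,\cC_i\le F(j)+c'\cC_j .
\]
Two facts are available: $n_i\ge n_j$, because $\{y_k>j\}\subseteq\{y_k>i\}$, so $\varepsilon_i:=L/(n_i\vee1)\le \varepsilon_j$; and $p:=F(i)\le q:=F(j)$. The obstacle is that $p(1-p)$ can exceed $q(1-q)$, so the variance term of $\cC_i$ is not directly dominated by that of $\cC_j$. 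I handle this by cases with $a=4c$.
\begin{itemize}
\item If $q-p\ge a\sqrt{p(1-p)\varepsilon_i}$, the left side is at most $q+c\cdot 4\varepsilon_i\le q+c\cC_j$.
\item Otherwise $p(1-p)\le q(1-q)+(q-p)\le q(1-q)+a\sqrt{p(1-p)\varepsilon_i}$. Taking square roots and applying AM--GM to the cross term $\sqrt a\,\varepsilon_i^{1/4}(p(1-p))^{1/4}$ gives $\sqrt{p(1-p)}\le 2\sqrt{q(1-q)}+c''a\sqrt{\varepsilon_i}$. Hence $a\sqrt{p(1-p)\varepsilon_i}\le 2a\sqrt{q(1-q)\varepsilon_j}+c''a^2\varepsilon_j$, which is $\lesssim\cC_j$.
\end{itemize}
In both cases the claim follows with an absolute constant $c_0$, and the outer $\min\{1,\cdot\}$ only helps. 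The LCB statement follows by the mirror argument: use $F(j)\ge\tilde F(j)-\tilde\cC_j$ for the one-sided bound, and for the closeness bound the running max over $i\le j$ of $\tilde F(i)-\tilde\cC_i\ge F(i)-(1+c_1)\cC_i$ already includes $i=j$, so no monotonicity comparison is needed there.
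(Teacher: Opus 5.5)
Your proposal is correct and matches the paper's proof step for step. The width comparison $\cC_j\le\tilde\cC_j\lesssim\cC_j$ is the paper's Lemma~\ref{lem: C-hat-to-C-gap} together with its corollary, the one-sided bounds use the same induction, and your small-gap case is the content of Lemma~\ref{lem: C-self-bounding}. The only difference is in the UCB closeness bound: the paper gets the small-gap condition $F(j)-F(j_0)\le 15\cC_{j_0}$ for the maximizing index $j_0$ from the already-proved lower bound $F^{\mathrm{UCB}}(j)\ge F(j)$, whereas your dichotomy treats every $i\le j$ directly and so does not need the lower bound.
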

\subsection{Sample Complexity under Censored Feedback}\label{subsec: biased-SAA-offline}

In this subsection, we apply the construction introduced in Section~\ref{subsec: biased-SAA-properties} to the episodic censored observations $\cD:= \{(y_t^k, \bar{d}_{t}^k)\}_{k,t=1}^{N,T}$ to obtain the upper-biased SAA algorithm for offline policy learning, as presented in Algorithm~\ref{alg: censored-offline}.

In Algorithm~\ref{alg: censored-offline}, for each $t\in [T],$ we construct the period-wise filtered SAA estimator as \begin{align}\label{eq: partial-CDF-offline-mean}
    \tilde{F}_t(j)&:= \begin{cases}
     N^{-1}_{t,j}\sum_{k = 1}^N \bm{1}\{y_t^k>j, \bar{d}_t^k \leq j \}  , &\text{if }N_{t,j}\geq 1,\\
        1, & \text{otherwise,}
    \end{cases}\qquad N_{t,j}:= \sum_{k = 1}^N \bm{1}\{y_t^k > j\}.
\end{align}
The corresponding upper-biased estimators $\{\bF_t^\mathrm{UCB}\}_{t=1}^T$ are given as
\begin{equation}\label{eq: UCB-CDF-offline}
\begin{aligned}
   &F_t^{\mathrm{UCB}}(j):= \min\{1, \max\{ F_t^{\mathrm{UCB}}(j-1), \tilde{F}_t(j)+ \tilde{\cC}_{t,j} \}\}, \quad\forall j \in [M-1]_+, \\
   &F_t^{\mathrm{UCB}}(-1) := 0, \quad F_t^{\mathrm{UCB}}(M) := 1,
\end{aligned}
\end{equation}
with
\begin{align}\label{eq: CB-algorithm-offline}
 \tilde\cC_{t,j} := 8 \sqrt{\frac{\tilde F_t(j)\big(1 - \tilde F_t(j)\big) \log(MTN/\delta)}{N_{t,j} \vee 1}} + \dfrac{24 \log(MTN/\delta)}{N_{t,j}\vee 1}.
\end{align}

For each $t\in [T]$, the construction in~\eqref{eq: partial-CDF-offline-mean} and~\eqref{eq: UCB-CDF-offline} is the direct application of~\eqref{eq: partial-CDF-mean} and~\eqref{eq: def-UCB} to the $t$-period data subset $\{(y_t^k,\bar{d}_t^k)\}_{k=1}^N.$ In particular, by problem formulation it satisfies the adaptive requirements specified in Section~\ref{subsec: biased-SAA-properties}, thus the statistical guarantee provided in Proposition~\ref{prop: biased-F} can be applied directly to show that 
\begin{align}\label{eq: partial-CDF-confidence-bound-UCB}
    F_t(j)\leq F^{\mathrm{UCB}}_t(j)\leq F_t(j) + c_0 \bigg( 4\sqrt{\frac{F_t(j) \big(1-F_t(j)\big) \log (MTN/\delta)}{N_{t,j}\vee 1}}  + \frac{4\log(MTN/\delta)}{N_{t,j}\vee 1}\bigg),
\end{align}
holds uniformly for all $ t\in [T], j\in [M]_+$ with probability at least $1-\delta.$

With the biased estimators $\{\bF^{\mathrm{UCB}}_t\}_{t = 1}^T,$ we then define the base-stock policy $\pi = \{s_t\}_{t=1}^T$ induced by the upper-biased CDFs, i.e., the policy obtained by solving the DP under $\{\bF^{\mathrm{UCB}}_t\}_{t = 1}^T$.
An important consequence is that the bias in the CDF estimator induces a corresponding bias in the opposite direction on the resulting base-stock policies:

\begin{lemma}\label{lem: policy-ordering-UCB} Let $\pi = \{s_t\}_{t=1}^T$ be the base-stock policy obtained by solving the DP under the upper-biased CDF sequence $\{\bF_t^\mathrm{UCB}\}_{t=1}^T$. If the event in~\eqref{eq: partial-CDF-confidence-bound-UCB} holds, then $s_t \leq s_t^\star$ for all $t \in [T].$
\end{lemma}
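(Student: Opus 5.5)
The plan is a monotone comparative-statics argument. I will show by backward induction on $t$ that the estimated derivatives dominate the true ones pointwise,
$$\hat D_t(y)\ \ge\ D_t^\star(y),\qquad \forall y\in[M-1]_+,\ t\in[T].$$
The claim then follows immediately. If $s_t^\star<M$, then $\hat D_t(s_t^\star)\ge D_t^\star(s_t^\star)\ge 0$, so $s_t=\min\{y:\hat D_t(y)\ge 0\}\le s_t^\star$. If $s_t^\star=M$, the inequality $s_t\le M$ is trivial.

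\textbf{Preliminary check.} First I verify that $\{\bF^{\mathrm{UCB}}_t\}$ defines a legitimate demand distribution on $[M]_+$. By construction~\eqref{eq: UCB-CDF-offline} it is nondecreasing, lies in $[0,1]$, and has $F^{\mathrm{UCB}}_t(M)=1$. So the induced masses $\hat\mu_{t,j}$ are nonnegative. Hence the standard discrete-convexity result for $W_t^\star$ applies verbatim to $\hat W_t$, and $\hat D_t$ is nondecreasing. This justifies the base-stock form~\eqref{eq: empirical-DP}. On the event~\eqref{eq: partial-CDF-confidence-bound-UCB} we have $F_t(j)\le F_t^{\mathrm{UCB}}(j)$ for all $j$. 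In words, the estimated demand is first-order stochastically smaller than the true demand.

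\textbf{Rewriting the recursion.} Next I write the derivative recursion of Proposition~\ref{prop-matrix-notation} in a form that exposes monotonicity. Under a base-stock policy with level $s_{t+1}$ and nondecreasing $D_{t+1}$, the increment satisfies
$$C_{t+1}(x+1)-C_{t+1}(x)=D_{t+1}(x)\bm 1\{x\ge s_{t+1}\}=\big(D_{t+1}(x)\big)_+ .$$
Extend this by setting $g_{t+1}(x):=(D^\star_{t+1}(x))_+$ for $x\ge 0$ and $g_{t+1}(x):=0$ for $x<0$. Then
$$D_t^\star(y)=(h_t+b_t)F_t(y)-b_t+\EE_{d\sim P_t}\big[g_{t+1}(y-d)\big],$$
and analogously
$$\hat D_t(y)=(h_t+b_t)F^{\mathrm{UCB}}_t(y)-b_t+\hat\EE\big[\hat g_{t+1}(y-d)\big],$$
where $\hat g_{t+1}$ is built from $\hat D_{t+1}$ in the same way. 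The function $g_{t+1}$ is nonnegative and nondecreasing on all integers. It is nondecreasing because $D^\star_{t+1}$ is nondecreasing and the extension by $0$ on negative integers preserves monotonicity.

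\textbf{Induction step.} The base case $t=T$ holds because $g_{T+1}=\hat g_{T+1}=0$. For the inductive step, assume $\hat D_{t+1}\ge D^\star_{t+1}$; then $\hat g_{t+1}\ge g_{t+1}$ pointwise. The immediate-cost term satisfies $(h_t+b_t)F^{\mathrm{UCB}}_t(y)\ge(h_t+b_t)F_t(y)$. For the continuation term,
$$\hat\EE[\hat g_{t+1}(y-d)]\ \ge\ \hat\EE[g_{t+1}(y-d)]\ \ge\ \EE[g_{t+1}(y-d)].$$
The second inequality holds because $d\mapsto g_{t+1}(y-d)$ is nonincreasing in $d$, and $\hat P_t$ is stochastically smaller than $P_t$. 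Concretely, write the expectation by summation by parts as $\sum_{k}F(k)\,[g(y-k)-g(y-k-1)]$ plus a boundary term, where every increment is nonnegative and $F\le F^{\mathrm{UCB}}$. This closes the induction.

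The main point requiring care is exactly this continuation comparison: identifying the post-ordering increment with $(D_{t+1})_+$ so that the integrand is monotone, and handling the boundary at $y-d<0$, i.e.\ the lost-sales truncation. I would verify the summation-by-parts boundary terms explicitly using $F(M)=F^{\mathrm{UCB}}(M)=1$.
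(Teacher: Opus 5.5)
Your proof is correct and follows essentially the same route as the paper. Both run a backward induction showing $\bD^\star_t\le\bD^{\mathrm{UCB}}_t$ entry-wise through the derivative recursion, and both sign the model-error part by summation by parts using $F_t\le F_t^{\mathrm{UCB}}$ and the monotonicity of $\bD^{\mathrm{UCB}}_{t+1}$. The only cosmetic difference is that you absorb the policy-mismatch term into the positive part $(D_{t+1})_+$, while the paper splits it off as $[\bA_t(s_{t+1}^\star)-\bA_t(s_{t+1})]\bD_{t+1}^{\mathrm{UCB}}\le\bm 0$ using the inductive ordering $s_{t+1}\le s_{t+1}^\star$.
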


\begin{algorithm}[t]
\caption{DP-UCB}
\label{alg: censored-offline}
\begin{algorithmic}[1]
\REQUIRE Censored dataset $\{y^k_t,\bar{d}_{t}^k\}_{k,t=1}^{N,T}$, cost coefficient sequence $\{(h_t,b_t)\}_{t=1}^T$, confidence level $\delta.$ 
\FOR{$t=1,2,\ldots,T$}
    \STATE \textbf{Step 1 (Compute the CDF estimator):} Compute $\tilde{\bF}_t$ as in~\eqref{eq: partial-CDF-offline-mean}.
    \STATE \textbf{Step 2 (Compute the upper-biased CDF estimator):} Construct $\bF^\mathrm{UCB}_t$ from $\tilde{\bF}_t$ as in~\eqref{eq: UCB-CDF-offline}.
\ENDFOR
\STATE Compute $\pi := \{s_t\}_{t=1}^T$ by solving the DP under $\{(h_t,b_t,\bF^\mathrm{UCB}_t)\}_{t=1}^T$.
\RETURN $\pi$.
\end{algorithmic}
\end{algorithm}

To see at a high level how the bias structure controls the cost gap, consider the terms $\langle \bv_t, \Delta \bA_t(s_{t+1}) \hat{\bD}_{t+1} + \Delta \bc_t\rangle$ in Theorem~\ref{thm: cost-decomposition}.
First, since $\bF_t^{\mathrm{UCB}} \geq \bF_t$, we have $\Delta \bc_t = (h_t + b_t)\Delta \bF_t \leq 0$ entry-wise; moreover, by the Abel-summation argument in the proof of Corollary~\ref{coro: uniform-convergence}, the upper bias $\Delta \bF_t \leq 0$ together with the monotonicity of $\hat{\bD}_{t+1}$ (convexity of the cost-to-go) yields $\Delta \bA_t(s_{t+1}) \hat{\bD}_{t+1} \leq 0$ entry-wise as well.
Second, Lemma~\ref{lem: policy-ordering-UCB} gives the downward-biased base-stock levels $s_t \leq s_t^\star$, so by Remark~\ref{remark: v-interpretation} we have $-\PP(y_t^\star > j\lvert x_1 = 0) \leq \big[\bv_t\big]_j \leq 0$.
Combining these two observations, \begin{align*}
        \langle \bv_t, \Delta \bA_t(s_{t+1}) \hat{\bD}_{t+1}  + \Delta \bc_t\rangle \leq -\sum_{j = 0}^{M-1} \PP(y^\star_t > j\lvert x_1 = 0) \big[\Delta \bA_t(s_{t+1}) \hat{\bD}_{t+1}  + \Delta \bc_t\big]_j,
\end{align*}
so only the estimation error weighted by the optimal-policy coverage $\PP(y_t^\star > j\lvert x_1 = 0)$ matters. To formalize this, we introduce the \emph{effective sample size}:
\begin{align}\label{eq: def-effective-sample-size}
    \cN^\star := \bigg(\frac{1}{MT}\sum_{t\in [T]}\sum_{j\in [M-1]_+}\dfrac{ \PP(y_t^\star > j \lvert x_1 = 0)}{N_{t,j}\vee 1}\bigg)^{-1},
\end{align}
which is the effective sample size of the censored dataset relative to the optimal policy: it aggregates over all coordinates the inverse count $1/(N_{t,j}\vee1)$, each weighted by the optimal-trajectory visiting probability $\PP(y_t^\star > j\lvert x_1=0)$, so that $\cN^\star$ is governed by the coverage ratio weighted by the optimal trajectory's visiting probabilities. Thus the dataset need not cover all demand coordinates uniformly---it only needs sufficient coverage on the coordinates visited by the optimal policy.

Now we are ready to state the sample complexity result with respect to $\cN^\star:$
\begin{theorem}\label{thm: censored-sample-complexity}
There exists an absolute constant $c_0$ so that  with probability at least $1-\delta$, the output policy $\pi$ of Algorithm~\ref{alg: censored-offline} satisfies
\begin{align*}
    \max_{x\in [M]_+} \Delta(x;\pi) \leq c_0 (h_\infty + b_\infty) M  \bigg[T \sqrt{\frac{\log(MTN/\delta)\log(eN)}{\cN^\star}} + \frac{T^2\log(MTN/\delta)\log(eN)}{\cN^\star}\bigg].
\end{align*}
\end{theorem}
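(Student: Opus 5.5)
We work on the high-probability event~\eqref{eq: partial-CDF-confidence-bound-UCB}, on which Lemma~\ref{lem: policy-ordering-UCB} gives $s_t\le s_t^\star$ for all $t$. We then apply Theorem~\ref{thm: cost-decomposition} with $\hat{\bF}_t=\bF^{\mathrm{UCB}}_t$. The second sum $\sum_t\langle\bu_t,\hat{\bD}_t\rangle$ is non-positive, as noted after that theorem, so we drop it. As discussed before~\eqref{eq: def-effective-sample-size}, the sign structure gives $\Delta\bc_t\le 0$ and $\Delta\bA_t(s_{t+1})\hat{\bD}_{t+1}\le0$ entrywise. It also gives $-\PP(y_t^\star>j\mid x_1=0)\le[\bv_t]_j\le0$. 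Hence
\[
\max_x\Delta(x;\pi)\le \sum_{t=1}^T\sum_{j}\PP(y_t^\star>j\mid x_1=0)\,\big|[\Delta\bA_t(s_{t+1})\hat{\bD}_{t+1}+\Delta\bc_t]_j\big|.
\]
It therefore suffices to bound each coordinate $|[\cdot]_j|$ by a local CDF error at coordinates $\le j$.

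\textbf{Local error bound.} The term $\Delta\bc_t$ contributes $(h_t+b_t)|\Delta F_t(j)|$. For the transport term, $[\Delta\bA_t(s)\hat{\bD}_{t+1}]_j=\sum_{i=s}^{j}(\mu_{t,j-i}-\hat\mu_{t,j-i})\hat D_{t+1}(i)$. We apply Abel summation, as in the proof of Corollary~\ref{coro: uniform-convergence}, using that $\hat D_{t+1}$ is monotone and bounded by $(h_\infty+b_\infty)(T-t)$. This bounds the term by roughly $\sum_{i\le j}|\Delta F_t(i)|\,(\hat D_{t+1}(j-i+1)-\hat D_{t+1}(j-i))$ plus a boundary term $|\Delta F_t(j-s)|\,|\hat D_{t+1}(s)|$. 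The increments of $\hat D_{t+1}$ sum telescopically to $O((h_\infty+b_\infty)T)$.

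A naive bound $|\Delta F_t(i)|\le \sup$ would only reproduce Corollary~\ref{coro: uniform-convergence}. Instead we use the monotonicity $N_{t,i}\ge N_{t,j}$ for $i\le j$: every CDF error at $i\le j$ is bounded by the confidence radius at level $j$, $c_0\cC_{t,j}\lesssim\sqrt{\log/N_{t,j}}+\log/N_{t,j}$, by Proposition~\ref{prop: biased-F}. The weights $\PP(y^\star_t>j)$ thus only multiply errors at coordinates with at least $N_{t,j}$ samples.

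The main obstacle is that the increments of $\hat D_{t+1}$ total $O(T)$. Combined with the sum over $t$, this would give an extra factor of $T$ in the leading term. To avoid this, we would refine the estimate by writing $\hat D_{t+1}=D^\star_{t+1}-\Delta\bD_{t+1}$. The leading part uses $D^\star_{t+1}$, whose increments are the "transport probabilities" appearing in $\bA_{t+1}$ and which, after weighting by $\bv_t$, re-enter the recursion of Proposition~\ref{prop: v-propagation}. The aim is that each unit of CDF error is charged only $O(h_\infty+b_\infty)$ per period on average. The remainder, involving $\Delta\bD_{t+1}$ of size $\lesssim T\cdot\text{error}$, produces the second-order $T^2/\cN^\star$ term.

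\textbf{Summation.} With a per-coordinate contribution of order $(h_\infty+b_\infty)\,\PP(y_t^\star>j)\big(\sqrt{\log/N_{t,j}}+T\log/N_{t,j}\big)$, we sum over $t,j$. Cauchy--Schwarz gives
\[
\sum_{t,j}\PP(y_t^\star>j)/\sqrt{N_{t,j}\vee1}\le\sqrt{MT\sum_{t,j}\PP(y_t^\star>j)/(N_{t,j}\vee1)}=MT/\sqrt{\cN^\star},
\]
and the linear term equals $MT/\cN^\star$ directly. The extra $\log(eN)$ factor is what we expect from a dyadic peeling over the values of $N_{t,j}$ inside the Abel sums, where errors at different coordinates $i\le j$ are compared. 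Collecting terms yields the stated bound.
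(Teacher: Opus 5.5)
Your reduction is correct and is the paper's own. On the event~\eqref{eq: partial-CDF-confidence-bound-UCB}, Lemma~\ref{lem: policy-ordering-UCB} and Theorem~\ref{thm: cost-decomposition} bound the gap by $\sum_{t,j}\PP(y_t^\star>j\mid x_1=0)\,\big|[\Delta\bA_t(s_{t+1})\hat{\bD}_{t+1}+\Delta\bc_t]_j\big|$, and the $\Delta\bc_t$ part is handled by Cauchy--Schwarz exactly as you describe.

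The gap is in the transport part, which is the whole difficulty. Write $L:=\log(MTN/\delta)$. After your local step you retain only $0\le F_t^{\mathrm{UCB}}(i)-F_t(i)\lesssim\sqrt{L/N_{t,j}}+L/N_{t,j}$ for $i\le j$. This is a Hoeffding-level bound with the factor $\sqrt{F(1-F)}$ discarded, and that information alone cannot yield the theorem. Here is a counterexample:
\begin{itemize}
\item Take $h_t=b_t=c_\infty$, $d_t\equiv0$ for $t\ge2$, and $d_1\in\{0,M-1\}$ with $\PP(d_1=0)=\kappa-\delta'$, where $\kappa:=1/(1+T)$ and $\delta':=\sqrt{L/N}$ (at most $\kappa$ once $N\gtrsim T^2L$). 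Use even uncensored data, $N_{t,j}=N$.
\item Then $D_1^\star(j)=-(1+T)c_\infty\delta'$ for $j<M-1$, so $s_1^\star=M-1$.
\item Also $\sum_{t,j}\PP(y_t^\star>j\mid x_1=0)\le 2M$, so $\cN^\star\ge TN/2$. The theorem's right-hand side is therefore of order $c_\infty M\sqrt{TL\log(eN)/N}$.
\item A UCB of width $2\sqrt{L/N_{t,j}}$ has every property your argument uses. Yet on its concentration event it gives $F_1^{\mathrm{UCB}}\ge\kappa$, hence $s_1=0$.
\item The resulting gap is $(M-1)(1+T)c_\infty\delta'\asymp c_\infty MT\sqrt{L/N}$, larger than the bound by a factor of order $\sqrt{T/\log(eN)}$.
\end{itemize}
What rescues the paper's estimator here is exactly the variance factor. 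At the relevant coordinates $F_1\approx1/T$, so its radius is $\asymp\sqrt{L/(TN)}$.

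Your proposed repair via $\hat{\bD}_{t+1}=\bD^\star_{t+1}-\Delta\bD_{t+1}$ is only stated as an aim. Its remainder is also not ``$\lesssim T\cdot$error'' under censoring. There is no uniform error, since at uncovered coordinates $F_t^{\mathrm{UCB}}=1$, so $\lVert\Delta\bD_{t+1}\rVert_\infty$ can be of order $(h_\infty+b_\infty)T$. Only $\bv$-weighted control is available.

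The missing idea is the paper's variance mechanism.
\begin{itemize}
\item It keeps the Bernstein form, rewritten through Lemma~\ref{lem: ucb-variance-transfer} in terms of $F_t^{\mathrm{UCB}}(1-F_t^{\mathrm{UCB}})$. It uses $N_{t,j-m}\ge N_{t,j}$ only for the $1/N$ factor.
\item Lemma~\ref{lem: peeling-bound} turns $\sum_m\sqrt{F_t^{\mathrm{UCB}}(j-m)(1-F_t^{\mathrm{UCB}}(j-m))}\,[\bH_t^{\mathrm{UCB}}]_m$ into $\sqrt{\log(e/a_\star)\,\cV_{t,j}}$. This lemma is a dyadic peeling over the odds ratio of $F_t^{\mathrm{UCB}}(j-m)$, truncated at $a_\star=\tfrac18\min\{1,L/(\cN^\star\wedge N)\}$. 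This, not peeling over the values of $N_{t,j}$, is the source of the $\log(eN)$ factor.
\item Lemma~\ref{lem: V-identity} identifies $\cV_{t,j}$ as the one-step conditional variance of $(\bD_{t+1}^{\mathrm{UCB}})_+$ under the UCB demand.
\item After Cauchy--Schwarz, a law-of-total-variance telescoping controls $-\sum_{t,j}v_{tj}\cV_{t,j}$:
  \begin{itemize}
  \item the recursion of Proposition~\ref{prop: v-propagation} gives $\sum_t\cK_{t,2}\le0$;
  \item Proposition~\ref{prop: over-shooting-price} gives $\sum_t\cK_{t,3}\lesssim(h_\infty+b_\infty)^2MT$;
  \item the one-step identity from the proof of Theorem~\ref{thm: cost-decomposition} gives $\sum_t\cK_{t,1}\lesssim(h_\infty+b_\infty)T\,\mathsf{Err}_T$. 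This is the precise, $\bv$-weighted form of your ``re-entering the recursion''.
  \end{itemize}
\item The outcome is the self-bounding inequality $\mathsf{Err}_T\lesssim\sqrt{MTL\log(e/a_\star)/\cN^\star}\,\sqrt{(h_\infty+b_\infty)T\,\mathsf{Err}_T+(h_\infty+b_\infty)^2MT}+(h_\infty+b_\infty)MT^2L/\cN^\star+(h_\infty+b_\infty)MT\sqrt{L/\cN^\star}$. Young's inequality resolves it into exactly the two terms of the theorem.
\end{itemize}
Without this variance-plus-self-bounding step, your outline does not close.
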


\medskip\noindent\textbf{Fixed Behavior Policy Setting.} A special case of the adaptively collected setting is when all offline data are sampled by a fixed behavior policy $\pi^b$ repeatedly for $N$ episodes, starting at the initial inventory level $x_1 = 0$. Define the policy-level coverage ratio \begin{align*}
    \cC^\star:= \frac{1}{MT}\sum_{t=1}^T \sum_{j=0}^{M-1} \frac{\PP(y^\star_t > j\lvert x_1 = 0)}{\PP(y^b_t > j\lvert x_1 = 0)}
\end{align*}
for $y_t^b$ the post-ordering inventory level at each $t$ under $\pi^b$. We have the following corollary of Theorem~\ref{thm: censored-sample-complexity} stated with respect to $\cC^\star:$

\begin{corollary}\label{corollary: C-star-sample-complexity}
There exist absolute constants $c_0$ so that with probability at least $1-\delta$, the output policy $\pi$ of Algorithm~\ref{alg: censored-offline} satisfies
\begin{align*}
    \max_{x\in [M]_+} \Delta(x;\pi) \leq c_0 (h_\infty + b_\infty) M \cdot \bigg[T \sqrt{\frac{\cC^\star \log^2(MTN/\delta)\log(eN)}{N}} + \frac{\cC^\star T^2 \log^2(MTN/\delta)\log(eN)}{N}\bigg].
\end{align*}
\end{corollary}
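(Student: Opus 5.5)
The plan is to derive the corollary from Theorem~\ref{thm: censored-sample-complexity}. It suffices to show that, with probability at least $1-\delta$, the effective sample size satisfies
\[
\cN^\star \gtrsim \frac{N}{\cC^\star \log(MTN/\delta)}.
\]
Substituting this into the bound of Theorem~\ref{thm: censored-sample-complexity} gives the claim. The first term picks up one extra $\log(MTN/\delta)$ inside the square root, which accounts for the $\log^2$. The second term is handled in the same way. A union bound over the two events (the confidence event behind Theorem~\ref{thm: censored-sample-complexity} and the count-concentration event below), each at level $\delta/2$, changes only the absolute constants.

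\textbf{Concentration of the counts.} Under the fixed behavior policy $\pi^b$ started from $x_1=0$, the $N$ trajectories are i.i.d. Hence, for each $(t,j)$, $N_{t,j}\sim\mathrm{Bin}(N,p_{t,j})$ with $p_{t,j}:=\PP(y_t^b>j\mid x_1=0)$. I would apply a multiplicative Chernoff bound together with a union bound over the $MT$ pairs $(t,j)$. With probability at least $1-\delta$, simultaneously for all $(t,j)$,
\[
\frac{1}{N_{t,j}\vee 1} \le \frac{c\,\log(MT/\delta)}{N p_{t,j}}.
\]
To see this, split into two cases.
\begin{itemize}
\item If $Np_{t,j}\ge 8\log(MT/\delta)$, then $N_{t,j}\ge Np_{t,j}/2$ by Chernoff, so the bound holds with $c=2$.
\item If $Np_{t,j}< 8\log(MT/\delta)$, then trivially $1/(N_{t,j}\vee1)\le 1 < 8\log(MT/\delta)/(Np_{t,j})$.
\end{itemize}
When $p_{t,j}=0$, the right-hand side is $+\infty$ by the convention $a/0=+\infty$. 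So whenever $\PP(y_t^\star>j\mid x_1=0)>0$, the corresponding ratio in $\cC^\star$ is also infinite and the corollary is vacuous. When $\PP(y_t^\star>j\mid x_1=0)=0$, the $(t,j)$ term contributes zero to both quantities.

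\textbf{Assembling.} Multiplying the count bound by $\PP(y_t^\star>j\mid x_1=0)$ and averaging over $(t,j)$ gives
\[
(\cN^\star)^{-1}\le \frac{c\,\cC^\star\log(MT/\delta)}{N}.
\]
Plugging this into Theorem~\ref{thm: censored-sample-complexity} and using $\log(MT/\delta)\le\log(MTN/\delta)$ yields exactly the stated form:
\begin{itemize}
\item the first term becomes $T\sqrt{\cC^\star\log^2(\cdot)\log(eN)/N}$;
\item the second term becomes $\cC^\star T^2\log^2(\cdot)\log(eN)/N$.
\end{itemize}

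\textbf{Main obstacle.} The only delicate step is the lower tail of the counts when $p_{t,j}$ is small. A direct expectation bound fails because $\EE[1/(N_{t,j}\vee1)]$ is not simply $1/(Np_{t,j})$. The case split above resolves this at the cost of only a logarithmic factor, which is consistent with the $\log^2$ in the statement.
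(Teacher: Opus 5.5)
Your proposal is correct and follows the paper's proof essentially verbatim: the paper's Lemma~\ref{lem: N-bound-offline} is exactly your Chernoff dichotomy at threshold $8\log(MT/\delta)/N$. The paper then bounds $MT/\cN^\star \le 8MT\log(MT/\delta)\,\cC^\star/N$ and substitutes this into Theorem~\ref{thm: censored-sample-complexity}, with a union bound at level $\delta/2$. One cosmetic point: under the paper's convention $0/0:=1$, a pair with $\PP(y_t^\star>j\mid x_1=0)=\PP(y_t^b>j\mid x_1=0)=0$ contributes $1$ rather than $0$ to $\cC^\star$, but this only enlarges the right-hand side and so leaves your inequality intact.
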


\medskip\noindent\textbf{Comparison to Offline Reinforcement Learning.}
Compared with the familiar $\tilde{\Theta}(T^{3/2}\sqrt{\cC/N})$ scaling in offline reinforcement learning with single-policy concentrability \citep{xiong2022nearly,nguyen2023sample}, with $\cC^\star$ playing the role of the concentrability coefficient, Corollary~\ref{corollary: C-star-sample-complexity} achieves the sharper horizon dependence $\tilde{\Theta}(T\sqrt{\cC^\star/N})$, up to the inventory-specific factor $(h_\infty+b_\infty)M$. This improvement reflects the structural advantage of inventory systems over general MDPs, already visible in the uncensored analysis of Section~\ref{sec: uniform-convergence}.

A more fundamental distinction lies in the coverage definition itself. In offline RL, single-policy concentrability is defined via visitation measure ratios, involving the probability of visiting a specific state (an equality condition). In contrast, $\cC^\star$ involves survival probabilities (an inequality condition). The inequality-based version is more relaxed: it is easier for the behavior policy to cover the optimal policy under this measure, since the survival probability aggregates over multiple demand levels. This relaxation arises from the censored feedback structure---rather than observing exact state-action pairs as in RL, one only observes whether demand exceeds the inventory level, which naturally induces coverage through survival probabilities.

\subsection{Improved Sample Complexity under Stationary Demand}\label{subsec: biased-SAA-offline-identical}

We next consider the stationary-demand setting of Section~\ref{subsec: SAA-independent--identical-demand}, where the demand variables remain independent across periods but share a common marginal distribution $P_1=\cdots=P_T$. As in the uncensored case, stationarity lets us aggregate observations across periods; the only difference is that, under censoring, the number of informative observations still varies with the coordinate $j$. We show that an aggregated version of Algorithm~\ref{alg: censored-offline} achieves an improved sample complexity bound.

\medskip\noindent\textbf{Aggregated CDF Estimator.} With the observed dataset $\{(y_t^k,\bar{d}_t^k)\}_{k,t = 1}^{N,T}$ and aggregated sample size $N_{\mathsf{agg},j}:= \sum_{t=1}^T N_{t,j}$, we define the aggregated version of CDF estimator in~\eqref{eq: partial-CDF-offline-mean} as 
\begin{align}\label{eq: censored-CDF-mean-aggregated}
    \tilde{F}_{\mathsf{agg}}(j):= \begin{cases} N_{\mathsf{agg},j}^{-1} \sum_{k,t=1}^{N,T} \bm{1}\{y_t^k > j, \bar{d}_t^k \leq j \}, &\text{ if } N_{\mathsf{agg},j} \geq 1,\\
        1, &\text{ otherwise,}
    \end{cases}
\end{align}
and its biased version as
\begin{equation}\label{eq: UCB-CDF-offline-aggregated}
\begin{aligned}
    &F_{\mathsf{agg}}^{\mathrm{UCB}}(j):= \min\{1, \max\{ F_{\mathsf{agg}}^{\mathrm{UCB}}(j-1), \tilde{F}_{{\mathsf{agg}}}(j)+ \tilde{\cC}_{{\mathsf{agg}},j} \}\}, \forall j \in [M-1]_+, \\
    &F_{\mathsf{agg}}^{\mathrm{UCB}}(-1) := 0, \quad F_{\mathsf{agg}}^{\mathrm{UCB}}(M) := 1,
\end{aligned}
\end{equation}
with
\begin{align*}
\tilde{\cC}_{{\mathsf{agg}},j} := 8 \sqrt{\frac{\tilde{F}_\mathsf{agg}(j) \big(1 - \tilde{F}_\mathsf{agg}(j)\big) \log(MNT/\delta)}{N_{\mathsf{agg},j}\vee 1}} + \frac{24\log(MNT/\delta)}{N_{\mathsf{agg},j}\vee 1}.
\end{align*}

With the aggregated estimators above, we apply the modified version of Algorithm~\ref{alg: censored-offline} that sets $\bF_t^{\mathrm{UCB}} \equiv \bF_{\mathsf{agg}}^{\mathrm{UCB}}$ for all $t\in[T]$ (replacing Steps~1 and~2 by~\eqref{eq: censored-CDF-mean-aggregated} and~\eqref{eq: UCB-CDF-offline-aggregated}, respectively) and solves the DP under this common CDF sequence. This yields the following guarantee with respect to the \textit{aggregated effective sample size}:
\begin{align}\label{eq: def-aggregated-effective-sample-size}
    \cN_{\mathsf{agg}}^\star:= \bigg(\frac{1}{MT}{\sum_{t\in [T]}\sum_{j\in [M-1]_+}\dfrac{ \PP(y_t^\star > j \lvert x_1 = 0)}{N_{\mathsf{agg},j}\vee 1}}\bigg)^{-1}.
\end{align}
\begin{theorem}\label{thm: censored-sample-complexity-identical}
For the aggregated version of Algorithm~\ref{alg: censored-offline} that sets $\bF_t^{\mathrm{UCB}} \equiv \bF_{\mathsf{agg}}^{\mathrm{UCB}}$ for all $t\in[T]$, there exists an absolute constant $c_0 > 0$ so that with probability at least $1-\delta$ its output $\pi$ satisfies
\begin{align*}
    \max_{x\in [M]_+} \Delta(x;\pi) \leq c_0 (h_\infty + b_\infty) M  \bigg[T \sqrt{\frac{\log(MNT/\delta)\log(eNT)}{\cN_{\mathsf{agg}}^\star}} + \frac{T^2\log(MNT/\delta)\log(eNT)}{\cN_{\mathsf{agg}}^\star}\bigg].
\end{align*}
\end{theorem}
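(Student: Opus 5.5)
The plan is to run the argument behind Theorem~\ref{thm: censored-sample-complexity} with the aggregated estimator in place of the period-wise ones. The only changes are that the confidence widths now use $N_{\mathsf{agg},j}$, and that the model $\hat\bF_t\equiv\bF^{\mathrm{UCB}}_{\mathsf{agg}}$ is common to all periods.

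\textbf{Step 1: the good event.} The pooled sequence $\{(y_t^k,\bar d_t^k)\}$ must be adaptively collected in the sense of Section~\ref{subsec: biased-SAA-properties}. Order the pairs lexicographically by $(k,t)$. Each $y_t^k$ is then predictable, since it depends only on earlier observations. Conditionally on the past, $d_t^k\sim P$, by stationarity and independence across periods and episodes. Proposition~\ref{prop: uniform-concentration} with $n=NT$, followed by Proposition~\ref{prop: biased-F}, gives, with probability at least $1-\delta$ and for all $j$,
\[
F(j)\le F^{\mathrm{UCB}}_{\mathsf{agg}}(j)\le F(j)+c_0\cC_{\mathsf{agg},j},
\qquad
\cC_{\mathsf{agg},j}\asymp \sqrt{\frac{F(j)(1-F(j))L}{N_{\mathsf{agg},j}\vee1}}+\frac{L}{N_{\mathsf{agg},j}\vee1},
\]
where $L=\log(MNT/\delta)$. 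Lemma~\ref{lem: policy-ordering-UCB} only uses the entrywise upper bias of the CDFs fed to the DP, so it applies unchanged and gives $s_t\le s_t^\star$ for all $t$.

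\textbf{Step 2: localization.} Apply Theorem~\ref{thm: cost-decomposition} and drop the nonpositive term $\sum_t\langle\bu_t,\hat\bD_t\rangle$. As in the discussion before~\eqref{eq: def-effective-sample-size}, we have $\Delta\bc_t\le 0$ and $\Delta\bA_t(s_{t+1})\hat\bD_{t+1}\le0$ entrywise (Abel summation plus monotonicity of $\hat\bD_{t+1}$), and $-\PP(y_t^\star>j\mid x_1=0)\le[\bv_t]_j\le0$. Together these give
\[
\max_x\Delta(x;\pi)\le\sum_{t,j}\PP(y_t^\star>j\mid x_1=0)\,\bigl|[\Delta\bA_t(s_{t+1})\hat\bD_{t+1}+\Delta\bc_t]_j\bigr|.
\]
The $\Delta\bc_t$ part is at most $(h_\infty+b_\infty)c_0\cC_{\mathsf{agg},j}$. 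For the transport part, Abel summation rewrites $[\Delta\bA_t\hat\bD_{t+1}]_j$ as a sum over $i\le j$ of CDF errors $\Delta F(j-i)$ times increments of $\hat\bD_{t+1}$. These increments are nonnegative and telescope to at most $|\hat\bD_{t+1}|_\infty\lesssim(h_\infty+b_\infty)T$.

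\textbf{Step 3 (main obstacle): controlling the transport term.} A crude bound gives the transport term size $T\max_{i\le j}\cC_{\mathsf{agg},i}$, which loses a factor of $T$. Two facts should prevent this. First, $N_{\mathsf{agg},i}\ge N_{\mathsf{agg},j}$ for $i\le j$, so the widths needed at lower coordinates are no larger. Second, the increments of $\hat\bD_{t+1}$ are themselves controlled by masses near the base-stock level, so as in the proof of Corollary~\ref{coro: uniform-convergence} the leading contribution is $O((h_\infty+b_\infty)\cC_{\mathsf{agg},j})$ per coordinate. The remaining error is a second-order product of CDF errors with the deviation $\hat\bD-\bD^\star$, which contributes the $T^2/\cN^\star_{\mathsf{agg}}$ term. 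I would reuse exactly the bookkeeping from the non-aggregated proof, noting that it never used anything period-specific beyond the monotonicity in $j$ of the counts.

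\textbf{Step 4: sum with Cauchy--Schwarz.} Writing $w_{t,j}=\PP(y_t^\star>j\mid x_1=0)\le1$,
\[
\sum_{t,j}w_{t,j}\sqrt{\frac{L}{N_{\mathsf{agg},j}\vee1}}\le\sqrt{MT\,L}\,\sqrt{\sum_{t,j}\frac{w_{t,j}}{N_{\mathsf{agg},j}\vee1}}=\frac{MT\sqrt{L}}{\sqrt{\cN^\star_{\mathsf{agg}}}}.
\]
The linear terms give $MTL/\cN^\star_{\mathsf{agg}}$ directly. The factor $\log(eNT)$ should come from the same harmonic-type summation over count levels $\sum_k 1/k$ used in the period-wise proof, with $NT$ in place of $N$. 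Multiplying by $(h_\infty+b_\infty)$ and collecting the second-order $T^2$ term yields the stated bound.
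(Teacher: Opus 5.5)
Your Steps~1--2 are sound. Ordering the pooled pairs lexicographically in $(k,t)$ makes them adaptively collected with $n=NT$, Lemma~\ref{lem: policy-ordering-UCB} goes through unchanged, and the sign localization is right. Your closing remark in Step~3 is also exactly how the paper proves this theorem: it reruns the proof of Theorem~\ref{thm: censored-sample-complexity} with $N_{t,j}\to N_{\mathsf{agg},j}$ and $N\to NT$, using only $N_{\mathsf{agg},j-m}\ge N_{\mathsf{agg},j}$.

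\textbf{The gap is Step~3.} That step carries the whole theorem, and the mechanism you attribute to that bookkeeping is not one that works.
\begin{itemize}
\item There is no per-coordinate $O((h_\infty+b_\infty)\cC_{\mathsf{agg},j})$ bound. We have $[\Delta\bA_t(s_{t+1})\hat\bD_{t+1}]_j=\sum_m \Delta F(j-m)[\bH_t]_m$ with $\sum_m[\bH_t]_m=[(\hat\bD_{t+1})_+]_j$. That sum can be of order $(h_\infty+b_\infty)T$ when stock above the base-stock level is rarely depleted, so a single coordinate can carry order $T\,\cC_{\mathsf{agg},j}$.
\item The proof of Corollary~\ref{coro: uniform-convergence} removes this factor only in aggregate. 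It pulls one scalar $\epsilon$ out of $\langle|\bv_t|,\cdot\rangle$, then uses Proposition~\ref{prop: over-shooting-price} on $\langle\bw_t+\bw_t^\star,\hat\bD_t\rangle$ for the bulk and a dominating sequence telescoping across periods for the tail.
\item Here the widths are coordinate-dependent, $\epsilon_j\asymp\sqrt{L/N_{\mathsf{agg},j}}$ with $L=\log(MNT/\delta)$, and that dependence is the entire content of $\cN^\star_{\mathsf{agg}}$. So no scalar can be pulled out. Cauchy--Schwarz on $\sum_j w_{t,j}\epsilon_j\hat D_t(j)$ then costs $\sqrt T$, because $\hat\bD_t$ is only $O(T)$ pointwise.
\item The Corollary's second-order terms use the sup-norm bounds of Lemma~\ref{lem: w-D-infty-bounds}. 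Under censoring these are governed by the worst-covered coordinate rather than by $\cN^\star_{\mathsf{agg}}$. Even with a uniform $\epsilon$ they give $T^3\epsilon^2$, not a $T^2/\cN^\star_{\mathsf{agg}}$ term.
\end{itemize}

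\textbf{The missing idea is a variance-aware, law-of-total-variance argument.}
\begin{itemize}
\item Keep the Bernstein factor, moved onto $F^{\mathrm{UCB}}_{\mathsf{agg}}$ via Lemma~\ref{lem: ucb-variance-transfer}. Apply the peeling Lemma~\ref{lem: peeling-bound} to $\sum_m\sqrt{F^{\mathrm{UCB}}_{\mathsf{agg}}(j-m)(1-F^{\mathrm{UCB}}_{\mathsf{agg}}(j-m))}\,[\bH_t]_m$. This gives $\sqrt{\log(e/a_\star)\,\cV_{t,j}}$, where $\cV_{t,j}$ is a genuine variance by Lemma~\ref{lem: V-identity}.
\item Cauchy--Schwarz against $-\sum_{t,j}v_{tj}L/(N_{\mathsf{agg},j}\vee1)\le MTL/\cN^\star_{\mathsf{agg}}$ then reduces the problem to showing $-\sum_{t,j}v_{tj}\cV_{t,j}\lesssim(h_\infty+b_\infty)T\,\mathsf{Err}_T+(h_\infty+b_\infty)^2MT$. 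Here $\mathsf{Err}_T$ is the model-error term of Theorem~\ref{thm: cost-decomposition}.
\item To prove that, bound $\cV_{t,j}$ by $[\bA_t(s_{t+1})\bgamma_{t+1}]_j-[\bgamma_t]_j$ plus $2h_\infty[(\hat\bD_t)_+]_j$, where $\bgamma_t=(\hat\bD_t)_+^2$ entrywise. Then:
  \begin{itemize}
  \item telescoping through the $\bv$-recursion of Proposition~\ref{prop: v-propagation} makes the $\bgamma$-difference part nonpositive;
  \item the over-shooting bound controls the $h_\infty$ part;
  \item the UCB sign structure controls the $\bA$-versus-$\bA^{\mathrm{UCB}}$ mismatch by $(h_\infty+b_\infty)T\,\mathsf{Err}_T$.
  \end{itemize}
\item Young's inequality then absorbs the self-referential term.
\end{itemize}
That absorption, together with the $L/(N_{\mathsf{agg},j}\vee1)$ and peeling remainders, is where the $T^2/\cN^\star_{\mathsf{agg}}$ term comes from. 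Likewise, $\log(eNT)$ is $\log(e/a_\star)$, with $a_\star\gtrsim1/(\cN^\star_{\mathsf{agg}}\wedge NT)$; it counts the dyadic odds-ratio blocks in the peeling lemma. There is no harmonic sum over count levels in the offline proof.
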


In the uncensored setting, every coordinate is revealed in every trajectory, so $N_{t,j} = N$ and $N_{\mathsf{agg},j} = NT$ for all $t\in [T], j \in [M-1]_+$; hence $\cN^\star \geq N$ and $\cN_{\mathsf{agg}}^\star \geq NT$, and Theorems~\ref{thm: censored-sample-complexity} and~\ref{thm: censored-sample-complexity-identical} recover the uncensored rates of Theorems~\ref{thm: sample-complexity-independent} and~\ref{thm: sample-complexity-independent-identical}. Thus stationarity provides a factor-$T$ gain in the effective sample size and a factor-$\sqrt{T}$ improvement in the leading policy-learning error.

\subsection{Sample Complexity Lower Bounds}

In this subsection, we establish sample-complexity lower bounds showing that the dependence on $M,T,h_\infty,b_\infty$ and the relevant effective sample sizes in Theorems~\ref{thm: censored-sample-complexity} and~\ref{thm: censored-sample-complexity-identical} is optimal in a minimax sense over coverage classes, up to logarithmic and burn-in terms. The constructions allow an arbitrary coverage fraction $p\in(0,1]$, with $p=1$ recovering the uncensored setting. To the best of our knowledge, these are the first lower bounds for multi-period inventory policy learning, even under uncensored observations.

In the non-stationary demand setting, we have the following result.

\begin{theorem}\label{thm: lower-bound-independent}
For any $c_\infty>0$, $M\geq 2$, $T \geq 3$, coverage fraction $p\in(0,1]$, and sufficiently large $N$ so that $Np \gtrsim T^2$, there exist cost parameters $\{(h_t,b_t)\}_{t=1}^T$, an initial inventory level $x$,
a collection of demand distribution sequences $\cP$, and a deterministic sequence of offline censoring levels $\{y^k_{t}\}_{t,k=1}^{T,N}$ such that the following hold:
\begin{enumerate}[nosep]
    \item $h_\infty = b_\infty = c_\infty$.
    \item $\cN^\star \geq \lfloor Np \rfloor$ for every instance in $\cP$. 
    \item For any algorithm $\cA$ that takes the censored dataset
$\cD := \{(y_t^k, \bar{d}_t^k)\}_{t=1,k=1}^{T,N}$ as input and outputs a policy $\pi = \cA(\cD)$,
\begin{align*}
  \max_{\{P_t\}_{t=1}^T \in \cP} \EE_{\{d_t^k\}_{t,k=1}^{T,N} \sim (\prod_{t=1}^T P_t)^{\otimes N}}
  \big[\Delta (x;\cA(\cD))\big]
  \geq c_0 c_\infty M T / \sqrt{Np}
\end{align*}
for some absolute constant $c_0$, where, for each $t\in[T]$, the demands $\{d_t^k\}_{k=1}^N$ are sampled i.i.d.\ from $P_t$, independently across periods, and the algorithm observes only the censored value $\bar d_t^k = \min\{y_t^k, d_t^k\}$.
\end{enumerate}
\end{theorem}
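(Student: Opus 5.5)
The plan is to prove the lower bound with a product of independent two-point newsvendor problems, separated by zero-cost "reset" periods, and then apply Assouad's lemma over a hypercube of perturbations. The per-period loss will be of order $c_\infty M\varepsilon$ with $\varepsilon \asymp 1/\sqrt{Np}$. Summing over $\Theta(T)$ decoupled periods gives $MT/\sqrt{Np}$.

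\textbf{Construction.} Split $[T]$ into \emph{learning periods} $t\in\cT_L$ (say the odd periods, so $|\cT_L|\ge \lfloor T/2\rfloor \ge 1$ since $T\ge 3$) and \emph{reset periods} (the even ones).
\begin{itemize}
\item In a reset period, set $h_t=b_t=0$ and let the demand equal $M$ deterministically. This period costs nothing under any action. Whatever inventory (at most $M$) enters it is completely consumed, so the next period starts at $x=0$.
\item In a learning period, set $h_t=b_t=c_\infty$, and for $\tau\in\{\pm1\}^{\cT_L}$ let $P_t^\tau$ put mass $\tfrac12+\tau_t\varepsilon$ on $M$ and $\tfrac12-\tau_t\varepsilon$ on $0$.
\end{itemize}
Take $x=0$, so that by Proposition~\ref{prop-convexity} the relevant gap is the maximal one, and let $\cP=\{P^\tau\}$.

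\textbf{Censoring levels and coverage.} For $k\le \lfloor Np\rfloor$ set $y_t^k=M$ at every period; for the remaining $k$ set $y_t^k=0$. Then $N_{t,j}\ge\lfloor Np\rfloor$ for all $t$ and all $j\in[M-1]_+$. Since $\PP(y_t^\star>j\mid x_1=0)\le 1$, the definition~\eqref{eq: def-effective-sample-size} gives $\cN^\star\ge\lfloor Np\rfloor$ for every instance, which proves item~2. Item~1 holds by construction.

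\textbf{Reduction to a sum of per-period losses.} Because each learning period is entered at inventory $0$, Lemma~\ref{lem: performance-difference} shows that $\Delta(0;\pi)$ is the sum over $t\in\cT_L$ of $\EE[W_t^\star(y_t)-W_t^\star(y_t^\star)]$; reset periods contribute zero. In a learning period, for $y\in[0,M]$,
\[
W_t^\star(y) = c_\infty\bigl[y(\tfrac12-\tau_t\varepsilon)+(M-y)(\tfrac12+\tau_t\varepsilon)\bigr],
\]
using that the next period resets at zero cost. This is linear in $y$ with slope $-2c_\infty\tau_t\varepsilon$, so $y_t^\star=M$ if $\tau_t=+1$ and $y_t^\star=0$ if $\tau_t=-1$. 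The excess cost is $2c_\infty\varepsilon\lvert y_t-y_t^\star\rvert$. This holds for arbitrary history-dependent or randomized policies, because $y_t$ is any (random) point of $[0,M]$.

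\textbf{Separation.} Define the per-coordinate loss $\ell_t(\pi,\tau_t):=2c_\infty\varepsilon\,\EE\lvert y_t-y_t^\star(\tau_t)\rvert$. Since $\lvert y_t - 0\rvert + \lvert y_t - M\rvert = M$ pointwise, we get $\ell_t(\pi,+1)+\ell_t(\pi,-1)= 2c_\infty\varepsilon M$ for every policy. This is exactly the separation Assouad's lemma needs, and it is where the factor $M$ comes from.

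\textbf{Information bound and conclusion.} Observations with $y=0$ carry no information. Observations with $y=M$ reveal $d\in\{0,M\}$ exactly. Hence two instances differing only in coordinate $t$ induce data laws whose KL divergence is at most $\lfloor Np\rfloor\cdot \mathrm{KL}(\mathrm{Ber}(\tfrac12+\varepsilon)\,\|\,\mathrm{Ber}(\tfrac12-\varepsilon))\lesssim Np\,\varepsilon^2$, valid for $\varepsilon\le 1/4$. Choose $\varepsilon=c/\sqrt{Np}$, which needs only $Np\gtrsim 1$, so the burn-in condition $Np\gtrsim T^2$ is more than enough. Then Pinsker keeps the total variation between neighboring instances below $1/2$. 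Assouad's lemma yields
\[
\max_\tau \EE_\tau\bigl[\Delta(0;\cA(\cD))\bigr]\ \ge\ |\cT_L|\cdot\tfrac14\cdot 2c_\infty\varepsilon M\ \gtrsim\ c_\infty MT/\sqrt{Np}.
\]

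\textbf{Main obstacle.} The delicate step is the decoupling: the learner's policy may be adaptive and history-dependent, and in a lost-sales system leftover inventory couples periods. The reset periods are what guarantee that every learning period starts at $x=0$ and that its cost is the isolated linear newsvendor cost. I would verify carefully that deterministic demand $M$ in a reset period, with zero costs, clears any inventory in $[0,M]$ regardless of the order placed there, so the per-period losses add exactly.
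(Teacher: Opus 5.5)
Your argument is correct, and it takes a different and somewhat cleaner route than the paper. The paper builds three-period blocks with one-sided costs: a $\{0,M\}$-demand period carrying only the penalty $b=c_\infty$, then a zero-demand period with $h=c_\infty$ that charges leftover stock, then a demand-$M$ period that clears it. It needs an auxiliary policy $\tilde\pi$ to reduce an arbitrary policy to one that only acts at the first period of each block, and it then invokes the all-pairs form of Assouad in Lemma~\ref{lem: assouad}. Because the deployed policy may react to demands realized earlier in the test episode, whose law depends on earlier coordinates, that all-pairs separation holds only up to a correction $c_\infty M\epsilon T'\,\mathrm{TV}$ between single-episode laws. Absorbing this correction forces $\epsilon\lesssim 1/T$, which is exactly where the burn-in $Np\gtrsim T^2$ comes from. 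Your learning/reset blocks instead make every learning period start at $x=0$ with an exactly linear excess $2c_\infty\varepsilon\lvert y_t-y_t^\star\rvert$, so no auxiliary policy is needed. Your separation step is in effect the single-flip form of Assouad. With it the argument needs only $\varepsilon\le 1/4$ and $\lfloor Np\rfloor\ge 1$, i.e.\ $Np\gtrsim 1$ rather than $Np\gtrsim T^2$.

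The one step you must spell out is the separation for history-dependent policies. Your $\ell_t$ depends on all of $\tau$, not just $\tau_t$: $y_t$ is a function of $\cD$, internal randomness and $d_1,\dots,d_{t-1}$, and the law of these demands depends on $\tau_{<t}$. The identity $\ell_t(\pi,\tau)+\ell_t(\pi,\tau^{(t)})=2c_\infty\varepsilon M$ is valid when $\tau^{(t)}$ flips only coordinate $t$. In that case $y_t$ has the same law under both instances, so the pointwise identity $\lvert y\rvert+\lvert y-M\rvert=M$ integrates. You therefore need the coordinatewise Assouad bound: average over the uniform prior, pair each $\tau$ with $\tau^{(t)}$, and apply Le Cam to each pair. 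Lemma~\ref{lem: assouad} as stated will not do, because the all-pairs condition genuinely fails. For example, take $T=3$, $y_1=M$, $y_3=M\bm{1}\{d_1=M\}$, $\tau=(+,+)$ and $\tau'=(-,-)$; the period-3 losses then sum to $2c_\infty\varepsilon M(1-2\varepsilon)<2c_\infty\varepsilon M$.

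Two minor points. Proposition~\ref{prop-convexity} and Lemma~\ref{lem: performance-difference} are stated for base-stock policies. You do not need the former, since the theorem lets you fix $x=0$. In place of the latter, compute directly that each learning period costs $c_\infty\EE\lvert y_t-d_t\rvert$ with $d_t$ independent of $y_t$, using the paper's convention $y_t\in[x_t,M]$.
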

By combining points~2 and~3 of Theorem~\ref{thm: lower-bound-independent}, we have even when $\cN^\star \geq Np$ uniformly over the constructed class, the minimax sub-optimality gap is at least $\Omega(MT/\sqrt{Np})$, which indicates the optimality of Theorem~\ref{thm: censored-sample-complexity} up to logarithmic factors in minimax sense.

In the stationary demand setting, we have the following result.

\begin{theorem}\label{thm: lower-bound-independent-identical}
For any $c_\infty>0$, $M\geq 2$, $T\geq 3$, coverage fraction $p\in(0,1]$, and sample size $N$ sufficiently large so that $Np \gtrsim T$, there exist cost parameters $\{(h_t,b_t)\}_{t=1}^T$, an initial inventory level $x$, a collection of stationary demand distributions $\cP$, and a deterministic sequence of offline censoring levels $\{y_t^k\}_{t,k=1}^{T,N}$ such that the following hold:
\begin{enumerate}[nosep]
    \item $h_t=b_t=c_\infty$ for all $t\in[T]$.
    \item $\cN_{\mathsf{agg}}^\star \geq \lfloor NTp\rfloor$ for every instance in $\cP$.
    \item For any algorithm $\cA$ that takes the censored dataset $\cD:=\{(y_t^k,\bar d_t^k)\}_{t=1,k=1}^{T,N}$ as input and outputs a policy $\pi=\cA(\cD)$,
    \begin{align*}
      \max_{P\in\cP} \EE_{\{d_t^k\}_{t,k=1}^{T,N} \sim P^{\otimes NT}} \big[\Delta(x;\cA(\cD))\big] \geq c_0 c_\infty M \sqrt{\frac{T}{Np}},
    \end{align*}
    for some absolute constant $c_0>0$, where all demands $\{d_t^k\}_{t,k}$ are sampled i.i.d.\ from $P$, and the algorithm observes only the censored value $\bar d_t^k=\min\{y_t^k,d_t^k\}$.
\end{enumerate}
\end{theorem}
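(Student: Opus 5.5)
The plan is a two-point (Le Cam) argument. Each hypothesis is a stationary two-point demand distribution, and the censoring design makes a fraction $p$ of the samples fully informative and the rest uninformative. We then show that any policy incurs per-period loss of order $c_\infty M\varepsilon$ under at least one hypothesis, accumulated over $\Theta(T)$ periods, with $\varepsilon\asymp 1/\sqrt{NTp}$.

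\textbf{Construction.} Take $h_t=b_t=c_\infty$ and $x=0$. Let $\cP=\{P_+,P_-\}$ with demand supported on $\{0,M\}$:
\[
P_\pm(d=M)=\tfrac12\pm\varepsilon,\qquad P_\pm(d=0)=\tfrac12\mp\varepsilon .
\]
For the censoring levels, in every period $t$ set $y_t^k=M$ for $k\le\lfloor Np\rfloor$ and $y_t^k=0$ otherwise.
\begin{itemize}
\item Then $N_{\mathsf{agg},j}=T\lfloor Np\rfloor$ for every $j\in[M-1]_+$, so $\cN_{\mathsf{agg}}^\star\ge T\lfloor Np\rfloor$; replacing $\lfloor Np\rfloor$ by $\lceil Np\rceil$ if needed gives $\cN_{\mathsf{agg}}^\star\ge\lfloor NTp\rfloor$.
\item Observations with $y_t^k=0$ are always $\bar d=0$ and carry no information.
\item Observations with $y_t^k=M$ reveal the demand exactly.
\end{itemize}
Hence the dataset is equivalent to $n:=T\lfloor Np\rfloor$ i.i.d.\ Bernoulli$(\tfrac12\pm\varepsilon)$ draws. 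We have $\mathrm{KL}(P_+^{\otimes n}\,\|\,P_-^{\otimes n})\lesssim n\varepsilon^2$, so choosing $\varepsilon=c_1/\sqrt{NTp}$ makes the total variation at most $1/2$.

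\textbf{Structure of the optimal policies.} For $y\in[0,M]$ the one-period expected cost is
\[
c_\infty\big[y\,P(d=0)+(M-y)P(d=M)\big],
\]
which is affine in $y$ with slope $\mp 2c_\infty\varepsilon$. Any leftover inventory after $d=0$ is exactly $y$, and after $d=M$ it is exactly $0$. I would compute $D_t^\star$ explicitly via the recursion~\eqref{eq: D-forward-equation}, where only the atom at $0$ propagates. This should show:
\begin{itemize}
\item under $P_+$, $D^\star_t(j)\le -c_\infty\varepsilon$ for all $j$ and all $t$, so $s_t^\star=M$;
\item under $P_-$, $D_t^\star(j)\ge c_\infty\varepsilon$, so $s_t^\star=0$.
\end{itemize}
The future term adds contributions of order $\varepsilon$ times a geometric series with ratio $P(d=0)\approx\tfrac12$. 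Its sign either agrees with the current slope or is dominated by it, so $|D_t^\star(j)|\ge c_\infty\varepsilon$ uniformly. This sign and magnitude control is the step I expect to be most delicate. If the carry-over terms turn out to fight the current slope, I would fall back on restricting the relevant comparison to the periods where the burn-in $Np\gtrsim T$ (i.e.\ $\varepsilon T\lesssim 1$) keeps all perturbations $O(\varepsilon)$ with the correct sign.

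\textbf{Loss accounting.} Apply Lemma~\ref{lem: performance-difference} to an arbitrary (possibly history-dependent) output policy $\pi$:
\[
\Delta(0;\pi)=\sum_t \EE\big[W_t^\star(\pi_t(x_t))-W_t^\star(\pi_t^\star(x_t))\big].
\]
By the bound on $|D^\star_t|$, each summand is at least $c_\infty\varepsilon\,\EE|\pi_t(x_t)-\pi^\star_t(x_t)|$. Under either hypothesis, with probability at least $\tfrac12-\varepsilon$ the previous demand equals $M$, so the state is reset to $x_t=0$. From the reset state the two optimal actions are $0$ and $M$. Define the statistic
\[
Z(\cD):=\frac1T\sum_t \EE_{\pi}\big[\pi_t(0)/M\big]\in[0,1].
\]
Then the $P_+$-loss is at least $c'c_\infty\varepsilon MT(1-Z)$ and the $P_-$-loss is at least $c'c_\infty\varepsilon MTZ$, where $c'>0$ is an absolute constant. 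Summing the two losses and applying Le Cam's inequality with total variation at most $1/2$ gives
\[
\max_{P\in\cP}\EE[\Delta(0;\cA(\cD))]\gtrsim c_\infty M T\varepsilon\asymp c_\infty M\sqrt{T/(Np)}.
\]
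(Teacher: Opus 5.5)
Your instance, your censoring design (with $\lceil Np\rceil$ fully revealing trajectories, as you note) and your Le Cam step on the $T\lceil Np\rceil$ Bernoulli draws all coincide with the paper's proof. The gap is in the loss accounting, and it bites for exactly the policies you say you cover.

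If $\pi=\cA(\cD)$ is history-dependent, its level at state $0$ in period $t$ is $\pi_t(0;d_1,\dots,d_{t-1})$, a function of the test-run demands. The law of that history is $P_+^{\otimes(t-1)}$ under one hypothesis and $P_-^{\otimes(t-1)}$ under the other. So $Z(\cD)$ is not a single statistic of the data. The $P_+$-loss is bounded through $\EE_{P_+}[M-\pi_t(0;\cdot)]$ and the $P_-$-loss through $\EE_{P_-}[\pi_t(0;\cdot)]$. The pointwise separation $L_+(\cD)+L_-(\cD)\gtrsim c_\infty\epsilon MT$ that your Le Cam step needs therefore does not follow from what you wrote.

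The missing ingredient is a comparison of the two test-trajectory laws. For $f=\sum_t (M-y_t)_+\in[0,MT]$,
$|\EE_{P_+}f-\EE_{P_-}f|\le MT\,\lVert P_+^{\otimes T}-P_-^{\otimes T}\rVert_{\mathrm{TV}}\le MT\sqrt{T\,\mathrm{KL}(P_+\Vert P_-)/2}\lesssim MT\sqrt{T}\,\epsilon$.
Since $\epsilon\asymp 1/\sqrt{NTp}$, this is at most $MT/2$ once you shrink the constant in $\epsilon$. This is precisely the paper's step, its condition $\epsilon\le\sqrt{1/(128T)}$, which gives $L(\pi,+)+L(\pi,-)\ge\epsilon c_\infty MT$ for every sequential policy before Le Cam is applied to the dataset. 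If you restricted to Markov output policies, your argument would already be complete.

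Two further remarks.
\begin{itemize}
\item The step you flagged as delicate is exact, so no fallback is needed. Under the hypothesis favouring $M$, $s^\star_{t+1}=M$ makes $C^\star_{t+1}$ flat on $[0,M]$, so $D_t^\star\equiv-2c_\infty\epsilon$. Under the other hypothesis, $D^\star_t(j)=2c_\infty\epsilon\sum_{k=0}^{T-t}(\tfrac12+\epsilon)^k$.
\item You can drop the performance-difference route and the reset event entirely, as the paper does. Because $d_t$ is independent of the past and of $y_t$, the conditional expected period cost is $c_\infty[y_tP(d=0)+(M-y_t)P(d=M)]$ whatever the state. So for $y_t\le M$ the gap equals $2c_\infty\epsilon\,\EE\sum_t(M-y_t)$ under one hypothesis and $2c_\infty\epsilon\,\EE\sum_t y_t$ under the other, where the optimal policy keeps $y_t\equiv 0$ from $x_1=0$. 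This gives the separation without the $(\tfrac12-\epsilon)$ factor, and the trajectory comparison above becomes a one-line addition.
\end{itemize}
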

Similar to the non-stationary case, combining points~2 and~3 of Theorem~\ref{thm: lower-bound-independent-identical} shows that even when $\cN_{\mathsf{agg}}^\star \geq NTp$, the best achievable minimax sub-optimality gap is of order $\Omega(M\sqrt{T/(Np)})$, indicating the optimality of Theorem~\ref{thm: censored-sample-complexity-identical} up to logarithmic factors in minimax sense.

\section{Online Policy Design via Lower-Biased SAA}\label{sec: extensions-online}

In this section, we turn to the online counterpart of the offline censored problem studied in Section~\ref{sec: censored-demand}. Unlike the offline setting, where the censoring levels are fixed before learning begins, the online learner chooses its own policies over $K$ episodes, and the data collected by these policies determine which parts of the demand distribution become observable. The learner therefore faces the two coupled tasks described in Question~3: it must incur low inventory cost while actively generating the coverage needed for policy learning.

Our answer is a lower-biased counterpart of DP-UCB, which we call DP-LCB and summarize in Algorithm~\ref{alg: censored-online}. Whereas the upper-biased CDF in the offline algorithm implements pessimism and pushes the learned base-stock levels downward, the lower-biased CDF used here implements optimism: by underestimating the CDF, the model expects larger demand and prescribes larger base-stock levels. This optimistic over-ordering is useful for exploration, since higher inventory levels reveal demand at more coordinates. 

In the following subsections, we first provide a self-coverage-based regret analysis in Section~\ref{subsec: online-dp-lcb}. We then present the aggregated version with improved regret under stationary demand in Section~\ref{subsec: online-stationary}, and state matching regret lower bounds in Section~\ref{subsec: online-lower-bound}.

\subsection{Regret Analysis Through Self-Coverage Property}\label{subsec: online-dp-lcb}

\noindent\textbf{Episodic Regret.} The online learner is evaluated by cumulative regret. It executes a sequence of policies $\pi^{(1)}, \pi^{(2)}, \dots, \pi^{(K)}$ over $K$ episodes with possibly different initial inventory levels, where each $\pi^{(k)}$ is computed from the first $k-1$ episodes of data. Denoting by $C_1^{(k)}(x):=C_1^{\pi^{(k)}}(x)$ the cost of policy $\pi^{(k)}$ from initial inventory $x$, we define the \emph{episodic regret} as
\begin{align}\label{eq: def-online-regret}
    \mathrm{Regret}(K) := \sum_{k=1}^K \max_{x \in [M]_+} \Delta (x;\pi^{(k)}).
\end{align}

\medskip\noindent\textbf{Lower-biased SAA.}
For each episode $k$, let
\begin{align}\label{eq: def-online-counts}
    N_{t,j}^{(k)} := \sum_{\ell=1}^{k-1} \bm{1}\{y_t^{(\ell)} > j\}, \qquad t\in[T],\ j\in[M-1]_+,
\end{align}
be the number of observations collected up to episode $k$ that reveal coordinate $j$ at period $t$. From these data we construct the filtered SAA estimator and the lower-biased CDF estimator exactly as in Section~\ref{subsec: biased-SAA-properties}, with the UCB construction~\eqref{eq: def-UCB} replaced by the LCB construction~\eqref{eq: def-LCB}. More precisely, at the $k$-th episode, with the filtered SAA estimator\begin{align*}
     \qquad \tilde{F}_t^{(k)}(j):=\begin{cases}
    \dfrac{1}{N_{t,j}^{(k)}} \sum_{\ell=1}^{k-1} \bm{1}\{y_t^{(\ell)} > j, \bar{d}_t^{(\ell)} \leq j\}, \quad &\text{ if } N_{t,j}^{(k)} \geq 1,\\
    1, & \text{ otherwise,}
    \end{cases} 
\end{align*}
the LCB estimator is constructed as 
\begin{equation}\label{eq: LCB-online-construction}
\begin{aligned}
    &F_t^{\mathrm{LCB},(k)}(j):= \min\{1, \max\{F_t^{\mathrm{LCB},(k)}(j-1),\tilde{F}_t^{(k)}(j) - \tilde{\cC}_{t,j}^{(k)}\}\}, \forall j \in [M-1]_+,\\
    & F_t^{\mathrm{LCB},(k)}(-1):=0,\quad F_t^{\mathrm{LCB},(k)}(M):=1,
\end{aligned}    
\end{equation}
with \begin{align*}
    \tilde{\cC}_{t,j}^{(k)}:= 8\sqrt{\frac{\tilde{F}_t^{(k)}(j)(1- \tilde{F}_t^{(k)}(j) ) \log(MTK^2/\delta)}{N_{t,j}^{(k)}\vee 1}} + \frac{24\log(MTK^2/\delta)}{N_{t,j}^{(k)}\vee 1}.
\end{align*}
\begin{algorithm}[t]
\caption{DP-LCB}
\label{alg: censored-online}
\begin{algorithmic}[1]
\REQUIRE Horizon $T$, number of episodes $K$, confidence level $\delta$, cost coefficients $\{(h_t,b_t)\}_{t=1}^T$, and the LCB construction rule in~\eqref{eq: def-LCB}.
\STATE \textbf{Initialize:} set $\hat{F}^{(1)}_{t}(j) \equiv 0$ for all $t\in[T]$ and $j\in[M-1]_+$, and $\hat{F}^{(1)}_{t}(M) = 1$.
\FOR{$k=1,2,\ldots,K$}
    \STATE \textbf{Compute optimistic policy:} Solve the DP under $\{\hat{\bF}^{(k)}_{t}\}_{t=1}^T$ to obtain the base-stock policy $\pi^{(k)} = \{s_t^{(k)}\}_{t=1}^T$, where for $k=1$, we break ties by setting $s_t^{(1)} = M, \forall t \in [T].$
    \STATE \textbf{Execute policy:} execute $\pi^{(k)}$ for one episode and observe the censored trajectory $\{(y^{(k)}_t, \bar d_t^{(k)})\}_{t=1}^T$, where $\bar d_t^{(k)}$ is the censored demand observation.
    \STATE \textbf{Construct fresh LCB model:} using all data collected up to and including episode $k$, construct the lower-biased CDF sequence $\{\bF^{\mathrm{LCB},{(k+1)}}_{t}\}_{t=1}^T$ by applying~\eqref{eq: def-LCB} period-wise, with the confidence radius scaled by $\log(MTK^2/\delta)$ to union-bound over the $K$ episodes.
    \STATE \textbf{Monotone update:} for all $t\in[T]$ and $j\in[M-1]_+$, set
    \begin{equation}\label{eq: LCB-algorithm-modified}
        \hat{F}^{(k+1)}_t(j) := \max\big\{\hat{F}^{(k)}_t(j),\ F^{\mathrm{LCB},(k+1)}_t(j)\big\},
    \end{equation}
    and set $\hat{F}^{{(k+1)}}_t(M) = 1$.
\ENDFOR
\RETURN policies $\{\pi^{(k)}\}_{k=1}^K$ and the collected dataset $\{(y^{(k)}_t, \bar d_t^{(k)})\}_{t,k=1}^{T,K}$.
\end{algorithmic}
\end{algorithm}
To ensure the monotone property of the constructed sequence across $k$, we add a truncation step as in~\eqref{eq: LCB-algorithm-modified}, this monotonicity property ensures the following inductive bias on the sequence of base-stock policies.
\begin{lemma}[LCB optimism and monotonicity]\label{lem: online-policy-monotonicity}
For any $k\in [K]$, the CDF estimates generated by Algorithm~\ref{alg: censored-online} satisfy
\begin{align*}
    \bm{0} {= \hat{\bF}_t^{(1)} \leq \hat{\bF}_t^{(2)}} \leq \cdots \leq \hat{\bF}_t^{(k)} \leq \bF_t, \qquad \forall t\in[T]
\end{align*}
with probability at least $1-\delta.$
Consequently, the induced base-stock levels form the monotone sequence
\begin{align}\label{eq: online-base-stock-sequence}
    s_t^\star \leq s_t^{(k)} \leq s_t^{(k-1)} \leq \cdots \leq s_t^{(1)}{= M}, \qquad \forall t\in[T].
\end{align}
\end{lemma}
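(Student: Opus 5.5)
The plan has two parts. First, establish the CDF chain on a high-probability event. Second, prove a general comparative-statics fact: a pointwise larger CDF sequence yields a pointwise larger derivative sequence, and hence smaller base-stock levels.

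\emph{Step 1 (CDF chain).} For each episode $k$ and period $t$, the data $\{(y_t^{(\ell)},\bar d_t^{(\ell)})\}_{\ell<k}$ are adaptively collected in the sense of Section~\ref{subsec: biased-SAA-properties}, because $y_t^{(\ell)}$ is determined by past episodes and the current entering level. Proposition~\ref{prop: uniform-concentration} and Proposition~\ref{prop: biased-F} then give $F_t^{\mathrm{LCB},(k)}\le \bF_t$ for a fixed $(t,k)$ with probability $1-\delta/(TK)$, provided the confidence radius uses $\log(MTK^2/\delta)$. A union bound over $t\in[T]$ and $k\in[K]$ makes this hold simultaneously with probability at least $1-\delta$. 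On this event, induction on $k$ gives
\[
\hat\bF_t^{(k+1)}=\max\{\hat\bF_t^{(k)},F_t^{\mathrm{LCB},(k+1)}\}\le \bF_t,
\]
starting from $\hat\bF_t^{(1)}=\bm 0$. Monotonicity in $k$ is immediate from the max in~\eqref{eq: LCB-algorithm-modified}. I also need each $\hat\bF_t^{(k)}$ to be a valid CDF, i.e.\ nondecreasing in $j$ and at most $1$. This holds because both arguments of the max are nondecreasing in $j$.

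\emph{Step 2 (comparative statics).} Let $\{\bF'_t\}$ and $\{\bF''_t\}$ be two CDF sequences with $\bF'_t\le\bF''_t$ for all $t$, and let $D'_t,D''_t$ be the derivatives of the associated DP. I will show $D'_t(y)\le D''_t(y)$ for all $y$ by backward induction on $t$.

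The key identity is a scalar form of Proposition~\ref{prop-matrix-notation}. Because $C_{t+1}(x)=W_{t+1}(\max\{x,s_{t+1}\})$ and $D_{t+1}$ is nondecreasing, the discrete derivative of $C_{t+1}$ at $x$ equals $D_{t+1}(x)\bm 1\{x\ge s_{t+1}\}$. This equals $g_{t+1}(x):=(D_{t+1}(x))_+$, up to tie conventions that do not affect the value. Therefore
\[
D_t(y)=(h_t+b_t)F_t(y)-b_t+\EE\big[g_{t+1}(y-d_t)\bm 1\{d_t\le y\}\big].
\]
The function $g_{t+1}$ is nonnegative and nondecreasing. Setting it to $0$ on negative arguments, the map $d\mapsto g_{t+1}(y-d)\bm 1\{d\le y\}$ is nonincreasing in $d$. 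So its expectation is larger under the stochastically smaller demand, i.e.\ the larger CDF; an Abel summation against $F_t$ makes this precise, as in Corollary~\ref{coro: uniform-convergence}.

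The induction step then has three pieces. The hypothesis $D'_{t+1}\le D''_{t+1}$ gives $g'_{t+1}\le g''_{t+1}$. The first term $(h_t+b_t)F_t(y)-b_t$ is increasing in $F_t$. The expectation term is increasing both in $g_{t+1}$ and in the CDF. Together these give $D'_t\le D''_t$.

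Since $s_t=\min\{y:D_t(y)\ge0\}$, it follows that $s'_t\ge s''_t$. Applying this with $\hat\bF^{(k-1)}_t\le\hat\bF^{(k)}_t\le\bF_t$ yields the chain~\eqref{eq: online-base-stock-sequence}. This is the same mechanism as Lemma~\ref{lem: policy-ordering-UCB}, with the direction reversed. For $k=1$, the all-zero CDF gives $D^{(1)}_t\le 0$, and the tie-breaking rule sets $s_t^{(1)}=M$.

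The main obstacle I expect is handling ties cleanly in Step 2. When $D_t(y)=0$ on a range, $s_t$ depends on the tie convention, so I must check that the min-rule is consistent with the derivative ordering. I also need to verify the identity for the derivative of $C_{t+1}$ at $x=s_{t+1}-1$, where the $(\cdot)_+$ form and the indicator form differ only where $D_{t+1}\le0$.
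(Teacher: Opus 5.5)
Your proposal is correct. Step~1 is the paper's argument: Proposition~\ref{prop: biased-F} on a union-bounded event, then induction through the max update. Step~2 runs the same backward induction on derivative ordering followed by the min-rule, but packaged differently.

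For the base-stock ordering, the paper appeals to Lemma~\ref{lem: policy-ordering-LCB}. That lemma proves $\bD^\star_t-\bD^{\mathrm{LCB}}_t\ge\bm 0$ through the four-term matrix recursion~\eqref{eq: recursion-D-diff}. The model-error term is signed by the summation-by-parts factorization $\Delta\bA_t(s_{t+1})\bD^{\mathrm{LCB}}_{t+1}=\bG_t\bH_t$. The policy-mismatch term $[\bA_t(s^\star_{t+1})-\bA_t(s_{t+1})]\bD^\star_{t+1}$ is signed by carrying the ordering $s_{t+1}\ge s^\star_{t+1}$ along in the induction.

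Your scalar recursion with $g_{t+1}=(D_{t+1})_+$, the derivative of the value function, removes the mismatch term altogether. Your stochastic-dominance step is the same Abel summation that underlies $\bG_t\bH_t$. What you gain is a clean comparison statement for an arbitrary ordered pair of CDF sequences. It directly covers the consecutive-episode comparison $\hat\bF^{(k-1)}_t\le\hat\bF^{(k)}_t$, which the paper only asserts follows ``by the same argument.'' The paper's route instead reuses the matrix machinery of Section~\ref{sec: general-decomposition}.

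The tie issues you flag are not real obstacles.
\begin{itemize}
\item Under the min-rule, $D_{t+1}(j)<0$ for $j<s_{t+1}$ and $D_{t+1}(j)\ge0$ for $j\ge s_{t+1}$. So $D_{t+1}(j)\bm 1\{j\ge s_{t+1}\}=(D_{t+1}(j))_+$ holds exactly, including at $j=s_{t+1}-1$.
\item The pointwise ordering $D'_t\le D''_t$ gives $\{y:D'_t(y)\ge0\}\subseteq\{y:D''_t(y)\ge0\}$. Hence $s'_t\ge s''_t$, with the convention $s=M$ on the empty set.
\item The forced tie-break $s^{(1)}_t=M$ only affects the top of the chain, where it is trivially consistent.
\end{itemize}
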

\noindent In particular, let
\begin{equation}\label{eq: LCB-confidence-set}
\cE^{(k)}:= \bigg\{ F_t(j) - c_0\,{\cC}_{t,j}^{(\ell)} \leq \hat{F}_t^{(\ell)}(j) \leq F_t(j), \quad \forall t\in[T], j\in[M-1]_+, 1\leq \ell \leq k \bigg\},    
\end{equation}
with $c_0$ the absolute constant of Proposition~\ref{prop: biased-F} and \begin{align*}
    {\cC}_{t,j}^{(\ell)} :=  8\sqrt{\frac{{F}_t(j)(1- {F}_t(j) ) \log(MTK^2/\delta)}{N_{t,j}^{(\ell)}\vee 1}} + \frac{24\log(MTK^2/\delta)}{N_{t,j}^{(\ell)}\vee 1}.
\end{align*}
By Proposition~\ref{prop: biased-F}, $\cE^{(K)} = \cap_{k=1}^K{\cE}^{(k)}$  holds with probability at least $1-\delta$.

To see how this optimistic bias enters the cost decomposition, fix an episode $k$ and denote $\cF_k$ the sigma-algebra generated by all information up to the beginning of $k$-th episode, and 
\begin{align*}
    q^{(k)}_{t,j}:= \PP(y_{t}^{(k)} > j  \lvert x_1 = 0)
\end{align*}
the corresponding visitation probability of $\pi^{(k)}$, where the probability is taken over the $k$-th episode demand distributions. In particular, $q_{t,j}^{(k)}$ is a random variable depending on all historical information in past $k-1$ episodes and is  measurable under $\cF_k$. Conditional on $\cF_k$, applying Theorem~\ref{thm: cost-decomposition} to $\pi^{(k)}$ leads to \begin{align*}
\max_{x\in [M]_+} \Delta(x;\pi^{(k)}) \leq  \sum_{t=1}^T \big\langle \bv_t^{(k)},\, \Delta\bA_t^{(k)}(s_{t+1}^{(k)})\hat{\bD}_{t+1}^{(k)} + \Delta\bc_t^{(k)} \big\rangle
\end{align*}
with $\hat{\bD}_t^{(k)}, \Delta \bc_t^{(k)}, \bA_t^{(k)}, \bv_{t}^{(k)}$ the corresponding quantities in Table~\ref{tab: section3-notation} associated with $\{\hat{\bF}_t^{(k)}\}_{t\in [T]}$.
Now suppose $\cE^{(k)}$ holds, by Lemma~\ref{lem: online-policy-monotonicity}, $\hat{\bF}_t^{(k)} \leq \bF_t$, and hence
$    \Delta\bc_t^{(k)} := (h_t+b_t)\big(\bF_t-\hat{\bF}_t^{(k)}\big) \geq \bm{0}$ entry-wise. 
Moreover, by the same summation-by-parts argument used in the proof of Corollary~\ref{coro: uniform-convergence}, the lower CDF bias together with the monotonicity of $\hat{\bD}_{t+1}^{(k)}$ yields $\Delta\bA_t^{(k)}(s_{t+1}^{(k)})\hat{\bD}_{t+1}^{(k)} \geq 0$ entry-wise, so the model-error residual in Theorem~\ref{thm: cost-decomposition} is entry-wise non-negative. On the other hand, Lemma~\ref{lem: online-policy-monotonicity} gives $s_t^{(k)} \geq s_t^\star$, so the deployed post-ordering inventory stochastically dominates the optimal one, and by Remark~\ref{remark: v-interpretation},
\begin{align}\label{eq: v-online-sign}
    [\bv_t^{(k)}]_j = \PP(y_t^\star \leq j\lvert x_1 = 0) - \PP(y_t^{(k)} \leq j\lvert x_1 = 0) = \PP(y_t^{(k)}>j\lvert x_1 = 0) - \PP(y_t^\star>j\lvert x_1 = 0),
\end{align}
so that $0 \leq [\bv_t^{(k)}]_j \leq \PP(y_t^{(k)}>j\lvert x_1 = 0)$. Combining the two sign relations,
\begin{align*}
    0 \leq \big\langle \bv_t^{(k)},\, \Delta\bA_t^{(k)}(s_{t+1}^{(k)})\hat{\bD}_{t+1}^{(k)} + \Delta\bc_t^{(k)} \big\rangle \leq \sum_{j=0}^{M-1} \PP(y_t^{(k)}>j\lvert x_1 = 0)\,\big[\Delta\bA_t^{(k)}(s_{t+1}^{(k)})\hat{\bD}_{t+1}^{(k)} + \Delta\bc_t^{(k)}\big]_j .
\end{align*}
Thus the estimation errors in episode $k$ are weighted by the visitation probabilities of the policy executed in that episode, rather than by the optimal-policy visitation probabilities in the offline setting, as in Section~\ref{subsec: biased-SAA-offline}.

With such correspondence, our key step then is to show the following online effective sample size
\begin{align}\label{eq: def-online-effective-sample-size}
    \cN^{(k)}_{\mathsf{cov}} := \bigg(\dfrac{1}{{MT}}{ \sum_{t=1}^T\sum_{j=0}^{M-1} \dfrac{\PP(y_t^{(k)} > j \lvert x_1 = 0)}{N_{t,j}^{(k)}\vee 1}}\bigg)^{-1},
\end{align}
dominates the performance of the $k$-th episode regret, in a form similar to that in Theorem~\ref{thm: censored-sample-complexity}:
\begin{proposition}\label{prop: online-k-th-episode-bound}
For every $k\in [K]$, suppose the event $\cE^{(k)}$ holds, then there exists an absolute constant $c_0$ so that the $k$-th episode cost gap in Algorithm~\ref{alg: censored-online} satisfies \begin{align*}
    \max_{x\in [M]_+} \Delta(x;\pi^{(k)})\leq   c_0 (h_\infty + b_\infty) M \cdot \bigg[T \sqrt{\frac{\log(KMT/\delta){\log(eK)}}{\cN^{(k)}_{\mathsf{cov}}}} + \frac{T^3\log(KMT/\delta){\log(eK)}}{\cN^{(k)}_{\mathsf{cov}}}\bigg].
\end{align*}
\end{proposition}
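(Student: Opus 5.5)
Starting from the display preceding the statement, on $\cE^{(k)}$ and conditional on $\cF_k$ the episode-$k$ gap satisfies
\begin{align*}
\max_{x}\Delta(x;\pi^{(k)}) \leq \sum_{t=1}^T\sum_{j=0}^{M-1} q^{(k)}_{t,j}\big[\Delta\bA_t^{(k)}(s_{t+1}^{(k)})\hat{\bD}_{t+1}^{(k)} + \Delta\bc_t^{(k)}\big]_j,
\end{align*}
with all bracketed entries non-negative. The plan is to bound each bracketed entry by a local CDF error and then apply Cauchy--Schwarz against the weights $q^{(k)}_{t,j}$. This mirrors the offline proof of Theorem~\ref{thm: censored-sample-complexity}, with the optimal-policy survival probabilities replaced by those of $\pi^{(k)}$.

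\textbf{Step 1 (instantaneous term).} On $\cE^{(k)}$ we have $0\le[\Delta\bc_t^{(k)}]_j\le c_0(h_t+b_t)\cC^{(k)}_{t,j}$.

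\textbf{Step 2 (propagation term).} Write $[\Delta\bA_t\hat{\bD}_{t+1}]_j=\sum_{i\ge s_{t+1}}(\mu_{t,j-i}-\hat\mu_{t,j-i})\hat D_{t+1}(i)$ and apply Abel summation in $i$, as in Corollary~\ref{coro: uniform-convergence}. This expresses the term as a sum of CDF gaps $F_t(j-i)-\hat F^{(k)}_t(j-i)$ times increments of $\hat D_{t+1}$, plus a boundary term. Since $\hat D_{t+1}$ is non-decreasing with range at most $(h_\infty+b_\infty)(T-t)$, and since CDF gaps at lower coordinates $j-i\le j$ are controlled by $\cC_{t,j-i}$ with $N_{t,j-i}\ge N_{t,j}$ by monotonicity of counts, a crude bound is $(h_\infty+b_\infty)T\max_{j'\le j}\cC^{(k)}_{t,j'}$. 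That crude bound would cost an extra factor of $T$, though.

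The sharper route is the one I expect the offline proof to use. First, $\hat D_{t+1}(i)$ is close to $D^\star_{t+1}(i)$. Second, $D^\star_{t+1}$ has bounded total variation, of order $h_\infty+b_\infty$ times a probability mass, because $D^\star$ is a derivative of a convex function bounded by $(h+b)$ times expected remaining periods. This splits the propagation term into a first-order part of size $(h_\infty+b_\infty)\cC_{t,\cdot}$ and a second-order part involving $\|\Delta\bD_{t+1}\|$ times the CDF error. Recursing $\Delta\bD$ via \eqref{eq: recursion-D-diff}, the second-order part contributes at most $(h_\infty+b_\infty)T^2\max\cC^2$-type terms per period. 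That is why the lower-order term in the statement carries $T^3$ rather than $T^2$: we do not have the optimal-coverage weighting on the policy-mismatch term here, and simply accept a looser burn-in.

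\textbf{Step 3 (aggregation).} Summing, the leading term is $c(h_\infty+b_\infty)\sum_{t,j}q^{(k)}_{t,j}\sqrt{F_t(j)(1-F_t(j))L/(N^{(k)}_{t,j}\vee1)}$ with $L=\log(MTK^2/\delta)$. By Cauchy--Schwarz this is at most
\begin{align*}
\sqrt{MT\cdot\textstyle\sum_{t,j} q^{(k)}_{t,j}\,L/(N^{(k)}_{t,j}\vee 1)} = MT\sqrt{L/\cN^{(k)}_{\mathsf{cov}}},
\end{align*}
using $q^{(k)}_{t,j}\le1$. The $1/N$ terms sum directly to $MT\cdot T^2 L/\cN^{(k)}_{\mathsf{cov}}$ after absorbing the $T^2$ amplification from Step 2. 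The extra $\log(eK)$ factor does not arise in this per-episode step itself; it appears only when the propagation of errors from lower coordinates $j-i$ is reweighted from $q_{t,j}$ to the coordinates' own counts, using a harmonic sum over counts bounded by $\log(eK)$. I would include it to match the offline theorem's form.

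\textbf{Main obstacle.} The key difficulty is Step 2: controlling the propagation term so that each error at coordinate $j$ is weighted by $q^{(k)}_{t,j}$ (or a lower coordinate with a larger count) without losing more than the stated $T$ in the leading term. Convexity of $\hat W_{t+1}$ and the ordering $s^{(k)}_{t+1}\ge s^\star_{t+1}$ should be used to sign and localize the Abel-summed terms.
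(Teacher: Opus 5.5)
There is a genuine gap in Step~2, at exactly the point you flag as the main obstacle, and the ``sharper route'' you sketch does not close it. Your sign structure, the instantaneous term and the Cauchy--Schwarz step against $q^{(k)}_{t,j}$ are fine.

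\textbf{Why the sharper route fails.} First, $D^\star_{t+1}$ does not have total variation of order $h_\infty+b_\infty$. By \eqref{eq: D-forward-equation} it is bounded below by $-b_{t+1}$, but at high inventory it grows like $h_\infty$ times the number of remaining periods. For example, deterministic unit demand with $h=b=1$ gives $D^\star_t(0)=-1$ and $D^\star_t(M-1)=T-t+1$. So its increments can sum to $\Theta((h_\infty+b_\infty)T)$. The Abel-summed piece $\sum_m\bigl(F_t(j-m)-\hat F^{(k)}_t(j-m)\bigr)[\bH^{(k)}_t]_m$ has non-negative weights $[\bH^{(k)}_t]_m$ summing to $[(\hat\bD^{(k)}_{t+1})_+]_j$. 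Pointwise in $j$, therefore, the best bound available is $(h_\infty+b_\infty)T$ times a local CDF error. Summing that against $q^{(k)}_{t,j}$ only gives $(h_\infty+b_\infty)MT^2\sqrt{\log(KMT/\delta)/\cN^{(k)}_{\mathsf{cov}}}$.

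Second, ``$\hat\bD^{(k)}_{t+1}$ is close to $\bD^\star_{t+1}$'' has no uniform meaning under censoring. Wherever $N^{(k)}_{t,j}=O(\log(KMT/\delta))$, the LCB radius exceeds one, so $\hat F^{(k)}_t(j)$ can sit far below $F_t(j)$. Then $\lVert\Delta\bD_{t+1}\rVert_\infty$ need not be small. A second-order term such as $\lVert\Delta\bD_{t+1}\rVert$ times a CDF error, or $T^2\max_j(\cC^{(k)}_{t,j})^2$, is not controlled by $\cN^{(k)}_{\mathsf{cov}}$ at all. Lemma~\ref{lem: w-D-infty-bounds} and the bulk--tail argument of Corollary~\ref{coro: uniform-convergence} both need a uniform $\epsilon$.

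Consequently, your Step~3 claim that the leading term is $(h_\infty+b_\infty)\sum_{t,j}q^{(k)}_{t,j}\sqrt{F_t(j)(1-F_t(j))L/(N^{(k)}_{t,j}\vee1)}$ is unsupported. Your accounts of the $T^3$ and $\log(eK)$ factors also do not correspond to actual steps; inserting a logarithm ``to match'' is not a derivation.

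\textbf{The missing idea} is a total-variance argument that saves the factor $T$ only in aggregate, not pointwise.
\begin{itemize}
\item After Abel summation, transfer the Bernstein radius to $\hat F^{(k)}_t$-variances via Lemma~\ref{lem: ucb-variance-transfer}, using $N^{(k)}_{t,j-m}\ge N^{(k)}_{t,j}$.
\item Apply the peeling Lemma~\ref{lem: peeling-bound}. This reduces the propagation term to $\sqrt{MT\rho}\,\bigl(\sum_{t,j}v^{(k)}_{tj}\cV^{(k)}_{t,j}\bigr)^{1/2}$ plus $O\bigl((h_\infty+b_\infty)MT^2\log(KMT/\delta)/\cN^{(k)}_{\mathsf{cov}}\bigr)$. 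Here $\rho=\log(KMT/\delta)\log(e/a^{(k)}_\star)/\cN^{(k)}_{\mathsf{cov}}$, and by Lemma~\ref{lem: V-identity}, $\cV^{(k)}_{t,j}$ is the variance of $[(\hat\bD^{(k)}_{t+1})_+]_{j-\tilde d}$ under $\hat F^{(k)}_t$.
\item The peeling step is where $\log(eK)$ enters, because $a^{(k)}_\star\gtrsim1/K$.
\item The naive bound $\cV\lesssim(h_\infty+b_\infty)^2T^2$ just reproduces your crude estimate. Instead, use $\cV^{(k)}_{t,j}\le[\bA^{(k)}_t\bgamma^{(k)}_{t+1}]_j-[\bgamma^{(k)}_t]_j+2h_\infty[(\hat\bD^{(k)}_t)_+]_j$ with $\bgamma^{(k)}_t=[(\hat\bD^{(k)}_t)_+]^2$.
\item The $\bA^{(k)}_t-\bA_t$ correction is non-positive because of the lower bias.
\item The $\bgamma$ terms telescope to something non-positive along the recursion of Proposition~\ref{prop: v-propagation}.
\item The remaining $\sum_t\langle\bv^{(k)}_t,(\hat\bD^{(k)}_t)_+\rangle$ is $O((h_\infty+b_\infty)MT)$ by Proposition~\ref{prop: over-shooting-price} applied in the estimated model. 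An additional correction $\lVert\bw^{(k)}_t-\hat\bw^{(k)}_t\rVert_1\lVert\hat\bD^{(k)}_t\rVert_\infty$ is needed, because the weights are true-model rather than estimated-model visitation probabilities of $\pi^{(k)}$.
\end{itemize}
That correction puts $T^2(\sqrt\rho+\rho)$ inside the square root. The inequality $T^2\rho^{3/4}\lesssim T\sqrt\rho+T^3\rho$ is then the actual source of the $T^3$ term. It does not come from a second-order $\Delta\bD$ term or a missing optimal-coverage weighting.
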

We leave the proof of Proposition~\ref{prop: online-k-th-episode-bound} to Appendix~\ref{appendix: online-censored-proofs}. In particular, this proposition reduces the regret control to the growth of online effective sample size in $k$, which is determined by the self-coverage property of the generated sequence of policies. In the following proposition, we show that this effective number grows at a nearly linear rate.

\begin{proposition}[Self-generated coverage]\label{prop: online-self-coverage}
There is an absolute constant $c_0>0$ such that, with probability at least $1-\delta$, the dataset generated by Algorithm~\ref{alg: censored-online} satisfies
\begin{align*}
    \cN^{(k)}_{\mathsf{cov}} \geq \frac{c_0\,{(k-1)}}{\log(KMT/\delta)}, \qquad \forall k\in [K]. 
\end{align*}
\end{proposition}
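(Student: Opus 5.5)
The plan is to show that, with probability at least $1-\delta$, every coordinate ratio satisfies $q^{(k)}_{t,j}/(N^{(k)}_{t,j}\vee 1)\lesssim \log(KMT/\delta)/(k-1)$, where $q^{(k)}_{t,j}=\PP(y_t^{(k)}>j\mid x_1=0)$. Averaging this over the $MT$ pairs $(t,j)$ then gives the claimed lower bound on $\cN^{(k)}_{\mathsf{cov}}$. The case $k=1$ is trivial, so assume $k\ge 2$. The two ingredients are a monotone coupling, which holds on the event of Lemma~\ref{lem: online-policy-monotonicity}, and a multiplicative martingale lower tail for the counts $N^{(k)}_{t,j}$.

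\emph{Step 1 (monotone coupling).} By Lemma~\ref{lem: online-policy-monotonicity}, on $\cE^{(K)}$ we have $s_t^{(\ell)}\ge s_t^{(k)}$ for all $\ell<k$ and all $t$. I will use a pathwise comparison for lost-sales dynamics under base-stock policies, proved by induction on $t$ with the map $x\mapsto(\max\{x,s\}-d)_+$, which is monotone in both $x$ and $s$. Fix a common demand path and two base-stock sequences with $s_t\ge s'_t$ for all $t$, started from initial levels $x_1\ge x_1'$. Then $x_t\ge x'_t$ and $y_t\ge y'_t$ for all $t$. Any realized initial inventory of episode $\ell$ is at least $0$. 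Hence, conditional on $\cF_\ell$, the probability that $y_t^{(\ell)}>j$ is at least $q^{(\ell)}_{t,j}$, and this is in turn at least $q^{(k)}_{t,j}$.

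\emph{Step 2 (count concentration).} Write $N^{(k)}_{t,j}=\sum_{\ell<k}Z_\ell$ with $Z_\ell=\bm 1\{y^{(\ell)}_t>j\}$. Each $Z_\ell$ is $\cF_{\ell+1}$-measurable and has conditional mean $p_\ell\ge q^{(\ell)}_{t,j}$ given $\cF_\ell$. A Freedman-type inequality for martingale differences bounded by $1$, applied as in Proposition~\ref{prop: uniform-concentration} and union-bounded over $t,j,k$, gives the following with probability at least $1-\delta$:
\[
N^{(k)}_{t,j}\;\ge\;\tfrac12\sum_{\ell<k}p_\ell\;-\;c_1\log(KMT/\delta)\qquad\forall t,j,k.
\]
Here I use the variance bound $\sum_{\ell<k}p_\ell(1-p_\ell)\le\sum_{\ell<k}p_\ell$ and absorb the square-root term by AM--GM. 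By Step 1, $\sum_{\ell<k}p_\ell\ge (k-1)q^{(k)}_{t,j}$. This step also handles the deterministic part of the coverage: for $j<s_t^{(k)}$ we have $Z_\ell=1$ always, since $y_t^{(\ell)}\ge s_t^{(\ell)}\ge s_t^{(k)}$.

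\emph{Step 3 (case split).} Set $L=\log(KMT/\delta)$. If $(k-1)q^{(k)}_{t,j}\ge 4c_1L$, then $N^{(k)}_{t,j}\ge (k-1)q^{(k)}_{t,j}/4$, so the ratio is at most $4/(k-1)$. Otherwise the ratio is at most $q^{(k)}_{t,j}<4c_1L/(k-1)$. In both cases the ratio is at most $c\,L/(k-1)$. Averaging over the $MT$ pairs gives $(\cN^{(k)}_{\mathsf{cov}})^{-1}\le cL/(k-1)$, which is the claim. A final union bound with $\cE^{(K)}$ adjusts $\delta$ by a constant factor.

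The main obstacle is Step 2. The target $q^{(k)}_{t,j}$ is itself random and $\cF_k$-measurable, so one cannot simply apply a concentration bound to a fixed mean. The monotone coupling removes this difficulty: $N^{(k)}_{t,j}$ is compared with the predictable sum $\sum_{\ell<k}p_\ell$, which dominates $(k-1)q^{(k)}_{t,j}$ pathwise on $\cE^{(K)}$. The only care needed is to obtain the multiplicative (Freedman) lower tail uniformly in $k$, rather than an additive Hoeffding bound, which would lose the $1/(k-1)$ scaling for small $q$.
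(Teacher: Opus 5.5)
Your proposal is correct and takes essentially the same approach as the paper. Both apply Freedman's inequality to $N^{(k)}_{t,j}$ against the predictable sum of conditional exceedance probabilities, use the monotone base-stock ordering on $\cE^{(K)}$ to get $\sum_{\ell<k}p_\ell\ge (k-1)q^{(k)}_{t,j}$, and finish with the same two-case split and averaging; your pathwise coupling also supplies an explicit proof of the step $\widetilde q^{(\ell)}_{t,j}\ge q^{(\ell)}_{t,j}\ge q^{(k)}_{t,j}$, which the paper asserts without proof.
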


Besides providing a tool for controlling per-episode regrets when combined with Proposition~\ref{prop: online-k-th-episode-bound}, Proposition~\ref{prop: online-self-coverage} also reveals how effectively Algorithm~\ref{alg: censored-online} collects data. By the policy ordering fact~\eqref{eq: online-base-stock-sequence}, the lower bound of $\cN_\mathsf{cov}^{(k)}$ can be automatically transferred to that of $\cN^\star$. In particular, after $K$ episodes, it is guaranteed that $\cN^\star\gtrsim K$ up to logarithmic factors, which is as good as having uncensored observations when applied to offline policy learning algorithms.

Combining Proposition~\ref{prop: online-k-th-episode-bound}, Proposition~\ref{prop: online-self-coverage}, and the fact that $\PP(\cap_{\ell=1}^K\cE^{(\ell)}) \geq 1-\delta$, we have the following regret guarantee of Algorithm~\ref{alg: censored-online}.

\begin{theorem}\label{thm: online-regret}
With probability at least $1-\delta$, Algorithm~\ref{alg: censored-online} satisfies
\begin{align*}
    \mathrm{Regret}(K) \leq C_0 (h_\infty + b_\infty) M \bigg[T{\sqrt{K{\log(eK)}}}\log(KMT/\delta) + T^3\log^2(KMT/\delta){{\log^2 (eK)}}\bigg],
\end{align*}
for some absolute constant $C_0$.
\end{theorem}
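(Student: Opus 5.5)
The proof of Theorem~\ref{thm: online-regret} combines Proposition~\ref{prop: online-k-th-episode-bound} with Proposition~\ref{prop: online-self-coverage} on a single high-probability event and then sums the per-episode bounds over $k$. First, I intersect the event $\cap_{k=1}^K \cE^{(k)}$, which has probability at least $1-\delta$ by Proposition~\ref{prop: biased-F} and the union bound built into the $\log(MTK^2/\delta)$ radius, with the event of Proposition~\ref{prop: online-self-coverage}. Each of these is rescaled to confidence $\delta/2$, which changes only absolute constants inside the logarithms. On the intersection, Proposition~\ref{prop: online-k-th-episode-bound} applies simultaneously to every $k\in[K]$.

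Second, I split the episodes into an early phase and the remainder. For the first episode, and more generally whenever $\cN^{(k)}_{\mathsf{cov}}$ is tiny, I use the trivial bound $\max_x \Delta(x;\pi^{(k)}) \le (h_\infty+b_\infty)MT$. This holds because each period cost is at most $(h_t+b_t)M$. It handles $k=1$, where $(k-1)=0$ makes the coverage bound vacuous. For $k\ge 2$, I substitute $\cN^{(k)}_{\mathsf{cov}} \ge c_0(k-1)/\log(KMT/\delta)$ into Proposition~\ref{prop: online-k-th-episode-bound}, which gives
\begin{align*}
\max_x \Delta(x;\pi^{(k)}) \lesssim (h_\infty+b_\infty)M\Big[T\log(KMT/\delta)\sqrt{\tfrac{\log(eK)}{k-1}} + \tfrac{T^3\log^2(KMT/\delta)\log(eK)}{k-1}\Big].
\end{align*}

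Third, I sum over $k=2,\dots,K$. The identity $\sum_{k=2}^K (k-1)^{-1/2}\le 2\sqrt{K}$ yields the leading term $T\sqrt{K\log(eK)}\,\log(KMT/\delta)$. The bound $\sum_{k=2}^K (k-1)^{-1}\le \log(eK)$ yields the burn-in term $T^3\log^2(KMT/\delta)\log^2(eK)$. The first episode contributes $(h_\infty+b_\infty)MT$, which the $T^3$ term absorbs. Together these give exactly the stated bound with an absolute constant $C_0$.

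The argument involves no real obstacle, since all the difficulty sits in the two propositions being assumed. The only point needing care is the probability bookkeeping. The coverage event of Proposition~\ref{prop: online-self-coverage} concerns the random visitation probabilities $q^{(k)}_{t,j}$ versus the realized counts $N^{(k)}_{t,j}$, which is a martingale statement. I must make sure it holds uniformly in $k$ on the same event as $\cap_k\cE^{(k)}$, rather than episode by episode. The statement of Proposition~\ref{prop: online-self-coverage} already gives this uniformity, so a single union bound between the two events suffices.
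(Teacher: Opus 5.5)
Your proposal is correct and follows the paper's proof essentially verbatim. The paper also plugs the self-coverage bound $\cN^{(k)}_{\mathsf{cov}} \gtrsim (k-1)/\log(KMT/\delta)$ into Proposition~\ref{prop: online-k-th-episode-bound} for $k\ge 2$, uses the trivial $(h_\infty+b_\infty)MT$ bound for $k=1$, and sums via $\sum_{k\ge 2}(k-1)^{-1/2}\le 2\sqrt{K}$ and $\sum_{k\ge 2}(k-1)^{-1}\le 1+\log K$; your splitting of $\delta$ between $\cE^{(K)}$ and the coverage event matches the paper, whose proof of Lemma~\ref{lem: N-bound-online} already places the coverage event inside $\cE^{(K)}$.
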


\subsection{Stationary Demand: Aggregated DP-LCB}\label{subsec: online-stationary}

We next consider the stationary-demand setting of Section~\ref{subsec: biased-SAA-offline-identical}, where the demand variables remain independent across periods but share a common marginal distribution $P_1 = \cdots = P_T$. As in the offline stationary analysis, observations can be pooled across periods to estimate the shared demand distribution. We therefore replace the period-wise LCB model by an aggregated lower-biased CDF estimator and solve the DP under the aggregated model $\hat{\bF}_1^{(k)} = \cdots = \hat{\bF}_T^{(k)}$.
The same monotone maximum-update rule is applied to the aggregated estimator, so the policy sequence remains optimistic and non-increasing as in~\eqref{eq: online-base-stock-sequence}. The proof shows that this aggregation yields a factor-$T$ gain in the effective sample size, leading to the sharper regret bound below.
\begin{theorem}\label{thm: online-regret-stationary}
Under stationary demand, the aggregated version of Algorithm~\ref{alg: censored-online} satisfies, with probability at least $1-\delta$,
\begin{align*}
    \mathrm{Regret}(K) \leq c_0 (h_\infty + b_\infty) M \bigg[{\sqrt{KT{\log (KT)}}}\log(KMT/\delta) + T^2\log^2(KMT/\delta){{\log^2(KT)}}\bigg],
\end{align*}
for some absolute constant $c_0$.
\end{theorem}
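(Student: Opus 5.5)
The plan is to mirror the proof of Theorem~\ref{thm: online-regret}, replacing period-wise counts by aggregated counts $N_{\mathsf{agg},j}^{(k)} := \sum_{t=1}^T N_{t,j}^{(k)}$ throughout.

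\textbf{Step 1: optimism and monotonicity.} Define the aggregated filtered SAA $\tilde F_{\mathsf{agg}}^{(k)}$ from all $(k-1)T$ observations. These observations are adaptively collected in the sense of Section~\ref{subsec: biased-SAA-properties}: order the pairs $(\ell,t)$ lexicographically; each $y_t^{(\ell)}$ is predictable, and conditionally on the past the demand has the common law $P$. Proposition~\ref{prop: biased-F} therefore applies, with a union bound over $k$ giving the $\log(KMT/\delta)$ factor. It yields an event $\cE_{\mathsf{agg}}$ of probability at least $1-\delta$ on which
\[
F(j)-c_0\cC_{\mathsf{agg},j}^{(\ell)} \le \hat F^{(\ell)}(j) \le F(j)
\]
for all $\ell$, with $\cC_{\mathsf{agg},j}^{(\ell)}$ built from $N_{\mathsf{agg},j}^{(\ell)}$. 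The maximum update preserves this, and the argument of Lemma~\ref{lem: online-policy-monotonicity} then gives $s^\star_t\le s_t^{(k)}\le s_t^{(k-1)}\le\cdots$ for every $t$.

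\textbf{Step 2: per-episode bound.} On this event, apply Theorem~\ref{thm: cost-decomposition} exactly as before Proposition~\ref{prop: online-k-th-episode-bound}. This gives the per-episode gap in terms of the aggregated coverage
\[
\cN_{\mathsf{agg,cov}}^{(k)}:=\Big(\frac{1}{MT}\sum_{t,j}\frac{\PP(y_t^{(k)}>j\mid x_1=0)}{N_{\mathsf{agg},j}^{(k)}\vee 1}\Big)^{-1}.
\]
The proof of Proposition~\ref{prop: online-k-th-episode-bound} is only a bookkeeping over coordinates, and I expect it to go through verbatim, with the lower-order term adapted as in Theorem~\ref{thm: censored-sample-complexity-identical}. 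The result is a bound of the form
\[
c_0(h_\infty+b_\infty)M\Big[T\sqrt{\log/\cN_{\mathsf{agg,cov}}^{(k)}}+T^2\log/\cN_{\mathsf{agg,cov}}^{(k)}\Big].
\]

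\textbf{Step 3: self-coverage (main obstacle).} The step I expect to be hardest is the aggregated analogue of Proposition~\ref{prop: online-self-coverage}: showing $\cN_{\mathsf{agg,cov}}^{(k)}\gtrsim (k-1)T/\log(KMT/\delta)$. The key fact is monotonicity. Since $s_t^{(\ell)}\ge s_t^{(k)}$ for $\ell<k$, the post-order level $y_t^{(\ell)}$ under $\pi^{(\ell)}$ stochastically dominates $y_t^{(k)}$; this follows by coupling the demands and using that base-stock dynamics are monotone in the levels. Hence
\[
\EE\big[\mathbf 1\{y_t^{(\ell)}>j\}\mid\cF_\ell\big]\ge \PP(y_t^{(k)}>j\mid x_1=0)=:q_{t,j}^{(k)}.
\]
The initial state needs a small check: episodes may start elsewhere, but Proposition~\ref{prop-convexity} and $x_1=0$ being the worst case should handle it. Setting $\bar q_j^{(k)}:=\sum_t q_{t,j}^{(k)}$, a Freedman bound on the martingale $\sum_{\ell<k}\sum_t(\mathbf 1\{y^{(\ell)}_t>j\}-\EE[\cdot])$ gives
\[
N_{\mathsf{agg},j}^{(k)}\gtrsim (k-1)\,\bar q_j^{(k)}-\log(KMT/\delta),
\]
uniformly over $j$ and $k$. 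Then
\[
\sum_{t,j}\frac{q_{t,j}^{(k)}}{N_{\mathsf{agg},j}^{(k)}\vee1}=\sum_j\frac{\bar q_j^{(k)}}{N_{\mathsf{agg},j}^{(k)}\vee1}.
\]
On coordinates where $(k-1)\bar q^{(k)}_j\ge 2\log$, each summand is $O(1/(k-1))$. On the remaining coordinates, I would bound $\bar q_j/1\le 2\log/(k-1)$. Either way the sum is $O(M\log/(k-1))$, so $\cN^{(k)}_{\mathsf{agg,cov}}\gtrsim T(k-1)/\log$. This is the factor-$T$ gain: $\bar q_j$ sums $T$ probabilities, but the normalization in the definition only divides by $MT$.

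\textbf{Step 4: summation.} Plugging Step 3 into Step 2, the episode-$k$ gap is at most
\[
(h_\infty+b_\infty)M\Big[\sqrt{T\log^2/(k-1)}+T\log^2/(k-1)\Big].
\]
The first episodes must be handled separately, using the trivial per-episode bound $(h_\infty+b_\infty)MT$; the burn-in term absorbs roughly $T$ such episodes. Summing over $k$ gives $\sqrt{KT}$ from the first term and $T\log K$ from the second. With the burn-in and the $\log(KT)$ factors arising from Freedman and the peeling over $k$, this matches the stated bound, including the $T^2\log^2$ term.
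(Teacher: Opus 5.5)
Your overall structure (aggregated per-episode bound, aggregated self-coverage, summation over $k$) matches the paper's. However, Step~3 has a genuine gap, and it sits exactly where the factor-$T$ gain is produced.

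\textbf{The Freedman step.} The bound $N^{(k)}_{\mathsf{agg},j}\gtrsim (k-1)\bar q^{(k)}_j-\log(KMT/\delta)$ does not follow from Freedman. The $T$ indicators $\mathbf{1}\{y^{(\ell)}_t>j\}$ of one episode are driven by a single demand path and can be strongly dependent given $\cF_\ell$. So the martingale increment is $X_{\ell,j}=\sum_{t}\mathbf{1}\{y^{(\ell)}_t>j\}\in[0,T]$, with conditional variance up to $T\,\EE[X_{\ell,j}\mid\cF_\ell]$. A period-level filtration does not help, because $y^{(\ell)}_t$ is measurable with respect to the within-episode past and the compensator becomes trivial. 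Writing $\widetilde{\cQ}^{(k)}_j:=\sum_{\ell<k}\EE[X_{\ell,j}\mid\cF_\ell]$, Freedman only gives $\lvert N^{(k)}_{\mathsf{agg},j}-\widetilde{\cQ}^{(k)}_j\rvert\lesssim\sqrt{T\widetilde{\cQ}^{(k)}_j\log(KMT/\delta)}+T\log(KMT/\delta)$.

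The $T\log$ term cannot be dropped in general. If the indicators within each episode coincide, $N^{(k)}_{\mathsf{agg},j}$ is $T$ times a count with mean about $(k-1)\bar q^{(k)}_j/T$. That count is zero with constant probability when $(k-1)\bar q^{(k)}_j\approx T$. Your split must therefore be made at $(k-1)\bar q^{(k)}_j\asymp T\log(KMT/\delta)$. On the low-coverage coordinates, your bound $\bar q^{(k)}_j/(N^{(k)}_{\mathsf{agg},j}\vee 1)\le\bar q^{(k)}_j$ is then only $O(T\log(KMT/\delta)/(k-1))$. Summing over $j$ and dividing by $MT$ gives merely $\cN^{(k)}_{\mathsf{agg,cov}}\gtrsim (k-1)/\log(KMT/\delta)$. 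This loses the factor $T$ and returns only the $T\sqrt{K}$ rate of Theorem~\ref{thm: online-regret}.

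\textbf{The missing ingredient} is a deterministic floor on the counts. Episode~$1$ is run under $\hat{\bF}^{(1)}_t\equiv\bm{0}$ with $s^{(1)}_t=M$, so $y^{(1)}_t=M>j$ for all $t\in[T]$ and $j\in[M-1]_+$. Hence $N^{(k)}_{\mathsf{agg},j}\ge T$ for every $k\ge 2$.
\begin{itemize}
\item On the low-coverage coordinates this gives $\bar q^{(k)}_j/N^{(k)}_{\mathsf{agg},j}\le \bar q^{(k)}_j/T\lesssim \log(KMT/\delta)/(k-1)$.
\item On the others, the corrected Freedman bound together with your monotonicity argument gives $N^{(k)}_{\mathsf{agg},j}\ge (k-1)\bar q^{(k)}_j/2$.
\end{itemize}
Combined, these give $\cN^{(k)}_{\mathsf{agg,cov}}\gtrsim (k-1)T/\log(KMT/\delta)$, which is what the paper proves.

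\textbf{Two smaller points.}
\begin{itemize}
\item In Step~2 the online argument yields a $T^3$ lower-order term, not $T^2$; the offline $T^2$ does not transfer verbatim. In $\cK^{(k)}_{t,3}$ the true-model visitation of $\pi^{(k)}$ is paired with the model derivative $\hat{\bD}^{(k)}_t$. So Proposition~\ref{prop: over-shooting-price} can only be applied after paying $\lVert \bw^{(k)}_t-\hat{\bw}^{(k)}_t\rVert_1\lVert\hat{\bD}^{(k)}_t\rVert_\infty$. After the factor-$T$ gain this term becomes $T^2\log^2(KMT/\delta)\log(KT)/(k-1)$. Summing it over $k$, not a $T$-episode burn-in, is what produces the $T^2\log^2(KMT/\delta)\log^2(KT)$ term, so your final bound survives.
\item The start-state issue is settled by pathwise monotonicity of $y_t$ in $x_1$, not by Proposition~\ref{prop-convexity}.
\end{itemize}
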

Thus stationarity of the demand distribution yields the same aggregation gain in the online setting as in the offline setting: the effective sample size increases by a factor of $T$, and the leading regret improves by a factor of $\sqrt{T}$, from $\tilde{\cO}((h_\infty + b_\infty)MT\sqrt{K})$ to $\tilde{\cO}((h_\infty + b_\infty)M\sqrt{KT})$, as an analogue of the improvement in general RL setting.

\subsection{Regret Lower Bounds}\label{subsec: online-lower-bound}

We now show that the regret bounds in Section~\ref{subsec: online-dp-lcb},~\ref{subsec: online-stationary} are near-optimal up to logarithmic factors and higher order terms, as presented in the following lower bound result:
\begin{theorem}\label{thm: online-lower-bound}
For any $c_\infty > 0$, $M \geq 2$, $T \geq 3$, and $K \gtrsim T^2$, there exist cost parameters with $h_\infty = b_\infty = c_\infty$ and a collection $\cP$ of non-stationary demand distribution sequences such that, for any online algorithm that adaptively outputs the policy sequence $\{\pi^{(k)}\}_{k=1}^K$,
\begin{align*}
    \max_{\{P_t\}_{t=1}^T \in \cP}\sum_{k=1}^K \EE_{\{d_{t}^\ell\}_{t,\ell=1}^{T, k-1} \sim (\prod_{t=1}^T P_t)^{\otimes (k-1)}}[\Delta(0;\pi^{(k)})] \geq c_0 c_\infty MT\sqrt{K}
\end{align*}
for some absolute constant $c_0$, where for every $t$ the demands $\{d_t^\ell\}_{\ell=1}^{k-1}$ are sampled i.i.d. from $P_t$, independently across periods, and the algorithm observes only censored values.

Similarly, in the stationary demand setting, there exists a collection $\cP_{\mathsf{stat}}$ of stationary demand distributions with $h_\infty = b_\infty = c_\infty$ such that, for any online algorithm that adaptively outputs the policy sequence $\{\pi^{(k)}\}_{k=1}^K$,
\begin{align*}
    \max_{P \in \cP_{\mathsf{stat}}}  \sum_{k=1}^K\EE_{\{d_{t}^\ell\}_{t,\ell=1}^{T, k-1} \sim P^{\otimes T(k-1)}}[\Delta(0;\pi^{(k)})] \geq c_0 c_\infty M\sqrt{KT},
\end{align*}
for some absolute constant $c_0$, where for every $t$ the demands $\{d_t^\ell\}_{t,\ell=1}^{T,k-1}$ are sampled i.i.d. from $P$, and the algorithm observes only censored values.  
\end{theorem}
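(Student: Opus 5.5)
We will prove both parts of Theorem~\ref{thm: online-lower-bound} by reducing to the offline lower-bound constructions of Theorems~\ref{thm: lower-bound-independent} and~\ref{thm: lower-bound-independent-identical} with coverage fraction $p=1$. That is, we prove the lower bound even for a learner who observes \emph{uncensored} demands in every past episode. Since censored observations are a measurable function of uncensored ones, any online algorithm under censoring can be simulated by one with full observations. It therefore suffices to lower bound the uncensored online regret. After the reduction, the argument is a standard ``online regret $\ge$ sum of offline minimax risks'' step. At episode $k$ the policy $\pi^{(k)}$ is a function of $k-1$ uncensored trajectories, possibly plus internal randomness. Hence $\pi^{(k)}$ is an offline algorithm with $N=k-1$ samples, and for a single fixed instance its expected gap is at least the Bayes risk over the hard family.

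Concretely, I would reuse the hard family $\cP$ from the proof of Theorem~\ref{thm: lower-bound-independent}. I expect it to be a product over periods (or blocks of periods) and levels of two-point perturbations of size $\epsilon$ in $F_t(j)$, indexed by a sign vector $\sigma$, with $h_t=b_t=c_\infty$. Under such a construction, for each coordinate a wrong base-stock decision costs $\gtrsim c_\infty\epsilon$, summing to a gap $\gtrsim c_\infty MT\epsilon$ when a constant fraction of the signs is misidentified. I would place a uniform prior on $\sigma$. An Assouad-type argument then gives, for every $k$,
\begin{align*}
\EE_\sigma \EE[\Delta(0;\pi^{(k)})] \gtrsim c_\infty M T \epsilon\big(1 - \sqrt{(k-1)\epsilon^2}\big).
\end{align*}
This holds because the KL divergence between neighbouring instances over $k-1$ uncensored trajectories is $\lesssim (k-1)\epsilon^2$. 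The key design choice is to fix a single $\epsilon \asymp 1/\sqrt{K}$ for the whole horizon, rather than optimizing per $k$. With this choice every episode $k\le K$ retains a constant misidentification probability, so each term is $\gtrsim c_\infty MT/\sqrt K$. Summing over $K$ episodes gives $c_\infty MT\sqrt K$, and the Bayes average lower bounds the maximum over $\cP$.

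The condition $K\gtrsim T^2$ enters exactly where the offline condition $Np\gtrsim T^2$ did. In the construction, the perturbation $\epsilon$ must be small enough that the base-stock structure stays rigid: second-order propagation effects of order $T^2\epsilon^2$ (cf.\ the $T^3\epsilon^2$ term in Corollary~\ref{coro: uniform-convergence}) must not swamp the first-order gap, and the perturbed CDFs must remain valid. With $\epsilon\asymp 1/\sqrt K$ this is guaranteed by $K\gtrsim T^2$. The stationary part is identical, using the family from Theorem~\ref{thm: lower-bound-independent-identical}. There a single distribution is shared across periods, so the $k-1$ episodes supply $(k-1)T$ samples, the KL divergence is $\lesssim (k-1)T\epsilon^2$, and the per-sign gap is summed over periods. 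Taking $\epsilon\asymp 1/\sqrt{KT}$ yields per-episode gap $\gtrsim c_\infty MT\epsilon = c_\infty M\sqrt{T/K}$, hence total regret $c_\infty M\sqrt{KT}$.

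The main obstacle is verifying that the offline hard construction gives a gap lower bound that is \emph{uniform} in the sample size once $\epsilon$ is fixed. The offline theorems are stated at the tuned $\epsilon\asymp 1/\sqrt{N}$, so I would need to re-enter their proof and extract the intermediate statement: for any $\epsilon$ in the admissible range and any estimator using $n$ samples, the Bayes risk is $\gtrsim c_\infty MT\epsilon$ whenever $n\epsilon^2\le c$. A secondary point is that $\Delta(0;\cdot)$ is evaluated at $x_1=0$, while regret uses $\max_x$. The lower bound is only needed at $x=0$, which is exactly what the theorem states, so no monotonicity argument (Proposition~\ref{prop-convexity}) is needed.
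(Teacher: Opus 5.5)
Your proof is correct. Its first step is the same as the paper's: censored data are a function of the uncensored data plus the learner's internal randomness, so it suffices to lower-bound the uncensored problem. The difference is in how you handle the sum over episodes.

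The paper uses an online-to-batch conversion. From $K$ uncensored trajectories, an offline learner simulates the online algorithm and outputs the uniform mixture of $\pi^{(1)},\dots,\pi^{(K)}$. On every instance the regret therefore equals $K$ times the risk of this randomized offline learner, hence is at least $K$ times the offline minimax risk with $N=K$. Theorems~\ref{thm: lower-bound-independent} and~\ref{thm: lower-bound-independent-identical} with $p=1$ are then invoked as black boxes, and their condition $Np\gtrsim T^2$ becomes $K\gtrsim T^2$.

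You instead freeze the family at $\epsilon\asymp 1/\sqrt K$, put the uniform prior on it, and bound each episode's Bayes risk separately. This works, but only after re-opening the offline proof, as you anticipated. For the paper's construction the needed uniform-in-sample-size statement does hold, for three reasons:
\begin{itemize}
\item the pairwise separation $c_\infty M\epsilon/2$ per Hamming coordinate does not depend on the sample size;
\item the per-coordinate KL over $n$ trajectories is at most $8n\epsilon^2$, which is monotone in $n$;
\item the proof of Assouad's lemma actually lower-bounds the uniform average over the cube, not just the maximum.
\end{itemize}
Your route yields per-episode lower bounds; the paper's avoids re-deriving anything.

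Two corrections. First, the claim that for a single fixed instance the expected gap is at least the Bayes risk is false as written. The correct order is that the prior average of the whole regret lower-bounds the maximum over $\mathcal P$, and then you exchange that prior average with the sum over $k$.

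Second, your guess of the hard family and of where $K\gtrsim T^2$ enters does not match the paper. The non-stationary family consists of three-period blocks with $b_1=h_2=h_3=c_\infty$ and $h_1=b_2=b_3=0$, demand supported on $\{0,M\}$, and the sign affecting only the first period's probability $(1\pm\epsilon)/2$ of $d=M$. The requirement $\epsilon<1/(2T)$ comes from the separation step, not from second-order propagation or CDF validity. There the evaluated policy may adapt to demands inside the evaluation trajectory, which produces a cross term of size $c_\infty M\epsilon T'$ times a single-trajectory total-variation distance. That term must be absorbed via $3\epsilon T'\le 1/2$. With your $\epsilon\asymp 1/\sqrt K$ this is again exactly $K\gtrsim T^2$, so your conclusion stands. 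In the stationary family the corresponding constraint $\epsilon\le(128T)^{-1/2}$ holds automatically for $\epsilon\asymp(KT)^{-1/2}$.
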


Theorem~\ref{thm: online-lower-bound} follows directly from the uncensored specializations of Theorems~\ref{thm: lower-bound-independent} and~\ref{thm: lower-bound-independent-identical} through the standard online-to-batch conversion. More precisely, with uncensored observations $\{d_{t}^\ell\}_{t,\ell=1}^{T,K}$, an offline learner can first simulate the online environment over $K$ episodes to obtain the online policy $\{\pi^{(k)}\}_{k=1}^K$, and then execute a policy through uniform randomization over $\{\pi^{(k)}\}_{k=1}^K$. Consequently, we have
    $$\max_{\{P_t\}_{t=1}^T \in \cP } \sum_{k=1}^K  \EE_{\{d_{t}^\ell\}_{t,\ell=1}^{T,k-1} \sim (\prod_{t=1}^T P_t)^{\otimes (k-1)}}\big[\Delta(0;\pi^{(k)})\big] \gtrsim K \inf_{\cA^{\mathsf{off}}} \max_{\{P_t\}_{t=1}^T \in \cP } \EE_{\{d_{t}^\ell\}_{t,\ell=1}^{T,K} \sim (\prod_{t=1}^T P_t)^{\otimes K}}\big[\Delta\big(0;\cA^{\mathsf{off}}\big)\big] $$
where the infimum is taken over all offline algorithms $\cA^{\mathsf{off}}$ that map the uncensored demand observations $\{d_{t}^\ell\}_{t,\ell=1}^{T,K}$ to a policy. The right-hand side can then be lower bounded through Theorem~\ref{thm: lower-bound-independent}. The same reduction to Theorem~\ref{thm: lower-bound-independent-identical} holds for the stationary demand setting.

\section{Discussion of General Bounded Demand Distributions}
\label{sec: general-distribution}

In this section, we extend our results from discrete demand distributions
to general demand distributions on a bounded interval $[0,B]$ for some $B\geq 1$ through a
discretization argument.
The discrete demand assumption maintained throughout this work is
therefore mainly a notational simplification rather than a real
restriction.
Beyond its role in the analysis, the discretization procedure
introduced here also matches how SAA-based methods are efficiently
implemented in practice: as discussed in \citet{cheung2019sampling}, such
a discretize-then-plan step is usually necessary for model-based methods
under general demand distributions, since solving the continuous DP
directly is computationally intractable. 
In the remainder of this section, we first describe the discretization
method and its implications for the uncensored setting, then present the extension of offline and online policy learning results under
censored feedback.

Throughout the discussion, we consider the $T$-period inventory control
problem with cost factors $\{{h}_t, {b}_t\}_{t=1}^T$, following
the same setup as in Section~\ref{sec: preliminaries}.  The only
difference is that each period's demand $D_t$ now has an arbitrary distribution
$P_t$ supported on $[0,B]$, and the policy is allowed to order up
to any level in $[0,B]$ rather than only on the integer grid.  We
write $C^\pi_1(x), C^\star_1(x), \Delta(x;\pi)$ for the expected cost of a policy $\pi$, the optimal cost, and the sub-optimality of $\pi$ with initial inventory level $x \in [0,B]$, respectively.

\subsection{Discretized Environments and Rounded Policies}
Choose $M\in\mathbb N$ and set $\eta:= B/M$, we define the $\eta$-rounded operation $Z^\eta(\cdot): [0, B] \to [M]_+, Z^\eta(d):= \lceil d/\eta \rceil$. For each $t \in [T]$ and $d_t \sim P_t$, we have $Z^\eta(d_t) \sim P_t^\eta$, with the CDF function $F_t^\eta(j):= \PP (Z^\eta(d_t) \leq j ) = F_t(\eta j)$ for $ j \in [M]_+$. 
With $h_{t}^\eta:= \eta h_t, b_t^\eta:= \eta b_t,$ we define the $\eta$-discretized problem of the original problem as the one with cost factors $\{(h_t^\eta,b_t^\eta)\}_{t=1}^T$ and demand distributions with CDF sequence $\{\bF_t^\eta\}_{t=1}^T$. Similarly, we write $C^\pi_{\eta,1}(z), C^\star_{\eta,1}(z), \Delta^\eta(z;\pi^\eta)$ for the policy cost, optimal cost, sub-optimality gap of $\pi^\eta$ of this discretized problem with initial inventory level $z \in [M]_+$, respectively.

With the possibly censored observation dataset $\cD = \{(y_t^k, \bar{d}_t^k)\}_{t=1,k=1}^{T,N}$, we say a policy $\pi$ is obtained via an \textit{$\eta$-rounded version} of a discrete demand setting algorithm $\cA$ output base-stock policy, if $\pi$ is obtained as the following: The learner first feeds the rounded dataset $\cD^\eta := \{(Z^\eta(y_t^k), Z^\eta(\bar{d}_t^k))\}_{t=1,k=1}^{T,N}$ to  $\cA$ to obtain the resulting policy $\hat{\pi}^\eta$ with base stock levels $\{s^\eta_t\}_{t=1}^T,$ and then set $\hat{\pi}$ as the base-stock policy of the original problem with base stock levels $\{\eta s^\eta_t\}_{t=1}^T.$

The following approximation result holds for the rounded environments. 

\begin{lemma}
\label{lem: general-gap-transfer}
For every base-stock policy $\pi^\eta$ with base-stock levels $\{s_t^\eta \}_{t=1}^T \subset [M]_+$ for the $\eta$-discretized problem and its induced $\eta$-grid base-stock policy $\pi$ of the original problem with base-stock levels $\{\eta s^\eta_t\}_{t\in [T]} \subset [0, B]$, it holds that
$\sup_{x\in[0,B]} \Delta(x;\pi) \leq
\max_{z\in[M]_+}
\Delta^\eta(z;\pi^\eta) + 4(h_\infty + b_\infty)  T^2 \eta$.
\end{lemma}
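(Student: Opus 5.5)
We compare both sides to the discretized problem. Concretely, we aim for two one-sided comparisons:
\begin{align*}
C_1^{\pi}(x)&\le C_{\eta,1}^{\pi^\eta}(\lceil x/\eta\rceil)+(\text{error}_1),\\
C_1^\star(x)&\ge C_{\eta,1}^\star(\lceil x/\eta\rceil)-(\text{error}_2),
\end{align*}
each error being $O((h_\infty+b_\infty)T^2\eta)$. Subtracting the two and maximizing over $z=\lceil x/\eta\rceil\in[M]_+$ then gives the statement.

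\textbf{Coupling.} We couple the original and discretized systems on the same demand realizations, using $d_t$ in the original problem and $Z^\eta(d_t)$ in the discretized one. Write $\tilde d_t:=\eta Z^\eta(d_t)$, so that $d_t\le \tilde d_t< d_t+\eta$. The key observation is that the discretized problem with costs $(\eta h_t,\eta b_t)$ is exactly the original problem run with demands $\tilde d_t$ on the grid $\eta[M]_+$. Indeed, $\eta h_t(z-Z)_+=h_t(\eta z-\tilde d)_+$, and the dynamics scale in the same way.

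\textbf{Upper bound on $C_1^\pi$.} Run the grid base-stock policy $\pi$ in two systems: with true demands $d_t$ and with rounded demands $\tilde d_t$. We track the gap $e_t:=x_t-\tilde x_t$ between the entering inventories. The order-up-to map $x\mapsto\max(x,\eta s_t^\eta)$ is $1$-Lipschitz. The transition $(y-d)_+$ is $1$-Lipschitz in $y$ and in $d$. Hence $0\le e_{t+1}\le e_t+\eta$, which gives $|e_t|\le t\eta$ if the initial rounding is included. The stage cost $c_t$ is $(h_t+b_t)$-Lipschitz in both $y$ and $d$, so per period the costs differ by at most $(h_\infty+b_\infty)(t\eta+\eta)$. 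Summing over periods gives a total discrepancy of $O((h_\infty+b_\infty)T^2\eta)$. The initial state $x$ is rounded up to $\eta\lceil x/\eta\rceil$, which costs at most one more $\eta$ of discrepancy.

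\textbf{Lower bound on $C_1^\star$.} Take the optimal (base-stock) policy $\pi^\star$ of the original problem. We would like a discretized policy whose cost in the rounded system is at most $C_1^\star(x)+O((h_\infty+b_\infty)T^2\eta)$; this would give $C_{\eta,1}^\star\le C_1^\star+\text{err}$. For this we round its levels up to the grid, $\lceil s_t^\star/\eta\rceil$, run it in the rounded system, and apply the same coupling and Lipschitz recursion. Rounding the levels adds an extra $\eta$ to the gap per period, so the gap grows by at most $2\eta$ per step, and the total error is still $O((h_\infty+b_\infty)T^2\eta)$.

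\textbf{Main obstacle.} The delicate step is bookkeeping the constant $4$ and getting the monotone direction of the gap right. Rounding up both demand and levels keeps the signs consistent, so the errors are one-sided, and $\sum_t 2t\eta\le 2T^2\eta$ in each comparison gives $4T^2\eta$ in total. An alternative is to use Proposition~\ref{prop-convexity}-type monotonicity to avoid the initial-state rounding term. We expect the constant to follow once the per-step increment is bounded by $2\eta$ in each of the two comparisons.
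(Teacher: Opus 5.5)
Your proposal is correct, but it takes a different route from the paper's in two places.

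\textbf{How the paper argues.} It couples the \emph{same} base-stock policy under $d_t$ and $\tilde d_t$ from the \emph{same} starting point. This gives the one-sided bound $0\le x_t-\tilde x_t\le t\eta$ and hence $|C_1^\pi-\tilde C_1^\pi|\le (h_\infty+b_\infty)T^2\eta$ for every base-stock policy. For $C_1^\star$, the paper runs the off-grid optimal policy $\pi^\star$ in the rounded-demand system. It then invokes the fact that integer order-up-to levels are without loss of optimality in the discretized problem even when arbitrary real levels (and states) are allowed, so that $\tilde C_1^{\pi^\star}(\eta z)\ge C^\star_{\eta,1}(z)$. Having proved the bound at grid points, it reaches off-grid $x$ through Lipschitz continuity of $\Delta(\cdot;\pi)$ in $x$.

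\textbf{How your route differs, and what it buys.} You absorb the initial rounding into the coupling by starting the rounded system at $\eta\lceil x/\eta\rceil$. You also lower bound $C_1^\star$ by rounding the levels of $\pi^\star$ up to the grid, so your comparator is an admissible integer base-stock policy of the discretized problem. This makes the argument more self-contained. You need neither the integrality fact for real-valued policies in the discretized DP nor the separate Lipschitz-in-$x$ step, and the argument works whether $C^\star_{\eta,1}$ is defined over integer or real levels.

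\textbf{The price, and two inaccurate remarks.} Your coupling is genuinely two-sided: rounding demands up lowers the rounded system's inventory, while rounding the initial state and levels up raises it. So your claims $0\le e_{t+1}$ and that the errors are one-sided are not correct as written; for instance $e_1=x-\eta\lceil x/\eta\rceil\le 0$ can persist. This is harmless. Using
\[
|\max(a,s)-\max(b,s')|\le\max(|a-b|,|s-s'|)
\]
and the $1$-Lipschitzness of $(y-d)_+$, you get $|y_t-\tilde y_t|\le\max(|e_t|,\eta)$ and $|e_{t+1}|\le\max(|e_t|,\eta)+\eta$, hence $|e_t|\le t\eta$ in both comparisons.

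\textbf{The constant.} Since $c_t$ is $\max(h_t,b_t)$-Lipschitz in $y-d$, each comparison contributes at most
\[
\max(h_\infty,b_\infty)\,\eta\sum_{t=1}^T(t+1)=\max(h_\infty,b_\infty)\,\eta\,T(T+3)/2\le 2(h_\infty+b_\infty)T^2\eta ,
\]
which yields exactly the constant $4$ for every $T\ge 1$. If you instead add the level-rounding error to $|e_t|$ additively, as your \emph{$2\eta$ per step} bookkeeping suggests, the bound is slightly too lossy at $T=1$.
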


This structural result shows that for any base-stock policy $\pi^\eta$ for the $\eta$-discretized problem, its sub-optimality gap under such a discrete demand problem can be converted to a sub-optimality gap guarantee of the induced policy of the original problem up to an $\cO((h_\infty + b_\infty)T^2 \eta)$ term.

\medskip
\noindent\textbf{Extension of uncensored results.}
By $(h_\infty^\eta + b_\infty^\eta) M \leq (h_\infty + b_\infty) B,$ combining Lemma~\ref{lem: general-gap-transfer} together with Corollary~\ref{coro: uniform-convergence} directly gives the general demand extension of Corollary~\ref{coro: uniform-convergence}. More precisely, given any CDF estimator sequence $\{\hat{\bF}_t^\eta \}_{t=1}^T$ of the $\eta$-discretized problem and $\hat{\pi}^\eta$ the corresponding optimal base-stock policy, then the $\hat{\pi}^\eta$ induced $\eta$-grid base-stock policy $\hat{\pi}$ under the original problem satisfies\begin{align}\label{eq: general-demand-uncensored}
    \sup\nolimits_{x\in [0,B]} \Delta(x;\hat\pi) \leq c_0 (h_\infty + b_\infty) B \big( T \max_{t}\lVert \hat{\bF}^\eta_t - {\bF}_t^\eta\rVert_\infty + T^3 \max_{t}\lVert \hat{\bF}^\eta_t - {\bF}_t^\eta\rVert_\infty^2 + T^2\eta \big)
\end{align}
for some absolute constant $c_0.$
Notably, as $\eta \to 0$,~\eqref{eq: general-demand-uncensored} provides a policy performance counterpart of the evaluation-to-CDF-error reduction result in \citet{ganggang2024all}, whereas their result is asymptotic and its leading-order dependence on the CDF error is of order $\cO(T^2\sum_{t=1}^T \lVert \hat{\bF}_t - {\bF}_t\rVert_\infty)$.

In particular, as the  convergence rate SAA estimators~\eqref{eq: F-hat-SAA} and~\eqref{eq: F-hat-SAA-identical} in Lemma~\ref{lem: uniform-convergence-independent} and~\ref{lem: uniform-convergence-independent-and-identical} are independent of $\eta$,~\eqref{eq: general-demand-uncensored} directly yields policy learning guarantees under uncensored demand observations that parallel Theorems~\ref{thm: sample-complexity-independent} and~\ref{thm: sample-complexity-independent-identical} up to an additive $\cO((h_\infty+b_\infty)T^2\eta)$ term.
In particular, the resulting error bounds improve monotonically as $\eta$ decreases, so setting a smaller $\eta$ always yields a sharper error guarantee; the only trade-off is the increased computational cost of solving the finer $\eta$-grid DP.

Besides the SAA estimators, the deterministic reduction from the policy performance gap to the model error in~\eqref{eq: general-demand-uncensored} can also be applied to other CDF estimators for possibly covariate-dependent demand models, including the tree-based method of \citet{ban2019dynamic} and the log-spline or kernel-density based estimators of \citet{ganggang2024all,fan2024policy}.

\subsection{Extension of Offline Policy Learning Results}
In this section, we consider offline policy learning with censored observations
$\cD:= \{(y_t^k,\bar{d}_t^k)\}_{t,k=1}^{T,N}$ and provide the sample complexity guarantee of the policy $\hat{\pi}$ as the output of $\eta$-rounded version of Algorithm~\ref{alg: censored-offline}. 

Denoting $\cN^{\eta,\star}$ as the discrete effective sample size~\eqref{eq: def-effective-sample-size} of the rounded observations $\cD^\eta$ under the $\eta$-discretized environment, combining Lemma~\ref{lem: general-gap-transfer} and Theorem~\ref{thm: censored-sample-complexity} together yields 
\begin{equation}\label{eq: general-offline-via-discrete}
    \max_{x\in [0,B]} {\Delta}(x;\hat{\pi})\lesssim (h_\infty + b_\infty)B \bigg[T\sqrt{\frac{\log(\frac{BTN}{\eta \delta})\log(eN) }{\cN^{\eta,\star}}} + T^2 \big(\frac{\log(\frac{BTN}{\eta \delta}) \log(eN)}{\cN^{\eta,\star}} +\eta\big)\bigg],
\end{equation}
with probability at least $1-\delta$. 

To connect this result to a more transparent, problem-intrinsic description of the coverage, we introduce the following integral-form effective sample size
\begin{align}
\tilde{\cN}^\star
:=
\bigg(\frac{1}{BT}\sum_{t=1}^T\int_0^B
\frac{\PP (y_t^\star > u\lvert x_1 = 0)}{N_t(u)\vee1}du\bigg)^{-1}, \quad N_t(u)&:=\sum_{k=1}^N\mathbf1\{y_t^k>u\}.
\label{eq:general-continuous-effective-size}
\end{align}
Equation~\eqref{eq:general-continuous-effective-size} provides a natural generalization of the discrete effective sample size by replacing the summation over the demand space by the integral form. A Riemann-sum approximation argument, combined with a comparison of the optimal base-stock levels of the original and the discretized problems, leads to the following approximation result, as detailed in Appendix~\ref{appendix: general-demand-proof}:
\begin{equation}\label{eq: general-N-approximation}
\lvert 1/\cN^{\eta,\star} - 1/\tilde{\cN}^{\star} \rvert \leq T \eta.
\end{equation}
With~\eqref{eq: general-N-approximation}, the bound~\eqref{eq: general-offline-via-discrete} continues to hold with $\cN^{\eta,\star}$ replaced by $\tilde{\cN}^\star$, with an additional additive $\tilde{\cO}\big((h_\infty+b_\infty)B(T^{3/2}\sqrt{\eta} + T^3\eta)\big)$ term. This gives the offline policy learning guarantee under general demand distributions. Notably, unlike in the uncensored case, decreasing the discretization gap $\eta$ may now increase the $\log(\frac{T}{\delta \eta})$ terms, so $\eta$ cannot be set arbitrarily small. On the other hand, setting $\eta \asymp 1/(NT)^3$ is always a safe choice, which gives the same rate of convergence as in the discrete demand setting up to $\text{polylog}(NT)$ factors.

When all demand distributions are identical, both the aggregated version of~\eqref{eq: general-offline-via-discrete} (cf.\ Theorem~\ref{thm: censored-sample-complexity-identical}) and the approximation result in~\eqref{eq: general-N-approximation} extend analogously, with the aggregated integral-form effective sample size defined as
\begin{align}
\tilde{\cN}_{\sf agg}^\star
:=\bigg(\frac{1}{BT}\sum_{t=1}^T\int_0^B
\frac{\PP (y_t^\star > u\lvert x_1 = 0)}{N_{\rm agg}(u)\vee1}du \bigg)^{-1}, \quad N_{\rm agg}(u)&:=\sum_{r=1}^T\sum_{k=1}^N
\mathbf1\{y_r^k>u\}.
\label{eq:general-continuous-effective-size-agg}
\end{align}
Consequently, a result similar to Theorem~\ref{thm: censored-sample-complexity-identical} holds with respect to the new effective sample size $\tilde{\cN}_{\mathsf{agg}}^\star$, up to an additional additive $\tilde{\cO}\big((h_\infty+b_\infty)B(T^{3/2}\sqrt{ \eta} + T^3 \eta)\big)$ term.

\subsection{Extension of Online Policy Learning Results}

In this section, we consider the online learning extension over $K$ episodes. Unlike in the previous section, an $\eta$-rounded algorithm alone is not sufficient here, as the online policy design also involves the data collection procedure. To describe the details of the algorithm, given a base-stock policy $\{{s}_t^{(k)}\}_{t=1}^T$ over $\eta$-grids, we define two associated inventory-level processes over $[M]_+$ based on the demand realizations $\{({d}_t^{(k)})\}_{t=1}^T$ as follows:
\begin{align}\label{eq: online-discrete-process}
    Y_{t}^{\eta,(k)}= \max\{X_{t}^{\eta,(k)},Z^\eta(s_t^{(k)})\},\quad X_{t+1}^{\eta,(k)}= (Y_t^{\eta, (k)} - Z^\eta({d}_{t}^{(k)}) )_+ , \quad X_{1}^{\eta, (k)}= 0. 
\end{align}
The processes $X_t^{\eta,(k)},Y_t^{\eta,(k)}$ can be seen as the entering and post-ordering inventory levels under the discrete base-stock policy $\{Z^\eta(s_t^{(k)})\}_{t=1}^T$ in the $\eta$-discretized environment, with initial inventory level $0$. The following relation holds between these processes and the observed inventory levels under the original problem.
\begin{lemma}\label{lem: online-general-process}
Let $\{(x_t^{(k)},y_t^{(k)})\}_{t=1}^T$ denote the entering and post-ordering inventory levels of the original problem under the same $\eta$-grid base-stock policy $\{s_t^{(k)}\}_{t=1}^T$ and demand realizations $\{d_t^{(k)}\}_{t=1}^T$. Then we have $\eta Y_t^{\eta,(k)} \leq y_t^{(k)}$ and $ \eta X_{t}^{\eta,(k)} \leq x_t^{(k)}$ for all $t \in [T]$.
\end{lemma}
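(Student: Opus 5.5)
The plan is to argue by induction on $t$, comparing the two processes pathwise under the same demand realizations $\{d_t^{(k)}\}_{t=1}^T$. Two elementary facts about the rounding map $Z^\eta(u)=\lceil u/\eta\rceil$ will be used. Since $\eta Z^\eta(u)\ge u$, we have $\eta Z^\eta(d)\ge d$ for every demand. Since each base-stock level $s_t^{(k)}$ lies on the $\eta$-grid, we have $\eta Z^\eta(s_t^{(k)}) = s_t^{(k)}$ exactly. We also use the original-problem dynamics under the base-stock policy:
\[
y_t^{(k)}=\max\{x_t^{(k)},s_t^{(k)}\},\qquad x_{t+1}^{(k)}=(y_t^{(k)}-d_t^{(k)})_+ .
\]
Here $x_1^{(k)}$ is the actual starting inventory, which is nonnegative. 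We should check that this matches the paper's convention that the episode starts at $x_1=0$; either way the base case goes through.

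\textbf{Base case.} We have $\eta X_1^{\eta,(k)} = 0 \le x_1^{(k)}$.

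\textbf{Inductive step.} Suppose $\eta X_t^{\eta,(k)}\le x_t^{(k)}$. Then
\[
\eta Y_t^{\eta,(k)} = \max\{\eta X_t^{\eta,(k)}, \eta Z^\eta(s_t^{(k)})\} = \max\{\eta X_t^{\eta,(k)}, s_t^{(k)}\} \le \max\{x_t^{(k)}, s_t^{(k)}\} = y_t^{(k)},
\]
using monotonicity of the maximum. For the next entering level, $u\mapsto (u)_+$ is nondecreasing, $\eta Y_t^{\eta,(k)}\le y_t^{(k)}$, and $\eta Z^\eta(d_t^{(k)})\ge d_t^{(k)}$. Hence
\[
\eta X_{t+1}^{\eta,(k)} = (\eta Y_t^{\eta,(k)} - \eta Z^\eta(d_t^{(k)}))_+ \le (y_t^{(k)} - d_t^{(k)})_+ = x_{t+1}^{(k)} .
\]
This closes the induction and gives both inequalities for all $t\in[T]$.

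\textbf{Potential obstacle.} The only delicate point is making sure the grid property $\eta Z^\eta(s)=s$ holds. If the levels were not exactly on the grid, we would only get $\eta Z^\eta(s)\ge s$, and the $Y$-comparison would go the wrong way. So I would state explicitly that the $\eta$-grid assumption on $\{s_t^{(k)}\}$ is used precisely there. Everything else is monotonicity and needs no probabilistic argument, since the comparison holds realization by realization.
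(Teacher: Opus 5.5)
Your proof is correct and follows essentially the same pathwise induction as the paper, using $\eta Z^\eta(d_t^{(k)})\ge d_t^{(k)}$, the exact grid identity $\eta Z^\eta(s_t^{(k)})=s_t^{(k)}$, and monotonicity of $\max$ and $(\cdot)_+$. The only difference is cosmetic: you carry only the $X$-inequality as the induction hypothesis and derive the $Y$-inequality within each step, whereas the paper inducts on both inequalities together.
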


In particular, by $\eta Y_t^{\eta,(k)} \leq y_t^{(k)}$ and $\eta Z^\eta(d_t^{(k)}) \geq d_t^{(k)}$, we have $Z^\eta(y_t^{(k)}) \geq Y_t^{\eta,(k)}$ and $d_t^{(k)} \geq y_t^{(k)} \implies Z^\eta(d_t^{(k)}) \geq Y_t^{\eta,(k)}$.
As a result, $\min\{Y_t^{\eta,(k)}, Z^\eta(d_t^{(k)})\} = \min\{Y_t^{\eta,(k)}, Z^\eta(\bar{d}_t^{(k)})\}$: the augmented censored observations can be computed from the censored observations $\{(y_t^{(k)}, \bar{d}_t^{(k)})\}_{t=1}^T$ alone.

Now we can describe the online learning algorithm under general demand distributions: At each episode $k\in [K]$, the learner first reconstructs the rounded historical dataset $
\cD^\eta_k:= \{ (Y_t^{\eta,(\ell)}, \min\{Y_t^{\eta,(\ell)}, Z^\eta(\bar{d}_t^{(\ell)})\}) \}_{t,\ell=1}^{T,k-1}$ from the censored observations $\{ (y_t^{(\ell)},\bar{d}_t^{(\ell)}) \}_{t,\ell=1}^{T,k-1}$, computes the $k$-th episode's policy $\pi^{\eta, (k)}$ as in Algorithm~\ref{alg: censored-online} for the $\eta$-discretized problem, and executes its $\eta$-grid version in the original problem environment to obtain $\{y_t^{(k)}, \bar{d}_t^{(k)}\}_{t=1}^T.$

Since $\cD_k^\eta$ is exactly the observable dataset obtained by executing the historical policies $\hat{\pi}^{\eta,(\ell)}$ on the $\eta$-discretized problem with initial inventory level $X_1^\eta = 0$, the online regret guarantee in Theorem~\ref{thm: online-regret} holds. Combining this with the approximation result in Lemma~\ref{lem: general-gap-transfer}, we obtain
\begin{align}
    \mathrm{Regret}(K) \leq C_0 (h_\infty + b_\infty) B \bigg[T{\sqrt{K{\log(eK)}}}\log(\frac{KBT}{\eta \delta}) + T^3\big(\log^2(\frac{KBT}{\delta\eta}){{\log^2 (eK)}} + K \eta \big)\bigg].
\end{align}
As in the offline results, taking $\eta \asymp 1/K$ gives the same online regret result as in Theorem~\ref{thm: online-regret} up to additional $\log K$ factors. Moreover, the same extension of Theorem~\ref{thm: online-regret-stationary} also holds for the stationary demand setting, with the leading-order regret improving from $\tilde{\cO}(T\sqrt{K})$ to $\tilde{\cO}(\sqrt{TK}).$

\section{Numerical Illustrations}\label{sec: numerics}

In this section, we provide numerical results for both the offline and online settings. In the offline setting, we test DP-UCB against vanilla DP and \textsc{SAIL}~\citep{qin2023sailing}, and plot the sub-optimality gap $\max_x \Delta(x;\hat\pi)$. In the online setting, we test DP-LCB against vanilla DP and \textsc{SAIL-CE}, and plot cumulative regret $\mathrm{Regret}(K)$. Details of the instances and implementation are given in Appendix~\ref{appendix-sec: experiment-details}.
Throughout the experiments, we replace the conservative absolute constants $8$ and $24$ in~\eqref{eq: CB-algorithm-offline} and its online counterpart by $1$, and find that the empirical performance is consistently good. 
\subsection{Offline Policy Learning}

We conduct two offline experiments. Figure~\ref{fig: offline} shows that the sub-optimality gap is governed by the effective sample size $\cN^\star$, rather than by the raw number of trajectories. Figure~\ref{fig: pessimism} shows the effect of upper-biased estimates in DP-UCB, by comparing it with vanilla DP and \textsc{SAIL}~\citep{qin2023sailing}.

\begin{figure}[t]
\centering
\includegraphics[width=0.85\textwidth]{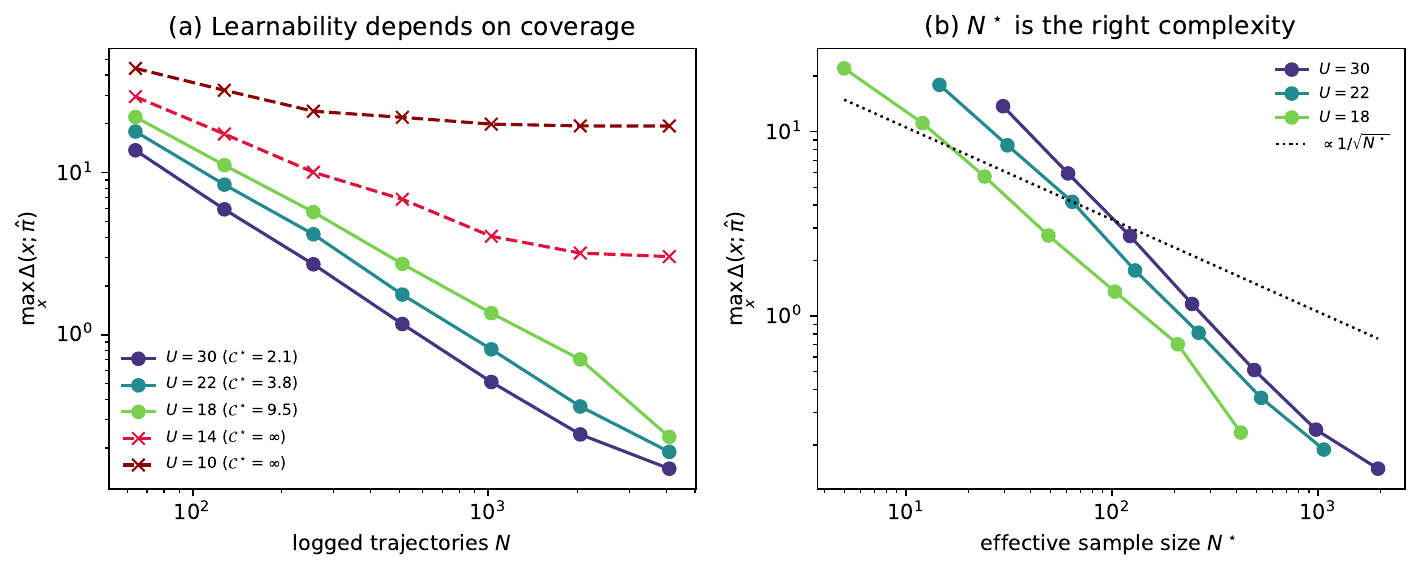}
\caption{
Panel (a) plots the DP-UCB gap versus the number of trajectories $N$; panel (b) plots the adequate-coverage curves versus the effective sample size $\cN^\star$. Experiments are conducted under a seasonal truncated-Poisson instance with $M=30, T = 6$ and $s_t^\star\in\{7,\dots,17\}$, with $150$ repetitions. In each trajectory, the logging policy draws $B_t\sim\mathrm{Uniform}\{0,\dots,U\}$ each period and orders up to $y_t^b=\max\{x_t,B_t\}$.}
\label{fig: offline}
\end{figure}

\smallskip\noindent\textbf{Effective Sample Size and Coverage.}
Theorem~\ref{thm: censored-sample-complexity} and Corollary~\ref{corollary: C-star-sample-complexity} show that offline policy learning is governed by the effective sample size $\cN^\star$ and the coverage ratio $\cC^\star$. Figure~\ref{fig: offline} supports this prediction. When the logging policy explores high enough to cover the coordinates used by the optimal policy, $\cC^\star<+\infty$ and the gap decreases as $N$ grows; smaller $\cC^\star$ leads to a smaller gap. When $\cC^\star=+\infty$, some needed coordinates are never revealed, and the gap stays flat even with more data. Panel~(b) further plots $\max_x\Delta(x;\pi)$ against $\cN^\star$ and shows that data collected under different behavior policies have very similar dependence on $\cN^\star$.

\begin{figure}[tt]
\centering
\includegraphics[width=0.85\textwidth]{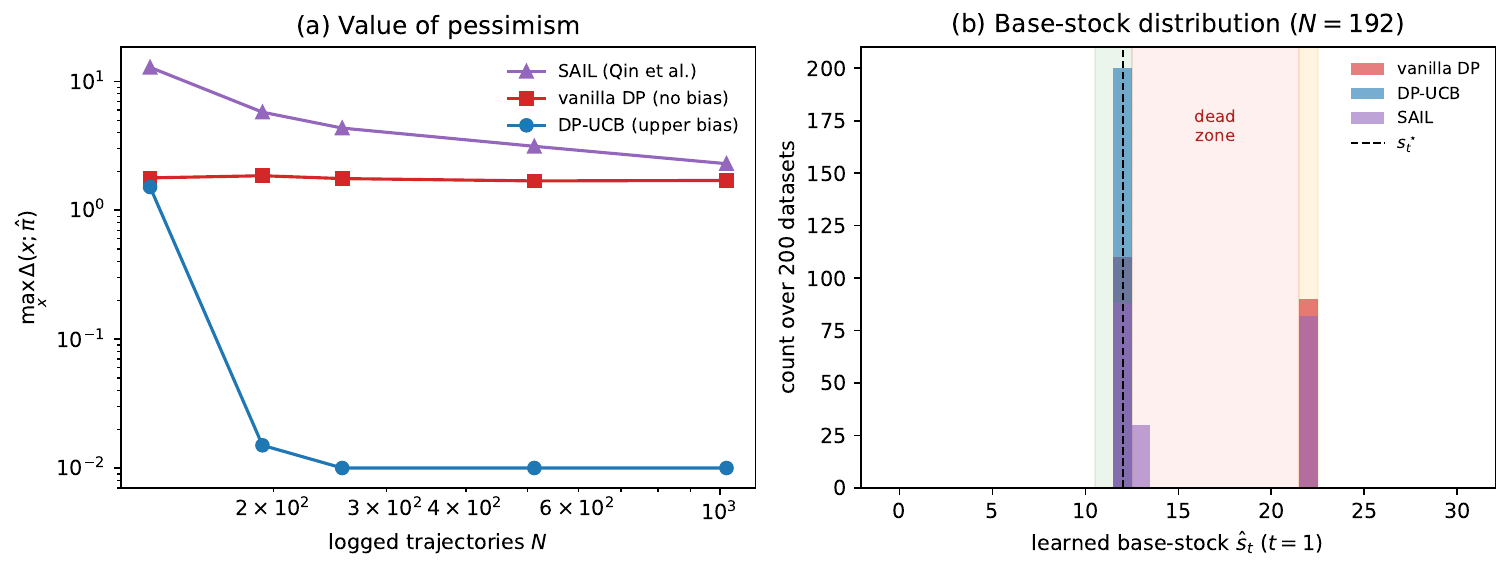}
\caption{Performance gap under partial coverage. DP-UCB is compared with vanilla DP and \textsc{SAIL} on an instance with $M=30$ and $T=12$ over $200$ repetitions; the demand distributions are detailed in Appendix~\ref{appendix-sec: experiment-details}. Panel~(a) plots the sub-optimality gap versus $N$ with $N \in \{128,192,256,512,1024\}$. Panel~(b) plots the distribution of learned first-period base-stock levels at $N = 192$ across $200$ repetitions.}

\label{fig: pessimism}
\end{figure}

\begin{figure}[t]
\centering
\includegraphics[width=0.85\textwidth]{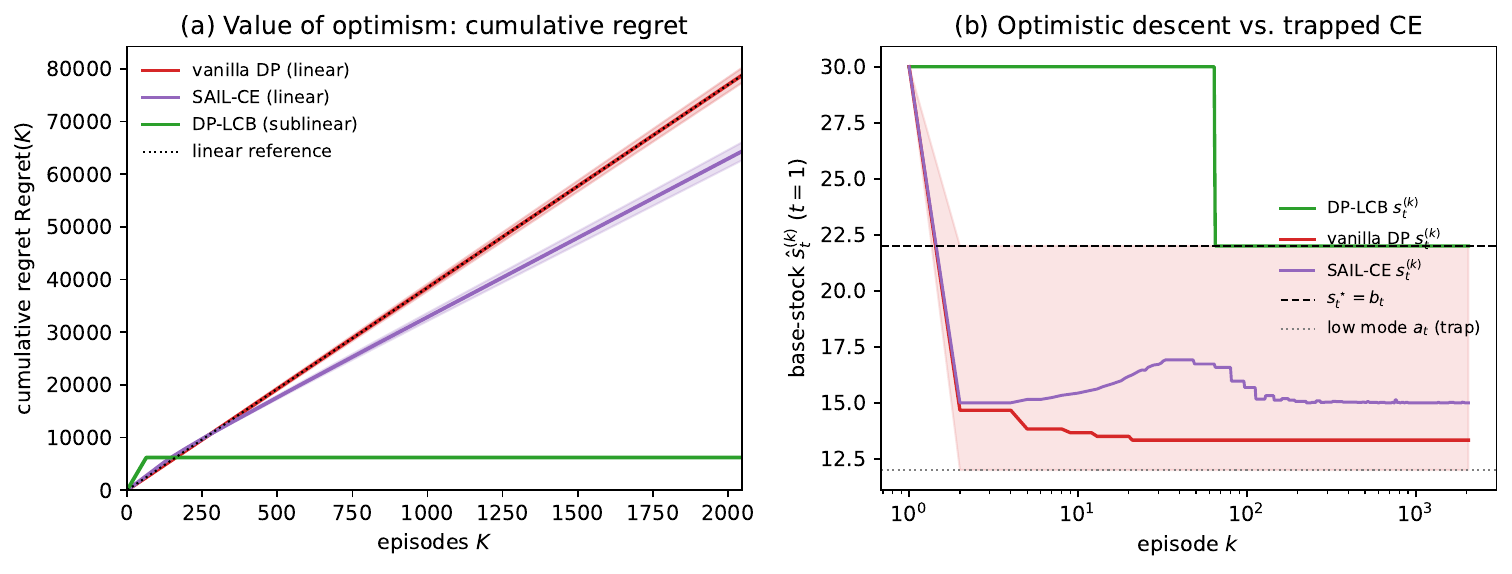}
\caption{Online regret under censored feedback. DP-LCB is compared with vanilla DP and \textsc{SAIL-CE} on an instance with $M=30$ and $T=12$ over $60$ repetitions, with demand distributions detailed in Appendix~\ref{appendix-sec: experiment-details}. Panel~(a) plots cumulative regret versus the number of episodes $K$, up to $K=2048$. Panel~(b) shows how the mean value of the learned first-period base-stock levels changes across episodes, over $60$ repetitions.}
\label{fig: online}
\end{figure}

\smallskip\noindent\textbf{Value of Pessimism.} Figure~\ref{fig: pessimism} compares DP-UCB with vanilla DP and \textsc{SAIL}. In this instance, the optimal base-stock level lies at a thinly covered CDF crossing followed by a \textit{dead zone}. Due to the lack of directional bias, vanilla DP and \textsc{SAIL} may underestimate this crossing and jump across the dead zone to a much higher stock level, causing a large and nearly flat gap. DP-UCB avoids this failure by pushing the CDF upward, which lowers the learned base-stock level and keeps the policy in the covered region. Panel~(b) shows illustrate this mechanism directly through the distribution of learned first-period order-up-to levels.

\subsection{Online Policy Learning}

In this section, we compare the DP-LCB algorithm with vanilla DP and \textsc{SAIL-CE} (the certainty-equivalent online deployment of \textsc{SAIL}~\citep{qin2023sailing}) in the online setting with censored feedback. We report the regret results in panel~(a) and the distribution of first-period base-stock levels in panel~(b). In the problem instance considered, as detailed in Appendix~\ref{appendix-sec: experiment-details}, the optimal base-stock level is high, but a policy that orders too low cannot observe the high-demand spike because of censoring. As shown in panel~(b), vanilla DP quickly gets trapped at a low stock level, so it keeps missing the spike and suffers nearly linear regret. \textsc{SAIL-CE} has the same issue because it is used greedily and does not add optimism. DP-LCB avoids this trap by pushing the CDF downward, which raises the learned base-stock level and generates the needed coverage. This leads to much smaller regret in panel~(a), matching the role of optimism in Theorem~\ref{thm: online-regret}.

\appendix
\newpage

\section{Equivalence Between Backlogging and Lost-Sales Dynamics}\label{appendix-sec: backlog-LS-equivalence}

In this section, we formalize the equivalence between the lost-sales setup introduced in Section~\ref{sec: preliminaries} and the backlogging dynamics commonly adopted in the uncensored inventory-learning literature \citep{qin2023sailing,xie2024vc}. For clarity of presentation, we use the superscripts $\mathsf{B}$ and $\mathsf{LS}$ to denote the corresponding quantities under the backlogging and the lost-sales dynamics, respectively. In particular, the quantities associated with the lost-sales dynamics corresponds to the notations in the main article.

Fix a demand trajectory $(d_1,\dots,d_T)\in [M]_+^T$ and a base-stock policy $\pi=\{s_t\}_{t=1}^T$ with $s_t\in [M]_+$ for every $t\in [T]$. 
Let $x_t^{\mathsf{B}}, y_t^{\mathsf{B}}$ be the corresponding entering inventory level and post-ordering inventory levels under the backlogged system, i.e.  $y_t^{\mathsf{B}} = \max\{x_t^{\mathsf{B}},s_t\},   x_{t+1}^{\mathsf{B}} = y_t^{\mathsf{B}}-d_t$. We have the following relation between the lost-sales and backlogging inventory levels.

\begin{proposition}\label{prop:backlog-LS-pathwise-equivalence}
Fix any initial inventory $x\in [M]_+$ and suppose $x_1^{\mathsf{LS}}=x_1^{\mathsf{B}}=x$, then
\begin{align}\label{eq:backlog-LS-state-action-coupling}
    x_t^{\mathsf{LS}}=(x_t^{\mathsf{B}})_+,
    \quad \forall t\in [T+1],
    \qquad
    y_t^{\mathsf{LS}}=y_t^{\mathsf{B}},
    \quad \forall t\in [T].
\end{align}
As a consequence, 
$    \sum_{t=1}^T c_t(y_t^{\mathsf{LS}})
    =
    \sum_{t=1}^T c_t(y_t^{\mathsf{B}})$.
\end{proposition}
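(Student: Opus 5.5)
We prove the coupling by induction on $t$ along the fixed demand path, using only the two base-stock recursions. Under lost sales we have $y_t^{\mathsf{LS}}=\max\{x_t^{\mathsf{LS}},s_t\}$ and $x_{t+1}^{\mathsf{LS}}=(y_t^{\mathsf{LS}}-d_t)_+$. Under backlogging, $y_t^{\mathsf{B}}=\max\{x_t^{\mathsf{B}},s_t\}$ and $x_{t+1}^{\mathsf{B}}=y_t^{\mathsf{B}}-d_t$. The induction hypothesis is $x_t^{\mathsf{LS}}=(x_t^{\mathsf{B}})_+$. The base case $t=1$ holds because $x_1^{\mathsf{LS}}=x_1^{\mathsf{B}}=x\ge 0$, so $(x)_+=x$.

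\textbf{Order-up-to levels.} Assume $x_t^{\mathsf{LS}}=(x_t^{\mathsf{B}})_+$. Since $s_t\in[M]_+$ is nonnegative, we get
\[
\max\{(x_t^{\mathsf{B}})_+,s_t\}=\max\{x_t^{\mathsf{B}},0,s_t\}=\max\{x_t^{\mathsf{B}},s_t\}.
\]
Hence $y_t^{\mathsf{LS}}=y_t^{\mathsf{B}}$. This step is where nonnegativity of the base-stock level is essential: a negative backlogged state is always lifted to $s_t\ge 0$, exactly as the truncated lost-sales state $0$ is.

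\textbf{Entering levels.} With $y_t:=y_t^{\mathsf{LS}}=y_t^{\mathsf{B}}$, we have $x_{t+1}^{\mathsf{LS}}=(y_t-d_t)_+=(x_{t+1}^{\mathsf{B}})_+$. This closes the induction and gives the identity for all $t\in[T+1]$, together with the $y$-identity for $t\in[T]$.

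\textbf{Costs.} The one-period cost $c_t(y_t)=h_t(y_t-d_t)_++b_t(d_t-y_t)_+$ depends only on $y_t$ and $d_t$, and in our setting it has the same form under both dynamics. So equality of the post-ordering levels along the path immediately gives $\sum_t c_t(y_t^{\mathsf{LS}})=\sum_t c_t(y_t^{\mathsf{B}})$.

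The argument is elementary. The only point needing care is the identity $\max\{(a)_+,s\}=\max\{a,s\}$ for $s\ge 0$, which is why the statement restricts to base-stock policies with $s_t\in[M]_+$.
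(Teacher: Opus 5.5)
Your proof is correct and follows the paper's argument: the same induction on $t$, with the base case from $x\ge 0$, the identity $\max\{(x_t^{\mathsf{B}})_+,s_t\}=\max\{x_t^{\mathsf{B}},s_t\}$ for $s_t\ge 0$ giving $y_t^{\mathsf{LS}}=y_t^{\mathsf{B}}$, and the two transition equations closing the step. The cost equality then follows pathwise, as you state.
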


\begin{proof}[Proof of Proposition~\ref{prop:backlog-LS-pathwise-equivalence}]
We prove~\eqref{eq:backlog-LS-state-action-coupling} by induction. The initial condition holds because $x\geq 0$. Suppose that $x_t^{\mathsf{LS}}=(x_t^{\mathsf{B}})_+$ for some $t\in [T]$. Since $s_t\geq 0$,
$    \max\{(x_t^{\mathsf{B}})_+,s_t\}
    =
    \max\{x_t^{\mathsf{B}},s_t\}$. It follows that
\begin{align*}
    y_t^{\mathsf{LS}}
    =\max\{x_t^{\mathsf{LS}},s_t\}
    =\max\{(x_t^{\mathsf{B}})_+,s_t\}
    =\max\{x_t^{\mathsf{B}},s_t\}
    =y_t^{\mathsf{B}}.
\end{align*}
Using the two transition equations then gives
$ x_{t+1}^{\mathsf{LS}}
    =(y_t^{\mathsf{LS}}-d_t)_+
    =(y_t^{\mathsf{B}}-d_t)_+
    =(x_{t+1}^{\mathsf{B}})_+,$
which completes the induction. 
\end{proof}

Now, let $ C_t^{\star,\mathsf{B}},C_t^{\star,\mathsf{LS}}$ denote the optimal cost-to-go in period $t$ under the backlogging and lost-sales dynamics respectively, with $C_{T+1}^{\star,\mathsf{LS}}=C_{T+1}^{\star, \mathsf{B}}\equiv 0$. We have the following equivalence result.
\begin{proposition}\label{prop:backlog-LS-value-equivalence}
For every $t\in[T], z\in\mathbb{Z}$,
$    C_t^{\star,\mathsf{B}}(z)
    =
    C_t^{\star, \mathsf{LS}}(z_+)$.
In particular, $C_1^{\star,\mathsf{B}}(x)=C_1^{\star}(x)$ for every initial inventory $x\in[M]_+$.
\end{proposition}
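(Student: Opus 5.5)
We will prove the claim by backward induction on $t$, from $t=T+1$ down to $t=1$. The base case $t=T+1$ is immediate, since $C_{T+1}^{\star,\mathsf{B}}\equiv 0\equiv C_{T+1}^{\star,\mathsf{LS}}$.

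For the inductive step, assume $C_{t+1}^{\star,\mathsf{B}}(z')=C_{t+1}^{\star,\mathsf{LS}}(z'_+)$ for all integers $z'$. The backlog Bellman equation reads
\[
C_t^{\star,\mathsf{B}}(z)=\min_{y\ge z}\EE\big[c_t(y)+C_{t+1}^{\star,\mathsf{B}}(y-d_t)\big].
\]
Applying the hypothesis with $z'=y-d_t$ turns the inner expectation into $\EE[c_t(y)+C_{t+1}^{\star,\mathsf{LS}}((y-d_t)_+)]=W_t^{\star}(y)$. Hence
\[
C_t^{\star,\mathsf{B}}(z)=\min_{y\ge z}W_t^\star(y),\qquad C_t^{\star,\mathsf{LS}}(z_+)=\min_{y\ge z_+}W_t^\star(y).
\]
Two cases remain.

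\emph{Case $z\ge0$.} Here $z_+=z$, so the two minima coincide.

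\emph{Case $z<0$.} The backlog minimum runs over the larger feasible set $y\ge z$, so it suffices to show that restricting to $y\ge 0$ loses nothing. We need $W_t^\star(y)\ge W_t^\star(0)$ for every integer $y<0$. For such $y$:
\begin{itemize}
\item $y-d_t\le y<0$, so $(y-d_t)_+=0=(0-d_t)_+$. The continuation terms of $W_t^\star(y)$ and $W_t^\star(0)$ are therefore both $C_{t+1}^{\star,\mathsf{LS}}(0)$.
\item Since $d_t\ge0$, we have $c_t(y)=b_t(d_t-y)\ge b_t d_t=c_t(0)$, using $b_t\ge0$.
\end{itemize}
So $W_t^\star(y)\ge W_t^\star(0)$, and the minimum over $y\ge z$ equals the minimum over $y\ge0=z_+$. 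This completes the induction.

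Two bookkeeping points must be checked.
\begin{itemize}
\item The lost-sales $W_t^\star$ is defined on $[M]_+$, with $y$ restricted to $[x,M]$. The backlog problem allows any $y\ge z$, including $y>M$. We can either extend $W_t^\star$ to all integers by the same formula and note that the main text's restriction to $[x,M]$ is without loss, or argue directly that $y>M$ is dominated by $y=M$: stocking above $M$ only adds holding cost and larger carryover, and $C^{\star,\mathsf{LS}}_{t+1}$ is nondecreasing beyond $M$ by the same dominance argument. The cleanest route is to run the induction with $W_t^\star$ defined on all of $\mathbb{Z}$ and invoke the paper's remark that minimizers can be taken in $[x,M]$.
\item The cost $c_t(y)$ is the same expression under both dynamics, so no further adjustment is needed there.
\end{itemize}

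Finally, taking $t=1$ and $x\in[M]_+$ gives $C_1^{\star,\mathsf{B}}(x)=C_1^{\star,\mathsf{LS}}(x)=C_1^\star(x)$. The only mildly delicate step is the negative-$z$ case, which the two observations above settle.
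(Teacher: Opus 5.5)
Your proof is correct and is essentially the paper's argument: backward induction, using the hypothesis to rewrite the backlog continuation term as $C_{t+1}^{\star,\mathsf{LS}}((y-d_t)_+)$, and then noting that every $y<0$ is weakly dominated by $y=0$, so the minimization can be restricted to $y\ge z_+$. Your extra bookkeeping about $y>M$ is harmless but not needed, because once $W_t^\star$ is defined on all integers by the same formula, both sides are minima of that one function over $y\ge z$ and over $y\ge z_+$.
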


\begin{proof}
We proceed by backward induction. The claim is immediate at $t=T+1$. Suppose that it holds at period $t+1$. The Bellman equation of the backlogging system gives, for any $z\in\mathbb{Z}$,
\begin{align*}
    C_t^{\star,\mathsf{B}}(z)
    &=\min_{y\geq z}\EE\Big[c_t(y)+C_{t+1}^{\star,\mathsf{B}}(y-d_t)\Big] = \min_{y\geq z}\EE\Big[c_t(y)+C_{t+1}^{\star, \mathsf{LS}}((y-d_t)_+)\Big],
\end{align*}
where the second equality uses the induction hypothesis. On the other hand, since every feasible $y<0$ is weakly dominated by $0$: since $d_t\geq 0$, $(y-d_t)_+=(0-d_t)_+=0, c_t(y)=b_t(d_t-y)\geq b_t d_t=c_t(0).$
We may therefore restrict the minimization to $y\geq \max\{z,0\}=z_+$, and hence
\begin{align*}
    C_t^{\star,\mathsf{B}}(z)
    &=\min_{y\geq z_+}\EE\Big[c_t(y)+C_{t+1}^{\star}((y-d_t)_+)\Big]  =C_t^{\star}( z_+),
\end{align*}
which completes the induction.
\end{proof}

Let
$    C_1^{\pi,\mathsf{B}}(x), C_1^{\pi,\mathsf{LS}}(x)
$ denote the expected total cost under $\pi$ under the corresponding dynamics, by taking expectation in Proposition~\ref{prop:backlog-LS-pathwise-equivalence}, we obtain $C_1^{\pi,\mathsf{LS}}(x) =C_1^{\pi,\mathsf{B}}(x)$ for any $x\in [M]_+$ and base stock policy $\pi$. Applying Proposition~\ref{prop:backlog-LS-value-equivalence} then gives $ C_1^{\pi,\mathsf{LS}}(x)-C_1^{\star,\mathsf{LS}}(x)
    =
    C_1^{\pi,\mathsf{B}}(x)-C_1^{\star,\mathsf{B}}(x)$, as desired.

\section{Proof of General Demand Results}\label{appendix: general-demand-proof}

\subsection{Proof of Lemma~\ref{lem: general-gap-transfer}}

For any demand realizations $\{d_t\}_{t=1}^T$ of the original problem, we denote $\tilde{d}_t:= \eta Z^\eta(d_t)$ as its $\eta$-rounded version. Consider any base-stock policy $\pi$ with base-stock levels $\{s_t\}_{t=1}^T \subset [0,B]$, and denote the entering and post-ordering inventory levels under $\{d_t\}, \{\tilde{d}_t\}$ from any same initial inventory level by
$(x_t,y_t)$ and $(\tilde{x}_t,\tilde{y}_t)$ respectively. Then by $\tilde{d}_t \geq d_t$ for all $t\in[T]$ and monotonicity, 1-Lipschitz continuity of the order-up-to function $x\mapsto\max\{x,s_t\}$, we have
\begin{align*}
0\leq x_t- \tilde{x}_t\leq\sum\nolimits_{k=1}^{t-1}
(\tilde d_k - d_k) \leq t \eta,
\qquad
0\leq y_t-\tilde{y}_t\leq x_t-\tilde{x}_t\leq t\eta.
\end{align*}
Then, since each period's newsvendor cost is $(h_\infty + b_\infty)$-Lipschitz, taking expectation over $d$  leads to \begin{align}\label{eq:  appendix-proof-gap-transfer}
   \max_{x\in [0,B]}\lvert C_{1}^{\pi}(x) - \tilde{C}_{1}^{\pi}(x)\rvert \leq T^2 (h_\infty + b_\infty) \eta
\end{align} 
for every base-stock policy $\pi$ of the original problem, with $\tilde{C}_1^\pi$ the expected total cost under the distribution of $\{\tilde{d}_t\}$. In particular, for each base-stock policy $\pi$, we have \begin{align*}
    \tilde{C}_1^{\pi}(x)
    &= \sum\nolimits_{t=1}^T\EE[h^\eta_t(\frac{\tilde{y}_t}{\eta} - Z^\eta(d_t))_+  + b^\eta_t(Z^\eta(d_t)-\frac{\tilde{y}_t}{\eta} )_+\lvert \tilde{x}_1 = x].
\end{align*}
This corresponds to the total cost under the $\eta$-discretized environment of some policy $\pi'$ with possibly non-integer base-stock levels with the initial inventory level $x/\eta$, and, since integer order-up-to levels are without loss of optimality for the $\eta$-discretized problem (cf.\ Section~\ref{sec: preliminaries}), it holds that $\tilde{C}^{\pi}_1(\eta z) \geq C^{\star}_{\eta,1}(z)$ for every base-stock policy $\pi$ and $z \in [M]_+$. And if $\pi$ is an $\eta$-grid policy induced by some $\pi^\eta$, then $\tilde{C}^\pi_1(\eta z) = {C}^{\pi^\eta}_{\eta,1}(z)$ for all $z \in [M]_+.$
Now, for every fixed $x\in [0,B]$ and an $\eta$-grid base-stock policy $\pi$ induced by $\pi^\eta$,  there exists some $\eta$-grid point $r_x$ so that $\lvert r_x -x\rvert \leq \eta$. For such $r_x$, we have \begin{align}\label{eq:  general-demand-proof-tmp1}
\Delta(r_x; \pi) &= \underbrace{C_{1}^{\pi}(r_x) - \tilde{C}_1^{\pi}(r_x)}_{\leq T^2(h_\infty + b_\infty) \eta} + \underbrace{\tilde{C}_1^{\pi}(r_x) - \tilde{C}_1^{\pi^\star}(r_x)}_{ \leq \Delta^\eta (r_x/\eta;\pi^\eta)} +\underbrace{\tilde{C}_1^{\pi^\star}(r_x) - C_1^\star(r_x)}_{\leq T^2(h_\infty + b_\infty) \eta},
\end{align}
with $\pi^\star$ an optimal base-stock policy of the original problem.
By $r_x/\eta \in [M]_+,$ \eqref{eq:  general-demand-proof-tmp1} implies $\Delta(r_x;\pi) \leq \max_{z\in [M]_+} \Delta^\eta(z;\pi^\eta) + 2 T^2(h_\infty +b_\infty)\eta.$ By the $2T^2(h_\infty+b_\infty)$-Lipschitz continuity of $\Delta(x;\pi)$ in $x$, \begin{align*}
    \Delta(x;\pi)\leq \Delta(r_x;\pi) + 2T^2(h_\infty +b_\infty)\eta \leq  \max_{z\in [M]_+} \Delta^\eta(z;\pi^\eta) + 4 T^2(h_\infty +b_\infty)\eta.
\end{align*}
Taking maximum over $x\in [0,B]$ gives the desired result.

\subsection{Proof of Equation~(\ref{eq: general-N-approximation})}\label{appendix:general-demand-coverage-bridge}

In this section, we provide the proof of~\eqref{eq: general-N-approximation}. The aggregated version follows from the same argument with the count $N_t(u)$ replaced by the pooled count $N_{\mathsf{agg}}(u)$ in~\eqref{eq:general-continuous-effective-size-agg}.

Given any demand realization $d_t$, we define an $\eta$-rounded coupled demand $\tilde{d}_t = \eta Z^\eta(d_t)$, so that $d_t \leq \tilde{d}_t < d_t + \eta$. By definition, the $\tilde{d}_t$ follows the demand distribution of the $\eta$-discretized problem. Let $\{s_t^\star\}_{t=1}^T$ and $\{s_t^{\eta,\star}\}_{t=1}^T$ be the optimal base-stock levels of the original problem and the $\eta$-discretized problem respectively. We set $y_t^\star$ and $y_t^{\eta,\star}$ as the post-ordering inventory levels under the two optimal policies with $x_1 = 0$. With $N_t(u)$ defined in~\eqref{eq:general-continuous-effective-size}, we set $\lambda_t(u) := {1}/(N_t(u)\vee 1)$.

Since $Z^\eta(y) > j \Leftrightarrow y > j\eta$, the coordinate counts of the rounded dataset satisfy $\sum_{k=1}^N \mathbf{1}\{Z^\eta(y_t^k) > j\} = N_t(j\eta)$. Hence, by definitions~\eqref{eq:  def-effective-sample-size} and~\eqref{eq:general-continuous-effective-size}, together with $1/(MT) = \eta/(BT)$, it suffices to show, for every $t \in [T]$, the per-period bound
\begin{equation}
-t\eta \;\leq\; \eta\sum_{j=0}^{M-1} \PP(y_t^{\eta,\star} > j \lvert x_1 = 0)\lambda_t(j\eta) - \int_0^B \PP(y_t^\star > u \lvert x_1 = 0)\lambda_t(u)du \;\leq\; T\eta;
\label{eq: general-demand-per-period-gap}
\end{equation}
averaging~\eqref{eq: general-demand-per-period-gap} over $t$, dividing by $B$, and using $B \geq 1$ then yield $\lvert 1/\cN^{\eta,\star} - 1/\tilde{\cN}^\star \rvert \leq T\eta/B \leq T\eta$.

\noindent\textbf{Step~1: control the base-stock policy gap.} For this purpose, we first show the following base-stock level gap result:
\begin{equation}
    s_t^\star \leq \eta s_t^{\eta,\star} \leq \min\{B,\; s_t^\star + (T-t+1)\eta\}.
    \label{eq: general-demand-optimal-base-stock-localization}
\end{equation}
To see~\eqref{eq: general-demand-optimal-base-stock-localization}, let $D_t^\star$ denote the right derivative of the general-demand order-up-to value function $W_t^\star(y) := \EE[h_t(y-d_t)_+ + b_t(d_t-y)_+ + C^\star_{t+1}((y-d_t)_+)]$, so that $s_t^\star = \min\{y : D_t^\star(y) \geq 0\}$. Direct differentiation gives
\begin{equation}
    D_t^\star(y) = (h_t+b_t)F_t(y) - b_t + \EE\big[ \Gamma^\star_{t+1}(y - d_t)\big], \quad \forall t\in [T],\quad D^\star_{T+1} \equiv 0,
    \label{eq: general-demand-continuous-derivative}
\end{equation}
with $\Gamma^\star_{t}(z):= \bm{1}\{z\geq 0\} \big(D_t^\star(z)\big)_+$ the right derivative of the constrained value function $C_t^\star$, which is non-decreasing by convexity.
The same recursion holds for the $\eta$-discretized problem with $d_t$ replaced by $\tilde{d}_t$ and the cost factors $h_t, b_t$ unchanged, after a rescaling in $\eta$: if we denote $\tilde{D}_t^\star, \tilde{\Gamma}_t^\star$ as the related quantities given in this new recursion, it holds that  $\eta s_t^{\eta,\star} = \min\{y : \tilde{D}_t^\star(y) \geq 0\}$. (To see this fact, noticing that $ h_t^\eta (z - Z^\eta(d_t))_+ + b_t^\eta ( Z^\eta(d_t) - z)_+ = h_t(\eta z - \tilde{d}_t)_+ + b_t(\tilde{d}_t - \eta z)_+$.) 

By the rule of determining $s^\star_t,s^{\eta,\star}_t$, to prove~\eqref{eq: general-demand-optimal-base-stock-localization}, it suffices to show that for $\rho_t:= (T-t+1)\eta,$
\begin{equation}\label{eq: general-demand-proof-tmp2}
\tilde{D}_t^\star(y) \leq D_t^\star(y) \leq \tilde{D}_t^\star(y+\rho_t)   \quad\text{when } y + \rho_t \leq B.
\end{equation}
We prove~\eqref{eq: general-demand-proof-tmp2} by backward induction over $t$: at $t = T+1$ both inequalities hold directly by $D^\star_{T+1} = \tilde{D}^\star_{T+1} \equiv 0$. Now suppose the result holds at $t+1$ for some $t\in [T]$, taking positive parts in~\eqref{eq: general-demand-proof-tmp2} gives
$$\tilde{\Gamma}^\star_{t+1} \leq \Gamma^\star_{t+1},
\qquad
\tilde{\Gamma}^\star_{t+1}(z + \rho_{t+1}) \geq \Gamma^\star_{t+1}(z) \quad\text{when } z + \rho_{t+1} \leq B.$$
To see the first inequality in~\eqref{eq: general-demand-proof-tmp2} at $t$, denote $\tilde{F}^\eta_t$ as the CDF function of $\tilde{d}_t$, then $\tilde{d}_t \geq d_t$ implies $\tilde{F}^\eta_t(y) \leq F_{t}(y)$, and $\tilde{\Gamma}^\star_{t+1}(y - \tilde{d}_t) \leq \Gamma^\star_{t+1}(y - \tilde{d}_t) \leq \Gamma^\star_{t+1}(y - d_t)$ by the induction hypothesis and the monotonicity of $\Gamma^\star_{t+1}$, taking expectation over $d_t$ and $\tilde{d}_t$ then gives the desired inequality. 

To see the second inequality at $t$: $\tilde{d}_t < d_t + \eta$ implies $\tilde{F}^\eta_t(y + \rho_t) \geq \tilde{F}^\eta_t(y + \eta) \geq F_{t}(y)$, and $\tilde{\Gamma}^\star_{t+1}(y + \rho_t - \tilde{d}_t) \geq \tilde{\Gamma}^\star_{t+1}(y - d_t + \rho_{t+1}) \geq \Gamma^\star_{t+1}(y - d_t)$ by $\tilde{d}_t - d_t < \eta = \rho_t - \rho_{t+1}$, the monotonicity of $\tilde{\Gamma}^\star_{t+1}$, and the induction hypothesis. Taking expectations over  $d_t, $ and $\tilde{d}_t$ gives the second inequality. This finishes the proof of~\eqref{eq: general-demand-optimal-base-stock-localization}.

\noindent\textbf{Step 2: control the visiting probability gap.} Started from $x_1 = 0$, the period-$t$ post-ordering inventory level of a base-stock policy with levels $\{s_r\}_{r=1}^T$ under demands $\{d_r\}_{r=1}^T$ equals $\max_{1\leq r\leq t}\big(s_r - \sum_{\ell=r}^{t-1} d_\ell\big)_+$ pathwise. Combining this representation with~\eqref{eq: general-demand-optimal-base-stock-localization} and $0 \leq \tilde{d}_\ell - d_\ell < \eta$ yields 
$y_t^\star - (t-1)\eta \leq \eta y_t^{\eta,\star} \leq y_t^\star + T\eta$, and therefore $\PP(y_t^\star > u + (t-1)\eta \lvert x_1 = 0) \leq \PP(\eta y_t^{\eta,\star} > u \lvert x_1 = 0) \leq \PP(y_t^\star > u - T\eta \lvert x_1 = 0)$. Taking integration of $u$ over $[0,B]$ then gives
\begin{align}
    &\int_0^B \big(\PP(\eta y_t^{\eta,\star} > u \lvert x_1 = 0) - \PP(y_t^\star > u \lvert x_1 = 0)\big)_+ du \leq T\eta, \label{eq: general-demand-survival-signed-bounds-1}\\
    &\int_0^B \big(\PP(y_t^{\star} > u \lvert x_1 = 0) - \PP(\eta y_t^{\eta,\star} > u \lvert x_1 = 0)\big)_+ du \leq (t-1)\eta. \label{eq: general-demand-survival-signed-bounds-2}
\end{align}

\noindent\textbf{Step~3: putting all together.} Since $\eta y_t^{\eta,\star}$ takes values on the $\eta$-grid, we have $\PP(\eta y_t^{\eta,\star} > u \lvert x_1 = 0)$ is a constant function in $u$ over each interval $[j\eta, (j+1)\eta)$ with value $\PP(y_t^{\eta,\star} > j \lvert x_1 = 0)$,  while $\lambda_t$ is non-decreasing, $[0,1]$-valued. This gives
\begin{equation}
    0 \leq \int_0^B \PP(\eta y_t^{\eta,\star} > u \lvert x_1 = 0)\lambda_t(u)du - \eta\sum_{j=0}^{M-1} \PP(y_t^{\eta,\star} > j \lvert x_1 = 0)\lambda_t(j\eta)  \leq \eta.
    \label{eq: general-demand-weighted-riemann}
\end{equation}
To see~\eqref{eq: general-demand-weighted-riemann}, the constancy over each interval $[j\eta, (j+1)\eta)$ shown above implies that its middle term equals
$\sum\nolimits_{j=0}^{M-1} \PP(y_t^{\eta,\star} > j \lvert x_1 = 0) \int_{j\eta}^{(j+1)\eta} \big(\lambda_t(u) - \lambda_t(j\eta)\big)du$.
Each term in the sum is non-negative by the monotonicity of $\lambda_t$, which gives the first inequality in~\eqref{eq: general-demand-weighted-riemann}. For the second inequality, bounding the probabilities by one and using the monotonicity of $\lambda_t$ again, the sum above is at most
$\eta \sum_{j=0}^{M-1} \big(\lambda_t((j+1)\eta) - \lambda_t(j\eta)\big) = \eta \big(\lambda_t(B) - \lambda_t(0)\big) \leq \eta$, where the last step uses the telescoping structure of the sum and $0 \leq \lambda_t \leq 1$.
The desired bound in~\eqref{eq: general-demand-per-period-gap} now follows from $0\leq \lambda_t \leq 1$: combining the left inequality of~\eqref{eq: general-demand-weighted-riemann} with~\eqref{eq: general-demand-survival-signed-bounds-1} gives its upper-bound side, while combining the right inequality of~\eqref{eq: general-demand-weighted-riemann} with~\eqref{eq: general-demand-survival-signed-bounds-2} gives its lower side.

\subsection{Proof of Lemma~\ref{lem: online-general-process}}
We prove this result by induction: First at $t = 1$ the result holds directly by $\eta X_1^{\eta,(k)}  = 0 \leq x_1^{(k)}$ and $\eta Y_1^{\eta,(k)} = \eta Z^\eta(s_1^{(k)}) = s_1^{(k)} \leq y_1^{(k)}.$ Now suppose the result holds at some $t \in [T-1]$, then at $t+1$, we have 
    $\eta X_{t+1}^{\eta,(k)} = \big(\eta Y_t^{\eta, (k)} - \eta Z^\eta(d_t^{(k)}) \big)_+ \leq \big(\eta Y_t^{\eta, (k)} - d_t^{(k)} \big)_+ \leq  \big(y_t^{(k)} - d_t^{(k)} \big)_+  = x_{t+1}^{(k)},$
where the first inequality uses $\eta Z^\eta(d_t^{(k)}) \geq d_t^{(k)}$, and the second uses the induction hypothesis $\eta Y_t^{\eta,(k)} \leq y_t^{(k)}$. And then, $\eta Y_{t+1}^{\eta,(k)} = \max\{\eta X_{t+1}^{\eta,(k)}, s_{t+1}^{(k)}\} \leq y_{t+1}^{(k)}$ by $\eta Z^\eta(s_{t+1}^{(k)}) = s_{t+1}^{(k)}.$ Thus the result holds by induction.
\section{Details of the Numerical Experiments}\label{appendix-sec: experiment-details}

This appendix details the experiment setups of Section~\ref{sec: numerics}.

\noindent\textbf{Shared setup.} 
All experiments use a discrete demand support $[M]_+$ with $M=30$ and fixed costs $h\equiv 1$, $b\equiv 3$. The demand distributions are non-stationary and vary seasonally across periods. Each reported sub-optimality gap $\max_x\Delta(x;\hat\pi)$ is computed exactly by evaluating the learned policy under the true model via DP.

\noindent\textbf{Details of demand distributions.} 
Now we describe the detailed demand distributions.
\begin{itemize}[nosep]
\item In Figure~\ref{fig: offline}, we use a seasonal truncated Poisson distribution with mean
$
\lambda_t = 10\big(1+\frac{1}{2}\sin(2\pi t/T)\big)
$
on a horizon $T=6$, so $\lambda_t$ ranges over $[5,15]$ and the optimum $s_t^\star$ ranges over $\{7,\dots,17\}$. This smooth, fully supported instance is used to isolate the effect of coverage: the demand shape is fixed, while the logging policy's exploration range is varied.

\item In Figure~\ref{fig: pessimism}, we use a near-critical three-point distribution on a horizon $T=12$. Let
$
\ell_t = 12+\mathrm{round}(5\sin(2\pi t/T))\in\{7,\dots,17\}, 
H_t=\ell_t+10.
$
The demand places mass $0.50$, $0.26$, and $0.24$ on $\ell_t-1$, $\ell_t$, and $H_t$, respectively. Hence,
$
F_t(\ell_t-1)=0.50,\qquad F_t(\ell_t)=0.76,\qquad F_t(H_t)=1.
$
Solving the DP gives the optimal base-stock level $s_t^\star=\ell_t,\forall t\in [T]$ for this instance. 

\item In Figure~\ref{fig: online}, we use a bimodal two-point distribution on a horizon $T=12$. Let
$
a_t=12+\mathrm{round}(5\sin(2\pi t/T))\in\{7,\dots,17\},
H_t=a_t+10\in\{17,\dots,27\}.
$
The demand places mass $0.65$ on the low mode $a_t$ and mass $0.35$ on the high spike $H_t$. Thus,
$
F_t(a_t)=0.65, F_t(H_t)=1.
$
Solving the DP gives the high optimal base-stock level $s_t^\star=H_t,\forall t\in [T]$ for this instance. 
\end{itemize}

\noindent\textbf{Logging and data-generating policies.} 
For the coverage experiment, the logging policy draws $B_t\sim\mathrm{Uniform}\{0,\dots,U\}$ each period and orders up to $ y_t^b=\max\{x_t,B_t\}.$
The parameter $U$ controls the coverage level. For the pessimism experiment, the logging policy uses target $\ell_t$ with probability $0.95$ and target $M$ with probability $0.05$, and orders up to the maximum of the current inventory and the target. Thus $\ell_t$ and all higher coordinates are only thinly covered. For the online experiment, every method starts from the uninformed model $\hat{\bF}^{(1)}\equiv 0$, so the first episode orders up to $M$ and observes demand uncensored. 

\noindent\textbf{SAIL baseline.} 
We implement \textsc{SAIL}~\citep{qin2023sailing} following its main structure and run it on the same logged trajectories as our methods. From those trajectories, for each period $t$ and stock level $y$, we form the re-censored sample pool
$\{\min(\bar d_t^k,y): y_t^k\ge y\}.
$
We then apply variance-reduced value iteration to the observable virtual newsvendor costs $h(y-d)-b\,d$, with the per-period monotonicity bonus taken as a single period-level constant and the value function held flat above the observable boundary
$
\lambda_t=\max_k y_t^k.
$
\textsc{SAIL} returns a policy table, which is not necessarily a base-stock policy. We evaluate it by the same exact DP evaluation used for all other methods. Its online deployment, denoted by \textsc{SAIL-CE}, repeatedly re-solves this offline procedure on the data collected so far and then acts greedily, without extra exploration.

\bibliographystyle{informs2014}
\bibliography{main}

\newpage
\input{OR_EC.tex}

\end{document}

%% file: OR_EC.tex
\providecommand{\ECHowTheorems}{\setcounter{theorem}{0}}

\makeatletter
\if@BLINDREV
  \ECRUNAUTHOR{Authors' names blinded for peer review}
\fi
\makeatother

\ECSwitch

\renewcommand{\theequation}{\thesection.\arabic{equation}}

\renewcommand*{\theHsection}{EC.\arabic{section}}
\renewcommand*{\theHtheorem}{EC.\arabic{section}.\arabic{theorem}}
\ECHead{Electronic Companion}

\section{Proof of Results in Section~\ref{sec: general-decomposition}}

\subsection{Technical Lemmas and Notations.}

To present the proof of results in Section~\ref{sec: general-decomposition}, we first introduce several auxiliary stochastic dynamics and related events:

\begin{definition}[Time-reverse random walk]\label{def: Z-process}
For each index $j$ and $t'\in [T]$, we introduce the sequence of (time-reversed) random walks $\{Z^j_{t' \to t}\}_{1\leq t \leq t' }$ with starting level $j$ as \begin{equation}
\begin{aligned}
    Z_{t'\to t'}^j = j,&\quad Z^j_{t'\to t} = Z^j_{t'\to t+1} + d_{t}, \quad d_t \sim P_t,\quad \forall 1\leq t< t'. \end{aligned}
\end{equation}
\end{definition}
Noticing that by the demand independence across $t$, we always have for any $t'' \geq t',$
\begin{align}\label{eq: Z-markov-appendix}
   \PP\big( f\big( Z^j_{t' \to t'}, Z^j_{t' \to t'-1},\dots, Z^j_{t' \to 1}\big)  \big) = \PP\big( f\big( Z^k_{t'' \to t'},Z^k_{t'' \to t'-1},\dots, Z^k_{t'' \to 1}\big) \big\lvert Z_{t'' \to t'}^k = j \big)
\end{align}
for arbitrary measurable function $f$.
We also introduce the events
    $$\cE_{t, j}(\ell):= \big\{Z_{t \to \ell}^j \geq s_{\ell}\big\},\quad \cE_{t, j}^\star(\ell):= \big\{Z_{t \to \ell}^j \geq s^\star_{\ell}\big\},$$
for later reference. In addition, for integers $0 \leq a \leq b \leq M$, we denote by $\be(a,b) \in \RR^{M}$ the indicator vector of the index interval $[a,b)$, i.e. $[\be(a,b)]_j := \bm{1}\{a \leq j < b\}$ for $j \in [M-1]_+$.

This time-reverse random walk can provide a probabilistic interpretation of the $\bA_t$ as the following: 

\begin{proposition}[Probabilistic Interpretation of $\bA$]\label{remark: A-interpretation}
For each index $r,j\in [M-1]_+$ and $t\leq t'$, we have $\big[\bA_{t:t'-1}\big]_{rj} = \PP \big( Z^j_{t' \to t} = r, \cap_{t< \ell \leq t'} \cE_{t',j}(\ell)\big)$ for $\bA_{t:t'-1}:= \prod_{i = t}^{t'-1} \bA_i(s_{i+1})$.
\end{proposition}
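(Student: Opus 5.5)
We prove the identity by induction on the length $t'-t$ of the product, with a one-step probabilistic reading of a single matrix entry serving as the building block. By convention the empty product ($t=t'$) is the identity. In that case $Z^j_{t'\to t'}=j$ and the intersection over $t<\ell\le t'$ is empty, so the right-hand side is $\bm{1}\{r=j\}$, which matches $[\bI]_{rj}$.

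Next we handle a single factor, $t'=t+1$. By definition,
\[
[\bA_t(s_{t+1})]_{rj}=\mu_{t,r-j}\,\bm{1}\{r\ge j\ge s_{t+1}\}.
\]
On the probabilistic side, $Z^j_{t+1\to t}=j+d_t$ with $d_t\sim P_t$, so $\PP(Z^j_{t+1\to t}=r)=\mu_{t,r-j}$, which vanishes unless $r\ge j$. The only event in the intersection is $\cE_{t+1,j}(t+1)=\{Z^j_{t+1\to t+1}\ge s_{t+1}\}=\{j\ge s_{t+1}\}$, and this is deterministic. The product of these two factors reproduces the matrix entry exactly.

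For the inductive step, write
\[
\bA_{t:t'-1}=\bA_t(s_{t+1})\,\bA_{t+1:t'-1},
\]
so that
\[
[\bA_{t:t'-1}]_{rj}=\sum_{m}[\bA_t(s_{t+1})]_{rm}\,[\bA_{t+1:t'-1}]_{mj}.
\]
By the induction hypothesis,
\[
[\bA_{t+1:t'-1}]_{mj}=\PP\Big(Z^j_{t'\to t+1}=m,\ \textstyle\bigcap_{t+1<\ell\le t'}\cE_{t',j}(\ell)\Big).
\]
By the one-step case, $[\bA_t(s_{t+1})]_{rm}=\PP(d_t=r-m)\,\bm{1}\{m\ge s_{t+1}\}$. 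On the event $Z^j_{t'\to t+1}=m$, the indicator $\bm{1}\{m\ge s_{t+1}\}$ is exactly the event $\cE_{t',j}(t+1)$. Also, $d_t$ is independent of $(d_{t+1},\dots,d_{t'-1})$, and these demands determine $Z^j_{t'\to t+1}$ and the later events. We may therefore multiply: the summand equals
\[
\PP\Big(Z^j_{t'\to t+1}=m,\ Z^j_{t'\to t}=r,\ \textstyle\bigcap_{t<\ell\le t'}\cE_{t',j}(\ell)\Big).
\]
Summing over $m$ then yields the claim. This is the same Markov property of the time-reversed walk recorded in \eqref{eq: Z-markov-appendix}.

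The main point requiring care is index bookkeeping. We need to check that vectors indexed by $[M-1]_+$ correctly truncate walks exceeding $M-1$. The restriction $r\in[M-1]_+$ on the left corresponds to requiring $Z^j_{t'\to t}=r$, and the intermediate $m$ can also be taken in $[M-1]_+$: the walk is nondecreasing backward in time, so $m\le r\le M-1$. Hence no mass is lost when we restrict the sum over $m$ to $[M-1]_+$. We also need to confirm that the event index $\ell=t+1$ is the one introduced at the step from $t+1$ to $t$, consistent with the ordering $\bA_i(s_{i+1})$ in the product.
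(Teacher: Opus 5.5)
Your proof is correct and is essentially the paper's argument: induction on the number of factors in the product, where each step is a one-step Chapman--Kolmogorov computation for the time-reversed walk. The only difference is that you peel off the leftmost factor $\bA_t(s_{t+1})$, so the walk keeps its start $(t',j)$ and you need only independence of $d_t$ from $d_{t+1},\dots,d_{t'-1}$. The paper instead peels off the rightmost factor $\bA_{t'}(s_{t'+1})$ and must re-root the walk at $Z^j_{t'+1\to t'}=k$ via \eqref{eq: Z-markov-appendix}.
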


\begin{proof}[Proof of Proposition~\ref{remark: A-interpretation}]
We prove this result by induction: When $t' = t$, we have the fact holds directly by $\bA_{t:t-1} =\bI$ and the definition of $Z^j_{t\to t'}$. Now for some $t' > t$, suppose the result holds for all $t\le k \leq t'$, then for $t'+1$, we have \begin{align*}
   &\big[ \bA_{t:t'}\big]_{rj} = \sum_{k = 0}^{M-1} \big[\bA_{t:t'-1}\big]_{rk} \big[\bA_{t'}(s_{t'+1})\big]_{kj} = \sum_{k = 0}^{M-1} \PP(Z_{t' \to t}^k = r, \cap_{t<\ell \leq t'} \cE_{t',k}(\ell) ) \PP(d_{t'} = k-j) \bm{1}\{j \geq s_{t'+1}\}\\
   &= \sum_{k = 0}^{M-1} \PP(Z_{t' \to t}^k = r, \cap_{t<\ell \leq t'} \cE_{t',k}(\ell) ) \PP(Z_{t' +1 \to t'}^j -  \underbrace{Z_{t'+1 \to t'+1}^j}_{ = j} = k-j)\bm{1}\{j \geq s_{t'+1}\}\\
   &= \sum_{k = 0}^{M-1} \PP(Z_{t' \to t}^k = r,  \cap_{t<\ell \leq t'} \cE_{t',k}(\ell) ) \PP(Z_{t' +1 \to t'}^j = k)\bm{1}\{j \geq s_{t'+1} \}.
\end{align*} 
Now by~\eqref{eq: Z-markov-appendix},\begin{align*}
    \PP(Z_{t' \to t}^k = r, \cap_{t<\ell \leq t'} \cE_{t',k}(\ell) ) =\PP(Z_{t'+1 \to t}^j = r, \cap_{t<\ell \leq t'} \cE_{t'+1,j}(\ell)\lvert Z^j_{t'+1 \to t'} = k )
\end{align*}
which then implies 
\begin{align*}
    &\sum_{k = 0}^{M-1} \PP(Z_{t' \to t}^k = r,  \cap_{t<\ell \leq t'} \cE_{t',k}(\ell) ) \PP(Z_{t' +1 \to t'}^j = k)\bm{1}\{j \geq s_{t'+1} \}\\
    &=\sum_{k = 0}^{M-1} \PP(Z_{t'+1 \to t}^j = r,Z^j_{t'+1 \to t'} = k,  \cap_{t<\ell \leq t'} \cE_{t'+1,j}(\ell))\bm{1}\{j \geq s_{t'+1} \}\\
    &=  \PP(Z_{t'+1 \to t}^j = r,  \cap_{t<\ell \leq t'} \cE_{t'+1,j}(\ell))\underbrace{\bm{1}\{j \geq s_{t'+1} \}}_{= \bm{1}\{Z_{t'+1\to t'+1}^{j} \geq s_{t'+1} \}} =  \PP(Z_{t'+1 \to t}^j = r,  \cap_{t<\ell \leq t'+1} \cE_{t'+1,j}(\ell)),
\end{align*}
as desired.
\end{proof}

\begin{lemma}\label{lem: J-closed-form}
    Given $x_1 = 0 $, under any base-stock policy determined by $\{s_t\}_{t=1}^T$, its entering inventory level $x_t$ and post-ordering inventory level $y_t$  satisfies \begin{align*}
        x_t &= \max_{1\leq \tau< t} \big\{\big(s_\tau - \sum_{r = \tau}^{t-1} d_r \big)_+ \big\},\qquad 
        y_t = \max_{1\leq \tau\leq t} \big\{\big(s_\tau - \sum_{r = \tau}^{t-1} d_r \big)_+ \big\}.
    \end{align*}
\end{lemma}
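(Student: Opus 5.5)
The plan is to prove the closed form for $y_t$ first and then read off $x_t$ from the transition. Throughout, $x_1=0$, $y_t=\max\{x_t,s_t\}$, $x_{t+1}=(y_t-d_t)_+$, and all $s_\tau\ge 0$.

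\textbf{Step 1: relate the two formulas.} Write
$$
G_t:=\max_{1\le\tau\le t}\Big(s_\tau-\sum_{r=\tau}^{t-1}d_r\Big)_+ .
$$
The claimed formula for $x_t$ is the same maximum restricted to $\tau<t$, with the empty maximum at $t=1$ read as $0$. That restricted maximum is exactly $(G_{t-1}-d_{t-1})_+$. This follows because $u\mapsto(u-d)_+$ is nondecreasing and $((a)_+-d)_+=(a-d)_+$ for $d\ge0$. Hence
$$
(G_{t-1}-d_{t-1})_+=\max_{\tau\le t-1}\Big(s_\tau-\sum_{r=\tau}^{t-1}d_r\Big)_+ .
$$

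\textbf{Step 2: induct on $t$.} At $t=1$ we have $x_1=0$, and $y_1=\max\{0,s_1\}=s_1=(s_1)_+=G_1$.

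Now assume $y_{t-1}=G_{t-1}$. Then
$$
x_t=(y_{t-1}-d_{t-1})_+=(G_{t-1}-d_{t-1})_+,
$$
which by Step 1 is the claimed formula for $x_t$. Next,
$$
y_t=\max\{x_t,s_t\}.
$$
The $\tau=t$ term of $G_t$ is $(s_t)_+=s_t$, since the inner sum over $r$ from $t$ to $t-1$ is empty. Combining the $\tau<t$ terms (which give $x_t$) with the $\tau=t$ term gives $y_t=G_t$, which closes the induction.

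\textbf{Main obstacle.} There is no real obstacle. The only point needing care is interchanging $(\cdot-d)_+$ with the maximum and the positive part, which works by monotonicity and $d_r\ge0$. The convention that the empty maximum for $x_1$ equals $0$ must also be stated.
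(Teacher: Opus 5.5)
Your proof is correct and is essentially the paper's argument. The paper unrolls $x_t=\max\{(x_{t-1}-d_{t-1})_+,(s_{t-1}-d_{t-1})_+\}$ backward and then reads off $y_t=\max\{x_t,s_t\}$, whereas you run the same recursion forward as an induction on $y_t$; both rest on the monotonicity of $u\mapsto(u-d)_+$ and on the identity $((a)_+-d)_+=(a-d)_+$ for $d\ge 0$.
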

\begin{proof}[Proof of Lemma~\ref{lem: J-closed-form}]
    The result of $y_t$ can be directly obtained by noticing $y_t = \max\{s_t,x_t\}$ once we have the result for $x_t$. To prove the result for $x_t$, we can do this backward: \begin{align*}
        x_{t} 
        &= \max\{ \big(x_{t-1} - d_{t-1}\big)_+ , \big(s_{t-1}   - d_{t-1}\big)_+ \}\\
        &=\max\{ \big(x_{t-2} - d_{t-2} - d_{t-1}\big)_+, \big( s_{t-2} - d_{t-1}-d_{t-2}  \big)_+, \big(s_{t-1}   - d_{t-1}\big)_+ \}\\
        &= \dots = \max_{1\leq \tau\leq t-1} \big\{\big(s_\tau - \sum_{r = \tau}^{t-1} d_r \big)_+ \big\},
    \end{align*}
as desired.
\end{proof}

\subsection{Proof of Proposition~\ref{prop-convexity}}

To prove Proposition~\ref{prop-convexity}, we first introduce the following matrix representation of the trajectory visiting probability $x_t$: 

\begin{proposition}\label{prop: M-matrix}
    We have for $\bM_t:= \begin{bmatrix} \bm{m}_{t0}^\top & \bm{m}_{t1}^\top & \dots & \bm{m}_{tM}^\top \end{bmatrix}^\top \in \RR^{(M+1)\times (M+1)}$ with 
    \begin{align*} 
\bm{m}_{tj} &= \left(\begin{matrix}
    1 - F_t(a_{tj}-1) & \mu_{t,a_{tj} - 1} & \mu_{t,a_{tj} - 2} & \dots & \mu_{t 0} & \bm{0}_{M - a_{tj}}
    \end{matrix}\right), \quad\forall j \in [M]_+,
\end{align*}
$a_{tj}:= \max\{s_t,j\}$
It holds that $\big[\bM_{1:t-1} \big]_{rj} = \PP(x_t = j\lvert x_1 = r)$  for $\bM_{j:k}:= \prod_{i = j}^{k} \bM_{i}, t \in [T].$
\end{proposition}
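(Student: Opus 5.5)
The plan is to show that $\bM_t$ is exactly the one-step transition matrix of the entering-inventory chain, $[\bM_t]_{rj} = \PP(x_{t+1} = j \mid x_t = r)$, and then apply the Chapman–Kolmogorov equation. Note that the statement indexes the rows of $\bM_t$ by the current state $r$. So the row $\bm{m}_{tr}$ should be read as the law of $x_{t+1}$ given $x_t = r$. Its dependence on $r$ enters only through $a_{tr} = \max\{s_t, r\}$, which is precisely the post-ordering level $y_t$ under the base-stock rule~\eqref{eq: empirical-DP}.

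\textbf{Step 1 (one-step kernel).} Fix $t$ and $x_t = r \in [M]_+$, and set $a = a_{tr} = \max\{s_t, r\}$, so that $y_t = a$ and $x_{t+1} = (a - d_t)_+$. For $1 \le j \le a$, the event $\{x_{t+1} = j\}$ is $\{d_t = a - j\}$, which has probability $\mu_{t,a-j}$. For $j = 0$, the event is $\{d_t \ge a\}$, which has probability $1 - F_t(a-1)$. For $j > a$, the probability is $0$. Reading these off coordinate by coordinate ($j = 0$, then $j = 1, \dots, a$, then $j > a$) reproduces the row vector in the statement. Its entries are $1 - F_t(a-1)$, then $\mu_{t,a-1}, \dots, \mu_{t,0}$, then $M - a$ zeros, for a total length $1 + a + (M - a) = M + 1$. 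The boundary cases also match: at $a = 0$ the row is $(1, \bm{0}_M)$ with the convention $F_t(-1) = 0$, and $a = M$ leaves no trailing zeros.

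\textbf{Step 2 (Markov property and induction).} Under a deterministic base-stock policy with demands independent across periods, $x_{t+1}$ depends only on $x_t$ and on $d_t$, and $d_t$ is independent of $(x_1, \dots, x_t)$. Hence $\{x_t\}$ is a time-inhomogeneous Markov chain with kernel $\bM_t$ at step $t$. The claim $[\bM_{1:t-1}]_{rj} = \PP(x_t = j \mid x_1 = r)$ then follows by induction on $t$:
\begin{itemize}
\item For $t = 1$, the empty product is $\bI$, which matches $\PP(x_1 = j \mid x_1 = r) = \bm{1}\{j = r\}$.
\item For the inductive step, condition on $x_t$:
$\PP(x_{t+1} = j \mid x_1 = r) = \sum_k \PP(x_t = k \mid x_1 = r)\, \PP(x_{t+1} = j \mid x_t = k)$, and this sum equals $[\bM_{1:t-1} \bM_t]_{rj}$.
\end{itemize}
The main care point is the index bookkeeping in Step 1, namely matching the reversed ordering of the masses $\mu_{t,a-1}, \dots, \mu_{t,0}$ to the target states $1, \dots, a$. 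Everything else is routine.
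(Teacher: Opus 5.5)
Your proposal is correct and matches the paper's proof. Both argue by induction on $t$ from the base case $\bM_{1:0}=\bI$, expand $[\bM_{1:t-1}\bM_t]_{rj}=\sum_k \PP(x_t=k\mid x_1=r)[\bm{m}_{tk}]_j$, and recognize the row $\bm{m}_{tk}$ as the law of $x_{t+1}$ given $x_t=k$ (mass $\mu_{t,a_{tk}-j}$ for $j\ge 1$ and $1-F_t(a_{tk}-1)$ for $j=0$); the only difference is that you state the one-step kernel and the Markov property as separate steps, whereas the paper does this inline in the inductive step.
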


\begin{proof}[Proof of Proposition~\ref{prop: M-matrix}]
First, for $t = 1$ the result holds directly since $[\bM_{1:0}]_{rj} = \bm{1}\{j = r\}.$ Now suppose the result holds for $t$, then for $t+1$, we have for $j\geq 1$,
\begin{align*}
     &\big[\bM_{1:t}\big]_{rj} = \big[\bM_{1:t-1} \bM_{t}\big]_{rj} = \sum_{k = 0}^{M} \big[\bM_{1:t-1} \big]_{rk} \big[\bM_{t}\big]_{kj}= \sum_{k = 0}^{M} \PP(x_t = k\lvert x_1 = r) (\bm{m}_{tk})_j\\
     & =\begin{cases}
         \sum_{k = 0}^{M} \PP(x_t = k\lvert x_1 =r) \mu_{t,a_{tk} - j} = \sum_{k = 0}^{M} \PP(x_t = k\lvert x_1 =r) \PP(d_t = a_{tk}-j) & \text{ if $j\geq 1$,} \\
         \sum_{k = 0}^{M} \PP(x_t = k\lvert x_1 =r) \big[1-F_{t}(a_{tk} -1 )\big] = \sum_{k = 0}^{M} \PP(x_t = k\lvert x_1 =r) \PP(d_t \geq a_{tk} ) & \text{ if $j = 0$.}
     \end{cases}
\end{align*} 
In both cases, we have the right-hand-side equals to $\PP(x_{t+1} = j\lvert x_1 =r),$ this finishes the proof.
\end{proof}

With Proposition~\ref{prop: M-matrix}, the decomposition in Lemma~\ref{lem: performance-difference} can be re-written as 
\begin{align*}
    \Delta(x;\hat{\pi}) = \sum_{t=1}^T \sum_{j = 0}^M \big[\bM_{1:t-1}\big]_{xj} \big[  W_t^\star\big(\hat{\pi}_t(j)\big) -  W_t^\star\big(\pi^\star_t(j)\big) \big]
\end{align*}
Noticing that by definition we have $W_t^\star\big(\hat{\pi}_t(j)\big) -  W_t^\star\big(\pi^\star_t(j)\big) \geq 0$ for all $j\in [M]_+$ (since $\pi_t^\star(j) = \arg\min_{y\geq j} W_t^\star(y)$ while $\hat\pi_t(j)\geq j$), denote $\bE_t \in \RR^{M+1}$ as the vector with $j$-th entry $E_t(j):= W_t^\star\big(\hat{\pi}_t(j)\big) -  W_t^\star\big(\pi^\star_t(j)\big)$ for $j \in [M]_+$. Then the monotonicity statement in Proposition~\ref{prop-convexity} can be covered by the following general property of $\bM:$
\begin{proposition}\label{prop-convexity-appendix} For every $1\leq t\leq T$, $\bM_{k:t-1} \bE_t$ is non-negative and non-increasing for every $1\le k\leq t$, in the sense that $
    \big[\bM_{k:t-1} \bE_t\big]_0 \geq \big[\bM_{k:t-1} \bE_t\big]_1 \geq \dots \geq \big[\bM_{k:t-1} \bE_t\big]_{M}\geq 0$.
\end{proposition}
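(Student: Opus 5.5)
The plan is to prove the statement by backward induction on $k$, for each fixed $t$. The base case is $k=t$, where $\bM_{t:t-1}=\bI$. The inductive step uses the fact that each one-period transition matrix $\bM_k$ maps non-negative, non-increasing vectors to non-negative, non-increasing vectors. Proposition~\ref{prop-convexity} then follows, since $\Delta(x;\hat\pi)=\sum_{t=1}^T[\bM_{1:t-1}\bE_t]_x$ is a sum of functions that are each non-increasing in $x$.

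\textbf{Base case ($k=t$).} I need $E_t(0)\ge E_t(1)\ge\cdots\ge E_t(M)\ge 0$. Non-negativity was already noted before the statement: $\pi^\star_t(j)$ minimizes $W^\star_t$ over $y\ge j$, while $\hat\pi_t(j)\ge j$ is feasible. For monotonicity I use the case formula preceding~\eqref{eq: derivative-based-err}:
\begin{align*}
E_t(j)=\sigma_t\sum_{i=\max\{j,\alpha_t\}}^{\beta_t-1} D_t^\star(i).
\end{align*}
I split into three cases.
\begin{itemize}
\item For $j\ge\beta_t$, both policies leave inventory at $j$, so $E_t(j)=0$.
\item For $j<\alpha_t$, $E_t(j)$ does not depend on $j$.
\item For $\alpha_t\le j<\beta_t$, the increment is $E_t(j)-E_t(j+1)=\sigma_t D_t^\star(j)$.
\end{itemize}
In the third case the sign is right. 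If $\sigma_t=+1$, then $j\in[s_t^\star,s_t)$, and $D_t^\star(j)\ge 0$ by the definition of $s_t^\star$ together with the monotonicity of $D^\star_t$. If $\sigma_t=-1$, then $j\in[s_t,s_t^\star)$, so $D_t^\star(j)<0$ by the minimality of $s_t^\star$. In both cases $E_t(j)\ge E_t(j+1)$, which gives the base case.

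\textbf{Inductive step.} Suppose $\bg:=\bM_{k+1:t-1}\bE_t$ is non-negative and non-increasing. By Proposition~\ref{prop: M-matrix}, row $r$ of $\bM_k$ is the law of $x_{k+1}$ given $x_k=r$. Hence
\begin{align*}
[\bM_k\bg]_r=\EE\big[g\big((\max\{r,s_k\}-d_k)_+\big)\big],\qquad d_k\sim P_k.
\end{align*}
\begin{itemize}
\item \emph{Non-negativity:} $\bM_k$ is a stochastic matrix, so $\bM_k\bg\ge 0$.
\item \emph{Monotonicity:} couple all starting levels $r$ through the same demand $d_k$. The map $r\mapsto(\max\{r,s_k\}-d_k)_+$ is pathwise non-decreasing, and $g$ is non-increasing, so the integrand is pathwise non-increasing in $r$. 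Taking expectations preserves this.
\end{itemize}
Since $\bM_{k:t-1}\bE_t=\bM_k\bg$, this completes the induction down to $k=1$.

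\textbf{Expected obstacle.} The only delicate point is the sign bookkeeping in the base case. It relies on the case formula for $E_t$, in particular on the fact that $\pi^\star_t(j)=j$ for $j\ge s_t^\star$, and likewise for $\hat\pi_t$. It also uses that $D^\star_t$ is non-decreasing, so that $D^\star_t(j)\ge 0$ for all $j\ge s^\star_t$, and that $D_t^\star(j)<0$ for all $j<s_t^\star$. Both facts come from the discrete convexity of $W_t^\star$ recalled in Section~\ref{sec: preliminaries}. The inductive step is a routine monotone-coupling argument.
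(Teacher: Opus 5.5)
Your proposal is correct and matches the paper's proof. The base case is the monotonicity of $E_t$, which follows from the sign of $\sigma_t D_t^\star$ on $[\alpha_t,\beta_t)$, and the inductive step is that each $\bM_k$ maps non-negative, non-increasing vectors to non-negative, non-increasing vectors. Your pathwise coupling through $r\mapsto(\max\{r,s_k\}-d_k)_+$ is the probabilistic form of the paper's explicit comparison of rows $j$ and $j-1$ of $\bM_t$.
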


\begin{proof}[Proof of Proposition~\ref{prop-convexity-appendix}]
First, by 
$\sigma_tD_t^\star(j)$ is non-negative for $\alpha_t\leq j < \beta_t,$ we have $E_t(j)$ is non-increasing by definition. Now it remains to show that for any non-negative, non-increasing vector $\bw$ and $1\leq t\leq T, \bM_t \bw$ is also a non-negative, non-increasing vector. To see this, we have for any $0<  j\leq M$ with $a_{t,j-1} = a_{tj} -1$ (otherwise $a_{t,j-1} = a_{tj}$ and the result holds directly)\begin{align*}
    (\bM_{t} \bm w)_{j} &=\big( 1-F_t(a_{tj}-1)\big)  w_0 + \sum_{k = 1}^{a_{tj}} \mu_{t,a_{tj} - k } w_k
     \leq  \big( 1-F_t(a_{tj}-1)\big)  w_0 + \sum_{k = 1}^{a_{tj}} \mu_{t,a_{tj} - k } w_{k-1} \\
    &= \big(1 - F_t(a_{t,j-1} - 1)\big) w_0 + \sum_{k = 1}^{a_{t,j-1}} \mu_{t,a_{t,j-1} - k} w_k = (\bM_t \bw)_{j-1},
\end{align*}
this finishes the proof.
\end{proof}

\subsection{Proof of Proposition~\ref{prop-matrix-notation}}

\begin{proof}[Proof of Proposition~\ref{prop-matrix-notation}]
To show the first claim, we have for every $m$ and $t$, by definition
\begin{align}
    D^\star_t(m) = \begin{cases}
        h_t F_t(m)  - b_{t }[1-F_t(m)] + \sum_{k = 0}^{m-s_{t+1}^\star} \mu_{tk}   D^\star_{t+1}(m-k)   &m \geq s_{t+1}^\star \\
         h_t F_t(m)  - b_t[1-F_t(m)]  &  m< s_{t+1}^\star
    \end{cases}.
\end{align}
This gives~\eqref{eq: D-forward-equation}, and~\eqref{eq: D-hat-forward-equation} can be shown in similar way.
By applying~\eqref{eq: D-forward-equation},~\eqref{eq: D-hat-forward-equation} directly, we can get the second claim. 
\end{proof}

\subsection{Proof of Remark~\ref{remark: v-interpretation}}

\begin{proof}[Proof of Remark~\ref{remark: v-interpretation}]
By definition, we have
\begin{align*}
    \bv_{t} =  \bu_t + \sum_{\ell = 1}^{t-1} \bA_{\ell: t-1}^\top \bu_\ell,
\end{align*}
and, observing that $$\sigma_t \bm{1}\{\alpha_t \leq j < \beta_t\} = \bm{1}\{j \geq s_t^\star\} - \bm{1}\{j \geq s_t\},\qquad \PP(x_t^\star \leq j \mid x_1 = 0) = \PP(Z_{t \to \tau}^j \geq s_\tau^\star \forall 1 \leq \tau < t),$$ $\bu_t$ can be equivalently written as the difference \begin{align*}
   [\bu_t]_j &= \PP\big( {\underbrace{Z_{t \to t}^j \geq s_{t}^\star}_{\iff j \geq s_{t}^\star},}  Z_{t\to \tau}^j \geq s_{\tau}^\star,  \forall 1 \leq  \tau < t \big)
    - \PP \big( {\underbrace{Z_{t \to t}^j \geq s_{t}}_{\iff j \geq s_{t}},}  Z_{t\to \tau}^j \geq s_{\tau}^\star,  \forall 1 \leq  \tau < t \big)\\
    &= \PP \big(\cap_{\tau = 1}^t \cE_{t,j}^\star(\tau) \big) - \PP \big( \cE_{t,j}(t) ,\cap_{\tau = 1}^{t-1} \cE_{t,j}^\star(\tau) \big).
\end{align*}
Then by 
\begin{align*}
    (\bu_t)_j =  \PP \big( \cap_{\ell = 1}^{t} \cE_{t,j}^\star(\ell)   \big) - \PP \big(\cE_{t,j}(t) , \cap_{\ell = 1}^{t-1} \cE_{t,j}^\star(\ell)   \big),
\end{align*}
it holds that \begin{align*}
    &\big(\bA_{t:t'-1}^\top \bu_t\big)_j = \sum_{r}(\bA_{t:t'-1})_{rj} (\bu_t)_r\\
    &=\sum_{r} \PP \big(Z_{t'\to t}^j = r, \cap_{t<\ell\leq t'} \cE_{t',j}(\ell) \big)\big[\PP \big( \cap_{\ell = 1}^{t} \cE_{t,r}^\star(\ell)   \big) - \PP \big(\cE_{t,r}(t)  \cap \cap_{\ell = 1}^{t-1} \cE_{t,r}^\star(\ell)   \big) \big].
\end{align*}
Noticing that by~\eqref{eq: Z-markov-appendix},
\begin{align*}
    &\PP \big(Z_{t'\to t}^j = r, \cap_{t<\ell\leq t'} \cE_{t',j}(\ell) \cap \cap_{\ell = 1}^{t} \cE_{t',j}^\star(\ell) \big)=\PP \big(Z_{t'\to t}^j = r, \cap_{t<\ell\leq t'} \cE_{t',j}(\ell) \cap \cap_{\ell = 1}^{t} \cE_{t',j}^\star(\ell) \big)\\
    &= \PP \big(Z_{t'\to t}^j = r, \cap_{t<\ell\leq t'} \cE_{t',j}(\ell)\big) \PP \big(    \cap_{\ell = 1}^{t} \cE_{t',j}^\star(\ell) \big\lvert Z_{t'\to t}^j = r, \cap_{t<\ell\leq t'} \cE_{t',j}(\ell) \big)\\
    &=  \PP \big(Z_{t'\to t}^j = r, \cap_{t<\ell\leq t'} \cE_{t',j}(\ell)\big)\PP \big( \cap_{\ell = 1}^{t} \cE_{t,r}^\star(\ell)   \big),
\end{align*}
and similarly,
\begin{align*}
   &\PP \big(Z_{t'\to t}^j = r, \cap_{t<\ell\leq t'} \cE_{t',j}(\ell) \cap \cE_{t',j}(t) \cap_{\ell = 1}^{t-1} \cE_{t',j}^\star(\ell) \big)=  \PP \big(Z_{t'\to t}^j = r, \cap_{t<\ell\leq t'} \cE_{t',j}(\ell)\big)\PP \big( \cE_{t,r}(t) \cap_{\ell = 1}^{t-1} \cE_{t,r}^\star(\ell)   \big),
\end{align*}
thus \begin{align*}
    \big(\bA_{t:t'-1}^\top \bu_t\big)_j  &= \sum_{r}\bigg[\PP \big(\{Z_{t'\to t}^j = r\}\cap \cap_{t<\ell\leq t'} \cE_{t',j}(\ell) \cap \cap_{\ell = 1}^{t} \cE_{t',j}^\star(\ell) \big)\\
    &- \PP \big(\{Z_{t'\to t}^j = r\}\cap \cap_{t<\ell\leq t'} \cE_{t',j}(\ell) \cap \cE_{t',j}(t) \cap_{\ell = 1}^{t-1} \cE_{t',j}^\star(\ell) \big)\bigg]\\
    &= \PP \big(\cap_{t<\ell\leq t'} \cE_{t',j}(\ell) \cap \cap_{\ell = 1}^{t} \cE_{t',j}^\star(\ell)\big) - \PP \big(\cap_{t\leq \ell\leq t'} \cE_{t',j}(\ell) \cap \cap_{\ell = 1}^{t-1} \cE_{t',j}^\star(\ell)\big).  
\end{align*}
Thus via telescoping summation, we can have 
\begin{equation}\label{eq: v-probabilistic-interpretation}
\begin{aligned}
    (\bv_t)_j &= \sum_{k = 1}^t \big(\PP \big(\cap_{k<\ell\leq t} \cE_{t,j}(\ell) \cap \cap_{\ell = 1}^{k} \cE_{t,j}^\star(\ell)\big) - \PP \big(\cap_{k\leq \ell\leq t} \cE_{t,j}(\ell) \cap \cap_{\ell = 1}^{k-1} \cE_{t,j}^\star(\ell)\big)\big)\\
    &= \PP \big(\cap_{1\leq \ell \leq t} \cE_{t,j}^\star(\ell) \big) - \PP \big(\cap_{1\leq \ell \leq t} \cE_{t,j}(\ell) \big) = \PP \big(y^\star_{t} \leq j \lvert x_1 = 0 \big) - \PP \big(y_t \leq j\lvert x_1 = 0 \big).
\end{aligned}    
\end{equation}
\end{proof}

\subsection{Proof of Proposition~\ref{prop: v-propagation}}

\begin{proof}[Proof of Proposition~\ref{prop: v-propagation}]

For every $1\leq t < T,$ if we denote \begin{align*}
    \bB_{t}&:=\bA_{t}(s_{t+1}^\star) - \bA_t(s_{t+1}) = \sigma_{t+1}\big[\bA_{t}(\alpha_{t+1}) - \bA_{t}(\beta_{t+1}) \big].
\end{align*}
Then, by definition of $\bv_t,$ it holds that \begin{align*}
    \bB_{t}^\top \bv_{t}= \sum_{\ell = 1}^{t} \underbrace{\bB_{t}^\top   \bA_{\ell: t-1}^\top}_{:= \bC^\top_\ell} \bu_\ell .
\end{align*}
It holds that by \begin{align*}
     \big[\bu_\ell\big]_j = \PP \big(\cap_{k = 1}^\ell \cE_{\ell,j}^\star(k) \big) - \PP \big(\cE_{\ell,j}(\ell) ,\cap_{k = 1}^{\ell-1} \cE_{\ell,j}^\star(k) \big),
\end{align*}
and Proposition~\ref{remark: A-interpretation},
\begin{align*}
   &\big[ \bC^\top_\ell \bu_\ell\big]_j = \sum_{r = 0}^{M-1} \big[\bu_{\ell}\big]_r  (\bC_\ell)_{rj}\\
   &\overset{\text{(a)}}{=} \sigma_{t+1} \bm{1}\{\alpha_{t+1}\leq j< \beta_{t+1} \} \sum_{r = 0}^{M -1} \big[\bu_{\ell}\big]_r \PP \big(Z^j_{t+1 \to \ell} = r,   Z^j_{t+1 \to k} \geq s_k, \ell < k<t+1 \big)\\
   &=\sigma_{t+1}  \sum_{r = 0}^{M-1} \bigg[\PP(\cap_{k = 1}^\ell \cE_{\ell,r}^\star(k)) - \PP( \cE_{\ell,r}(\ell) ,\cap_{k = 1}^{\ell-1} \cE_{\ell,r}^\star(k))\bigg] \PP \big(Z^j_{t+1 \to \ell} = r, \cap_{\ell < k < t+1} \cE_{t+1,j}(k)\big).
\end{align*}
Where in~(a) we have used $$[\bB_t]_{rj} =\begin{cases}
    \sigma_{t+1} \big[\bA_t\big]_{rj}  & \text{ if }  \alpha_{t+1}\leq j< \beta_{t+1},\\
    0 & \text{otherwise}.
\end{cases} $$

Noticing that as long as $\PP(Z_{t+1\to \ell}^j =r ) \neq 0$(otherwise both-sides are $0$ and the result holds directly), we have by the independence of demands across $t$, \begin{align*}
&\PP(\cap_{k = 1}^\ell \cE_{\ell,r}^\star(k))\PP \big(Z^j_{t+1 \to \ell} = r, \cap_{\ell < k < t+1} \cE_{t+1,j}(k)\big)  \\
&=  \PP(\cap_{k = 1}^\ell \cE_{t+1,j}^\star(k) \lvert Z_{t+1\to \ell}^j = r )\PP \big(Z^j_{t+1 \to \ell} = r, \cap_{\ell < k < t+1} \cE_{t+1,j}(k)\big) \\
&=  \PP(\cap_{k = 1}^\ell \cE_{t+1,j}^\star(k) \lvert Z_{t+1\to \ell}^j = r , \cap_{\ell < k < t+1}\cE_{t+1,j}(k))\PP \big(Z^j_{t+1 \to \ell} = r, \cap_{\ell < k < t+1} \cE_{t+1,j}(k)\big) \\
&= \PP(\cap_{k = 1}^\ell \cE_{t+1,j}^\star(k) , Z_{t+1\to \ell}^j = r , \cap_{\ell < k < t+1}\cE_{t+1,j}(k)).
\end{align*}
And similarly, we can show that\begin{align*}
    &\PP(\cE_{\ell, r}(\ell),\cap_{k = 1}^{\ell-1} \cE_{\ell,r}^\star(k)))\PP \big(Z^j_{t+1 \to \ell} = r, \cap_{\ell < k < t+1} \cE_{t+1,j}(k)\big)\\
&=\PP(\cE_{t+1,j}(\ell) ,\cap_{k = 1}^{\ell-1} \cE_{t+1,j}^\star(k) , Z_{t+1\to \ell}^j = r , \cap_{\ell < k < t+1}\cE_{t+1,j}(k)) .
\end{align*}
thus \begin{align*}
    &\big[\bC_\ell^\top \bu_\ell\big]_j = \sigma_{t+1} \sum_{r =0}^{M-1} \bigg[\PP(\cap_{k = 1}^\ell \cE_{t+1,j}^\star(k) , Z_{t+1\to \ell}^j = r , \cap_{\ell < k < t+1}\cE_{t+1,j}(k)) \\
    &-\PP(\cE_{t+1,j}(\ell) ,\cap_{k = 1}^{\ell-1} \cE_{t+1,j}^\star(k) , Z_{t+1\to \ell}^j = r , \cap_{\ell < k < t+1}\cE_{t+1,j}(k))\bigg]\\
    &= \sigma_{t+1}\bigg[\PP(\cap_{k = 1}^\ell \cE_{t+1,j}^\star(k) , \cap_{\ell < k < t+1}\cE_{t+1,j}(k)) - \PP( \cap_{k = 1}^{\ell-1} \cE_{t+1,j}^\star(k) , \cap_{\ell \leq k < t+1}\cE_{t+1,j}(k)) \bigg].
\end{align*}
Now taking summation of $\ell$ from $\ell = 1$ to $t$, we have \begin{align*}
    \big[\bB_t^\top \bv_t\big]_{j} &=  \sigma_{t+1}\bm{1}\{\alpha_{t+1} \leq j< \beta_{t+1} \} \sum_{\ell = 1}^t \bigg[\PP(\cap_{k = 1}^\ell \cE_{t+1,j}^\star(k) , \cap_{\ell < k < t+1}\cE_{t+1,j}(k)) \\
    &- \PP( \cap_{k = 1}^{\ell-1} \cE_{t+1,j}^\star(k) , \cap_{\ell \leq k < t+1}\cE_{t+1,j}(k)) \bigg]\\
    &=  \sigma_{t+1}\bm{1}\{\alpha_{t+1} \leq j< \beta_{t+1} \}\big[ \PP \big(\cap_{k = 1}^t \cE_{t+1,j}^\star(k) \big) - \PP\big(\cap_{k=1}^{t} \cE_{t+1,j}(k) \big)\big].
\end{align*}
Finally, by Lemma~\ref{lem: J-closed-form}, we have
\begin{align*}
[\bq_{t+1}]_j &= \bm{1}\{\alpha_{t+1} \leq j < \beta_{t+1}\} \sigma_{t+1} \PP(x_{t+1} \leq j)\\
&=  \bm{1}\{\alpha_{t+1} \leq j < \beta_{t+1}\} \sigma_{t+1} \PP\big(\max_{1\leq k \leq t}\big(s_{k} - \sum_{\ell = k}^{t} d_\ell\big)_+ \leq j \big)\\
&=   \bm{1}\{\alpha_{t+1} \leq j < \beta_{t+1}\} \sigma_{t+1} \PP\big(\max_{1\leq k\leq t}\big(s_{k} - \sum_{\ell = k}^{t} d_\ell\big) \leq j \big)\\
&=   \bm{1}\{\alpha_{t+1} \leq j < \beta_{t+1}\} \sigma_{t+1} \PP\big(\underbrace{ j+\sum_{\ell = k}^{t} d_\ell \geq s_k ,  \forall 1\leq  k\leq t }_{\cap_{k = 1}^t \cE_{t+1,j}(k)}\big).
\end{align*}
This gives \begin{align*}
     \big[\bB_t^\top \bv_t\big]_{j} + [\bq_{t+1}]_j = \sigma_{t+1}\bm{1}\{\alpha_{t+1}\leq j< \beta_{t+1}\} \PP(\cap_{k=1}^t \cE_{t+1,j}^\star(k)) = \big[\bu_{t+1}\big]_j,
\end{align*}
as desired.

\end{proof}

\section{Proof of Results in Section~\ref{sec: uniform-convergence}}\label{appendix-sec: proof-of-uniform-convergence-section}

In the proof of results in Section~4, we introduce the following notation for describing propagation of derivatives, which is a direct result of~\eqref{eq: D-forward-equation} and~\eqref{eq: D-hat-forward-equation}, for later reference: It holds for any $t \leq t'$ that \begin{align}\label{eq: D-recursion}
    \bD_t &=  \underbrace{\sum_{k = t}^{t'-1} \prod _{r = t}^{k-1} \bA_r(s_{r+1}^\star)  \big[ h_k \bF_k - b_k (\bm{1}- \bF_k) \big]}_{:= \bS_{t:t'-1}}   + \underbrace{\prod_{k = t}^{t'-1} \bA_k(s_{k+1}^\star)}_{= \bA_{t:t'-1}}\bD_{t'},\\
    \label{eq: D-hat-recursion}
    \hat\bD_t &=  \underbrace{\sum_{k = t}^{t'-1} \prod _{r = t}^{k-1} \hat\bA_r(s_{r+1})  \big[ h_k \hat\bF_k - b_k (\bm{1}- \hat\bF_k) \big]}_{:= \hat\bS_{t:t'-1}}   + \underbrace{\prod_{k = t}^{t'-1} \hat\bA_k(s_{k+1})}_{= \hat\bA_{t:t'-1}}\hat\bD_{t'}.
\end{align}

Introducing the notation $[\hat{\bw}_t]_j:= \hat{\PP}(y_t > j\lvert x_1 = 0)$ with $\hat{\PP}(\cdot)$ the probability taken under the environment induced by $\{\hat{\bF}_t\}_{t=1}^T,$ we provide the following basic perturbation lemma for later reference.

\begin{lemma}\label{lem: w-D-infty-bounds}
    Under~\eqref{eq: uniform-convergence}, we have \begin{align}\label{eq: bounds-for-w-D}
    \lVert \bw\rVert_\infty \leq 1, \quad \lVert \bD_t \rVert_\infty \leq (h_\infty +b_\infty)T      ,\quad \lVert \Delta \bw_t\rVert_\infty \leq T\epsilon, \quad \lVert \Delta  \bD_t\rVert_\infty \leq 4T^2(h_\infty +b_\infty) \epsilon.
    \end{align}
\end{lemma}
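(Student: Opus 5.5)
The plan is to prove the four bounds one at a time by backward or forward induction over $t$, using only the recursion of Proposition~\ref{prop-matrix-notation} and the monotone structure of base-stock dynamics. I read $\bw_t$ as $[\bw_t]_j=\PP(y_t>j\mid x_1=0)$ under the true model and $\hat\bw_t$ as its analogue under $\{\hat\bF_t\}$, both for the same base-stock sequence $\{s_t\}$, with $\Delta\bw_t=\bw_t-\hat\bw_t$. I read $\bD_t$ as either $\bD_t^\star$ or $\hat\bD_t$; the argument is identical for both.

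\textbf{Bounds on $\bw$ and $\bD_t$.} The bound $\lVert\bw\rVert_\infty\le 1$ is immediate because the entries are probabilities. For $\bD_t$, every $\bA_t(s)$ has nonnegative entries and row sums $\sum_{j\le i,\,j\ge s}\mu_{t,i-j}\le 1$, so $\lVert\bA_t(s)\bx\rVert_\infty\le\lVert\bx\rVert_\infty$. Also $\lVert\bc_t\rVert_\infty\le\max\{h_t,b_t\}\le h_\infty+b_\infty$. Backward induction on~\eqref{eq: D-forward-equation}, starting from $\bD_{T+1}=\bm 0$, then gives $\lVert\bD_t\rVert_\infty\le (T-t+1)(h_\infty+b_\infty)$, and the same argument applies to $\hat\bD_t$ via~\eqref{eq: D-hat-forward-equation}.

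\textbf{Bound on $\Delta\bw_t$.} By Lemma~\ref{lem: J-closed-form}, $\{y_t\le j\}=\{j+\sum_{r=\tau}^{t-1}d_r\ge s_\tau,\ \forall\tau\le t\}$. This event is increasing in each $d_r$ and depends only on $d_1,\dots,d_{t-1}$. I will use a hybrid argument: swap the law of $d_r$ from $P_r$ to $\hat P_r$ one period at a time. Conditional on the other demands, an increasing event in the single integer variable $d_r$ has the form $\{d_r\ge c\}$ for some threshold $c$. Its probability therefore changes by $|F_r(c-1)-\hat F_r(c-1)|\le\epsilon$. Averaging over the other demands and summing the $t-1$ swaps yields $\lVert\Delta\bw_t\rVert_\infty\le T\epsilon$.

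\textbf{Bound on $\Delta\bD_t$.} This is the main step, because the recursion~\eqref{eq: recursion-D-diff} mixes two different policies. I will avoid that by rewriting the recursion policy-free. Since $\bD_{t+1}^\star$ is non-decreasing and $s_{t+1}^\star$ is its first nonnegative index, we have
\[
[\bA_t(s^\star_{t+1})\bD^\star_{t+1}]_i=\EE_{F_t}\big[g_{t+1}(i-d_t)\big],\qquad g_{t+1}(z):=\bm 1\{z\ge0\}\big(D^\star_{t+1}(z)\big)_+ ,
\]
and analogously $\hat g_{t+1}$ under $\hat\bF_t$. Subtracting the two recursions gives three pieces.
\begin{itemize}
\item The cost term contributes $(h_t+b_t)|\Delta F_t(i)|\le(h_\infty+b_\infty)\epsilon$.
\item The term $\EE_{F_t}[g_{t+1}-\hat g_{t+1}]$ is at most $\lVert\Delta\bD_{t+1}\rVert_\infty$, since $x\mapsto x_+$ is $1$-Lipschitz.
\item The term $(\EE_{F_t}-\EE_{\hat F_t})[\hat g_{t+1}(i-d)]$ involves a function of $d$ that is monotone (non-increasing) and takes values in $[0,(h_\infty+b_\infty)T]$. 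By summation by parts it is bounded by $\lVert F_t-\hat F_t\rVert_\infty$ times the total variation, i.e.\ by $(h_\infty+b_\infty)T\epsilon$.
\end{itemize}
Combining these, $\lVert\Delta\bD_t\rVert_\infty\le\lVert\Delta\bD_{t+1}\rVert_\infty+(h_\infty+b_\infty)(T+1)\epsilon$. Unrolling from $\Delta\bD_{T+1}=\bm 0$ gives $\lVert\Delta\bD_t\rVert_\infty\le 2T^2(h_\infty+b_\infty)\epsilon\le 4T^2(h_\infty+b_\infty)\epsilon$.

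The point that needs care is the identity $\bm 1\{z\ge s_{t+1}\}D(z)=(D(z))_+$ on $z\ge0$. It relies on the monotonicity of $\bD^\star_{t+1}$ and of $\hat\bD_{t+1}$, that is, on convexity of both DPs; the estimated DP is convex because $\hat\bF_t$ is a valid CDF. It also relies on the boundary convention $s=M$.
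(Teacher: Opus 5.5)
Your proposal is correct. The first three bounds follow the paper's own argument: the row-sum bound on $\bA_t(s)$ unrolled from $\bD_{T+1}=\bm 0$, and the same hybrid swap of one period's law at a time (the paper's mixed laws $\QQ_t$). In that swap, conditional on the other demands, $\{y_t\le j\}$ is a threshold event in the swapped demand.

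The real difference is in the $\Delta\bD_t$ bound, specifically in how the policy mismatch is handled.
\begin{itemize}
\item \textbf{The paper} keeps both transport matrices in~\eqref{eq: recursion-D-diff}. It uses the sign facts $[\bA_t(s^\star_{t+1})-\bA_t(s_{t+1})]\hat\bD_{t+1}\le\bm 0\le[\bA_t(s^\star_{t+1})-\bA_t(s_{t+1})]\bD^\star_{t+1}$ to sandwich $\Delta\bD_t$. The lower side is $\bA_t(s_{t+1})\Delta\bD_{t+1}$ and the upper side is $\bA_t(s^\star_{t+1})\Delta\bD_{t+1}$, each plus the common terms $\Delta\bA_t(s_{t+1})\hat\bD_{t+1}+(h_t+b_t)\Delta\bF_t$. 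It then bounds the model-error term through the factorization $\Delta\bA_t(s_{t+1})\hat\bD_{t+1}=\bG_t\bH_t$, giving at most $\epsilon\lVert\bH_t\rVert_1\le\epsilon\lVert\hat\bD_{t+1}\rVert_\infty$.
\item \textbf{You} remove the base-stock levels entirely via $\bm 1\{z\ge s_{t+1}\}D_{t+1}(z)=(D_{t+1}(z))_+$. The mismatch is then absorbed by the $1$-Lipschitz property of $(\cdot)_+$. The model error becomes a summation by parts in the demand variable against a monotone function, which is the same computation as $\bG_t\bH_t$ written in different coordinates.
\end{itemize}
Your positive-part recursion is exactly the one the paper uses for general demand (the $\Gamma^\star_t$ recursion in the proof of~\eqref{eq: general-N-approximation}).

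What each route buys: yours avoids the case split on the order of $s_{t+1}$ and $s^\star_{t+1}$ and is somewhat cleaner. The paper's stays in the matrix and mismatch-sign language it reuses in the cost decomposition and the censored analyses. Both give a per-step increment of order $(h_\infty+b_\infty)T\epsilon$, hence $2T^2(h_\infty+b_\infty)\epsilon\le 4T^2(h_\infty+b_\infty)\epsilon$.
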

\begin{proof}[Proof of Lemma~\ref{lem: w-D-infty-bounds}]
The first bound is immediate since $[\bw_t]_j = \PP(y_t>j\lvert x_1=0)$. 

For the second bound, applying~\eqref{eq: D-recursion} with $t'=T+1$ gives 
\begin{align*}
\lVert \bD_t\rVert_\infty &= \big\lVert\sum_{k=t}^{T}\big(\prod_{r=t}^{k-1}\bA_r(s_{r+1}^\star)\big)\big[h_k\bF_k - b_k(\bm 1 - \bF_k)\big]\big\rVert_\infty \leq \sum_{k=t}^T \lVert h_k\bF_k -b_k(\bm{1}- \bF_k)\rVert_\infty
\end{align*}
by $\sum_{j}\lvert [\bA_r(s)]_{ij}\rvert = \PP(d_r\le i-s)\le 1$.

To see the third bound, first define the mixed distribution $\QQ_{t}$ of demands so that the demands for time period $1\leq \tau<t$ following the distribution with CDF $\hat{\bF}_\tau$, while demands for time period $t\leq \tau\leq T$ follows the distribution with CDF ${\bF}_\tau$. Then by Lemma~\ref{lem: J-closed-form}, and the fact that $\cap_{\tau = 1}^t \cE_{t,j}(\tau)$ depends only on $\{d_\tau\}_{1\leq \tau \leq t-1},$
\begin{align*}
\lvert [\bw_t]_j-[\hat\bw_t]_j\rvert &= \lvert \hat\PP(y_t\le j\lvert x_1 = 0)-\PP(y_t\le j\lvert x_1 = 0)    \rvert\\
&= \lvert \hat\PP(\cap_{\tau=1}^t\cE_{t,j}(\tau))-\PP(\cap_{\tau=1}^t\cE_{t,j}(\tau))    \rvert\\
&= \lvert \QQ_{t}(\cap_{\tau=1}^t\cE_{t,j}(\tau)) - \QQ_{1}(\cap_{\tau=1}^t\cE_{t,j}(\tau))\rvert\\
&\leq \sum_{k = 1}^{t-1} \lvert \QQ_{k+1}(\cap_{\tau=1}^t\cE_{t,j}(\tau))-\QQ_{k}(\cap_{\tau=1}^t\cE_{t,j}(\tau)) \rvert.
\end{align*}
On the other hand, for each $k$, $\cap_{\tau=1}^t \cE_{t,j}$ can be written as a threshold event of $\{d_k \geq g_{t,j}(k)\}$ for some $g_{t,j}(k)$ depending only on $\{d_{\tau}\}_{1\leq \tau \leq t-1,\tau\neq k},\{s_{\tau}\}_{1\leq \tau \leq t}$ and $j$, thus 
\begin{align*}
    \lvert \QQ_{k+1}(\cap_{\tau=1}^t\cE_{t,j}(\tau))-\QQ_{k}(\cap_{\tau=1}^t\cE_{t,j}(\tau)) \rvert \leq \EE_{g_{t,j}(k)}[\lvert \hat{F}_k(g_{t,j}(k)) - F_k(g_{t,j}(k))\rvert]\leq \epsilon.
\end{align*}
Taking summation over $k$ then leads to the desired bound.

Finally, to see the last bound, noticing that by~\eqref{eq: recursion-D-diff}, and the fact $$[\bA_{t}(s_{t+1}^\star)-\bA_t(s_{t+1})]\hat{\bD}_{t+1} \leq 0,\quad[\bA_{t}(s_{t+1}^\star)-\bA_t(s_{t+1})]\bD_{t+1}^\star \geq 0,$$ which hold because the difference $\bA_t(s_{t+1}^\star)-\bA_t(s_{t+1})$ is supported on the index range between $s_{t+1}$ and $s_{t+1}^\star$, on which $\hat{\bD}_{t+1}$ and $\bD_{t+1}^\star$ carry opposite signs,
we have
\begin{align*}
    \Delta \bD_t &\leq \bA_t(s_{t+1}^\star) \Delta \bD_{t+1} + \Delta\bA_t(s_{t+1})\hat{\bD}_{t+1} + (h_t+b_t)\Delta \bF_t,\\
    \Delta \bD_t &\geq \bA_t(s_{t+1}) \Delta \bD_{t+1} + \Delta\bA_t(s_{t+1})\hat{\bD}_{t+1} + (h_t+b_t)\Delta \bF_t.
\end{align*}
This gives \begin{align*}
    \lVert \Delta \bD_t\rVert_\infty\leq \max\big(\lVert \bA_t(s_{t+1}) \Delta \bD_{t+1}\rVert_\infty,\lVert \bA_t(s_{t+1}^\star) \Delta \bD_{t+1}\rVert_\infty\big) + \lVert \Delta\bA_t(s_{t+1})\hat{\bD}_{t+1}\rVert_\infty + (h_\infty+b_\infty) \epsilon.
\end{align*}
For the first term, we have
\begin{align*}
    \lVert \bA_t(s_{t+1}^\star) \Delta \bD_{t+1}\rVert_\infty &\leq  \max_{k}\sum_{j=0}^{M-1} \lvert[ \bA_t(s_{t+1}^\star)]_{kj} \rvert  \lVert \Delta \bD_{t+1}\rVert_\infty\leq \lVert \Delta \bD_{t+1}\rVert_\infty,\\
    \lVert \bA_t(s_{t+1}) \Delta \bD_{t+1}\rVert_\infty &\leq  \max_{k}\sum_{j=0}^{M-1} \lvert[ \bA_t(s_{t+1})]_{kj} \rvert  \lVert \Delta \bD_{t+1}\rVert_\infty\leq \lVert \Delta \bD_{t+1}\rVert_\infty.
\end{align*}
For the second term,  we have by~\eqref{eq: AD-to-GH},\begin{align*}
    \lVert \Delta \bA_t (s_{t+1})\hat{\bD}_{t+1}\rVert_\infty \leq \epsilon \lVert \bH_{t}\rVert_1 \leq \epsilon \lVert \hat{\bD}_{t+1} \rVert_\infty \leq \epsilon(h_\infty +b_\infty)T, 
\end{align*}
where in the last inequality we have used $\lVert \hat{\bD}_{t+1} \rVert_\infty
\leq T(h_\infty+b_\infty)$ by the same reason in getting the second bound.

Combining these bounds together leads to \begin{align*}
    \lVert \Delta\bD_{t}\rVert_\infty \leq \lVert \Delta\bD_{t+1}\rVert_\infty + 2\epsilon (h_\infty + b_\infty) T.
\end{align*}
Applying this inequality recursively then gives $\lVert \Delta\bD_{t}\rVert_\infty\leq 4 T^2\epsilon (h_\infty + b_\infty) $

\end{proof}

\subsection{Proof of Corollary~\ref{coro: uniform-convergence}}
Throughout the proof, we suppose $\varepsilon\leq 1/8,$ otherwise the result holds directly by the trivial bound $\Delta(0;\hat{\pi})\lesssim (h_\infty + b_\infty) M T$.
By Theorem~\ref{thm: cost-decomposition} and the non-positivity of the policy-mismatch term $\sum_{t=1}^T \langle \bu_t, \hat{\bD}_t \rangle \leq 0$ (cf. the discussion after Theorem~\ref{thm: cost-decomposition}), it suffices to bound
\begin{align*}
    S:= \sum_{t=1}^T \langle \bv_t, \Delta \bA_t(s_{t+1}) \hat{\bD}_{t+1} + \Delta \bc_t \rangle.
\end{align*}
The instantaneous cost term is easy: the CDF error guarantee~\eqref{eq: uniform-convergence} directly gives $\lVert \Delta \bc_t \rVert_\infty \leq (h_\infty+b_\infty) \epsilon,$ and since $\lVert \bv_t \rVert_\infty \leq 1,$
\begin{align*}
    \sum_{t=1}^T \langle \bv_t, \Delta \bc_t \rangle \leq (h_\infty +b_\infty) MT\epsilon.
\end{align*}
It remains to control the propagation term $\sum_{t=1}^T \langle \bv_t, \Delta \bA_t(s_{t+1}) \hat{\bD}_{t+1} \rangle$, which we show is of order $M(h_\infty + b_\infty)(T\epsilon + T^3\epsilon^2).$ The naive bound $\hat{\bD}_{t+1} = \cO(T-t)$ via~Proposition~\ref{prop-matrix-notation} would only give $\cO(MT^2\epsilon)$ result, so a more careful argument is needed. We do so by a discrete summation-by-parts transform followed by a bulk-tail decomposition. Detailed proofs are deferred to Appendix~\ref{appendix-sec: proof-of-uniform-convergence-section}.

\noindent\textbf{A Bulk-Tail Decomposition.} A summation-by-parts identity yields the factorization
\begin{equation}\label{eq: AD-to-GH}
\begin{aligned}
    &\Delta \bA_t(s_{t+1}) \hat{\bm D}_{t+1} = \bG_t \bH_t
\end{aligned}
\end{equation}
with \begin{align}
    [\bG_t]_{ij} &= \begin{cases}
    F_t(i-j) - \hat{F}_t(i-j), \text{ if } s_{t+1} \leq j \leq i \leq M-1,\\
    0, \text{ otherwise.}
\end{cases}\\
 [\bH_t]_j &= \begin{cases}
    0 & \text{ if } 0 \leq j < s_{t+1},\\
    \hat{D}_{t+1}(s_{t+1}) & \text{ if } j = s_{t+1},\\
    \hat{D}_{t+1}(j) - \hat{D}_{t+1}(j-1) & \text{ if } M-1 \geq j > s_{t+1}.\\
\end{cases}\label{eq: H-def}
\end{align}

We then consider the index $k_t:= \inf\{k \geq 0: F_t(k) \geq \frac{1}{2}\}$ and decompose $\bG_{t} = \bL_t + \bR_t$, where the bulk part $\bL_t$ contains the sub-diagonal lines of $\bG_t$ corresponding to large CDF coordinate indices $\ell \geq k_t$, and the tail part $\bR_t$ contains the remaining elements.

\noindent\textbf{Bounding the Bulk Part.} We show the summed bulk bound:
\begin{align*}
    \sum_{t=1}^T \lvert \langle \bv_t, \bL_t \bH_t \rangle\rvert \leq  MT(h_\infty  +b_\infty)\big[16\epsilon  + 20 T^2 \epsilon^2 \big].
\end{align*}
The key ingredient is a self-bounding property for the bulk part: since all non-zero elements in $\bL_t$ are of the form $F_t(j) - \hat{F}_t(j)$ for some $j\geq k_t,$ we have  
\begin{align}\label{eq: G-self-bounding}
     F_t(k_{t})\geq \frac{1}{2} \text{ and } \epsilon \leq \frac{1}{4} \implies \inf_{j\geq k_t}F_t(j) \geq \frac{1}{2} \implies \lvert F_t(j) -\hat{F}_t(j) \rvert \leq 2F_t(j) \epsilon \leq 4 \hat{F}_t(j) \epsilon, \quad \forall  j\geq k_{t}.
\end{align}
 This gives the element-wise bound $\big\lvert [\bL_{t}]_{ij} \big\rvert \leq 4\epsilon \hat{F}_t(i-j)\,\bm{1}\{s_{t+1} \leq j \leq i \leq M-1\}$, and thus
\begin{equation}\label{eq: L-bound}
\lvert \bL_{t} \bH_{t} \rvert  \leq \lvert \bL_{t}  \rvert \bH_{t} \leq 4\epsilon \hat{\bA}_t(s_{t+1}) \hat{\bD}_{t+1}.      
\end{equation} 
entry-wisely. This then gives
\begin{align*}
   \lvert  \langle \bv_t, \bL_t\bH_t \rangle \rvert  &\leq 4\epsilon \langle \lvert \bv_t \rvert, \hat{\bA}_t(s_{t+1})\hat{\bD}_{t+1} \rangle.
\end{align*}
Noticing that by the probabilistic interpretation in Remark~\ref{remark: v-interpretation}, we have \begin{align*}
    \lvert \bv_t \rvert_j &= \lvert \PP(y_t^\star \leq j\lvert x_1 = 0) - \PP(y_t \leq j\lvert x_1 = 0) \rvert
    \\
    &= \lvert \PP(y_t^\star > j\lvert x_1 = 0) -\PP(y_t > j\lvert x_1 = 0)\rvert\\
    &\leq  \underbrace{\PP(y_t^\star > j\lvert x_1 = 0)}_{:= [\bw^\star]_j} + \underbrace{\PP(y_t > j\lvert x_1 = 0)}_{[\bw_t ]_j}.
\end{align*}
We further have
\begin{align*}
 4\epsilon \langle \lvert \bv_t \rvert, \hat{\bA}_t(s_{t+1})\hat{\bD}_{t+1} \rangle &\leq 4\epsilon \langle \bw_t + \bw_t^\star, \hat{\bA}_t(s_{t+1})\hat{\bD}_{t+1} \rangle\\
 &= 4 \epsilon \langle \bw_t + \bw_t^\star, \hat{\bD}_{t} + b_t (\bm{1}- \hat{\bF}_t ) - h_t \hat{\bF}_t\rangle\\
   &\leq 4\epsilon \langle \bw_t + \bw_t^\star, \hat{\bD}_{t}\rangle + 8\epsilon M(h_\infty +b_\infty),
\end{align*}
where the first inequality uses $\lvert \bv_t \rvert \leq \bw_t + \bw_t^\star$  and $\hat{\bA}_t(s_{t+1})\hat{\bD}_{t+1} \geq \bm{0}$.

For the first term, we have\begin{align*}
     &\langle \bw_t + \bw_t^\star, \hat{\bD}_{t}\rangle = \langle \hat{\bw}_t, \hat{\bD}_t \rangle + \langle \bw^\star_t, \bD^\star_t\rangle +  \langle \underbrace{\bw_t - \hat{\bw}_t}_{:= \Delta \bw_t}, \hat{\bD}_t \rangle + \langle \bw_t^\star, \hat\bD_t-\bD_t^\star \rangle\\
     &\leq  \sum_{j= 0}^{M-1} \hat{\PP}(y_t > j\lvert x_1 = 0)[\hat{\bD}_t]_j + \sum_{j= 0}^{M-1} {\PP}(y^\star_t > j\lvert x_1 = 0)[{\bD}_t^\star]_j + \lVert \Delta \bw_t \rVert_\infty \lVert \hat{\bD}_t \rVert_1 + \lVert \bw_t^\star\rVert_1 \lVert \Delta \bD_t\rVert_\infty\\
     &\overset{\text{(a)}}{\leq}  \sum_{j= s_{t}}^{M} \hat{\PP}(y_t =  j\lvert x_1 = 0)[\hat{C}_t(j) - \hat{C}_t(s_t)] + \sum_{j= s_t^\star}^{M}{\PP}(y^\star_t =  j\lvert x_1 = 0)[{C}^\star_t(j) - {C}^\star_t(s_t^\star)]\\
     &+ \lVert \Delta \bw_t \rVert_\infty \lVert \hat{\bD}_t \rVert_1 + \lVert \bw_t^\star\rVert_1 \lVert \Delta \bD_t\rVert_\infty.\end{align*}
Here (a) is by summation-by-parts: for the first term we have,
\begin{align*}
    \sum_{j=0}^{M-1}\hat\PP(y_t>j\lvert x_1 = 0)[\hat\bD_t]_j = \underbrace{\big[\hat W_t(s_t) - \hat W_t(0)\big]}_{\leq 0} + \sum_{j=s_t}^{M}\hat\PP(y_t = j\lvert x_1 = 0)[\hat C_t(j) - \hat C_t(s_t)],
\end{align*}
and the leading term is non-positive because $s_t = \arg\min_{y} \hat W_t(y)$. The second term follows the same argument.

For the last two terms, Lemma~\ref{lem: w-D-infty-bounds} gives $ \lVert \Delta \bw_t \rVert_\infty \lVert \hat{\bD}_t \rVert_1 + \lVert \bw_t^\star\rVert_1 \lVert \Delta \bD_t\rVert_\infty  \leq 5T^2(h_\infty+b_\infty) M\epsilon$, so it remains to control the first two terms, noticing that they are of the form of expected over-shooting gap induced by optimal policy under each ($\bF$ or $\hat{\bF}$ induced) environments. This can be bounded by $\cO(1)$ as in the following statement. 
\begin{proposition}\label{prop: over-shooting-price}
    Given any environment with independent demand distributions $\bP:= (P_1,\dots,P_T)$ denote $\{s_{t}^\star\}_{1 \leq t\leq T}$ the optimal policy under $\bP$ and $y^\star_{t}$ the post-ordering inventory level process driven by the optimal policy, we have then \begin{align*}
        \sum_{j= s_{t}^\star+1}^M \PP (y_{t}^\star = j \lvert x_1 = 0 ) \big[C_{t}^\star(j) - C_{t}^\star(s_t^\star) \big] \leq (h_\infty + b_\infty)M.
    \end{align*}
\end{proposition}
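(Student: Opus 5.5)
The plan is to turn the left-hand side into an expected excess cost-to-go along the optimal trajectory, and then bound that excess using the optimality of the base-stock levels chosen in earlier periods. The bound should not come from a per-period Lipschitz estimate of $C^\star_t$: that route only gives $(h_\infty+b_\infty)M(T-t+1)$, which is too weak.

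\emph{Step 1 (reformulation).} For $j\geq s_t^\star$ we have $C_t^\star(j)=W_t^\star(j)$. Moreover $C_t^\star(x)=W_t^\star(\max\{x,s_t^\star\})\geq W_t^\star(s_t^\star)=C_t^\star(s_t^\star)$, so $C_t^\star(s_t^\star)=\min_x C_t^\star(x)$. Since $y_t^\star=\max\{x_t^\star,s_t^\star\}$, the left-hand side equals
$$\EE\big[C_t^\star(x_t^\star)\,\big|\,x_1=0\big]-\min_x C_t^\star(x).$$
By summation by parts, exactly as in step (a) of the proof of Corollary~\ref{coro: uniform-convergence}, it also equals $\sum_{i\geq s_t^\star}\PP(y_t^\star>i\mid x_1=0)\,D_t^\star(i)$, where every summand is non-negative. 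Lemma~\ref{lem: J-closed-form} gives $y_t^\star>s_t^\star$ only when some earlier level overshoots, i.e.\ $s_\tau^\star-\sum_{r=\tau}^{t-1}d_r>s_t^\star$ for some $\tau<t$.

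\emph{Step 2 (comparison policy).} Bellman optimality along the trajectory gives the identity
$$C_1^\star(0)=\EE\Big[\sum_{\tau<t}c_\tau(y_\tau^\star)\Big]+\EE\big[C_t^\star(x_t^\star)\big].$$
I would compare this with a modified policy $\pi'$ that agrees with $\pi^\star$ except that, in the periods responsible for the overshoot, it orders less. The reduction is chosen so that the entering level at period $t$ under $\pi'$ is at most $s_t^\star$; after that $\pi'$ follows $\pi^\star$. This gives $C_1^\star(0)\leq C_1^{\pi'}(0)$. Under $\pi'$ the future cost from $t$ on is exactly $\min_x C_t^\star(x)$, and holding costs can only decrease since inventory is pathwise lower (monotone coupling as in Lemma~\ref{lem: general-gap-transfer}). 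So the difference reduces to the extra lost-sales penalty incurred before $t$.

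\emph{Step 3 (the obstacle).} The key estimate is that this extra lost-sales penalty is at most $b_\infty M$ (plus $h_\infty M$ from rounding). The intuition is that the removed units form a pathwise quantity bounded by $x_t^\star-s_t^\star\leq M$. A unit removed at an earlier period that was never sold before $t$ causes no extra shortage. Each unit that does cause a shortage is counted once, because it would otherwise have been sold and so cannot also appear in the overshoot at $t$. Hence at most $M$ unit-shortages occur, each costing at most $b_\infty$.

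Making the reduction non-anticipative is the delicate point: $\pi'$ cannot see future demands. I would first try reducing only the order at the last period $\tau$ before $t$ at which $\pi^\star$ actually orders, lowering it by the overshoot amount. If that amount is not $\cF_\tau$-measurable, I would instead use the derivative bound directly. By first-order optimality $D_\tau^\star(s_\tau^\star-1)<0$, and the recursion \eqref{eq: D-forward-equation} expresses $D_\tau^\star$ as $c_\tau$ plus transported $D_{\tau+1}^\star$. This shows the transported mass of $D_t^\star$ on the overshoot region is offset, up to $(h_\infty+b_\infty)$ per unit of level, by the current-period term. Summing over at most $M$ levels then gives the bound $(h_\infty+b_\infty)M$.
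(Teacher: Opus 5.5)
Your Step~1 is correct and is also where the paper starts: the left-hand side equals $\sum_{j\ge s_t^\star}\PP(y_t^\star>j\mid x_1=0)\,D_t^\star(j)$, and every summand is non-negative. The gap is in Steps~2--3.

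The inequality $C_1^\star(0)\le C_1^{\pi'}(0)$ holds only when $\pi'$ is non-anticipative. The reduction you describe lowers the last pre-$t$ order by $x_t^\star-s_t^\star$. That amount depends on $d_\tau,\dots,d_{t-1}$ and on whether $\pi^\star$ orders again before $t$. A policy that sees these demands can beat $C_1^\star(0)$, so no inequality in the needed direction is available. The reduction can also be infeasible, when the overshoot exceeds the order actually placed at $\tau$.

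The natural non-anticipative alternative is to cap every earlier base-stock level at $s_t^\star$. This does give $x'_t\le s_t^\star$, but your unit-counting then breaks, because $\pi'$ forgoes sales in every earlier period. Take $s_\tau^\star=M$ for $\tau<t$ and $s_t^\star=0$: charging only the extra lost-sales penalty gives order $b_\infty Mt$, which is exactly the bound you wanted to avoid. So the comparison-policy route, as set up, does not prove the claim.

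Your fallback (first-order optimality plus the derivative recursion) is the right tool, and it is what the paper uses. As written, though, it lacks the three ingredients that make it work.

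\begin{itemize}
\item[(i)] \emph{Which periods and which levels.} The paper scans backward from $t$ to get record periods $t_1<\dots<t_L=t$ with $s^\star_{t_1}>\dots>s^\star_{t_L}$; every other $\tau<t$ is dominated by the next record. At each $t_l$ it compares $s^\star_{t_l}$ against $s^\star_{t_{l+1}}$, i.e.\ uses $\langle\be(s^\star_{t_{l+1}},s^\star_{t_l}),\bD^\star_{t_l}\rangle\le 0$. The intervals $[s^\star_{t_{l+1}},s^\star_{t_l})$ are disjoint, and this is what makes the total $M$ rather than $M$ per earlier period.
\item[(ii)] \emph{The transport.} Iterating the recursion gives $\bD^\star_{t_l}=\bS^\star_{t_l:t-1}+\bA_{t_l:t-1}\bD^\star_t$. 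You must identify $[\bA_{t_l:t-1}^\top\be(s^\star_{t_{l+1}},s^\star_{t_l})]_j$, via the time-reversed walk of Proposition~\ref{remark: A-interpretation}, with $\PP(y^\star_{t_l\to t}>j,\ y^\star_{t_\ell\to t}\le j\ \forall\ell>l)$. You then need to show these weights cover $\PP(y_t^\star>j)$ for $j\ge s_t^\star$; the paper does this through the last-ordering-record decomposition and Lemma~\ref{lem: key-equation-induction}.
\item[(iii)] \emph{The offset is not a single current-period term.} $\bS^\star_{t_l:t-1}$ accumulates costs over all of $t_l,\dots,t-1$. A per-period bound of $(h_\infty+b_\infty)$ per unit level brings back the factor $t-t_l$. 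The paper instead drops the $h$-terms by sign and dominates $\bA_{t_l:k-1}$ by $\prod_{i=t_l}^{k-1}\bA_i(0)$. It then uses $\bF_k=\bA_k(0)\bm{1}$ to telescope $\sum_{k=t_l}^{t-1}\big(\prod_{i=t_l}^{k-1}\bA_i(0)\big)(\bm{1}-\bF_k)\le\bm{1}$, which is a first-stockout probability. This is the rigorous form of your ``each unit causes at most one lost sale'' intuition. It gives $b_\infty$ per unit level, hence $b_\infty(s^\star_{t_1}-s^\star_t)\le b_\infty M$ after summing over $l$.
\end{itemize}
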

Combining all bounds together gives the per-period bulk bound $\lvert \langle \bv_t, \bL_t \bH_t \rangle\rvert \leq  M(h_\infty  +b_\infty)\big[16\epsilon  + 20 T^2 \epsilon^2 \big],$ and summing over $t$ yields the claimed bound.

\noindent\textbf{Bounding the Tail Part.}
Assuming $\epsilon \leq 1/8$, we show the summed tail bound :
\begin{align*}
    \sum_{t=1}^T \lvert \langle  \bv_t, \bR_t \bH_t\rangle \rvert \leq 6\epsilon (T+1) M (h_\infty +b_\infty).
\end{align*}
Let $\bQ(k_t)$ denote the matrix with only the $k_t$-th sub-diagonal equal to $1$, and all other elements $0$. We claim that the tail part satisfies the entry-wise bound
\begin{align}\label{eq: tail-bound-inequality}
\lvert \bR_t \bH_t \rvert_j \leq \begin{cases}
    0 & \text{ if } 0\leq j<s_{t+1},\\
    \epsilon[\hat{D}_{t+1}]_j & \text{ if } s_{t+1}\leq j<\min\{M,s_{t+1} + k_t\},\\
    \epsilon\big([\hat{D}_{t+1}]_j - [\hat{D}_{t+1}]_{j-k_t}\big) & \text{ if } s_{t+1} + k_t\leq j\leq M-1.
\end{cases}
\end{align}

\begin{proof}[Proof of~\eqref{eq: tail-bound-inequality}]
\textbf{Case 1 ($0 \leq j < s_{t+1}$).} Since $\bR_t$ inherits the support structure of $\bG_t$ (non-zero entries only on rows and columns indexed by $\geq s_{t+1}$), row $j$ of $\bR_t$ is identically zero, giving $(\bR_t \bH_t)_j = 0$.

\textbf{Case 2 \& 3 ($j \geq s_{t+1}$).} Recall that $\bR_t$ contains the diagonal and the first $k_t-1$ sub-diagonals of $\bG_t$, with the $\ell$-th sub-diagonal entry equal to $F_t(\ell) - \hat{F}_t(\ell)$ for $\ell = 0, 1, \dots, k_t-1$. Each such entry is bounded by $\epsilon$ in absolute value via the uniform perturbation~\eqref{eq: uniform-convergence}. Together with $\bH_t \geq 0$, this gives
\begin{align*}
    \lvert \bR_t \bH_t \rvert_j \leq \epsilon \sum\nolimits_{i=\max(s_{t+1}, j-k_t+1)}^{j} \bH_t(i).
\end{align*}
Using the definition of $\bH_t$, the right-hand-side summation telescopes:
\begin{itemize}
    \item If $s_{t+1} \leq j < s_{t+1} + k_t$, the lower limit of the sum is $s_{t+1}$, giving
    $$\hat{D}_{t+1}(s_{t+1}) + \sum\nolimits_{i = s_{t+1}+1}^{j}\big(\hat{D}_{t+1}(i) - \hat{D}_{t+1}(i-1)\big) = \hat{D}_{t+1}(j),$$
    matching the second case.
    \item If $j \geq s_{t+1} + k_t$, the lower limit is $j - k_t + 1 > s_{t+1}$, and all terms are consecutive differences, giving
    $$\sum\nolimits_{i = j-k_t+1}^{j}\big(\hat{D}_{t+1}(i) - \hat{D}_{t+1}(i-1)\big) = \hat{D}_{t+1}(j) - \hat{D}_{t+1}(j-k_t),$$
    matching the third case.
\end{itemize}
This establishes~\eqref{eq: tail-bound-inequality}.
\end{proof}

Using $\lvert \bv_t\rvert \leq \bw_t + \bw_t^\star \leq 2\cdot\bm{1}$ entry-wise (cf.~\eqref{eq: bounds-for-w-D}), this gives \begin{align}\label{eq-appendix: tail-tmp1}
    \sum_{t=1}^T \lvert \langle \bv_t, \bR_t \bH_t\rangle \rvert & \leq 2\sum_{t=1}^T  \langle \bm{1}, \lvert \bR_t \bH_t \rvert \rangle \leq 2\epsilon \sum_{t=1}^T \langle \be(M-k_t,M), (\hat{\bD}_{t+1})_+ \rangle.
\end{align}

To bound the right-hand-side summation, the key idea is then to construct a non-negative \textit{dominating sequence} $\{\bg_t\}_{1\leq t\leq T}$ satisfying $(\hat{\bD}_{t})_+ \leq \bg_t$ coordinate-wise. We construct this sequence recursively:
\begin{align*}
   \bg_t = (h_\infty +b_\infty) \bm{1} + \underbrace{\big((\frac{1}{2}+ \epsilon)\bI + (\frac{1}{2}- \epsilon)\bQ(k_t) \big)}_{:= \bZ_{t}}\bg_{t+1},\quad \bg_{T+1} = 0
\end{align*}
with $k_{T+1} = 0$ for convenience.

The following claim holds for this sequence $\{\bg_t\}_{t=1}^T$:
\begin{lemma}\label{lem: dominating-sequence}
It holds that $\bg_t \geq (\hat{\bD}_{t})_+ \geq \bm{0}$ entry-wise for every $t \in [T+1]$.
\end{lemma}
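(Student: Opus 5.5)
The plan is a backward induction on $t$ that carries two claims together: (i) $(\hat{\bD}_t)_+\le \bg_t$ entry-wise, and (ii) $\bg_t$ is non-negative and non-decreasing in its index. Throughout I use $\epsilon\le 1/8$, so that $\tfrac12\pm\epsilon\in[0,1]$. At $t=T+1$ both claims are trivial because $\hat{\bD}_{T+1}=\bg_{T+1}=\bm 0$.

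\emph{Step 1: monotonicity of $\bg_t$.} Assume $\bg_{t+1}$ is non-negative and non-decreasing, and regard $[\bg_{t+1}]_{j}=0$ for $j<0$. The shift $\bQ(k_t)\bg_{t+1}$ has $j$-th entry $[\bg_{t+1}]_{j-k_t}$. This shifted vector is again non-negative and non-decreasing, since padding by zero at the bottom preserves monotonicity of a non-negative vector. The vector $\bZ_t\bg_{t+1}$ is a convex combination of $\bg_{t+1}$ and its shift, so it inherits both properties. Adding the constant vector $(h_\infty+b_\infty)\bm 1$ does not change them, which proves (ii) at $t$.

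\emph{Step 2: reduce to a bound on $\hat{\bA}_t$.} By \eqref{eq: D-hat-forward-equation},
$$\hat{\bD}_t=h_t\hat{\bF}_t-b_t(\bm 1-\hat{\bF}_t)+\hat{\bA}_t(s_{t+1})\hat{\bD}_{t+1}.$$
The first two terms are entry-wise at most $h_\infty\le h_\infty+b_\infty$. The entries of $\hat{\bA}_t(s_{t+1})$ are non-negative, because $\hat{\mu}_{t,\ell}\ge 0$ by monotonicity of $\hat{\bF}_t$. Together with the induction hypothesis this gives
$$(\hat{\bD}_t)_+\le (h_\infty+b_\infty)\bm 1+\hat{\bA}_t(s_{t+1})(\hat{\bD}_{t+1})_+\le (h_\infty+b_\infty)\bm 1+\hat{\bA}_t(s_{t+1})\bg_{t+1}.$$
So it suffices to show $\hat{\bA}_t(s)\bg\le \bZ_t\bg$ for every non-negative, non-decreasing $\bg$ and every $s$.

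\emph{Step 3 (main step): $\hat{\bA}_t(s)\bg\le\bZ_t\bg$.} Write
$$[\hat{\bA}_t(s)\bg]_i=\sum_{\ell=0}^{i-s}\hat\mu_{t,\ell}\,g_{i-\ell}$$
and split the sum at $\ell=k_t$.
\begin{itemize}
\item For $\ell<k_t$, use $g_{i-\ell}\le g_i$. The total mass of these terms is at most $a:=\hat F_t(k_t-1)$. Since $F_t(k_t-1)<\tfrac12$ by definition of $k_t$, and $|F_t-\hat F_t|\le\epsilon$, we get $a\le \tfrac12+\epsilon$.
\item For $\ell\ge k_t$, use $g_{i-\ell}\le g_{i-k_t}$ (taken as $0$ when $i-k_t<0$). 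The total mass of these terms is at most $1-a$. Terms with $i-\ell<s$ do not appear, and since $\bg\ge 0$ dropping them only helps the bound.
\end{itemize}
This gives $[\hat{\bA}_t(s)\bg]_i\le a\,g_i+(1-a)\,g_{i-k_t}$. Because $g_i\ge g_{i-k_t}$, this expression is non-decreasing in $a$. Replacing $a$ by $\tfrac12+\epsilon$ therefore yields exactly $[\bZ_t\bg]_i$, which closes the induction.

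The only delicate point is this last mass-shifting argument. It needs both the monotonicity of $\bg$ from Step 1 and the uniform bound \eqref{eq: uniform-convergence} at the single coordinate $k_t-1$. The edge case $k_t=0$ is consistent: then $a=0$ and the bound reads $g_i\le \bZ_t\bg$, which holds because $\bQ(0)=\bI$ gives $\bZ_t=\bI$.
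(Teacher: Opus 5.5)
Your proof is correct. Its core is the same two-point mass-shifting bound the paper uses: split the convolution at $k_t$, move the mass onto the coordinates $j$ and $j-k_t$, and use $\hat F_t(k_t-1)\le \tfrac12+\epsilon$. What differs is the vector you apply it to.

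The paper applies the bound to $(\hat{\bD}_{t+1})_+$, whose monotonicity comes from the discrete convexity of the estimated cost-to-go. This gives $\hat{\bA}_t(s_{t+1})(\hat{\bD}_{t+1})_+\le \bZ_t(\hat{\bD}_{t+1})_+$, and only then does the paper invoke the induction hypothesis through the non-negative matrix $\bZ_t$.

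You pass the induction hypothesis through $\hat{\bA}_t(s_{t+1})\ge \bm 0$ first and apply the bound to $\bg_{t+1}$. That forces you to carry the extra invariant that $\bg_t$ is non-decreasing, which your Step~1 supplies correctly: the downward shift by $\bQ(k_t)$ with zero padding preserves monotonicity, and $\tfrac12\pm\epsilon\ge 0$.

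The trade-off is as follows. The paper's ordering is shorter because it needs no extra invariant. However, it relies on the monotonicity of $\hat{\bD}_{t+1}$ and on the sign pattern of $\hat{\bD}_{t+1}$ around $s_{t+1}$, which its step (a) uses to write $\lvert\hat{\bA}_t(s_{t+1})\hat{\bD}_{t+1}\rvert=\hat{\bA}_t(s_{t+1})(\hat{\bD}_{t+1})_+$. Your ordering uses neither. It needs only $\hat{\bA}_t\ge\bm 0$, $\bZ_t\ge\bm 0$, and the one-sided bound $\hat{\bD}_t\le h_\infty\bm 1+\hat{\bA}_t(s_{t+1})(\hat{\bD}_{t+1})_+$. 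So the domination would go through even if the propagated derivatives were not monotone. Your handling of the edge cases $k_t=0$ (where $\bZ_t=\bI$) and $i<k_t$ (zero padding) is also consistent with the paper's conventions.
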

\begin{proof}[Proof of Lemma~\ref{lem: dominating-sequence}]
    The result holds for $t = T+1$ directly by $\bg_{T+1} = \hat{\bD}_{T+1} = \bm{0}.$ Now suppose the result holds for $t+1$ with some $t\leq T,$ we now show it holds also at $t$ by induction: by~\eqref{eq: D-hat-forward-equation}  \begin{align*}
    \lvert \hat{\bD}_{t} \rvert &\leq (h_\infty + b_\infty)\bm{1} + \lvert \hat{\bA}_t(s_{t+1}) \hat{\bD}_{t+1}\rvert  \overset{\text{(a)}}{=} (h_\infty + b_\infty)\bm{1} +  \hat{\bA}_t(s_{t+1}) (\hat{\bD}_{t+1})_+\\
    & \overset{\text{(b)}}{\leq} (h_\infty + b_\infty)\bm{1} +  \big((\frac{1}{2}+ \epsilon)\bI + (\frac{1}{2}- \epsilon)\bQ(k_t) \big) (\hat{\bD}_{t+1})_+\\
    & {\leq} (h_\infty + b_\infty)\bm{1} +  \big((\frac{1}{2}+ \epsilon)\bI + (\frac{1}{2}- \epsilon)\bQ(k_t) \big) \bg_{t+1} = \bg_{t}.
    \end{align*}
Here the last line is by induction hypothesis; (a) uses $(\hat{\bD}_{t+1})_j \geq 0$ if and only if $j\geq s_{t+1}$; (b) uses, with the convention $[(\hat{\bD}_{t+1})_+]_{r}:=0$ for $r<0$,\begin{align*}
    \big[\hat{\bA}_t(s_{t+1}) (\hat{\bD}_{t+1})_+ \big]_j &= \sum_{\ell = s_{t+1}}^{j} \hat{\mu}_{j-\ell} \big[(\hat{\bD}_{t+1})_+\big]_\ell= \sum_{\ell = s_{t+1}}^{j-k_t} \hat{\mu}_{j-\ell} \big[(\hat{\bD}_{t+1})_+\big]_\ell + \sum_{\ell = j-k_t+1}^{j} \hat{\mu}_{j-\ell} \big[(\hat{\bD}_{t+1})_+\big]_\ell\\
    &\leq (1-\hat{F}_t(k_t-1)) \big[(\hat{\bD}_{t+1})_+\big]_{j-k_t} + \hat{F}_t(k_t-1) \big[(\hat{\bD}_{t+1})_+\big]_j\\
    &= \big[(\hat{\bD}_{t+1})_+\big]_{j-k_t} + \hat{F}_t(k_t-1) \underbrace{\big(\big[(\hat{\bD}_{t+1})_+\big]_j- \big[(\hat{\bD}_{t+1})_+\big]_{j-k_t} \big)}_{\geq 0}\\
    &\leq \big[(\hat{\bD}_{t+1})_+\big]_{j-k_t} + \big(\frac{1}{2}+\epsilon\big)\big(\big[(\hat{\bD}_{t+1})_+\big]_j- \big[(\hat{\bD}_{t+1})_+\big]_{j-k_t} \big)\\
    &=(\frac{1}{2}-\epsilon)\big[(\hat{\bD}_{t+1})_+\big]_{j-k_t}+\big(\frac{1}{2}+\epsilon\big)\big[(\hat{\bD}_{t+1})_+\big]_j
\end{align*}
where the second line is by $\hat{\bD}_{t+1}$ is non-decreasing, the last second line is by $ \hat{F}_t(k_{t}-1)\leq \frac{1}{2}+\epsilon$.
\end{proof}

Then we have
\begin{align*}
    & \epsilon \sum_{t=1}^T \langle \be(M-k_{t} ,M) , (\hat{\bD}_{t+1})_+\rangle \leq \epsilon \sum_{t=1}^T \langle \be(M-k_{t} ,M) , \bg_{t+1}\rangle\\
    &\overset{\text{(a)}}{=} \frac{2\epsilon}{1-2\epsilon} \sum_{t=1}^T \langle (\bI - \bZ_t)^\top \bm{1}, \bg_{t+1} \rangle \overset{\text{(b)}}{=} \frac{2\epsilon}{1-2\epsilon} \sum_{t=1}^T \big(\langle \bm{1}, \bg_{t+1}- \bg_{t}\rangle + (h_\infty+b_\infty) M  \big) \\
    &= \frac{2\epsilon}{1-2\epsilon} \big( \langle \bm{1}, \bg_{T+1} - \bg_1\rangle + TM(h_\infty + b_\infty) \big) \leq \frac{2\epsilon T M(h_\infty + b_\infty)}{1-2\epsilon} \leq \frac{8\epsilon TM(h_\infty + b_\infty)}{3},
\end{align*}
where the last inequality uses $\epsilon \leq 1/8$. Here (a) uses the identity $\bZ_t^\top \bm{1} = (\tfrac{1}{2}+\epsilon)\bm{1} + (\tfrac{1}{2}-\epsilon)(\bm{1}-\be(M-k_t,M)) = \bm{1} - (\tfrac{1}{2}-\epsilon)\be(M-k_t,M)$, obtained from the column sums of $\bQ(k_t)$, namely $\bQ(k_t)^\top \bm 1 = \be(0, M-k_t)$; rearranging gives $\be(M-k_t,M) = \tfrac{2}{1-2\epsilon}(\bI - \bZ_t)^\top \bm 1$. Step~(b) combines the adjoint relation $\langle (\bI - \bZ_t)^\top \bm 1, \bg_{t+1}\rangle = \langle \bm 1, (\bI - \bZ_t)\bg_{t+1}\rangle$ with the recursion $\bZ_t \bg_{t+1} = \bg_t - (h_\infty + b_\infty)\bm 1$. The penultimate inequality uses $\bg_{T+1} = \bm 0$ and $\bg_1 \geq \bm 0$, giving $\langle \bm 1, \bg_{T+1} - \bg_1\rangle \leq 0$.

Combining with~\eqref{eq-appendix: tail-tmp1}, we arrive at
$$\sum_{t=1}^T |\langle \bv_t, \bR_t\bH_t\rangle| \leq 2 \cdot \frac{8\epsilon TM(h_\infty+b_\infty)}{3} = \frac{16}{3}\epsilon T M(h_\infty + b_\infty) \leq 6\epsilon(T+1)M(h_\infty+b_\infty),$$ as claimed.

\subsection{Proof of Proposition~\ref{prop: over-shooting-price}}\label{appendix-sec-proof: prop: over-shooting-price}

For each $t\in [T],$ we first consider the following time index set  $\cT_{t}$ constructed via looking backward from $t:$\footnote{We define as supremum of an empty set as $0$ for convenience.} 
\begin{itemize}
    \item  Initialize $\cT_t = \{t\}$.
    \item \textbf{For $\tau = t-1,\dots, 1:$} If $s_{\tau}^\star > \max\{\sup_{l \in \cT_t} s_l^\star, s_{t}^\star \},$ add $\tau$ to $\cT_t.$ 
\end{itemize}
Then if we denote $\cT_t:= \{t_1<t_2<\dots<t_L = t\}$ (with $L\geq 1$ since $t\in\cT_t$), it holds that
\begin{align}\label{eq: backward-decomposition}
\PP(y^\star_{t} = j\lvert x_1 = 0) = \sum_{l=1}^L \PP\bigg(y^\star_{t} = j, y_{t_l}^{\star} = s_{t_l}^\star , y^\star_{t_\ell} > s_{t_\ell}^\star  ,\; \forall \ell > l \bigg\lvert x_1 = 0 \bigg),\quad \forall j > s_t^\star. 
\end{align}
\begin{proof}[Proof of~\eqref{eq: backward-decomposition}]
When $\cT_t = \{t\},$ we have $y_t^\star \leq s_t^\star$ a.s. by construction, and as a result $\PP(y_t^\star = j) = 0, \forall j > s_t^\star$, as desired.

When $\cT_t\neq\{t\},$ noticing that for any $\ell \in [L]$ and $\cA_\ell:=\{y_{t_\ell}^\star = s_{t_\ell}^\star \}$, we have by $\cA_\ell^c = \{y_{t_\ell}^\star > s_{t_\ell}^\star \}$, it holds for any $l\in [L]$ that
$\PP(y_t^\star = j,  \cap_{\ell > l}\cA_{\ell}^c)  = \PP(y_t^\star = j, \cA_l, \cap_{\ell > l}\cA_{\ell}^c)  + \PP(y_t^\star = j, \cap_{\ell \geq l}\cA_{\ell}^c)$.
Applying this rule recursively from $l = L$ leads to \begin{align*}
   \PP(y_t^\star = j)  = \PP(y_t^\star = j,  \cap_{\ell > L}\cA_{\ell}^c) =\sum_{l=1}^L \PP(y_t^\star = j, \cA_l, \cap_{\ell > l}\cA_{\ell}^c)  + \PP(y_t^\star=j, \cap_{\ell \geq 1} \cA_\ell^c).
\end{align*}
Finally, by definition of ${t_1},$ we have $\PP(\cA_1^c) = 0,$ we have the second term in the right-hand-side is $0$, thus the desired result holds.
\end{proof}

Based on the decomposition in~\eqref{eq: backward-decomposition}, we have  \begin{align*}
 & \sum_{j> s_{t}^\star}  \PP(y^\star_t = j)   \big[ C^\star_t(j) - C^\star_t(s_t^\star)  \big] = \sum_{l=1}^L\sum_{j> s_t^\star} \PP\bigg(y^\star_{t} = j, y_{t_l}^{\star} = s_{t_l}^\star , y^\star_{t_\ell} > s_{t_\ell}^\star  ,\; \forall \ell > l \bigg)  \bigg[ C^\star_t(j) - C^\star_t(s_t^\star)  \bigg]\\
 &= \sum_{l=1}^L \sum_{j> s_{t}^\star} \PP\bigg(y^\star_{t} \geq  j, y_{t_l}^{\star} = s_{t_l}^\star , y^\star_{t_\ell} > s_{t_\ell}^\star  ,\; \forall \ell > l \bigg)  \bigg[ C^\star_t(j) - C^\star_t(j-1)  \bigg]\\
 &= \sum_{l=1}^L \sum_{j= s_{t}^\star}^{M-1}  \PP\bigg(y^\star_{t} >  j, y_{t_l}^{\star} = s_{t_l}^\star , y^\star_{t_\ell} > s_{t_\ell}^\star  ,\; \forall \ell > l \bigg)   D_{t,j}^\star.
\end{align*}
Now we bound the above summation by induction from $l = L$ to $l = 1$, with the additional notation  
    $y^\star_{t \to t'} = \max_{ t\leq \tau \leq t'}\{ (s_\tau^\star - \sum_{r=\tau}^{t'-1}d_r )_+ \}$
denoting the post-ordering inventory level at $t'$ under $\pi^\star$ when starting at time $t$ with inventory level $s_t^\star$.
With this notation, we can rewrite the summation as
\begin{align*}
      \sum_{l=1}^L \sum_{j= s_{t}^\star}^{M-1}  \PP\bigg(y^\star_{t_l \to t} >  j, y_{t_l}^{\star} = s_{t_l}^\star , y^\star_{t_\ell} > s_{t_\ell}^\star  ,\; \forall \ell > l \bigg)   D_{t,j}^\star.
\end{align*}
For $l = L$, we have
\begin{align*}
  & \sum_{j \geq s_{t}^\star} \PP(y^\star_t > j, y_{t_L}^\star = s_{t_L}^\star) D^\star_{t j}
   = \sum_{j \geq s_{t}^\star} \PP(y^\star_{t_L \to t} > j, y_{t_L}^\star = s_{t_L}^\star) D^\star_{t j}  \\
  &=  \sum_{j \geq s_t^\star}  \big[\PP(y_{t_L\to t}^\star > j) - \PP(y^\star_{t_L \to t} > j, y_{t_L}^\star > s_{t_L}^\star)  \big] D_{tj}^\star.
\end{align*}
For $l < L$,  we introduce the following Lemma:
\begin{lemma}\label{lem: key-equation-induction} It holds that for every $1< l \leq L,$
\begin{equation}\label{eq: key-equation-induction}
    \begin{aligned}
    &\PP(y_{t_l\to t}^\star > j, y_{t_l}^\star = s^\star_{t_l} , y_{t_\ell}^\star > s_{t_\ell}^\star \; \forall \ell >l) - \PP( y_{ t_\ell }^\star > s_{t_\ell}^\star  \; \forall \ell > l, y^\star_{t_\ell \to t} > j\;  \exists \ell > l)\\
&\leq \PP(y_{t_l\to t}^\star > j, y_{t_\ell\to t}^\star \leq j\; \forall \ell > l  ) - \PP( y_{ t_\ell }^\star > s_{t_\ell}^\star  \; \forall \ell > l-1, y^\star_{t_\ell \to t} > j \;  \exists \ell > l-1)
\end{aligned}
\end{equation}
    \end{lemma}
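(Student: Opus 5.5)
The plan is to reduce \eqref{eq: key-equation-induction} to an elementary inclusion–exclusion argument. The two ingredients are the pathwise closed form of Lemma~\ref{lem: J-closed-form} and the trivial fact $y^\star_{t_l}\ge s^\star_{t_l}$. Throughout, $1<l\le L$ and $j$ are fixed, and I use the shorthand
\begin{align*}
C_\ell := \{y^\star_{t_\ell\to t} > j\}, \qquad B_\ell := \{y^\star_{t_\ell} > s^\star_{t_\ell}\}, \qquad B_{>m} := \cap_{\ell>m} B_\ell .
\end{align*}

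\textbf{Step 1 (nesting of the $C_\ell$).} By definition, $y^\star_{t_\ell\to t} = \max_{t_\ell\le \tau\le t}\{(s^\star_\tau - \sum_{r=\tau}^{t-1} d_r)_+\}$ is a maximum over the index set $[t_\ell,t]$. This set shrinks as $\ell$ increases, because $t_1<\dots<t_L$. Hence $y^\star_{t_\ell\to t}$ is pathwise non-increasing in $\ell$, and
\begin{align*}
C_{l-1}\supseteq C_l\supseteq C_{l+1}\supseteq\cdots .
\end{align*}
It follows that $\{\exists \ell>l-1: C_\ell\} = C_l$ and $\{\exists\ell>l: C_\ell\} = C_{l+1}$. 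With this notation the first event on the left of \eqref{eq: key-equation-induction} is $C_l\cap\{y^\star_{t_l}=s^\star_{t_l}\}\cap B_{>l}$. The first event on the right is $C_l\cap C_{l+1}^c$, since $C_\ell^c$ for all $\ell>l$ reduces to $C_{l+1}^c$ by nesting.

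\textbf{Step 2 (rearranging).} Moving the subtracted terms to the opposite sides, the claim becomes
\begin{align*}
\PP(C_l, y^\star_{t_l}=s^\star_{t_l}, B_{>l}) + \PP(C_l, B_{>l-1}) \le \PP(C_l, C_{l+1}^c) + \PP(B_{>l}, C_{l+1}).
\end{align*}
Since $y^\star_{t_l}\ge s^\star_{t_l}$ always, the events $\{y^\star_{t_l}=s^\star_{t_l}\}$ and $B_l$ partition the sample space. Also $B_{>l-1}=B_l\cap B_{>l}$. So the left-hand side collapses to $\PP(C_l, B_{>l})$.

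\textbf{Step 3 (conclusion).} Split $\PP(C_l,B_{>l})$ according to $C_{l+1}$. Using $C_{l+1}\subseteq C_l$ from Step~1,
\begin{align*}
\PP(C_l,B_{>l}) = \PP(C_l, C_{l+1}^c, B_{>l}) + \PP(C_{l+1}, B_{>l}).
\end{align*}
The first term is at most $\PP(C_l,C_{l+1}^c)$ after dropping $B_{>l}$. This gives exactly the rearranged inequality.

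The only step needing care is Step~1, namely the monotonicity of $y^\star_{t_\ell\to t}$ in $\ell$. This is where the ``$\exists\,\ell$'' unions collapse to single events. It should follow immediately from the max-representation, and it does not even require the ordering $s^\star_{t_1}>\cdots>s^\star_{t_L}$ built into $\cT_t$; that ordering is presumably used later, when the inequality is telescoped over $l$ and combined with the sign of $D^\star_{tj}$ for $j\ge s^\star_t$.
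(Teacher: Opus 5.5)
Your proof is correct, and it takes a different and shorter route than the paper.

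The paper never uses the nesting $C_{l-1}\supseteq C_l\supseteq C_{l+1}\supseteq\cdots$. Instead it keeps the quantified events $\bigcup_{\ell>l}C_\ell$ and $\bigcap_{\ell>l}C_\ell^c$ throughout, and proceeds in three moves:
\begin{itemize}
\item It rewrites $\PP(C_l, y^\star_{t_l}=s^\star_{t_l}, B_{>l})-\PP(C_l\cap\bigcap_{\ell>l}C_\ell^c)$ using the inclusion $C_l\cap\bigcap_{\ell>l}C_\ell^c\subseteq B_{>l}$, which it states without justification.
\item It reduces the claim to $\PP\big(B_{>l-1}\cap\bigcup_{\ell\ge l}C_\ell\big)\le\PP(A_1)+\PP(A_2)$ for two explicit events $A_1,A_2$.
\item It covers $B_{>l-1}\cap\bigcup_{\ell\ge l}C_\ell$ by two sets contained in $A_1\cup A_2$ and applies a union bound.
\end{itemize}
That argument is purely set-theoretic and would still work for events $C_\ell$ with no monotone structure. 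Only the inequality direction of its first rewriting is actually needed, and that direction holds without the unproved inclusion. The cost is that it is harder to follow.

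Your observation that $y^\star_{t_\ell\to t}$ is pathwise non-increasing in $\ell$ collapses all the quantifiers at once. After that, the lemma is just the one-line split in your Step 3. Combined with the paper's inclusion, it also shows the statement is in fact an equality: the event $B_{>l}$ you drop in Step 3 is already implied by $C_l\cap C_{l+1}^c$.

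The one detail to make explicit is the convention $C_{L+1}:=\emptyset$ for the case $l=L$. There the existential event over $\ell>L$ is empty and the universal condition over $\ell>L$ is vacuous, so your Step 3 reduces to $\PP(C_L)\le\PP(C_L)$.
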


By applying Lemma~\ref{lem: key-equation-induction} recursively for $1< l \leq L,$ we can get
\begin{align*}
         & \sum_{l=1}^L \sum_{j= s_{t}^\star}^{M-1}  \PP\bigg(y^\star_{t_l \to t} >  j, y_{t_l}^{\star} = s_{t_l}^\star , y^\star_{t_\ell} > s_{t_\ell}^\star  \; \forall \ell > l \bigg)   D_{tj}^\star\\
&\leq  \sum_{j \geq s_t^\star}  \bigg[\sum_{l = 1}^L \PP(y_{t_l\to t}^\star > j, y_{t_\ell\to t}^\star \leq j\; \forall \ell > l  ) - \PP( y_{ t_\ell }^\star > s_{t_\ell}^\star \; \forall \ell \geq 1, y^\star_{t_\ell \to t} > j   \; \exists \ell \geq 1)\bigg] D_{tj}^\star\\
& \leq \sum_{j \geq s_t^\star}  \sum_{l = 1}^L \PP(y_{t_l\to t}^\star > j, y_{t_\ell\to t}^\star \leq j\; \forall \ell > l  ) D_{tj}^\star,
\end{align*}
where the last line is by $D_{tj}^\star \geq 0,\; \forall j \geq s_{t}^\star.$ 

So it suffices to control $ \sum_{j \geq s_t^\star}  \sum_{l = 1}^L \PP(y_{t_l\to t}^\star > j, y_{t_\ell\to t}^\star \leq j\; \forall \ell > l  ) D_{tj}^\star$. 
For this purpose, noticing that for each $1\leq l < L$,
\begin{equation}\label{eq: appendix-tmp4}
\begin{aligned}
&W^\star_{t_l}(s_{t_l}^\star) \leq W^\star_{t_l}(s_{t_{l+1}}^\star) \implies \langle \be(s_{t_{l+1}}^\star,s^\star_{t_{l}}), \bD_{t_l}^\star\rangle \leq 0\\
&\implies \langle \be(s_{t_{l+1}}^\star,s^\star_{t_{l}}), \bA_{t_l: t-1} \bD_{t}^\star\rangle \leq -\langle  \be(s_{t_{l+1}}^\star,s^\star_{t_{l}}), \bS_{t_l:t-1}^\star \rangle.
\end{aligned}
\end{equation}

Then by Proposition~\ref{remark: A-interpretation},
\begin{equation}
    \begin{aligned}
  &\big[ \bA_{t_l:t-1}^\top \be (s_{t}^\star,s_{t_l}^\star)\big]_j =\sum_{r = s_{t}^\star}^{s_{t_l}^\star-1} \big[\bA_{t_l:t-1}\big]_{rj} = \PP(s_t^\star \leq Z_{t\to t_l}^j <s_{t_l}^\star, Z^j_{t\to \tau} \geq s_{\tau}^\star \quad \forall t_l < \tau \leq t) \\
  &= \begin{cases}
       \PP\bigg(s_{t}^\star\leq Z^j_{t \to t_l} <s_{t_l}^\star, Z_{t \to \tau}^j \geq s_\tau^\star \quad \forall t_l < \tau < t\bigg) & \text{ if } j\geq s_{t}^\star,\\
       0 & \text{otherwise.}
  \end{cases}
\end{aligned}
\end{equation}
For $j\geq s_t^\star$, the survival event in the preceding probability implies
\begin{align*}
Z^j_{t\to t_l}
=Z^j_{t\to t_{l+1}}+\sum\nolimits_{k=t_l}^{t_{l+1}-1}d_k
\geq Z^j_{t\to t_{l+1}}
\geq s_{t_{l+1}}^\star
\geq s_t^\star.
\end{align*}
As a result,
$\big[\bA_{t_l:t-1}^\top\be(s_t^\star,s_{t_l}^\star)\big]_j
=\big[\bA_{t_l:t-1}^\top\be(s_{t_{l+1}}^\star,s_{t_l}^\star)\big]_j
$.
Moreover, $Z^j_{t\to t_l}\geq j\geq s_t^\star$, while the survival condition at $\tau=t$ is exactly $j\geq s_t^\star$. Therefore, for any $j\geq s_t^\star$,
\begin{align*}
    &\PP\bigg(s_{t}^\star\leq Z^j_{t \to t_l} <s_{t_l}^\star, Z_{t \to \tau}^j \geq s_\tau^\star \quad \forall t_l < \tau < t\bigg) = \PP\bigg( Z^j_{t \to t_l} <s_{t_l}^\star, Z_{t \to \tau}^j \geq s_\tau^\star \quad \forall t_l < \tau \leq t\bigg)\\
    &= \PP\bigg( j + \sum_{k = t_l}^{t-1} d_k <s_{t_l}^\star, j + \sum_{k = \tau}^{t-1} d_k \geq s_\tau^\star \quad \forall t_l < \tau \leq t\bigg)\\
    &\overset{\text{(a)}}{=} \PP\bigg( \max_{t_l\leq \tau \leq t}\big(s_{\tau}^\star- \sum_{k = \tau}^{t-1} d_k) > j,  j + \sum_{k = \tau}^{t-1} d_k \geq s_\tau^\star \quad \forall t_l < \tau \leq t\bigg)\\
    &\overset{\text{(b)}}{=} \PP\bigg( y^\star_{t_l \to t}>j ,  j + \sum_{k = \tau}^{t-1} d_k \geq s_\tau^\star \quad \forall t_l < \tau \leq t\bigg)\\
&\overset{\text{(c)}}{=} \PP\bigg( y^\star_{t_l \to t}>j ,  y_{\tau \to t}^\star \leq j \quad \forall t_l < \tau \leq t\bigg)\overset{\text{(d)}}{=} \PP\bigg( y^\star_{t_l \to t}>j ,  y_{t_\ell \to t}^\star \leq j \quad \forall  \ell > l \bigg).
\end{align*}
Where in (a), given $j + \sum_{k = \tau}^{t-1} d_k \geq s_\tau^\star,\; \forall t_l < \tau \leq t$, it holds that
$    j + \sum_{k = t_l}^{t-1} d_k <s_{t_l}^\star \iff  \max_{t_l\leq \tau \leq t}\big(s_{\tau}^\star- \sum_{k = \tau}^{t-1} d_k) > j$.
(b) and (c) are by Lemma~\ref{lem: J-closed-form}, (d) is by \begin{align*}
    s^\star_\tau \leq s_{t_\ell}^\star, \forall t_{\ell - 1}< \tau \leq t_\ell    \implies y_{\tau \to t}^\star \leq y_{t_\ell \to t}^\star,  \forall  t_{\ell - 1}< \tau \leq t_\ell. 
\end{align*}
Thus~\eqref{eq: appendix-tmp4} gives  $\sum_{j\geq s_t^\star} \PP\bigg( y^\star_{t_l \to t}>j ,  y_{t_\ell \to t}^\star \leq j \;  \forall  \ell > l \bigg) D_{tj}^\star \leq -\langle  \be(s_{t_{l+1}}^\star,s^\star_{t_{l}}), \bS_{t_l:t-1}^\star \rangle$.
Since $t_L=t$, the $l=L$ term vanishes because $y_{t_L\to t}^\star=y_{t\to t}^\star=s_t^\star\leq j$. Taking summation over $1\leq l\leq L -1$, we arrive at
\begin{align*}
& \sum_{l=1}^{L-1} \sum_{j= s_{t}^\star}^{M-1}  \PP\bigg(y^\star_{t_l \to t} >  j, y_{t_l}^{\star} = s_{t_l}^\star , y^\star_{t_\ell} > s_{t_\ell}^\star  \; \forall \ell > l \bigg)   D_{tj}^\star\\
& \leq  \sum_{l = 1}^{L-1}  \sum_{j \geq s_t^\star} \PP(y_{t_l\to t}^\star > j, y_{t_\ell\to t}^\star \leq j\; \forall \ell > l  ) D_{tj}^\star\leq -\sum_{l = 1}^{L-1} \langle \be(s_{t_{l+1}}^\star, s_{t_l}^\star) , \bS^\star_{t_l:t-1} \rangle \\
&\leq \sum_{l = 1}^{L-1} \langle \be(s_{t_{l+1}}^\star, s_{t_l}^\star) , \sum_{k = t_l}^{t-1}  b_k \bA_{t_l:k-1} (\bm{1}-\bF_k) \rangle 
\end{align*}
Finally, by $\bF_k = \bA_k(0) \bm{1}$ and $\bA_{j:k-1} \leq \tilde{\bA}_{j:k-1}:= \prod_{i=j}^{k-1} \bA_i(0)$ entry-wisely,   \begin{align*}
    &\sum_{l = 1}^{L-1} \langle \be(s_{t_{l+1}}^\star, s_{t_l}^\star) , \sum_{k = t_l}^{t-1}  b_k \bA_{t_l:k-1} (\bm{1}-\bF_k) \rangle \leq b_\infty \sum_{l=1}^{L-1}\sum_{k = t_l}^{t-1} \langle \be(s_{t_{l+1}}^\star, s_{t_l}^\star), \tilde{\bA}_{t_l:k-1}(\bm{1}-\bF_k) \rangle\\
    &= b_\infty \sum_{l=1}^{L-1}\sum_{k = t_l}^{t-1} \langle \be(s_{t_{l+1}}^\star, s_{t_l}^\star), ( \tilde{\bA}_{t_l:k-1} -\tilde{\bA}_{t_l:k})\bm{1} \rangle  \leq b_\infty \sum_{l = 1}^{L-1} \lVert \be(s_{t_{l+1}}^\star,s_{t_l}^\star)\rVert_1 \leq b_\infty M,
\end{align*}
we get the desired $\cO(b_\infty M)$ bound.

\subsection{Proof of Lemma~\ref{lem: key-equation-induction}}

First noticing that by 
$    y_{t_l \to t}^\star > j, y_{t_\ell \to t}^\star \leq j  \; \forall \ell > l \implies  y_{t_l \to t}^\star > j, y_{t_\ell}^\star > s_{t_\ell}^\star  \; \forall \ell > l$, we have \begin{align*}
    & \PP( y_{t_l \to t}^\star > j, y_{t_l}^\star = s_{t_l}^\star, y_{t_\ell}^\star > s_{t_\ell}^\star  \; \forall \ell > l)   - \PP(y_{t_l \to t}^\star > j, y_{t_\ell \to t}^\star \leq j  \; \forall \ell > l) \\
    & = \PP( y_{t_l \to t}^\star > j, y_{t_l}^\star = s_{t_l}^\star, y_{t_\ell}^\star > s_{t_\ell}^\star  \; \forall \ell > l, y_{t_\ell \to t}^\star > j  \exists \ell > l)- \PP(y_{t_l \to t}^\star > j, y_{t_l}^\star > s_{t_l}^\star, y_{t_\ell \to t}^\star \leq j  \; \forall \ell > l)
\end{align*}
thus~\eqref{eq: key-equation-induction} is equivalent to 
\begin{align*}
&\PP( y_{t_l \to t}^\star > j, y_{t_l}^\star = s_{t_l}^\star, y_{t_\ell}^\star > s_{t_\ell}^\star  \; \forall \ell > l, y_{t_\ell \to t}^\star > j  \exists \ell > l)- \PP(y_{t_l \to t}^\star > j, y_{t_l}^\star > s_{t_l}^\star, y_{t_\ell \to t}^\star \leq j  \; \forall \ell > l)\\
&\leq  \PP( y_{ t_\ell }^\star > s_{t_\ell}^\star  \; \forall \ell > l, y^\star_{t_\ell \to t} > j   \exists \ell > l)- \PP( y_{ t_\ell }^\star > s_{t_\ell}^\star  \; \forall \ell > l-1, y^\star_{t_\ell \to t} > j   \exists \ell > l-1).
\end{align*}
On the other hand, by 
\begin{align*}
    &\PP( y_{t_l \to t}^\star > j, y_{t_l}^\star = s_{t_l}^\star, y_{t_\ell}^\star > s_{t_\ell}^\star  \; \forall \ell > l, y_{t_\ell \to t}^\star > j  \exists \ell > l) - \PP( y_{ t_\ell }^\star > s_{t_\ell}^\star  \; \forall \ell > l, y^\star_{t_\ell \to t} > j   \exists \ell > l)\\
    &=- \PP(\{ y_{t_l \to t}^\star \leq j \text{ or } y_{t_l}^\star >s_{t_l}^\star \}, y_{t_\ell}^\star > s_{t_\ell}^\star  \; \forall \ell > l, y_{t_\ell \to t}^\star > j  \exists \ell > l).
\end{align*}
It suffices to show that \begin{align*}
   &\PP( y_{ t_\ell }^\star > s_{t_\ell}^\star  \; \forall \ell > l-1, y^\star_{t_\ell \to t} > j   \exists \ell > l-1) \\
   &\leq  \PP(\underbrace{\{ y_{t_l \to t}^\star \leq j \text{ or } y_{t_l}^\star >s_{t_l}^\star \}, y_{t_\ell}^\star > s_{t_\ell}^\star  \; \forall \ell > l, y_{t_\ell \to t}^\star > j  \exists \ell > l}_{:= A_1}) + \PP(\underbrace{y_{t_l \to t}^\star > j, y_{t_l}^\star > s_{t_l}^\star, y_{t_\ell \to t}^\star \leq j  \; \forall \ell > l}_{:= A_2}).
\end{align*}
To see this, just noticing that 
\begin{align*}
    &y_{ t_\ell }^\star > s_{t_\ell}^\star  \; \forall \ell > l-1, y^\star_{t_\ell \to t} > j   \exists \ell > l-1\\
    &\implies \underbrace{\{ y_{ t_\ell }^\star > s_{t_\ell}^\star  \; \forall \ell > l-1, y_{t_l \to t}^\star \leq j, y^\star_{t_\ell \to t} > j   \exists \ell > l \}}_{:= B_1} \text{ or }  \underbrace{\{ y_{ t_\ell }^\star > s_{t_\ell}^\star  \; \forall \ell > l-1, y_{t_l \to t}^\star > j \}}_{:= B_2}.
\end{align*}
Then by $B_1 \subset A_1, B_2 \subset A_1 \cup A_2$, we have 
\begin{align*}
    &\PP( y_{ t_\ell }^\star > s_{t_\ell}^\star  \; \forall \ell > l-1, y^\star_{t_\ell \to t} > j   \; \exists \ell > l-1) = \PP(B_1 \cup B_2) \leq \PP(A_1 \cup A_2) \leq \PP(A_1)+\PP(A_2),
\end{align*}
as desired.

\section{Proof of Results in Section~\ref{sec: censored-demand}}

In this section, we provide the detailed proof of results in Section~\ref{sec: censored-demand}. For this purpose, we first provide several technical facts for later reference.

The first result is an equivalence relation between $\tilde{\cC}_{j}$ and $\cC_{j}$, which ensures using the empirical radius $\tilde{\cC}_{j}$ to build upper/lower confidence estimators is a valid choice.
\begin{lemma}\label{lem: C-hat-to-C-gap}
    Under the event~\eqref{eq: partial-CDF-confidence-bound-general} and suppose $n_j \geq 1$, define
     $\hat\cC_{j} :=  4 \sqrt{\frac{\tilde F(j)\big(1 - \tilde F(j)\big) \log(Mn/\delta)}{n_{j}}} + \dfrac{4 \log(Mn/\delta)}{n_{j}},$ it holds that
    $\lvert \hat{\cC}_{j} - \cC_{j}\rvert \leq 4\cC_{j} \leq 8\hat{\cC}_{j} + \frac{64\log(Mn/\delta)}{n_{j}} ,\quad \forall j \in [M]_+$.
\end{lemma}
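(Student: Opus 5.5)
The proof compares the two radii through the single scalar $g(x):=\sqrt{x(1-x)}$. Write $L:=\log(Mn/\delta)$ and $n:=n_j\ge1$. Then $\hat\cC_j=4\sqrt{L/n}\,g(\tilde F(j))+4L/n$ and $\cC_j=4\sqrt{L/n}\,g(F(j))+4L/n$. The additive terms cancel, so
\[
|\hat\cC_j-\cC_j| = 4\sqrt{L/n}\,\big|g(\tilde F(j))-g(F(j))\big|.
\]
The basic tool is the elementary inequality $|g(a)-g(b)|\le\sqrt{|a-b|}$ for $a,b\in[0,1]$. It holds because $|g(a)-g(b)|^2\le|g(a)^2-g(b)^2|=|a-b|\,|1-a-b|\le|a-b|$. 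The first step uses $g(a)+g(b)\ge|g(a)-g(b)|$, and the bound $|1-a-b|\le 1$ holds on $[0,1]$.

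For the first inequality I apply this with $a=\tilde F(j)$ and $b=F(j)$, and then use the event~\eqref{eq: partial-CDF-confidence-bound-general}, which gives $|\tilde F(j)-F(j)|\le\cC_j$. This yields
\[
|\hat\cC_j-\cC_j|\le 4\sqrt{L\cC_j/n}\le 2\big(L/n+\cC_j\big)
\]
by AM-GM. Since $\cC_j\ge 4L/n$, we have $L/n\le\cC_j/4$, so the right side is at most $2.5\,\cC_j\le4\cC_j$. The constants are generous, so this step is routine.

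For the second inequality $4\cC_j\le 8\hat\cC_j+64L/n$, I need to bound $\cC_j$ by $\hat\cC_j$. The crude route $\cC_j\le\hat\cC_j+|\hat\cC_j-\cC_j|\le \hat\cC_j+2.5\cC_j$ fails, because the coefficient on $\cC_j$ exceeds one. I would instead use AM-GM with a tunable weight. From $|\hat\cC_j-\cC_j|\le 4\sqrt{(L/n)\cC_j}\le \cC_j/2+8L/n$, using $4\sqrt{xy}\le x/2+8y$, we get $\cC_j\le\hat\cC_j+\cC_j/2+8L/n$. Hence $\cC_j\le 2\hat\cC_j+16L/n$, and multiplying by $4$ gives $4\cC_j\le8\hat\cC_j+64L/n$, exactly the claimed constants.

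The main obstacle is this self-referential second step, where the bound $\sqrt{\cC_j}$ on the right must be absorbed into the left-hand side. The AM-GM weighting with coefficient $1/2$ is what makes that absorption work.
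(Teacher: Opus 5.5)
Your proof is correct and follows essentially the same route as the paper. The paper likewise bounds $|\tilde F(j)(1-\tilde F(j)) - F(j)(1-F(j))| = |\tilde F(j)-F(j)|\,|1-\tilde F(j)-F(j)| \le \cC_j$ and passes to square roots by subadditivity; it then gets the second inequality from the same weighted AM--GM step $4\sqrt{\cC_j L/n_j}\le \cC_j/2 + 8L/n_j$, absorbing $\cC_j/2$ into the left-hand side. The only cosmetic difference is in the first inequality: the paper bounds each square-root term separately by $\cC_j$, whereas you bound their difference directly and obtain the slightly sharper constant $2.5$.
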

In particular, noticing that
$\tilde\cC_{j} := 2\hat\cC_{j} + \dfrac{16\log(Mn/\delta)}{n_{j}}$,
Lemma~\ref{lem: C-hat-to-C-gap} implies \begin{equation}\label{eq: C-hat-to-C-gap-corollary}
     \hat{\cC}_{j}  \leq 5\cC_{j},\quad \cC_{j} \leq 40 \hat{\cC}_{j}, \quad \cC_{j} \leq \tilde{\cC}_{j} \leq 14\cC_{j},
\end{equation}

Now we establish the second result to control the bias incurred by the sequential UCB construction process when selecting an earlier $j_0 < j.$
\begin{lemma}\label{lem: C-self-bounding}
For $0\le j_0\le j\le M-1$. Suppose
$F(j)-F(j_0)\le A\cC_{j_0}$
for some  constant $A>0$. Then
$\cC_{j_0}\le (2+4A)\cC_{j}$. 
\end{lemma}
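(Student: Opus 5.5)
The plan is to work directly from the definition
\[
\cC_j = 4\sqrt{\frac{F(j)(1-F(j))L}{n_j\vee 1}} + \frac{4L}{n_j\vee 1},
\qquad L:=\log(Mn/\delta),
\]
and compare its two pieces at $j_0$ and at $j$. We will use two monotonicity facts. First, $n_j=\sum_k \bm{1}\{y_k>j\}$ is non-increasing in $j$, so $n_{j_0}\vee 1 \ge n_j\vee 1$ and $1/(n_{j_0}\vee 1)\le 1/(n_j\vee 1)$. Second, $F(j)\ge F(j_0)$. Since $n_{j_0}$ is at least as large, the only source of trouble is the variance factor. If $F(j_0)(1-F(j_0))$ is much larger than $F(j)(1-F(j))$, then $\cC_{j_0}$ could exceed $\cC_j$. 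This happens only when $1-F(j)$ is small while $1-F(j_0)$ is not, and the hypothesis $F(j)-F(j_0)\le A\cC_{j_0}$ rules that out.

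The argument proceeds in the following steps.

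\textbf{Reduction.} Write $\cC_{j_0}\le 4\sqrt{F(j_0)(1-F(j_0))L/m}+4L/m$ with $m:=n_j\vee 1$, using the first monotonicity fact. The linear term is then already at most $\cC_j$.

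\textbf{Variance comparison.} Use $F(j_0)\le F(j)$ and
\[
1-F(j_0) = (1-F(j)) + (F(j)-F(j_0)) \le (1-F(j)) + A\cC_{j_0}.
\]
This gives
\[
F(j_0)(1-F(j_0)) \le F(j)(1-F(j)) + F(j_0)\,A\cC_{j_0} \le F(j)(1-F(j)) + A\cC_{j_0}.
\]

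\textbf{Split the square root.} By subadditivity of the square root, $\sqrt{a+b}\le\sqrt a+\sqrt b$, so
\[
\cC_{j_0}\le \cC_j + 4\sqrt{A\cC_{j_0}L/m}.
\]
Now apply AM--GM in the form $4\sqrt{A\cC_{j_0}L/m}\le \tfrac12\cC_{j_0} + 8AL/m$. Rearranging yields $\tfrac12\cC_{j_0}\le \cC_j + 8AL/m$.

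\textbf{Absorb.} Since $4L/m\le \cC_j$, we have $8AL/m\le 2A\cC_j$. Hence $\cC_{j_0}\le (2+4A)\cC_j$, which is exactly the claimed constant.

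The main obstacle is the variance comparison step: the bound on $1-F(j_0)$ must be used without creating a circular dependence on $\cC_{j_0}$ that cannot be closed. The AM--GM absorption with weight $1/2$ is what resolves this, and it is also what produces the constants $2$ and $4A$. If the $n_j\vee 1$ convention needs care when $n_j=0$, that case is handled the same way, since $m=1$ in both terms.
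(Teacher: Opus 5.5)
Your proof is correct and follows essentially the same route as the paper: compare the variance factors so that $F(j_0)(1-F(j_0))\le F(j)(1-F(j))+A\cC_{j_0}$, pass from $n_{j_0}\vee 1$ to $n_j\vee 1$ by monotonicity of the counts, split the square root, absorb $\cC_{j_0}/2$ via AM--GM, and use $4L/(n_j\vee 1)\le \cC_j$ to reach the constant $2+4A$. The only cosmetic difference is in the variance comparison: the paper uses the identity $F(j_0)(1-F(j_0))-F(j)(1-F(j))=(F(j)-F(j_0))(F(j)+F(j_0)-1)$, while you expand $1-F(j_0)$ directly, and both give the same bound.
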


Finally, we state the following deterministic inequality for the later usage.
\begin{lemma}\label{lem: ucb-variance-transfer}
Let $0\le p\le u\le 1$ and $\xi\ge 0$, and define $d:=u-p$.
For any constants $a,b>0$, the following two statements hold:
\begin{align}
d
&\le
a\sqrt{p(1-p)\xi}+b\xi
\implies
d
\le
2a\sqrt{u(1-u)\xi}+(a^2+2b)\xi,
\label{eq: p-to-u-variance-transfer}
\\
d
&\le
a\sqrt{u(1-u)\xi}+b\xi
\implies
d
\le
2a\sqrt{p(1-p)\xi}+(a^2+2b)\xi.
\label{eq: u-to-p-variance-transfer}
\end{align}
Consequently,
$
u-p\lesssim \sqrt{p(1-p)\xi}+\xi
\iff 
u-p\lesssim \sqrt{u(1-u)\xi}+\xi.
$
\end{lemma}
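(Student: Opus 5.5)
The plan is a direct algebraic argument: compare the two variance proxies, then absorb the resulting $\sqrt{d}$ term by AM--GM. It uses nothing from earlier in the paper.

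\textbf{Step 1 (comparing the variance proxies).} Write $v(z):=z(1-z)$. For $0\le p\le u\le 1$,
\begin{align*}
v(u)-v(p)=(u-p)(1-u-p),
\end{align*}
so $\lvert v(u)-v(p)\rvert = d\,\lvert 1-u-p\rvert\le d$. Subadditivity of the square root, $\sqrt{x}\le\sqrt{y}+\sqrt{\lvert x-y\rvert}$ for $x,y\ge 0$, then gives
\begin{align*}
\sqrt{p(1-p)}\le\sqrt{u(1-u)}+\sqrt{d},\qquad \sqrt{u(1-u)}\le\sqrt{p(1-p)}+\sqrt{d}.
\end{align*}

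\textbf{Step 2 (proof of~\eqref{eq: p-to-u-variance-transfer}).} Substituting the first inequality of Step 1 into the hypothesis yields
\begin{align*}
d\le a\sqrt{u(1-u)\xi}+a\sqrt{d\xi}+b\xi.
\end{align*}
The only nontrivial point is the implicit $\sqrt{d}$ term on the right. We handle it with AM--GM: $a\sqrt{d\xi}\le d/2+a^2\xi/2$. This gives
\begin{align*}
d/2\le a\sqrt{u(1-u)\xi}+(a^2/2+b)\xi.
\end{align*}
Multiplying by $2$ gives exactly $d\le 2a\sqrt{u(1-u)\xi}+(a^2+2b)\xi$.

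\textbf{Step 3 (proof of~\eqref{eq: u-to-p-variance-transfer}).} The argument is identical with the roles of $u$ and $p$ exchanged, using the second inequality of Step 1.

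\textbf{Step 4 (the equivalence).} The stated equivalence up to absolute constants follows by applying~\eqref{eq: p-to-u-variance-transfer} in one direction and~\eqref{eq: u-to-p-variance-transfer} in the other, with $a,b$ absolute constants.

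I do not expect any real obstacle. The only point needing care is the self-referential $\sqrt{d}$ term, and the AM--GM split is chosen to produce precisely the stated constants $2a$ and $a^2+2b$.
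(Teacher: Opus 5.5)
Your proof is correct and follows essentially the same route as the paper: bound $\lvert p(1-p)-u(1-u)\rvert\le d$, split the square root as $\sqrt{(u(1-u)+d)\xi}\le\sqrt{u(1-u)\xi}+\sqrt{d\xi}$, and absorb $a\sqrt{d\xi}\le d/2+a^2\xi/2$ into the left-hand side to obtain exactly $2a$ and $a^2+2b$. The only cosmetic difference is that the paper gets the first bound from the $1$-Lipschitzness of $x(1-x)$ on $[0,1]$, whereas you use the factorization $(u-p)(1-u-p)$ with $\lvert 1-u-p\rvert\le 1$.
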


\subsection{Proof of Proposition~\ref{prop: biased-F}}

\begin{proof}[Proof of Proposition~\ref{prop: biased-F}]

When the maximum $n_j \vee 1$ in both inequalities are taken at $1$, the results hold directly by $0\leq F^\mathrm{UCB}(j) \leq 1, 0\leq F^\mathrm{LCB}(j) \leq 1,$ so in the followed proof, we focus on the case $n_j > 1.$

We first prove the first inequality. Under the event~\eqref{eq: partial-CDF-confidence-bound-general}, ${F}^\mathrm{UCB}(j) \geq F(j)$ holds by induction on $j$: assuming $F^\mathrm{UCB}(j-1)\geq F(j-1)$, we have $\tilde F(j) + \tilde{\cC}_{j} \geq \tilde F(j) + \cC_{j} \geq F(j)$, where the first inequality uses Lemma~\ref{lem: C-hat-to-C-gap}'s corollary~\eqref{eq: C-hat-to-C-gap-corollary} ($\cC_{j}\leq\tilde{\cC}_{j}$) and the second uses~\eqref{eq: partial-CDF-confidence-bound-general}; hence $F^\mathrm{UCB}(j) = \min\{1, \max\{F^\mathrm{UCB}(j-1), \tilde F(j) + \tilde{\cC}_{j}\}\}\geq F(j)$.

For the upper bound, noticing the monotone nature $n_{1}\geq n_{2} \geq \dots \geq n_{M-1},$ if the maximum in~\eqref{eq: def-UCB} is attained at some $j_0 \leq j,$ then \begin{align*}
   &{F}^\mathrm{UCB}(j) \leq \tilde{F}(j_0) + \tilde{\cC}_{j_0} \overset{\text{(a)}}{\leq} F(j_0)+ 15\cC_{j_0}  \\
&\overset{\text{(b)}}{\leq} F(j) +  60 \sqrt{\frac{{F}(j) (1-{F}(j_0)) \log(Mn/\delta)}{n_{j}}}+ \frac{60 \log(Mn/\delta)}{n_{j}}\\
&\overset{\text{(c)}}{\leq} F(j) + c_0' \cC_{j},
\end{align*}
for some absolute constant $c_0'$. 
Here:
(a) follows from~\eqref{eq: partial-CDF-confidence-bound-general} together with~\eqref{eq: C-hat-to-C-gap-corollary}, giving $$\tilde F(j_0) + \tilde{\cC}_{j_0} \leq F(j_0) + \cC_{j_0} + 14\cC_{j_0} = F(j_0) + 15\cC_{j_0}.$$
(b) uses $F(j_0)\leq F(j)$ for the first factor and $n_{j_0}\geq n_{j}$ for the denominator.
To see (c), noticing that
\begin{align*}
  F(j_0) + 15\cC_{j_0}  \geq \tilde{F}(j_0) + \tilde{\cC}_{j_0} \geq F^\mathrm{UCB}(j) \geq F(j) \implies F(j) - F(j_0) \leq 15\cC_{j_0}.
\end{align*}
Then by Lemma~\ref{lem: C-self-bounding} with $A = 15$, we have $\cC_{j_0} \leq \bar{c}_0\cC_{j}$
for some absolute constant $\bar{c}_0.$
Therefore, $$F(j)(1-F(j_0)) \leq F(j)(1-F(j)) + \bar{c}_0\cC_{j}.$$
Finally, by the elementary inequalities $\sqrt{x+y}\leq\sqrt{x}+\sqrt{y},\sqrt{xy}\leq(x+y)/2, \forall x,y \geq 0$,
\begin{align*}
    \sqrt{\frac{F(j)(1-F(j_0))\log(Mn/\delta)}{n_{j}}} &\leq \sqrt{\frac{F(j)(1-F(j))\log(Mn/\delta)}{n_{j}}} + \frac{\bar{c}_0\cC_{j}}{2} + \frac{\log(Mn/\delta)}{2n_{j}}\leq (\bar{c}_0 + 1)\cC_{j}.
\end{align*}
Selecting $c_0' = 60(\bar{c}_0 +1)$ then concludes the proof for the first inequality.

For the second inequality, $F^\mathrm{LCB}(j) \leq F(j)$ holds by induction on $j$ in parallel to the UCB direction: $\tilde F(j) - \tilde{\cC}_{j} \leq \tilde F(j) - \cC_{j} \leq F(j)$, where the first inequality uses $\cC_{j}\leq\tilde{\cC}_{j}$ and the second uses~\eqref{eq: partial-CDF-confidence-bound-general}; hence $F^\mathrm{LCB}(j) = \max\{F^\mathrm{LCB}(j-1), \tilde F(j) - \tilde{\cC}_{j}\} \leq F(j)$.

For the lower bound, we have
\begin{align*}
F^\mathrm{LCB}(j) \geq \tilde{F}(j) - \tilde{\cC}_{j}
\overset{\text{(d)}}{\geq} F(j) - \cC_{j} - \tilde{\cC}_{j}
\overset{\text{(e)}}{\geq} F(j) - \cC_{j} - 14\cC_{j} = F(j) - 15\cC_{j},
\end{align*}
where (d) follows from~\eqref{eq: partial-CDF-confidence-bound-general}, and (e) uses $\tilde{\cC}_{j} \leq 14\cC_{j}$ from~\eqref{eq: C-hat-to-C-gap-corollary}. Thus selecting $c_0 = \max\{c_0',15\} $ finishes the proof.
\end{proof}

\subsection{Proof of Lemma~\ref{lem: policy-ordering-UCB}}

Recall that the base-stock levels obey
$$s_t = \min\{y: D_t^\mathrm{UCB}(y)\ge 0\} ,\quad s^\star_t = \min\{y: D^\star_t(y)\ge 0\}, \forall t\in [T],$$ it suffices to show that $\Delta \bD_{t}:= \bD^\star_{t}-\bD_{t}^\mathrm{UCB} \leq \bm{0} , \forall t\in [T].$

We now use the induction to show this fact.
First, when $t = T$, the result holds directly by
$$\Delta \bD_T = (h_T+b_T)(\bF_T-\bF_T^\mathrm{UCB}) \leq \bm{0}.$$
Now suppose for any $k>t$ we have $\Delta \bD_{k} \leq \bm{0}$. Applying~\eqref{eq: recursion-D-diff} with $\hat{\bF} = \bF^\mathrm{UCB}$ and $\hat{\bF}_t = \bF_t^{\mathrm{UCB}}$ gives \begin{align*}
    \Delta \bD_{t} = \bA_{t}(s^\star_{t+1})\Delta \bD_{t+1} + [\bA_t(s_{t+1}^\star) - \bA^\mathrm{UCB}_t(s_{t+1})]\bD_{t+1}^\mathrm{UCB} + (h_t+b_t)\Delta \bF_t,
\end{align*}
each of the three terms is non-positive: (i) $\bA_t(s_{t+1}^\star)\Delta \bD_{t+1} \leq \bm{0}$ by the induction hypothesis and $\bA_t(s_{t+1}^\star) \geq \bm{0}$ entry-wise; (ii) $(h_t+b_t)\Delta \bF_t \leq \bm{0}$ since $\Delta \bF_t = \bF_t - \bF_t^\mathrm{UCB} \leq \bm{0}$; (iii) for the middle term, \begin{align*}
    [\bA_t(s_{t+1}^\star) - \bA^\mathrm{UCB}_t(s_{t+1})]\bD_{t+1}^\mathrm{UCB} = \underbrace{[\bA_t(s_{t+1}^\star) - \bA_t(s_{t+1})]\bD_{t+1}^\mathrm{UCB}}_{\leq \bm{0}} + \underbrace{\Delta \bA_t(s_{t+1})\bD_{t+1}^\mathrm{UCB} = \bG_t \bH_t^\mathrm{UCB}}_{\leq \bm{0}},
\end{align*}
where,
\begin{enumerate}
    \item To see the sign of the first term: by the induction hypothesis, $s_{t+1} \leq s_{t+1}^\star,$ then for $j \in [M-1]_+$,
    $$ \big[[\bA_t(s_{t+1}^\star) - \bA_t(s_{t+1})]\bD_{t+1}^\mathrm{UCB}\big]_j = -\sum_{k = s_{t+1}}^{\min\{ s_{t+1}^\star-1, j\} } \mu_{t,j-k} [\bD^\mathrm{UCB}_{t+1}]_k \leq 0$$
    by  $[\bD_{t+1}^{\mathrm{UCB}}]_k \geq 0$ for $k \geq s_{t+1},$ this gives the non-positivity of the first term.
    \item To see the sign of the second term: Applying the summation-by-parts argument in~\eqref{eq: AD-to-GH}, we obtain \begin{align*}
        \Delta \bA_t(s_{t+1})\bD_{t+1}^\mathrm{UCB} = \bG_t\bH_t^\mathrm{UCB}  
    \end{align*} with $\bG_t \leq \bm{0}$ (as $\bF_t \leq \bF_t^\mathrm{UCB}$) and $\bH_t^\mathrm{UCB} \geq \bm{0}$ (as $D_{t+1}^{\mathrm{UCB}}(s_{t+1})\geq 0$ and  $\bD_{t+1}^{\mathrm{UCB}}$ is increasing) entry-wisely. 
\end{enumerate}
This then finishes the induction.

\subsection{Proof of Theorem~\ref{thm: censored-sample-complexity} and Theorem~\ref{thm: censored-sample-complexity-identical}}\label{appendix: censored-demand-proofs}

In this section, we provide the proofs of Theorem~\ref{thm: censored-sample-complexity} and Theorem~\ref{thm: censored-sample-complexity-identical}. The two proofs nearly follow the same arguments; the only differences arise from the distinct confidence bounds and demand estimators used by the two algorithms, which depend on the product-level count $N_{t,j}$ or its aggregated counterpart $N_{\mathsf{agg},j} = \sum_{t=1}^T N_{t,j}$.
In the following, we focus on the proof of Theorem~\ref{thm: censored-sample-complexity}, and substituting $N_{t,j}$ by $N_{\mathsf{agg},j}$, with the count upper bound $N$ replaced by $NT$ accordingly, gives the proof of Theorem~\ref{thm: censored-sample-complexity-identical}.

Throughout this proof, quantities carrying the superscript $\mathrm{UCB}$
are computed from the upper-biased CDFs
$\{\bF_t^{\mathrm{UCB}}\}_{t=1}^T$. In particular,
$\bD_t^{\mathrm{UCB}}$, $\bA_t^{\mathrm{UCB}}(\cdot)$, and
$\bH_t^{\mathrm{UCB}}$ are defined as the analogues of
$\hat{\bD}_t$, $\hat{\bA}_t(\cdot)$, and $\bH_t$ in
Appendix~\ref{appendix-sec: proof-of-uniform-convergence-section},
with $\hat{\bF}_t$ replaced by $\bF_t^{\mathrm{UCB}}$.
We keep $s_t$ to denote the UCB base-stock policy and $y_t$ to denote
the trajectory induced by this policy. Throughout this proof, the plain $\bD_t \equiv \bD_t^\star$ always denotes the derivative
under the \emph{true} CDF sequence $\{\bF_t\}_{t=1}^T$ and the
\emph{optimal} base-stock levels $\{s_{t+1}^\star\}$ (cf.~\eqref{eq: D-forward-equation}),
to be distinguished from its optimistic counterpart $\bD_t^{\mathrm{UCB}}$,
which is built from $\{\bF_t^{\mathrm{UCB}}\}_{t=1}^T$ and the UCB levels $\{s_{t+1}\}$.

For the given $\delta > 0,$ we also assume the following concentration bound holds throughout the proof:
\begin{align}\label{eq: UCB-version-bernstein}
&0 \leq  F_t^\mathrm{UCB}(j)-F_t(j)  \lesssim \sqrt{\frac{F_t^\mathrm{UCB}(j)(1-F_t^\mathrm{UCB}(j)) \log(MTN/\delta) }{N_{t,j}\vee 1}} + \frac{\log(MTN/\delta)}{N_{t,j}\vee 1}.
\end{align}
Actually, \eqref{eq: UCB-version-bernstein} holds with probability at least $1-\delta$ uniformly for all $t\in [T],j \in [M]_+$ via combining the deterministic fact in Lemma~\ref{lem: ucb-variance-transfer} with $
    u = F_t^\mathrm{UCB}(j), p = F_t(j)$ and~\eqref{eq: partial-CDF-confidence-bound-UCB}. This is the source of high-probability statement in both Theorems.

\begin{proof}[Proof of Theorem~\ref{thm: censored-sample-complexity}]
    Applying the cost decomposition in Theorem~\ref{thm: cost-decomposition} under $\{\bF_t^\mathrm{UCB}\}_{t=1}^T$ gives
\begin{align*}
    \max_{x\in [M]_+}\Delta(x;\pi) \leq \underbrace{\sum_{t = 1}^T \big\langle \bv_{t}, \big[\bA_t(s_{t+1}) - {\bA}^\mathrm{UCB}_t(s_{t+1})  \big]\bD^\mathrm{UCB}_{t+1} + \Delta \bc_t\big\rangle}_{:= \mathsf{Err}_T}.
\end{align*}
We may assume $\cN^\star<\infty$, since otherwise $\PP(y_t^\star>j\lvert x_1=0)=0$ for all $t,j$ (each count $N_{t,j}\vee1$ being finite), so $\bv_t=\bm0$ and the right-hand side above vanishes; in particular $a_\star>0$, as required by Lemma~\ref{lem: peeling-bound}.
By~\eqref{eq: UCB-version-bernstein}, we have with probability at least $1-\delta,$
\begin{align*}
   -c_0(h_\infty + b_\infty) \sqrt{\frac{\log(MTN/\delta)}{N_{t,j}\vee 1}}  \leq  \Delta{c}_{t,j} \leq 0,
\end{align*}
for some absolute constant $c_0$. Since $s_t \leq s_t^\star$ for all $t\in [T],$
\begin{align}\label{eq: ucb-analysis-v-bound}
  -\PP(y_t^\star > j\lvert x_1 = 0) \leq v_{t,j} = \PP(y_t^\star \leq j\lvert x_1 = 0) - \PP(y_t \leq j\lvert x_1 = 0) \leq 0.
\end{align}
As a consequence, it holds that
\begin{align}\label{eq: ucb-analysis-tmp1}
   0 \leq \langle \bv_t, \Delta\bc_t \rangle &\lesssim (h_\infty + b_\infty)  \sum_{j=0}^{M-1} \PP(y_t^\star > j\lvert x_1 = 0)\sqrt{\frac{\log(MTN/\delta)}{N_{t,j}\vee 1}}
\end{align}
Summing over $t$ and applying the Cauchy--Schwarz inequality, with $\sum_{t,j}\PP(y_t^\star>j\lvert x_1=0)\le MT$ , gives $$\sum_{t=1}^T\langle \bv_t,\Delta\bc_t\rangle\lesssim (h_\infty+b_\infty)MT\sqrt{\log(MTN/\delta)/\cN^\star}.$$

It remains to control $\sum_{t=1}^T \langle \bv_t, [\bA_t(s_{t+1}) - \bA^\mathrm{UCB}_t(s_{t+1})]\bD^\mathrm{UCB}_{t+1}\rangle$. Revisiting the proof of Theorem~\ref{thm: cost-decomposition}, this can be equivalently written as $\sum_{t=1}^T \sum_{j=0}^{M-1} v_{tj} \big[ \bG_{t} \bH^\mathrm{UCB}_t\big]_j,$
and it holds by Proposition~\ref{prop: biased-F} and~\eqref{eq: ucb-analysis-v-bound} that
\begin{equation}\label{eq: ucb-analysis-tmp2}
    \bG_{t}  \leq \bm 0 \implies \langle \bv_t, \big[\bA_t(s_{t+1}) - \bA^\mathrm{UCB}_t(s_{t+1}) \big]\bD^\mathrm{UCB}_{t+1}\rangle = \langle \bv_t, \bG_t \bH^\mathrm{UCB}_t\rangle \geq 0.
\end{equation}
\allowdisplaybreaks
Now for each $t$, denoting $\eta_{t,j}:=  \sqrt{\frac{\log(MTN/\delta)}{N_{t,j}\vee 1}},
    \zeta_{t,j}:=  \frac{\log(MTN/\delta)}{N_{t,j}\vee 1},$
we can use the error bound in~\eqref{eq: UCB-version-bernstein} to obtain
\begin{align}
    &\sum_{t=1}^T \sum_{j = 0}^{M-1} v_{tj} \big[\bG_{t}\bH^\mathrm{UCB}_t \big]_j= -\sum_{t=1}^T \sum_{j = 0}^{M-1} \sum_{m = s_{t+1}}^j v_{tj} \big( {F}^{\mathrm{UCB}}_t(j-m) - F_t(j-m)  \big) [\bH^\mathrm{UCB}_t]_m \\
    &\overset{\mathrm{(a)}}\lesssim -\sum_{t=1}^T \sum_{j = 0}^{M-1} \sum_{m = s_{t+1}}^jv_{tj} \bigg(\sqrt{{F}^\mathrm{UCB}_t(j-m)(1-{F}^\mathrm{UCB}_t(j-m) )} \eta_{t,j}  + \zeta_{t,j}\bigg) [\bH^\mathrm{UCB}_t]_m  \label{eq: ucb-analysis-tmp3}
\end{align}
where in (a), we have used $N_{t,j-m}\geq N_{t,j}$ and the error bound~\eqref{eq: UCB-version-bernstein}.

To control the variance term, we introduce the following elementary fact through the peeling technique \citep{bartlett2005local}
\begin{lemma}\label{lem: peeling-bound}.
Given non-negative sequences $\{p_m\},\{h_m\}$ over a index set $\cM\subset \mathbb{N}$ satisfying the monotonicity condition $1\geq p_m \geq p_{m'}\geq 0$ for all $m<m'$, it holds for any $0<a<1/4$ that
\begin{align*}
  \sum_{m\in \cM}  \sqrt{p_{m}(1-p_m)} h_{m} \lesssim \sqrt{\log(e/a)}  \sqrt{\sum_{m\in \cM} h_{m}^2 p_{m}(1-p_m) + 2 \sum_{m< m'} h_{m}h_{m'} p_{m'}(1-p_m)} + \sqrt{a}\sum_{m\in \cM} h_m.
\end{align*}    
\end{lemma}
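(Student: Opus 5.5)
The plan is to read the quadratic form on the right-hand side as a variance and then run a dyadic peeling over the size of $p_m(1-p_m)$. Write
\[
Q:=\sum_{m\in\cM} h_m^2p_m(1-p_m)+2\sum_{m<m'}h_mh_{m'}p_{m'}(1-p_m).
\]
Because $p$ is non-increasing in $m$, for $m<m'$ we have $p_{m'}=\min\{p_m,p_{m'}\}$ and $1-p_m=1-\max\{p_m,p_{m'}\}$. Hence
\[
Q=\sum_{m,m'\in\cM}h_mh_{m'}\,\kappa(p_m,p_{m'}),\qquad \kappa(p,p'):=\min\{p,p'\}\bigl(1-\max\{p,p'\}\bigr)\ge 0 .
\]
Equivalently, $Q=\Var\bigl(\sum_m h_m\bm 1\{U\le p_m\}\bigr)$ for $U$ uniform on $[0,1]$, although only the entrywise nonnegativity $\kappa\ge 0$ is needed. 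Since $h_m\ge 0$, every term of $Q$ is nonnegative. Consequently, for any partition $\cM=\cM_0\cup\bigcup_B \cM_B$, the sum of the diagonal blocks satisfies $\sum_B Q_B\le Q$, where $Q_B:=\sum_{m,m'\in\cM_B}h_mh_{m'}\kappa(p_m,p_{m'})$.

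Next, partition the indices by $g_m:=\min\{p_m,1-p_m\}\le 1/2$.
\begin{itemize}
\item The set $\cM_0:=\{m:g_m\le a\}$ contributes at most $\sum_{m\in\cM_0}\sqrt{g_m}\,h_m\le\sqrt a\sum_m h_m$, because $p(1-p)\le g$.
\item The remaining indices have $g_m\in(a,1/2]$. Split them into dyadic bands $\cM_{k,-}=\{m: p_m\le 1/2,\ p_m\in(2^{-k-1},2^{-k}]\}$ and $\cM_{k,+}=\{m: p_m>1/2,\ 1-p_m\in(2^{-k-1},2^{-k}]\}$, for $k=1,\dots,\lceil\log_2(1/a)\rceil$. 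The number of bands is $K\lesssim\log(e/a)$.
\end{itemize}

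The key local estimate is that on each band $B$,
\[
S_B:=\sum_{m\in B}\sqrt{p_m(1-p_m)}\,h_m \quad\text{satisfies}\quad S_B^2\le 4Q_B .
\]
To see this for $B=\cM_{k,-}$, take any $m,m'\in B$. Then $\sqrt{p_m(1-p_m)p_{m'}(1-p_{m'})}\le 2^{-k}$. On the other side, $\min\{p_m,p_{m'}\}>2^{-k-1}$ and $1-\max\{p_m,p_{m'}\}\ge 1/2$, so $\kappa(p_m,p_{m'})\ge 2^{-k-2}$. The case $\cM_{k,+}$ is symmetric, with the roles of $p$ and $1-p$ exchanged. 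Expanding $S_B^2$ as a double sum and comparing it term by term with $Q_B$ gives the claim.

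Finally, apply Cauchy--Schwarz across the $K$ bands:
\[
\sum_{m\notin\cM_0}\sqrt{p_m(1-p_m)}\,h_m=\sum_B S_B\le\sqrt K\sqrt{\textstyle\sum_B S_B^2}\le 2\sqrt K\sqrt{\textstyle\sum_B Q_B}\le 2\sqrt{K}\sqrt{Q}.
\]
Adding the $\cM_0$ contribution gives $\sqrt{\log(e/a)}\sqrt Q+\sqrt a\sum_m h_m$ up to an absolute constant, which is the stated bound.

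The only genuinely non-routine step is recognizing the cross term $p_{m'}(1-p_m)$ as the nonnegative kernel $\kappa$. This is what makes dropping the off-block terms legitimate, and it is where the monotonicity assumption on $\{p_m\}$ is used. The band comparison then needs only the two-sided dyadic control of both $p$ and $1-p$, which is exactly why the bands are split into the $p\le 1/2$ and $p>1/2$ sides.
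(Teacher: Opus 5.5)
Your proof is correct and follows the paper's argument. Both split off the indices with small $p_m(1-p_m)$, peel the remaining indices into $O(\log(e/a))$ dyadic bands, bound $\sqrt{p_m(1-p_m)p_{m'}(1-p_{m'})}$ by an absolute constant times $p_{m'}(1-p_m)$ within each band, drop the nonnegative cross-band terms, and apply Cauchy--Schwarz across bands. The only cosmetic difference is that the paper bands by the odds ratio $\theta_m=p_m/(1-p_m)$, while you band by $\min\{p_m,1-p_m\}$ on each side of $1/2$; the two partitions agree up to constant factors.
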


For each $t,j$ denoting \begin{align*}
    \cV_{t,j}:= &\sum_{s_{t+1}\leq m\leq j} [\bH_t^{\mathrm{UCB}}]_{m}^2 F_{t}^{\mathrm{UCB}}(j-m)\big(1-F_{t}^{\mathrm{UCB}}(j-m)\big) \\
    &+ 2 \sum_{s_{t+1}\leq m< m'\leq j} [\bH_t^{\mathrm{UCB}}]_{m}[\bH_t^{\mathrm{UCB}}]_{m'} F_{t}^{\mathrm{UCB}}(j-m')\big(1-F_{t}^{\mathrm{UCB}}(j-m)\big), 
\end{align*}
it holds that by selecting $a = a_\star := \frac{1}{8}\min\{1,\,\log(MTN/\delta)/(\cN^\star \wedge N)\}$ in Lemma~\ref{lem: peeling-bound},
\begin{align*}
    \sum_{m = s_{t+1}}^j \sqrt{{F}^\mathrm{UCB}_t(j-m)(1-{F}^\mathrm{UCB}_t(j-m) )}   [\bH^\mathrm{UCB}_t]_m\lesssim \sqrt{\log(e/a_\star)\, \cV_{t,j}} + \sqrt{a_\star}\,[(\bD_{t+1}^\mathrm{UCB})_+]_j.
\end{align*}
Together with~\eqref{eq: ucb-analysis-tmp3}, using $-v_{tj}\ge0$ and $\sum_{m=s_{t+1}}^j[\bH^\mathrm{UCB}_t]_m = [(\bD^\mathrm{UCB}_{t+1})_+]_j$, we get \begin{align*}
      &\sum_{t=1}^T \sum_{j = 0}^{M-1} v_{tj} \big[\bG_{t}\bH^\mathrm{UCB}_t \big]_j \lesssim -\sum_{t=1}^{T}\sum_{j=0}^{M-1} v_{tj}\eta_{t,j}\sqrt{\log(e/a_\star) \cV_{t,j}} - \sum_{t=1}^{T}\sum_{j= 0}^{M-1} v_{tj}\zeta_{t,j} [(\bD_{t+1}^{\mathrm{UCB}})_+]_j\\
      &\quad + \underbrace{\sqrt{a_\star}\sum_{t=1}^T\sum_{j=0}^{M-1}(-v_{tj})\eta_{t,j}[(\bD_{t+1}^\mathrm{UCB})_+]_j}_{:=\cR_{\mathrm{peel}}}\\
      &\lesssim \sqrt{\frac{MT\log(MTN/\delta)\log(e/a_\star)}{\cN^\star}}\,\sqrt{-\sum\nolimits_{t=1}^{T}\sum\nolimits_{j=0}^{M-1} v_{tj}\cV_{t,j}} - \sum_{t=1}^{T}\sum_{j= 0}^{M-1} v_{tj}\zeta_{t,j} [(\bD_{t+1}^{\mathrm{UCB}})_+]_j + \cR_{\mathrm{peel}}.
\end{align*}
The Cauchy--Schwarz step in the first term uses $\eta_{t,j}^2 = \zeta_{t,j}$ together with the coverage bound \begin{align*}
-\sum_{t=1}^{T}\sum_{j=0}^{M-1} v_{tj}\,\zeta_{t,j} \le \log(MTN/\delta)\sum_{t=1}^{T}\sum_{j=0}^{M-1} \frac{\PP(y_t^\star>j\lvert x_1=0)}{N_{t,j}\vee 1} \le \frac{MT\log(MTN/\delta)}{\cN^\star},
\end{align*}
which follows from~\eqref{eq: ucb-analysis-v-bound} and the definition~\eqref{eq: def-effective-sample-size} of $\cN^\star$.
By~\eqref{eq: D-recursion}, $[(\bD_{t+1}^{\mathrm{UCB}})_+]_j= \cO\big((h_\infty + b_\infty)T\big)$; hence both $\cR_{\mathrm{peel}}$ and the $\zeta_{t,j}$ term are of lower order:
\begin{align*}
      \cR_{\mathrm{peel}} \lesssim \sqrt{a_\star}\,(h_\infty +b_\infty)T \sum_{t=1}^T &\sum_{j=0}^{M-1}\PP(y_t^\star>j\lvert x_1=0)\,\eta_{t,j} \lesssim \frac{(h_\infty +b_\infty)MT^2\log(MTN/\delta)}{\cN^\star},\\
      -\sum_{t=1}^{T}\sum_{j= 0}^{M-1} v_{tj}\zeta_{t,j} [(\bD_{t+1}^{\mathrm{UCB}})_+]_j &\lesssim (h_\infty +b_\infty)T\log(MTN/\delta) \sum_{t=1}^T \sum_{j=0}^{M-1} \frac{\PP(y_t^\star >j \lvert x_1 = 0)}{N_{t,j}\vee 1} \\
      &\lesssim \frac{(h_\infty +b_\infty)MT^2\log(MTN/\delta)}{\cN^\star},
\end{align*}
{where for $\cR_{\mathrm{peel}}$, with $p_{t,j}^\star:=\PP(y_t^\star>j\lvert x_1=0)$, we used $\sqrt{a_\star}\le\sqrt{\tfrac18\log(MTN/\delta)/(\cN^\star\wedge N)}$ together with $\sum_{t,j}p_{t,j}^\star\eta_{t,j}=\sqrt{\log(MTN/\delta)}\sum_{t,j}\frac{p_{t,j}^\star}{\sqrt{N_{t,j}\vee1}}$: when $\cN^\star\le N$, Cauchy--Schwarz gives $\sum_{t,j}\frac{p_{t,j}^\star}{\sqrt{N_{t,j}\vee1}}\le\sqrt{MT}\sqrt{MT/\cN^\star}$, and when $\cN^\star>N$, the bound $N_{t,j}\vee1\le N$ gives $\sum_{t,j}\frac{p_{t,j}^\star}{\sqrt{N_{t,j}\vee1}}\le\sqrt N\sum_{t,j}\frac{p_{t,j}^\star}{N_{t,j}\vee1}=\sqrt N\,MT/\cN^\star$; either way $\sqrt{a_\star}\sum_{t,j}p_{t,j}^\star\eta_{t,j}\lesssim MT\log(MTN/\delta)/\cN^\star$.}
Now it remains to control the $\sum_{t,j} v_{tj}\cV_{t,j}$ term, for this purpose, we introduce the following identity:
\begin{lemma}\label{lem: V-identity}
 For every $t\in [T],j \geq s_{t+1}$, with  $\bgamma_{t+1} \in \RR^M$ defined as
       $ [\bgamma_{t+1}]_{i}:=  [(\bD^\mathrm{UCB}_{t+1})_+]^2_i$ for $ i \in [M-1]_+$,
 it holds that
$    \cV_{t,j} = \big[\bA_t^{\mathrm{UCB}}(s_{t+1})\bgamma_{t+1}\big]_j - \big(\big[\bA_t^{\mathrm{UCB}}(s_{t+1})(\bD_{t+1}^\mathrm{UCB})_+\big]_j\big)^2$.
\end{lemma}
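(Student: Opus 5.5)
The identity is a variance computation: I will read $\cV_{t,j}$ as the variance of a sum of nested Bernoulli indicators, and read the right-hand side as $\EE[X^2]-(\EE X)^2$ for a single random variable $X$.

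Throughout, fix $t$ and $j\ge s_{t+1}$, and write $s:=s_{t+1}$, $H_m:=[\bH_t^{\mathrm{UCB}}]_m$ and $F:=F_t^{\mathrm{UCB}}$. Let $\tilde d$ be a random demand with masses $\mu^{\mathrm{UCB}}_{t,k}=F(k)-F(k-1)$, where $F(-1)=0$. These masses are nonnegative because $F$ is non-decreasing by construction~\eqref{eq: UCB-CDF-offline}.

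\textbf{Step 1: rewrite the matrix terms as expectations.} By the definition of $\bA_t^{\mathrm{UCB}}(s)$, for any vector $\bw$,
\begin{align*}
\big[\bA_t^{\mathrm{UCB}}(s)\bw\big]_j=\sum_{i=s}^{j}\mu^{\mathrm{UCB}}_{t,j-i}\,w_i=\EE\big[w_{j-\tilde d}\,\bm 1\{j-\tilde d\ge s\}\big].
\end{align*}
Define $X:=[(\bD^{\mathrm{UCB}}_{t+1})_+]_{j-\tilde d}\,\bm 1\{j-\tilde d\ge s\}$. Taking $\bw=(\bD^{\mathrm{UCB}}_{t+1})_+$ and $\bw=\bgamma_{t+1}$ shows that the right-hand side of the lemma equals $\EE[X^2]-(\EE X)^2=\Var(X)$.

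\textbf{Step 2: expand $X$ in the increments $H_m$.} By the definition~\eqref{eq: H-def} of $\bH_t$, we have $\sum_{m=s}^{i}H_m=\hat D_{t+1}(i)$ for every $i\ge s$. This equals $[(\bD^{\mathrm{UCB}}_{t+1})_+]_i$, because $\bD^{\mathrm{UCB}}_{t+1}$ is non-decreasing and $D^{\mathrm{UCB}}_{t+1}(s)\ge 0$. Hence
\begin{align*}
X=\sum_{m=s}^{j}H_m\,\bm 1\{j-\tilde d\ge m\}=\sum_{m=s}^{j}H_m B_m,\qquad B_m:=\bm 1\{\tilde d\le j-m\}.
\end{align*}
This also covers the event $j-\tilde d<s$: there every indicator $B_m$ vanishes, consistent with $X=0$.

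\textbf{Step 3: compute the variance of the nested sum.} Set $p_m:=\PP(B_m=1)=F(j-m)$. Since $j-m\in[0,M-1]$, all of these are coordinates of the estimated CDF. The events are nested: $B_{m'}\le B_m$ whenever $m<m'$. Therefore $\EE[B_mB_{m'}]=p_{m'}$ and
\begin{align*}
\Cov(B_m,B_{m'})=p_{m'}-p_mp_{m'}=p_{m'}(1-p_m).
\end{align*}
Expanding the variance gives
\begin{align*}
\Var(X)=\sum_{m}H_m^2\,p_m(1-p_m)+2\sum_{m<m'}H_mH_{m'}\,p_{m'}(1-p_m),
\end{align*}
which is exactly the definition of $\cV_{t,j}$.

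The only delicate step is the index bookkeeping in Steps 1 and 2. One must check that the truncation $j-\tilde d\ge s$ built into $\bA_t^{\mathrm{UCB}}(s)$ coincides with the zero entries of $(\bD^{\mathrm{UCB}}_{t+1})_+$ below $s$. One must also check that the masses induced by $F$ (with $F(-1)=0$) reproduce $p_m=F(j-m)$ as the probabilities $\PP(\tilde d\le j-m)$. Beyond that, the argument is a direct variance expansion.
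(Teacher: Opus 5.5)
Your proposal is correct and follows the same argument as the paper. The paper defines $Z_{t,j}$ as the same nested sum $\sum_{m}[\bH_t^{\mathrm{UCB}}]_m\bm 1\{\tilde d\le j-m\}$, identifies it with $g_{j-\tilde d}\bm 1\{\tilde d\le j-s_{t+1}\}$ so that its first two moments are the two matrix terms, and then expands the variance using the nested-event covariance $F_t^{\mathrm{UCB}}(j-m')\big(1-F_t^{\mathrm{UCB}}(j-m)\big)$.
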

\noindent Moreover, write $m_{t,j}:=\big[\bA_t^{\mathrm{UCB}}(s_{t+1})(\bD_{t+1}^\mathrm{UCB})_+\big]_j\geq 0$ and $c_{t,j}:=[\bc_t^{\mathrm{UCB}}]_j$. By~\eqref{eq: D-forward-equation}, $[\bD_t^\mathrm{UCB}]_j=m_{t,j}+c_{t,j}$, so that $[(\bD_t^\mathrm{UCB})_+]_j=[m_{t,j}+c_{t,j}]_+$, while $c_{t,j}=(h_t+b_t)F_t^{\mathrm{UCB}}(j)-b_t\leq h_t$.

Now we can show that $m_{t,j}^2\geq[(\bD_t^\mathrm{UCB})_+]_j^2-2h_t[(\bD_t^\mathrm{UCB})_+]_j$. Actually, if $c_{t,j}\leq 0$ then $[(\bD_t^\mathrm{UCB})_+]_j=[m_{t,j}+c_{t,j}]_+\leq m_{t,j}$ and the claim holds, otherwise if $c_{t,j}>0$ then $[(\bD_t^\mathrm{UCB})_+]_j=m_{t,j}+c_{t,j}$, and
\begin{align*}
    m_{t,j}^2=\big([(\bD_t^\mathrm{UCB})_+]_j-c_{t,j}\big)^2\geq[(\bD_t^\mathrm{UCB})_+]_j^2-2c_{t,j}[(\bD_t^\mathrm{UCB})_+]_j\geq[(\bD_t^\mathrm{UCB})_+]_j^2-2h_t[(\bD_t^\mathrm{UCB})_+]_j,
\end{align*}
using $c_{t,j}\leq h_t$ and $[(\bD_t^\mathrm{UCB})_+]_j\geq 0$, as desired. 

By $[\bgamma_t]_j=[(\bD_t^\mathrm{UCB})_+]_j^2$ and $h_t\leq h_\infty$, we have
    $\big(\big[\bA_t^{\mathrm{UCB}}(s_{t+1})(\bD_{t+1}^\mathrm{UCB})_+\big]_j\big)^2 \geq  [\bgamma_t]_j - 2h_\infty[(\bD_{t}^\mathrm{UCB})_+]_j$.
As a result,
$\cV_{t,j} 
\leq  \big[\bA_t^{\mathrm{UCB}}(s_{t+1})\bgamma_{t+1}\big]_j - [\bgamma_{t}]_{j} + 2 h_\infty \big[(\bD_{t}^\mathrm{UCB})_+-\bD_{t}\big]_j  +2h_\infty [\bD_{t}]_j$,
this gives \begin{align*}
    -\sum_{j=0}^{M-1} v_{tj}\cV_{t,j} \overset{(\mathrm{a})}
    {\le}& -\langle \bv_t, \bA_t^{\mathrm{UCB}}(s_{t+1})\bgamma_{t+1} - \bgamma_{t} \rangle -2h_\infty \langle \bv_t, \bD_t^\mathrm{UCB}-\bD_t \rangle -2h_\infty \langle \bv_t, \bD_t \rangle \\
    =&\underbrace{\langle \bv_t, \big[\bA_t(s_{t+1})-\bA^\mathrm{UCB}_t(s_{t+1})\big]\bgamma_{t+1} \rangle -2h_\infty \langle \bv_t, \bD_t^\mathrm{UCB}-\bD_t \rangle}_{:= \cK_{t,1}} \\
    &+\underbrace{\langle \bv_t,\bgamma_{t}- \bA_t(s_{t+1})\bgamma_{t+1}  \rangle }_{:= \cK_{t,2}}+ \underbrace{2h_\infty \langle -\bv_t, \bD_t \rangle }_{:=\cK_{t,3}},
\end{align*} 
Where in the $\cK_{t,1}$ part we have used $\langle \bv_t, (\bD_t^{\mathrm{UCB}})_+\rangle = \langle \bv_t, \bD_t^{\mathrm{UCB}}\rangle $ by $v_{tj}= 0,\;\forall j < s_{t}\leq s_{t}^\star.$
Now it remains to bound the summation of $\cK_t$ terms.

\noindent\textbf{Bounding the summation of $\cK_{t,1}$.} By the summation-by-parts identity, we have
\begin{align*}
   &\big( \big[\bA^\mathrm{UCB}_t(s_{t+1}) - \bA_t(s_{t+1})\big]\bgamma_{t+1}\big)_j = \sum_{m=s_{t+1}}^j \big[F_t^{\mathrm{UCB}}(j-m) - F_t(j-m)\big]  \big(\big[(\bD^\mathrm{UCB}_{t+1})_+\big]_m^2 - \big[(\bD^\mathrm{UCB}_{t+1})_+\big]_{m-1}^2\big)\\
   &\leq 2\max_{m} \big[(\bD^\mathrm{UCB}_{t+1})_+\big]_{m}  \cdot \sum_{m=s_{t+1}}^j \big[F_t^{\mathrm{UCB}}(j-m) - F_t(j-m)\big]  \big(\big[(\bD^\mathrm{UCB}_{t+1})_+\big]_m - \big[(\bD^\mathrm{UCB}_{t+1})_+\big]_{m-1}\big)\\
   & \lesssim (h_\infty +b_\infty)T  \cdot \sum_{m=s_{t+1}}^j \big[F_t^{\mathrm{UCB}}(j-m) - F_t(j-m)\big]  \big(\big[(\bD^\mathrm{UCB}_{t+1})_+\big]_m - \big[(\bD^\mathrm{UCB}_{t+1})_+\big]_{m-1}\big)\\
   &= (h_\infty +b_\infty)T  \cdot \big( \big[\bA^\mathrm{UCB}_t(s_{t+1}) - \bA_t(s_{t+1})\big]\bD^\mathrm{UCB}_{t+1}\big)_j.
\end{align*}
Since $-\bv_t\ge\bm 0$, this entry-wise bound leads to
\begin{align*}
    \langle \bv_t, [\bA_t(s_{t+1})-\bA^\mathrm{UCB}_t(s_{t+1})]\bgamma_{t+1}\rangle \lesssim (h_\infty+b_\infty)T\,\langle \bv_t, [\bA_t(s_{t+1})-\bA^\mathrm{UCB}_t(s_{t+1})]\bD^\mathrm{UCB}_{t+1}\rangle.
\end{align*}
On the other hand, applying the one-step identity in the proof of Theorem~\ref{thm: cost-decomposition} with $\hat{\bF}=\bF^\mathrm{UCB}$ and writing $\br_\ell := \big[\bA_\ell(s_{\ell+1})-\bA^\mathrm{UCB}_\ell(s_{\ell+1})\big]\bD^\mathrm{UCB}_{\ell+1}+\Delta\bc_\ell$, we have
\begin{align*}
    \langle \bv_t, \bD_t - \bD^\mathrm{UCB}_t\rangle = \langle \bv_{t+1}, \bD_{t+1}-\bD^\mathrm{UCB}_{t+1}\rangle + \langle \bv_t, \br_t\rangle + \underbrace{\langle \bu_{t+1}, \bD^\mathrm{UCB}_{t+1}\rangle}_{\le 0} - \underbrace{\langle \bq_{t+1}, \bD_{t+1}\rangle}_{\ge 0}.
\end{align*}
Here $\langle \bu_{t+1}, \bD^\mathrm{UCB}_{t+1}\rangle\le0$ since $\bu_{t+1}\le\bm0$ is supported on $[s_{t+1},s_{t+1}^\star)$ where $\bD^\mathrm{UCB}_{t+1}\ge\bm0$ (as $j\ge s_{t+1}$); and $\langle \bq_{t+1}, \bD_{t+1}\rangle\ge0$ since $\bq_{t+1}\le\bm0$ is supported on the same interval where $\bD_{t+1}=\bD^\star_{t+1}<\bm0$ (as $j<s_{t+1}^\star$). Iterating from $t$ to $T$ with $\bD_{T+1}=\bD^\mathrm{UCB}_{T+1}=\bm0$ and dropping these two non-positive contributions gives
\begin{align*}
    \langle \bv_t, \bD_t - \bD^\mathrm{UCB}_t \rangle & \leq \sum_{\ell = t}^{T} \langle \bv_\ell, \br_\ell \rangle \leq \sum_{\ell = 1}^{T} \langle \bv_\ell, \br_\ell \rangle,
\end{align*}
where the second inequality holds since every term is non-negative, as shown in \eqref{eq: ucb-analysis-tmp1} and \eqref{eq: ucb-analysis-tmp2}.
Combining these two bounds together, we obtain \begin{align*}
    \sum_{t=1}^T\cK_{t,1}\lesssim (h_\infty+b_\infty) T \cdot \sum_{\ell = 1}^{T} \langle \bv_\ell, \big(\bA_\ell(s_{\ell+1}) - \bA^\mathrm{UCB}_\ell(s_{\ell+1}) \big) \bD^\mathrm{UCB}_{\ell+1} + \Delta \bc_\ell \rangle
\end{align*}
\noindent\textbf{Bounding the summation of $\cK_{t,2}$.} Set $\bgamma_{T+1}:=\bm 0$. By the recursion $\bA_t(s_{t+1})^\top\bv_t = \bv_{t+1}-\bu_{t+1}$ of Proposition~\ref{prop: v-propagation}, we have \begin{align*}
    \sum_{t=1}^T \cK_{t,2} &= \sum_{t=1}^T \langle \bv_t, \bgamma_t - \bA_t(s_{t+1}) \bgamma_{t+1}\rangle = \sum_{t=1}^{T-1} \big[ \langle \bv_t ,\bgamma_t \rangle - \langle \bv_{t+1} ,\bgamma_{t+1} \rangle + \langle \bu_{t+1}, \bgamma_{t+1}\rangle \big] + \langle \bv_T, \bgamma_T\rangle\\
    &= \langle \bv_1, \bgamma_1\rangle + \sum_{t=2}^T \langle \bu_t, \bgamma_t\rangle  \leq 0,
\end{align*}
where the telescoping uses $\bgamma_{T+1}=\bm 0$, and the inequality holds since $\bgamma_t\ge\bm 0$ while $\bv_1=\bu_1\le\bm 0$ and $\bu_t\le\bm 0$ (by $\sigma_t\le 0$ in the UCB setting).

\noindent\textbf{Bounding the summation of $\cK_{t,3}$.} For $\cK_{t,3}$, we first bound $-\langle\bv_t,\bD_t\rangle$ by splitting at $s_t^\star$: \begin{align*}
-\langle \bv_t, \bD_t\rangle &= \sum_{j<s_t^\star}(-v_{tj})[\bD_t]_j + \sum_{j\ge s_t^\star}(-v_{tj})[\bD_t]_j \le \sum_{j=s_t^\star}^{M-1}(-v_{tj})[\bD_t]_j\\
&\leq  \sum_{j=s_{t}^\star}^{M-1} \PP(y_t^\star > j\lvert x_1 = 0)[\bD_t]_j \lesssim (h_\infty +b_\infty)M,
\end{align*}
where the first inequality drops the indices $j<s_t^\star$, for which $-v_{tj}\ge0$ and $[\bD_t]_j\leq 0$ make each term non-positive; the second applies $-v_{tj}\le\PP(y_t^\star>j\lvert x_1=0)$ from~\eqref{eq: ucb-analysis-v-bound} on $j\ge s_t^\star$, where $[\bD_t]_j\ge0$; and the last uses Proposition~\ref{prop: over-shooting-price}. Since $\cK_{t,3}=2h_\infty\langle-\bv_t,\bD_t\rangle$, this gives $\sum_{t=1}^T \cK_{t,3} \lesssim h_\infty(h_\infty +b_\infty)MT \le (h_\infty+b_\infty)^2 MT.$

\noindent\textbf{Putting all together.} Combining the bounds on $\cK_{t,1},\cK_{t,2},\cK_{t,3}$ leads to
\begin{align*}
    -\sum_{t=1}^T\sum_{j=0}^{M-1} v_{tj}\cV_{t,j} \le \sum_{t=1}^T\big(\cK_{t,1}+\cK_{t,2}+\cK_{t,3}\big) \lesssim (h_\infty+b_\infty)T\,\mathsf{Err}_T + (h_\infty +b_\infty)^2 MT.
\end{align*}
Substituting this into the bound on $\sum_{t,j} v_{tj}[\bG_t\bH^\mathrm{UCB}_t]_j$, and adding the immediate-cost term~\eqref{eq: ucb-analysis-tmp1} together with the lower-order terms $\cR_{\mathrm{peel}}$ and the $\zeta_{t,j}$ contribution, we obtain
\begin{align*}
   \mathsf{Err}_T &\lesssim \sqrt{\frac{MT\log(MTN/\delta)\log(e/a_\star)}{\cN^\star}}\cdot\sqrt{(h_\infty+b_\infty)T\,\mathsf{Err}_T + (h_\infty +b_\infty)^2 MT } \\
   &\quad+ \frac{(h_\infty + b_\infty)MT^2\log(MTN/\delta)}{\cN^\star} + (h_\infty+b_\infty)MT\sqrt{\frac{\log(MTN/\delta)}{\cN^\star}}.
\end{align*}
Using $\sqrt{x+y}\le\sqrt x+\sqrt y$ to split the first term and Young's inequality $C\sqrt{\alpha\,\mathsf{Err}_T}\le \frac{1}{2}\mathsf{Err}_T + C^2\alpha$ with $\alpha:=\dfrac{(h_\infty+b_\infty)MT^2\log(MTN/\delta)\log(e/a_\star)}{\cN^\star}$ to absorb the self-referential term, we have then
\begin{align*}
\mathsf{Err}_T \lesssim  (h_\infty + b_\infty) M \cdot \bigg(T \sqrt{\frac{\log(MTN/\delta)\log(e/a_\star)}{\cN^\star}} + \frac{T^2\log(MTN/\delta)\log(e/a_\star)}{\cN^\star}\bigg).
\end{align*}
Finally, since $a_\star\ge 1/(8{(\cN^\star\wedge N)})$, we have $\log(e/a_\star)\le\log(8e{(\cN^\star\wedge N)})\le 4\log\big(e(\cN^\star\wedge N)\big) \le 4\log(eN)$, which yields the bound stated in Theorem~\ref{thm: censored-sample-complexity}.
\end{proof}

\section{Proof of Corollary~\ref{corollary: C-star-sample-complexity}}

For each $t,j$, noticing that by the Chernoff's bound, we have the following dichotomy result:
\begin{lemma}\label{lem: N-bound-offline}
With probability at least $1-\delta,$ for every $t,j$ at least one of the following statements holds:
\begin{enumerate}[nosep]
    \item  $ \PP(y_t^{(b)} >j)\leq 8 \log(MT/\delta)/N,$
    \item $N_{t,j}  \geq   \PP(y_t^{(b)} >j)N/2.$
\end{enumerate}
\end{lemma}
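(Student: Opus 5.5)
For fixed $(t,j)$, write $p_{t,j}:=\PP(y_t^{(b)}>j\mid x_1=0)$. Since the $N$ trajectories are generated independently by the fixed behavior policy $\pi^b$ from $x_1=0$, the count $N_{t,j}=\sum_{k=1}^N \bm{1}\{y_t^k>j\}$ is a sum of $N$ i.i.d.\ Bernoulli$(p_{t,j})$ variables, with mean $Np_{t,j}$. The plan is to apply the multiplicative Chernoff lower-tail bound to each $(t,j)$ separately and then take a union bound over the $MT$ pairs $(t,j)\in[T]\times[M-1]_+$.

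\textbf{Per-pair bound.} The multiplicative Chernoff inequality with deviation parameter $1/2$ gives
\[
\PP\big(N_{t,j}<\tfrac12 Np_{t,j}\big)\le \exp\big(-Np_{t,j}/8\big).
\]

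\textbf{Dichotomy.} If $p_{t,j}\le 8\log(MT/\delta)/N$, statement 1 holds deterministically, so nothing needs to be shown for that pair. Otherwise $Np_{t,j}/8>\log(MT/\delta)$, and the failure probability of statement 2 is at most $\delta/(MT)$.

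\textbf{Union bound.} Summing the failure probabilities over at most $MT$ pairs, all pairs with $p_{t,j}>8\log(MT/\delta)/N$ satisfy $N_{t,j}\ge Np_{t,j}/2$ simultaneously with probability at least $1-\delta$. This yields the stated dichotomy for every $(t,j)$.

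\textbf{Possible obstacle.} The only point needing care is that the trajectories are i.i.d. This is exactly the fixed-behavior-policy assumption, with each episode restarted at $x_1=0$. If the trajectories were only adaptively collected, the same argument would go through using a Freedman-type martingale lower tail (as in Proposition~\ref{prop: uniform-concentration}) with the same constants up to absolute factors.
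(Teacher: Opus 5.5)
Your proof is correct and follows the paper's argument, which simply invokes a (multiplicative) Chernoff lower-tail bound for the i.i.d.\ Bernoulli counts $N_{t,j}$ under the fixed behavior policy and then takes a union bound over the $MT$ pairs $(t,j)$; your constants ($\exp(-Np_{t,j}/8)\le \delta/(MT)$ whenever $p_{t,j}>8\log(MT/\delta)/N$) reproduce exactly the threshold in the statement. Your closing remark about adaptively collected data is not needed for this lemma, and in that setting $N\PP(y_t^{(b)}>j)$ would have to be replaced by the sum of the per-episode conditional probabilities, as is done in Lemma~\ref{lem: N-bound-online}.
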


With Lemma~\ref{lem: N-bound-offline}, we can divide the index set  to $[M-1]_+  = \cI_t \cup \cI_t^c$  accordingly as \begin{align*}
    \cI_{t}:= \{ j\in [M-1]_+: \PP(y_t^{(b)} > j) \leq 8\log(MT/\delta)/N \}
\end{align*} 
and relate the effective sample size $\cN^\star$ to $\cC^\star/N$ the following:
\begin{align*}
    \frac{MT}{\cN^\star} &= \sum_{t=1}^T \sum_{j=0}^{M-1} \frac{\PP(y_t^\star > j \lvert x_1 = 0)}{N_{t,j}\vee 1}\\
     &=   \sum_{t=1}^T \bigg[\sum_{j\in \cI_t} \frac{\PP(y_t^\star > j \lvert x_1 = 0)}{N_{t,j}\vee 1} + \sum_{j\notin \cI_t} \frac{\PP(y_t^\star > j \lvert x_1 = 0)}{N_{t,j}\vee 1}  \bigg]\\
    &\leq \sum_{t=1}^T \bigg[\sum_{j\in \cI_t} \frac{8\PP(y_t^\star > j \lvert x_1 = 0)\log (MT/\delta) }{\PP(y_t^{(b)}> j) N} + \sum_{j\notin \cI_t} \frac{2\PP(y_t^\star > j \lvert x_1 = 0)}{\PP(y_t^{(b)}> j) N}  \bigg]\\
    &\leq 8MT\log(MT/\delta) \frac{\cC^\star}{N}.
\end{align*}
Bringing this bound to Theorem~\ref{thm: censored-sample-complexity}, with the Lemma and the Theorem each applied at confidence level $\delta/2$ and a union bound (the enlarged $\log(2MTN/\delta) \leq 2\log(MTN/\delta)$ being absorbed into $c_0$), leads to the desired result.

\subsection{Proof of auxiliary results}

\subsubsection{Proof of Lemma~\ref{lem: C-hat-to-C-gap}}
\begin{proof}[Proof of Lemma~\ref{lem: C-hat-to-C-gap}]
    Under~\eqref{eq: partial-CDF-confidence-bound-general}, for any $j\in [M]_+$ \begin{align}\label{eq: C-gap-proof-tmp1}
          |\tilde{F}(j)(1-\tilde{F}(j)) - F(j)(1-F(j))| = |\tilde{F}(j) - F(j)|\cdot|1-\tilde{F}(j) - F(j)| \leq \cC_{j},
    \end{align}
    this then gives
    \begin{align*}
          &\sqrt{\frac{\tilde{F}(j) \big(1 - \tilde{F}(j) \big)\log(Mn/\delta) }{n_{j}}}\leq \sqrt{\frac{F(j)(1-F(j))\log (Mn/\delta)}{n_{j}}}  + \sqrt{\frac{\cC_{j} \log(Mn/\delta)}{n_{j}}}\\
      &\leq \sqrt{\frac{F(j)(1-F(j))\log (Mn/\delta)}{n_{j}}} + \frac{\cC_{j}}{2} + \frac{\log(Mn/\delta)}{2n_{j}} \leq \cC_{j}.
    \end{align*}
Similarly, by~\eqref{eq: C-gap-proof-tmp1},
\begin{align}\label{eq: C-gap-proof-tmp2}
          &\sqrt{\frac{{F}(j) \big(1 - {F}(j) \big)\log(Mn/\delta) }{n_{j}}}\leq \sqrt{\frac{\tilde{F}(j)(1-\tilde{F}(j))\log (Mn/\delta)}{n_{j}}}  + \sqrt{\frac{\cC_{j} \log(Mn/\delta)}{n_{j}}}\\
      &\leq \sqrt{\frac{\tilde{F}(j)(1-\tilde{F}(j))\log (Mn/\delta)}{n_{j}}} + \frac{\cC_{j}}{2} + \frac{\log(Mn/\delta)}{2n_{j}} \leq \cC_{j}.
    \end{align}
    As a result, \begin{align*}
        \lvert \hat{\cC}_{j} - \cC_{j}\rvert &= 4\left\lvert\sqrt{\frac{\tilde{F}(j) \big(1 - \tilde{F}(j) \big)\log(Mn/\delta) }{n_{j}}} - \sqrt{\frac{{F}(j) \big(1 - {F}(j) \big)\log(Mn/\delta) }{n_{j}}}\right\rvert\leq 4\cC_{j}.
     \end{align*}
     This gives the first inequality. To see the second inequality, noticing that~\eqref{eq: C-gap-proof-tmp2} also implies \begin{align*}
        \cC_{j} &=4\sqrt{\frac{{F}(j) \big(1 - {F}(j) \big)\log(Mn/\delta) }{n_{j}}} + 4\frac{\log(Mn/\delta)}{n_{j}} \\
        &\leq  4\sqrt{\frac{\tilde{F}(j)(1-\tilde{F}(j))\log (Mn/\delta)}{n_{j}}}  + 4 \sqrt{\frac{\cC_{j} \log(Mn/\delta)}{n_{j}}} + \frac{4\log(Mn/\delta)}{n_{j}}\\
        & \leq \hat{\cC}_{j} + \frac{\cC_{j}}{2} + \frac{8\log(Mn/\delta)}{n_{j}},
     \end{align*}
     where the last step uses $4\sqrt{\cC_{j}\log(Mn/\delta)/n_{j}} \leq \cC_{j}/2 + 8\log(Mn/\delta)/n_{j}$ (AM--GM $\sqrt{xy}\leq \alpha x + y/(4\alpha)$ with $\alpha=1/8$). Moving $\cC_{j}/2$ term to left and multiplying both sides by $8$ gives the desired result.
\end{proof}

\subsubsection{Proof of Lemma~\ref{lem: C-self-bounding}}

\begin{proof}[Proof of Lemma~\ref{lem: C-self-bounding}]

We have by $F(j)\geq F(j_0),$ \begin{align*}
    F(j_0)(1-F(j_0))- F(j)(1-F(j))  = (F(j)-F(j_0)) (F(j)+F(j_0)-1) \leq  (F(j)-F(j_0)).
\end{align*}
As a result, 
\begin{align*}
    \cC_{j_0} &\leq 4 \sqrt{\frac{F(j_0)(1-F(j_0))\log(Mn/\delta)}{n_{j_0}}} + \frac{4\log(Mn/\delta)}{n_{j_0}}\\
    &\leq 4 \sqrt{\frac{F(j)(1-F(j))\log(Mn/\delta)}{n_{j_0}}} + \frac{4\log(Mn/\delta)}{n_{j_0}} + 4\sqrt{\frac{(F(j) - F(j_0))\log(Mn/\delta) }{n_{j_0}}}\\
    &\overset{\text{(a)}}{\leq} 4 \sqrt{\frac{F(j)(1-F(j))\log(Mn/\delta)}{n_{j}}} + \frac{4\log(Mn/\delta)}{n_{j}} + 4\sqrt{\frac{(F(j) - F(j_0))\log(Mn/\delta) }{n_{j}}}\\
    &\overset{\text{(b)}}{\leq} \cC_{j} + \sqrt{\frac{16A\cC_{j_0}\log(Mn/\delta)}{n_{j}}} \leq \cC_{j} + \frac{\cC_{j_0}}{2} + \frac{8A \log(Mn/\delta)}{n_{j}}.
\end{align*}
Then moving the $\cC_{j_0}/2$ to left-hand-side and multiplying both sides by $2$ gives $$\cC_{j_0} \leq 2\cC_{j} + 16A\log(Mn/\delta)/n_{j} \leq (2 + 4A)\cC_{j},$$
as desired. 

In the above arguments, (a) uses $n_{j_0} \geq n_{j}$; (b) uses the condition $F(j)-F(j_0) \leq A \cC_{j_0}$  and the weighted AM-GM inequality $$\sqrt{16A\cC_{j_0}\log/n_{j}}\leq 16A\alpha\cC_{j_0} + \log/(4\alpha n_{j})$$ with $\alpha=1/(32A)$.
\end{proof}

\subsubsection{Proof of Lemma~\ref{lem: ucb-variance-transfer}}

\begin{proof}[Proof of Lemma~\ref{lem: ucb-variance-transfer}]
Let
$
\varphi(x):=x(1-x).
$
Since $\varphi$ is $1$-Lipschitz on $[0,1]$, we have
$
|\varphi(p)-\varphi(u)|\le |p-u|=d.
$
As a result,
\begin{align}
p(1-p)&\le u(1-u)+d,
\label{eq: p-var-to-u-var}\\
u(1-u)&\le p(1-p)+d.
\label{eq: u-var-to-p-var}
\end{align}
To see~\eqref{eq: p-to-u-variance-transfer}, suppose
$
d\le a\sqrt{p(1-p)\xi}+b\xi
$ holds, using~\eqref{eq: p-var-to-u-var}, we obtain
\begin{align*}
d \le
a\sqrt{p(1-p)\xi}+b\xi \le
a\sqrt{\big(u(1-u)+d\big)\xi}+b\xi \le
a\sqrt{u(1-u)\xi}+ a\sqrt{d\xi} + b\xi.
\end{align*}
By AM--GM inequality,
$$
a\sqrt{d\xi}
\le
\frac{1}{2} d+\frac{a^2}{2}\xi \implies d \le a\sqrt{u(1-u)\xi} + \frac{1}{2} d
+
\left(\frac{a^2}{2}+b\right)\xi.
$$
Absorbing $ d$ into the left-hand side gives
$d\le 2a\sqrt{u(1-u)\xi} + (a^2+2b)\xi. $
This proves~\eqref{eq: p-to-u-variance-transfer}.
And~\eqref{eq: u-to-p-variance-transfer} follows the same argument with~\eqref{eq: p-var-to-u-var} replaced by~\eqref{eq: u-var-to-p-var}.
\end{proof}

\subsubsection{Proof of Lemma~\ref{lem: peeling-bound}}

\begin{proof}[Proof of Lemma~\ref{lem: peeling-bound}]
Split the index set into an extreme part and a middle part,
\begin{align*}
    \cM_{\mathrm{ext}}:=\{m\in\cM:p_m(1-p_m)\leq a\},\qquad \cM_{\mathrm{mid}}:=\cM\setminus\cM_{\mathrm{ext}}.
\end{align*}
On $\cM_{\mathrm{ext}}$ we have $\sqrt{p_m(1-p_m)}\leq\sqrt a$, so the extreme part is bounded by the second term on the right-hand side,
$\sum_{m\in\cM_{\mathrm{ext}}}\sqrt{p_m(1-p_m)}\,h_m\leq\sqrt a\sum_{m\in\cM}h_m$.
It remains to control the middle part.

For $m\in\cM_{\mathrm{mid}}$ we have $p_m(1-p_m)>a>0$, hence $p_m\in(0,1)$ and the odds ratio $\theta_m:=p_m/(1-p_m)$ is well-defined. Since $p_m\leq 1$ gives $1-p_m\geq p_m(1-p_m)>a$ and $1-p_m\leq 1$ gives $p_m\geq p_m(1-p_m)>a$, we obtain $a<\theta_m<1/a$ on $\cM_{\mathrm{mid}}$. For each integer $\ell$ define the dyadic block $\cM_\ell:=\{m\in\cM_{\mathrm{mid}}:2^\ell\leq\theta_m<2^{\ell+1}\}$ and let $\cL:=\{\ell:\cM_\ell\neq\emptyset\}$; the two-sided bound on $\theta_m$ forces $|\cL|\lesssim\log(e/a)$.

We control the cross terms within a single block through the kernel $p_{m'}(1-p_m)$. Writing $p_m=\theta_m/(1+\theta_m)$ and $1-p_m=1/(1+\theta_m)$,
\begin{align*}
    p_{m'}(1-p_m)=\frac{\theta_{m'}}{(1+\theta_m)(1+\theta_{m'})},\qquad \sqrt{p_m(1-p_m)\,p_{m'}(1-p_{m'})}=\frac{\sqrt{\theta_m\theta_{m'}}}{(1+\theta_m)(1+\theta_{m'})},
\end{align*}
so that $\sqrt{p_m(1-p_m)\,p_{m'}(1-p_{m'})}=\sqrt{\theta_m/\theta_{m'}}\,p_{m'}(1-p_m)$. For $m<m'$ with $m,m'\in\cM_\ell$, the monotonicity assumption gives $p_m\geq p_{m'}$, hence $\theta_m\geq\theta_{m'}$, while the definition gives $\theta_m/\theta_{m'}<2$; together $1\leq\theta_m/\theta_{m'}\leq 2$ and therefore $\sqrt{p_m(1-p_m)\,p_{m'}(1-p_{m'})}\leq\sqrt2\,p_{m'}(1-p_m)$.
Consequently, setting $B_\ell:=\sum_{m\in\cM_\ell}\sqrt{p_m(1-p_m)}\,h_m$ and expanding the square, the non-negativity of $\{h_m\}$ yields
\begin{align*}
    B_\ell^2&=\sum\nolimits_{m\in\cM_\ell}h_m^2\,p_m(1-p_m)+2\sum\nolimits_{m<m', m,m'\in\cM_\ell}h_m h_{m'}\sqrt{p_m(1-p_m)\,p_{m'}(1-p_{m'})}\\
    &\lesssim\sum\nolimits_{m\in\cM_\ell}h_m^2\,p_m(1-p_m)+2\sum\nolimits_{m<m', m,m'\in\cM_\ell}h_m h_{m'}\,p_{m'}(1-p_m).
\end{align*}
Summing $\ell$, we have then
    $\sum_{m\in\cM_{\mathrm{mid}}}\sqrt{p_m(1-p_m)}\,h_m=\sum_{\ell\in\cL}B_\ell\leq\sqrt{|\cL|}\Big(\sum_{\ell\in\cL}B_\ell^2\Big)^{1/2}$.
Since each term involved is non-negative, so restricting the two sums to within-block pairs only decreases them; hence
\begin{align*}
    \sum_{\ell\in\cL}B_\ell^2\lesssim\sum_{m\in\cM}h_m^2\,p_m(1-p_m)+2\sum_{m<m'}h_mh_{m'}\,p_{m'}(1-p_m).
\end{align*}
Combining this with $|\cL|\lesssim\log(e/a)$ and the extreme part proves the lemma.
\end{proof}

\subsubsection{Proof of Lemma~\ref{lem: V-identity}}

Fix $t\in[T]$ and $j\geq s_{t+1}$, and denote $g_m:=[(\bD_{t+1}^{\mathrm{UCB}})_+]_m$ with the convention $g_{s_{t+1}-1}:=0$. By the definition of $\bH_t^{\mathrm{UCB}}$ in~\eqref{eq: H-def} and $[\bD_{t+1}^{\mathrm{UCB}}]_m\geq 0$ for $m\geq s_{t+1}$, it holds that $g_r=\sum_{m=s_{t+1}}^r[\bH_t^{\mathrm{UCB}}]_m,\forall r\geq s_{t+1}$.
Let $\tilde d$ be a random variable with CDF $F_t^{\mathrm{UCB}}$ and set
$Z_{t,j}:=\sum_{m=s_{t+1}}^j[\bH_t^{\mathrm{UCB}}]_m\,\bm{1}\{\tilde d\leq j-m\}$.
Since $\bm{1}\{\tilde d\leq j-m\}=\bm{1}\{m\leq j-\tilde d\}$ and $\tilde d\geq 0$, we have $Z_{t,j}=g_{j-\tilde d}\,\bm{1}\{\tilde d\leq j-s_{t+1}\}$.
By $\big[\bA_t^{\mathrm{UCB}}(s_{t+1})\bx\big]_j=\sum_{r=s_{t+1}}^j\PP(\tilde d=j-r)\,x_r$, the substitution $r=j-\tilde d$ gives
    $\EE[Z_{t,j}]=\sum_{r=s_{t+1}}^j\PP(\tilde d=j-r)\,g_r=\big[\bA_t^{\mathrm{UCB}}(s_{t+1})(\bD_{t+1}^{\mathrm{UCB}})_+\big]_j,$ and $  \EE[Z_{t,j}^2]=\sum_{r=s_{t+1}}^j\PP(\tilde d=j-r)\,g_r^2=\big[\bA_t^{\mathrm{UCB}}(s_{t+1})\bgamma_{t+1}\big]_j$.
Therefore, $\big[\bA_t^{\mathrm{UCB}}(s_{t+1})\bgamma_{t+1}\big]_j-\big(\big[\bA_t^{\mathrm{UCB}}(s_{t+1})(\bD_{t+1}^{\mathrm{UCB}})_+\big]_j\big)^2=\Var(Z_{t,j})$.

It remains to expand this variance from the original representation of $Z_{t,j}$:
\begin{align*}
    \Var(Z_{t,j})=&\sum_{m=s_{t+1}}^j[\bH_t^{\mathrm{UCB}}]_m^2\,\Var\big(\bm{1}\{\tilde d\leq j-m\}\big)\\
&+2\sum\nolimits_{s_{t+1}\leq m<m'\leq j}[\bH_t^{\mathrm{UCB}}]_m[\bH_t^{\mathrm{UCB}}]_{m'}\,\Cov\big(\bm{1}\{\tilde d\leq j-m\},\bm{1}\{\tilde d\leq j-m'\}\big).
\end{align*}
The diagonal terms equal $F_t^{\mathrm{UCB}}(j-m)\big(1-F_t^{\mathrm{UCB}}(j-m)\big)$. For $m<m'$ the events are nested, $\{\tilde d\leq j-m'\}\subseteq\{\tilde d\leq j-m\}$, so
\begin{align*}
    \Cov\big(\bm{1}\{\tilde d\leq j-m\},\bm{1}\{\tilde d\leq j-m'\}\big)&=\PP(\tilde d\leq j-m')-\PP(\tilde d\leq j-m)\,\PP(\tilde d\leq j-m')\\
    &=F_t^{\mathrm{UCB}}(j-m')\big(1-F_t^{\mathrm{UCB}}(j-m)\big).
\end{align*}
Bringing these two equations back gives the desired identity.

\section{Proof of Results in Section~\ref{sec: extensions-online}}\label{appendix: online-censored-proofs}

\subsection{Proof of Lemma~\ref{lem: online-policy-monotonicity}}

To prove Lemma~\ref{lem: online-policy-monotonicity}, we first record the following monotone result for the LCB, which follows by the same DP-monotonicity argument as Lemma~\ref{lem: policy-ordering-UCB}.

\begin{lemma}\label{lem: policy-ordering-LCB} Let $\pi = \{s_t\}_{t=1}^T$ be the base-stock policy obtained by solving the DP with $\{\bF_t^\mathrm{LCB}\}_{t=1}^T$. If the event in~\eqref{eq: partial-CDF-confidence-bound-general} holds for every $t\in [T]$, then $s_t \geq s_t^\star$ for all $t \in [T].$
\end{lemma}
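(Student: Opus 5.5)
This mirrors the proof of Lemma~\ref{lem: policy-ordering-UCB}, with all inequalities reversed. Since $s_t=\min\{y: D_t^{\mathrm{LCB}}(y)\ge 0\}$ and $s_t^\star=\min\{y: D_t^\star(y)\ge 0\}$, it suffices to show that $\Delta\bD_t:=\bD_t^\star-\bD_t^{\mathrm{LCB}}\ge\bm 0$ entry-wise for every $t\in[T]$. Given this, $D_t^{\mathrm{LCB}}(y)\ge 0$ implies $D_t^\star(y)\ge 0$, so the first nonnegative index of $\bD_t^{\mathrm{LCB}}$ is no smaller than that of $\bD_t^\star$. That is, $s_t\ge s_t^\star$, including the convention $s_t=M$ when no index is nonnegative. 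The first step is that, under the event~\eqref{eq: partial-CDF-confidence-bound-general}, Proposition~\ref{prop: biased-F} gives $\bF_t^{\mathrm{LCB}}\le\bF_t$, hence $\Delta\bF_t\ge\bm 0$.

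We argue by backward induction on $t$, proving simultaneously that $\Delta\bD_t\ge\bm 0$ and $s_t\ge s_t^\star$. The base case is $t=T$, where $\Delta\bD_T=(h_T+b_T)\Delta\bF_T\ge\bm 0$. For the inductive step, we use the decomposition from the proof of Lemma~\ref{lem: policy-ordering-UCB}, obtained from Proposition~\ref{prop-matrix-notation} with $\hat{\bF}=\bF^{\mathrm{LCB}}$:
\[
\Delta\bD_t=\bA_t(s_{t+1}^\star)\Delta\bD_{t+1}+[\bA_t(s_{t+1}^\star)-\bA_t(s_{t+1})]\bD^{\mathrm{LCB}}_{t+1}+\Delta\bA_t(s_{t+1})\bD^{\mathrm{LCB}}_{t+1}+(h_t+b_t)\Delta\bF_t .
\]
We show each of the four terms is entry-wise nonnegative.

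First, $\bA_t(s_{t+1}^\star)\Delta\bD_{t+1}\ge\bm 0$ by the induction hypothesis and because $\bA_t(\cdot)$ has nonnegative entries.

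Second, the induction hypothesis gives $s_{t+1}\ge s_{t+1}^\star$. The $j$-th entry of $[\bA_t(s^\star_{t+1})-\bA_t(s_{t+1})]\bD^{\mathrm{LCB}}_{t+1}$ is therefore $\sum_{k=s^\star_{t+1}}^{\min\{s_{t+1}-1,j\}}\mu_{t,j-k}[\bD^{\mathrm{LCB}}_{t+1}]_k$. Every index here satisfies $k<s_{t+1}$, so $[\bD^{\mathrm{LCB}}_{t+1}]_k<0$, which would make the sum nonpositive. This is the wrong sign, and it is the main obstacle.

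To handle it, we group it with the first term instead of bounding it alone, using the other form of the recursion:
\[
\Delta\bD_t=\bA_t(s_{t+1})\Delta\bD_{t+1}+\Delta\bA_t(s_{t+1})\bD^{\mathrm{LCB}}_{t+1}+(h_t+b_t)\Delta\bF_t+[\bA_t(s^\star_{t+1})-\bA_t(s_{t+1})]\bD^\star_{t+1},
\]
which is exactly~\eqref{eq: recursion-D-diff}. In this form the last term has $j$-th entry $\sum_{k=s^\star_{t+1}}^{\min\{s_{t+1}-1,j\}}\mu_{t,j-k}[\bD^\star_{t+1}]_k$. Every index satisfies $k\ge s^\star_{t+1}$, so $[\bD^\star_{t+1}]_k\ge 0$ and the term is nonnegative. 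The term $\bA_t(s_{t+1})\Delta\bD_{t+1}$ is also nonnegative by the induction hypothesis. So the difficulty is resolved simply by choosing the recursion whose policy-mismatch term involves $\bD^\star$ rather than $\bD^{\mathrm{LCB}}$.

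Third, for the middle model-error term we use the summation-by-parts factorization~\eqref{eq: AD-to-GH}, which gives $\Delta\bA_t(s_{t+1})\bD^{\mathrm{LCB}}_{t+1}=\bG_t\bH_t^{\mathrm{LCB}}$. Here $\bG_t\ge\bm 0$ because its nonzero entries are $F_t(i-j)-F_t^{\mathrm{LCB}}(i-j)\ge 0$. Also $\bH^{\mathrm{LCB}}_t\ge\bm 0$, because $D^{\mathrm{LCB}}_{t+1}(s_{t+1})\ge 0$ and $\bD^{\mathrm{LCB}}_{t+1}$ is nondecreasing by the discrete convexity of the DP under any CDF model. Finally, $(h_t+b_t)\Delta\bF_t\ge\bm 0$ by the first step.

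Summing the four nonnegative terms gives $\Delta\bD_t\ge\bm 0$. This in turn yields $s_t\ge s_t^\star$, which completes the induction.
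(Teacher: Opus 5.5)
Your proof is correct and, once you discard the UCB-style split, it is exactly the paper's argument: backward induction on $\Delta\bD_t\ge\bm{0}$ via the form~\eqref{eq: recursion-D-diff}. The four terms there are nonnegative by the induction hypothesis, by $\bF_t^{\mathrm{LCB}}\le\bF_t$, by the factorization $\bG_t\bH_t^{\mathrm{LCB}}$, and by $[\bD^\star_{t+1}]_k\ge 0$ on $[s^\star_{t+1},s_{t+1})$, and the abandoned first decomposition can simply be deleted.
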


\begin{proof}[Proof of Lemma~\ref{lem: policy-ordering-LCB}]
The argument mirrors that of Lemma~\ref{lem: policy-ordering-UCB}, with all inequalities reversed. Recall that the base-stock levels obey
$s_t = \min\{y: D_t^\mathrm{LCB}(y)\ge 0\},\quad s^\star_t = \min\{y: D^\star_t(y)\ge 0\}, \forall t\in [T].$
Since both $\bD_t^\mathrm{LCB}$ and $\bD_t^\star$ are non-decreasing, $\Delta \bD_t := \bD^\star_t - \bD_t^\mathrm{LCB} \geq \bm{0}$ for all $t$ forces the lower-biased derivative to cross zero no earlier than the optimal one, i.e.\ $s_t \geq s_t^\star$; it therefore suffices to establish $\Delta \bD_t \geq \bm{0}$.
We proceed by induction. When $t = T$,
$\Delta \bD_T = (h_T+b_T)(\bF_T-\bF_T^\mathrm{LCB}) \geq \bm{0}$
since $\bF_T^\mathrm{LCB} \leq \bF_T$. Now suppose $\Delta \bD_{k} \geq \bm{0}$ for all $k>t$. Applying~\eqref{eq: recursion-D-diff} with $\hat{\bF} = \bF^\mathrm{LCB}$ gives
\begin{align*}
    \Delta \bD_{t} = \bA_{t}(s_{t+1})\Delta \bD_{t+1} + \Delta \bA_t(s_{t+1})\bD_{t+1}^\mathrm{LCB} + (h_t+b_t)\Delta \bF_t + \big[\bA_t(s_{t+1}^\star) - \bA_t(s_{t+1})\big]\bD_{t+1}^\star,
\end{align*}
and each of the four terms is non-negative:
\begin{enumerate}
    \item $\bA_t(s_{t+1})\Delta \bD_{t+1} \geq \bm{0}$ by the induction hypothesis and $\bA_t(s_{t+1}) \geq \bm{0}$ entry-wise;
    \item $(h_t+b_t)\Delta \bF_t \geq \bm{0}$ since $\Delta \bF_t = \bF_t - \bF_t^\mathrm{LCB} \geq \bm{0}$;
    \item by the summation-by-parts identity~\eqref{eq: AD-to-GH}, $\Delta \bA_t(s_{t+1})\bD_{t+1}^\mathrm{LCB} = \bG_t\bH_t^\mathrm{LCB} \geq \bm{0}$, with $\bG_t \geq \bm{0}$ (as $\bF_t \geq \bF_t^\mathrm{LCB}$) and $\bH_t^\mathrm{LCB} \geq \bm{0}$ (as $\bD_{t+1}^{\mathrm{LCB}}$ is increasing) entry-wise;
    \item by the induction hypothesis $s_{t+1} \geq s_{t+1}^\star$, so the matrix difference $\bA_t(s_{t+1}^\star) - \bA_t(s_{t+1})$ is supported on $s_{t+1}^\star \leq k \leq s_{t+1}-1$, on which $[\bD_{t+1}^\star]_k \geq 0$; explicitly, for $j \in [M-1]_+$,
    $$ \big[[\bA_t(s_{t+1}^\star) - \bA_t(s_{t+1})]\bD_{t+1}^\star\big]_j = \sum_{k = s_{t+1}^\star}^{\min\{ s_{t+1}-1,\, j\} } \mu_{t,j-k} [\bD^\star_{t+1}]_k \geq 0.$$
\end{enumerate}
This then finishes the induction.
\end{proof}

\begin{proof}[Proof of Lemma~\ref{lem: online-policy-monotonicity}]
The lower bound $\hat{\bF}_t^{{(1)}} = \bm{0} \leq \hat{\bF}_t^{(k)}$ and the across-episode monotonicity $\hat{F}_t^{(k-1)}(j) \leq \hat{F}_t^{(k)}(j)$ are immediate from the maximum update in Step~4 of Algorithm~\ref{alg: censored-online}. For the upper bound, Proposition~\ref{prop: biased-F} gives $F_t^{\mathrm{LCB},(k)}(j) \leq F_t(j)$ on $\cE^{(K)}$; since $\hat{F}_t^{{(1)}} \equiv 0 \leq F_t(j)$ and the maximum of two quantities bounded by $F_t(j)$ is again bounded by $F_t(j)$, induction on $k$ yields $\hat{F}_t^{(k)}(j) \leq F_t(j)$ for all $k$. This proves the CDF chain. The base-stock ordering follows from the DP-monotonicity argument of Lemma~\ref{lem: policy-ordering-LCB}: a coordinate-wise larger CDF can only lower the optimal base-stock level, so the non-decreasing sequence $\{\hat{\bF}_t^{(k)}\}_k$ induces the non-increasing sequence $s_t^{(1)} \geq \cdots \geq s_t^{(K)}$, while $\hat{\bF}_t^{(k)} \leq \bF_t$ gives $s_t^{(k)} \geq s_t^\star$.
\end{proof}

In particular, Lemma~\ref{lem: online-policy-monotonicity} ensures the following monotone property:
\begin{align}\label{eq: J-monotone-online}
    q^{(k)}_{t,j}\leq q^{(k-1)}_{t,j}\leq \dots \leq q^{(1)}_{t,j}.
\end{align}

\subsection{Proof of Proposition~\ref{prop: online-k-th-episode-bound}}

The proof of Proposition~\ref{prop: online-k-th-episode-bound} follows a similar argument as in the proof of Theorem~\ref{thm: censored-sample-complexity}, we provide the detailed arguments here for completeness. Throughout the proof, we fix an episode index $k$ and proceed analysis conditional on $\cF_k$, underwhich both $ q_{t,j}^{(k)},\cN^{(k)}_{\mathsf{cov}},\cE^{(k)}$ are measurable. 

We work on the event $\cE^{(k)}$ and condition on $\cF_k$ throughout. The argument runs parallel to the proof of Theorem~\ref{thm: censored-sample-complexity}: once the offline (UCB) quantities are translated into their episode-$k$ (LCB) analogues, the only structural change is that the optimistic bias reverses the signs that pessimism induced. We first introduce the corresponding notation and then detail only the steps where the reversal matters. 

Compared with the proof of Theorem~\ref{thm: censored-sample-complexity}, every quantity carrying the superscript $\mathrm{UCB}$ in the proof of Theorem~\ref{thm: censored-sample-complexity} is replaced by the episode-$k$ quantity built from the lower-biased CDFs $\hat\bF_t^{(k)}$. More precisely, we write $\hat\bD_t^{(k)},\bA_t^{(k)}(\cdot),\bH_t^{(k)},\bG_t^{(k)},\bc_t^{(k)}, \bv_t^{(k)}$ for the induced quantities under $\{\hat{\bF}_t^{(k)}\}$ as introduced during the proof of Theorem~\ref{thm: censored-sample-complexity} or in Table~\ref{tab: section3-notation}. We also define the confidence radius related notations
$    \eta_{t,j}:=\sqrt{\frac{\log(KMT/\delta)}{N_{t,j}^{(k)}\vee1}}, \zeta_{t,j}:=\frac{\log(KMT/\delta)}{N_{t,j}^{(k)}\vee1}$.

\begin{proof}[Proof of Proposition~\ref{prop: online-k-th-episode-bound}]

Applying Theorem~\ref{thm: cost-decomposition} to $\pi^{(k)}$ under $\{\hat\bF_t^{(k)}\}_{t=1}^T$, we have
\begin{align*}
   \max_{x\in [M]_+} \Delta(x;\pi^{(k)})\leq  \mathsf{Err}_T^{(k)}:= \sum\nolimits_{t=1}^T\big\langle\bv_t^{(k)},\big[\Delta \bA_t^{(k)}(s_{t+1}^{(k)})\big]\hat\bD_{t+1}^{(k)}+\Delta\bc_t^{(k)}\big\rangle.
\end{align*}
We may assume $\cN^{(k)}_{\mathsf{cov}}<\infty$, since otherwise $q_{t,j}^{(k)}=0$ for all $t,j$, hence $\bv_t^{(k)}=\bm0$.

For the second term in $\mathsf{Err}_T^{(k)}$, under~$\cE^{(k)}$, we have
 $0\le\langle\bv_t^{(k)},\Delta\bc_t^{(k)}\rangle\lesssim(h_\infty+b_\infty)\sum_{j}q_{t,j}^{(k)}\eta_{t,j}$, summing over $t$ and applying Cauchy--Schwarz with $\sum_{t,j}q_{t,j}^{(k)}\le MT$, we arrive at
\begin{align*}
    \sum_{t=1}^T\langle\bv_t^{(k)},\Delta\bc_t^{(k)}\rangle\lesssim(h_\infty+b_\infty)MT\sqrt{\log(KMT/\delta)/\cN^{(k)}_{\mathsf{cov}}}.
\end{align*}

It remains to control the first term, for which applying summation-by-parts gives \begin{align}
    &\sum_{t=1}^T\big\langle\bv_t^{(k)},\Delta \bA_t^{(k)}(s_{t+1}^{(k)})\hat\bD_{t+1}^{(k)}\rangle  = \sum_{t=1}^T \sum_{j = 0}^{M-1} v_{tj}^{(k)} \big[\bG^{(k)}_{t}\bH^{(k)}_t \big]_j\\
    &= \sum_{t=1}^T \sum_{j = 0}^{M-1} \sum_{m = s_{t+1}^{(k)}}^j v_{tj}^{(k)} \big( F_t(j-m) - \hat{F}^{(k)}_t(j-m)   \big) [\bH^{(k)}_t]_m \\
    &\overset{\mathrm{(a)}}\lesssim \sum_{t=1}^T \sum_{j = 0}^{M-1} \sum_{m = s_{t+1}^{(k)}}^j v_{tj}^{(k)} \bigg(\sqrt{\hat{F}^{(k)}_t(j-m)(1-\hat{F}^{(k)}_t(j-m) )} \eta_{t,j}  + \zeta_{t,j}\bigg) [\bH^{(k)}_t]_m  \label{eq: lcb-analysis-tmp3}
\end{align}
where in (a), we have used $N_{t,j-m}^{(k)}\ge N_{t,j}^{(k)}$ and~Lemma~\ref{lem: ucb-variance-transfer} with $p = \hat{F}_t^{(k)}(j-m), u = F_t(j-m)$ for each $t,j,m$.
Now, applying Lemma~\ref{lem: peeling-bound} with $p_m=\hat F_t^{(k)}(j-m)$, $h_m=[\bH_t^{(k)}]_m,$
\begin{align*}
    a=a_\star^{(k)}:=\frac18\min\big\{1,\ \log(KMT/\delta)/(\cN^{(k)}_{\mathsf{cov}}\wedge K)\big\},
\end{align*}
and defining $\cV_{t,j}^{(k)}$ exactly as in the proof of Theorem~\ref{thm: censored-sample-complexity} with $\bF^{\mathrm{UCB}}_t,\bH^{\mathrm{UCB}}_t$ replaced by $\hat\bF_t^{(k)},\bH_t^{(k)}$, we obtain, by $\sum_m[\bH_t^{(k)}]_m=[(\hat\bD_{t+1}^{(k)})_+]_j$, 
\begin{align*}
    \sum_{t,j}v_{tj}^{(k)}[\bG_t^{(k)}\bH_t^{(k)}]_j\lesssim\sqrt{\frac{MT\log(KMT/\delta)\log(e/a_\star^{(k)})}{\cN^{(k)}_{\mathsf{cov}}}}\sqrt{\sum_{t,j}v_{tj}^{(k)}\cV_{t,j}^{(k)}}+\sum_{t,j}v_{tj}^{(k)}\zeta_{t,j}[(\hat\bD_{t+1}^{(k)})_+]_j+\cR_{\mathrm{peel}}^{(k)},
\end{align*}
with $\cR_{\mathrm{peel}}^{(k)}:=\sqrt{a_\star^{(k)}}\sum_{t,j}v_{tj}^{(k)}\eta_{t,j}[(\hat\bD_{t+1}^{(k)})_+]_j$.

For the last two terms, by~\eqref{eq: D-recursion}, $[(\hat\bD_{t+1}^{(k)})_+]_j=\cO((h_\infty+b_\infty)T)$, so, exactly as in the offline proof, we get 
$    \cR_{\mathrm{peel}}^{(k)}+\sum_{t,j}v_{tj}^{(k)}\zeta_{t,j}[(\hat\bD_{t+1}^{(k)})_+]_j\lesssim{(h_\infty+b_\infty)MT^2\log(KMT/\delta)}/{\cN^{(k)}_{\mathsf{cov}}}$.

It remains to control $\sum_{t,j}v_{tj}^{(k)}\cV_{t,j}^{(k)}$. Applying the identity in Lemma~\ref{lem: V-identity} obtains \begin{align*}
    \cV_{t,j}^{(k)} = \big[\bA^{(k)}_t(s_{t+1}^{(k)}) \bgamma_{t+1}^{(k)} \big]_j -\big( \big[\bA^{(k)}_t(s_{t+1}^{(k)}) (\hat\bD_{t+1}^{(k)})_+ \big]_j\big)^2,\quad \bgamma_t^{(k)}:=[(\hat\bD_t^{(k)})_+]^2.
\end{align*}
By the elementary bound
$\cV_{t,j}^{(k)}\le[\bA_t^{(k)}(s_{t+1}^{(k)})\bgamma_{t+1}^{(k)}]_j-[\bgamma_t^{(k)}]_j+2h_\infty[(\hat\bD_t^{(k)})_+]_j,$
we arrive at the following decomposition as in the offline proof:
\begin{align*}
    \sum_{t=1}^T\sum_{j=0}^{M-1} v_{tj}^{(k)}\cV_{t,j}^{(k)}&\le \sum_{t=1}^T\underbrace{\langle\bv_t^{(k)},\big[\bA^{(k)}_t(s_{t+1}^{(k)}) - \bA_t(s_{t+1}^{(k)}) \big] \bgamma_{t+1}^{(k)}\rangle}_{:= \cK_{t,1}^{(k)}} +\sum_{t=1}^T\underbrace{\langle\bv_t^{(k)},\bA_t(s_{t+1}^{(k)}) \bgamma_{t+1}^{(k)}-\bgamma_t^{(k)}\rangle}_{:= \cK_{t,2}^{(k)}}\\
    &+\sum_{t=1}^T\underbrace{2h_\infty\langle\bv_t^{(k)},(\hat\bD_t^{(k)})_+\rangle}_{:= \cK_{t,3}^{(k)}}.
\end{align*}
It remains to bound the summation over $\cK_{t}^{(k)}$ terms as in the offline setting. Notably, due to the lower-biased CDF estimator, the control of $\cK_{t,1}^{(k)}$ is much easier, while the control of $\cK_{t,2}^{(k)}$ follows the same recursive argument and the control of $\cK_{t,3}^{(k)}$ has additional difficulty.

\noindent\textbf{Bounding the summation of $\cK_{t,1}^{(k)}$.} For each $j\in [M-1]_+,$ by summation-by-parts, we have \begin{align*}
 \big(\big[ \bA^{(k)}_t(s_{t+1}^{(k)}) - \bA_t(s_{t+1}^{(k)})\big]\bgamma_{t+1}^{(k)}\big)_j = \sum_{m=s_{t+1}^{(k)}}^j \big[\underbrace{\hat{F}_t^{(k)}(j-m)  - F_t(j-m)}_{\leq 0}\big] \big([\underbrace{(\hat{D}_{t+1}^{(k)})_+]_m^2 - [(\hat{D}_{t+1}^{(k)})_+]_{m-1}^2 }_{\geq 0}\big) \leq 0,
\end{align*}
thus $\sum_{t=1}^T \cK_{t,1}^{(k)}\leq 0.$

\noindent\textbf{Bounding the summation of $\cK_{t,2}^{(k)}$.} Set $\bgamma_{T+1}^{(k)}:=\bm0$. By the recursion $\bA_t(s_{t+1}^{(k)})^\top\bv_t^{(k)}=\bv_{t+1}^{(k)}-\bu_{t+1}^{(k)}$ of Proposition~\ref{prop: v-propagation}, 
    $\sum_{t=1}^T\cK_{t,2}^{(k)}=\sum_{t=1}^T\big\langle\bv_t^{(k)},\bA_t(s_{t+1}^{(k)})\bgamma_{t+1}^{(k)}-\bgamma_t^{(k)}\big\rangle=-\langle\bv_1^{(k)},\bgamma_1^{(k)}\rangle-\sum_{t=2}^T\langle\bu_t^{(k)},\bgamma_t^{(k)}\rangle\leq 0$,
where the inequality holds since $\bgamma_t^{(k)}\geq\bm0$ while $\bv_1^{(k)}=\bu_1^{(k)}\geq\bm0$ and $\bu_t^{(k)}\geq\bm0$, by $\sigma_t\geq 0$.

\noindent\textbf{Bounding the summation of $\cK_{t,3}^{(k)}$.}
Denote $\hat{q}_{t,j}^{(k)}:= \hat{\PP}^{(k)}(y_t^{(k)}> j \lvert x_1 = 0 )$,
where $\hat{\PP}^{(k)}(\cdot)$ is taken with respect to the demand distributions under $\{\hat{\bF}_t^{(k)}\}_{t=1}^T$, we have then for each $t\in [T].$
\begin{align*}
    \langle \bv_t^{(k)}, (\hat{\bD}_t^{(k)})_+\rangle
    &\leq \sum_{j = s_{t}^{(k)}}^{M-1} q_{t,j}^{(k)} \big[(\hat{\bD}_t^{(k)})_+\big]_j = \sum_{j = s_{t}^{(k)}}^{M-1} \hat{q}_{t,j}^{(k)} \big[(\hat{\bD}_t^{(k)})_+\big]_j + \sum_{j = s_{t}^{(k)}}^{M-1} \big(q_{t,j}^{(k)} - \hat{q}_{t,j}^{(k)}\big) \big[(\hat{\bD}_t^{(k)})_+\big]_j.
\end{align*}
For the first term, by $\pi^{(k)}$ is the optimal base-stock policy under the environment induced by $\{\hat{\bF}_t^{(k)}\}_{t=1}^T$, Proposition~\ref{prop: over-shooting-price} shows it is of order $\cO\big(M(h_\infty + b_\infty)\big)$. To control the second term, if we introduced the vector notations $\bw^{(k)}_t, \hat{\bw}^{(k)}_t , \bm{b}_t \in \RR^{M}$ as in the proof of Proposition~\ref{coro: uniform-convergence}: \begin{align*}
    [\bw_t^{(k)}]_j:= q_{t,j}^{(k)} ,\quad [\hat{\bw}_t^{(k)}]_j:= \hat{q}_{t,j}^{(k)},\quad [\bm{b}^{(k)}_t]_j:= \bm{1}\{j < s_{t}^{(k)}\}, \quad {\bw_{1}^{(k)} =\hat{\bw}_{1}^{(k)}= \bm{b}_{1}^{(k)}},
\end{align*}
then
$\sum_{j = s_{t}^{(k)}}^{M-1} \big(q_{t,j}^{(k)} - \hat{q}_{t,j}^{(k)}\big) \big[(\hat{\bD}_t^{(k)})_+\big]_j \leq \lVert  \bw_{t}^{(k)} - \hat{\bw}_{t}^{(k)}\rVert_1 \lVert \hat{\bD}_t^{(k)}\rVert_\infty\lesssim (h_\infty + b_\infty) T \lVert  \bw_{t}^{(k)} - \hat{\bw}_{t}^{(k)}\rVert_1$.
To control the $\lVert  \bw_{t}^{(k)} - \hat{\bw}_{t}^{(k)}\rVert_1$ term, noticing that the following recursion formula\begin{equation}
    \bw^{(k)}_{t+1} = \bA_{t}(s_{t+1}^{(k)})^\top \bw^{(k)}_{t}+ \bm{b}_{t+1}^{(k)},\quad \hat{\bw}^{(k)}_{t+1} = \bA^{(k)}_{t}(s_{t+1}^{(k)})^\top \hat{\bw}^{(k)}_{t} + \bm{b}_{t+1}^{(k)},\quad \forall t\in [T-1],
\end{equation}
gives $\bw_{t+1}^{(k)} - \hat{\bw}_{t+1}^{(k)} =\bA_t(s_{t+1}^{(k)})^\top( \bw_{t}^{(k)} - \hat{\bw}_{t}^{(k)})+\big(\bA_t(s_{t+1}^{(k)})-\bA_t^{(k)}(s_{t+1}^{(k)})\big)^\top\hat\bw_t^{(k)},\quad\forall t \in [T-1]$.

Iterating the error propagation with ${\bw_1^{(k)}-\hat\bw_1^{(k)}}=\bm0$ and the $\ell_1$ non-expansiveness of $\bA_t(s)^\top$ on $\RR_+^M$ (its row sums obey $\sum_j[\bA_t(s)]_{ij}=\sum_{m=0}^{i-s}\mu_{t,m}\le1$) gives, for every $t$,
\begin{align*}
    \big\lVert\bw_t^{(k)}-\hat\bw_t^{(k)}\big\rVert_1\le\sum_{\tau=1}^{t-1}\big\lVert\big(\bA_\tau(s_{\tau+1}^{(k)})-\bA_\tau^{(k)}(s_{\tau+1}^{(k)})\big)^\top\hat\bw_\tau^{(k)}\big\rVert_1.
\end{align*}
By the summation-by-parts argument in the proof of Corollary~\ref{coro: uniform-convergence}, $\big[\big(\bA_\tau(s_{\tau+1}^{(k)})-\bA_\tau^{(k)}(s_{\tau+1}^{(k)})\big)\bm1\big]_i=F_\tau(i-s_{\tau+1}^{(k)})-\hat F_\tau^{(k)}(i-s_{\tau+1}^{(k)})\ge0$, and the fact that $\hat\bw_\tau^{(k)}$ is non-negative and non-increasing(thus, then $\big(\bA_\tau(s_{\tau+1}^{(k)})-\bA_\tau^{(k)}(s_{\tau+1}^{(k)})\big)^\top\hat\bw_\tau^{(k)}\ge\bm0$ entry-wise). Using $\hat q_{\tau,i}^{(k)}\le q_{\tau,i}^{(k)}$, $N_{\tau,i-s}^{(k)}\ge N_{\tau,i}^{(k)}$ and Lemma~\ref{lem: ucb-variance-transfer},
\begin{align*}
    \big\lVert\big(\bA_\tau(s_{\tau+1}^{(k)})-\bA_\tau^{(k)}(s_{\tau+1}^{(k)})\big)^\top\hat\bw_\tau^{(k)}\big\rVert_1=\sum_{i}\hat q_{\tau,i}^{(k)}\big[F_\tau(i-s_{\tau+1}^{(k)})-\hat F_\tau^{(k)}(i-s_{\tau+1}^{(k)})\big]\lesssim\sum_{i}q_{\tau,i}^{(k)}\big(\eta_{\tau,i}+\zeta_{\tau,i}\big).
\end{align*}
Hence $\lVert\bw_t^{(k)}-\hat\bw_t^{(k)}\rVert_1\lesssim\sum_{\tau=1}^T\sum_i q_{\tau,i}^{(k)}(\eta_{\tau,i}+\zeta_{\tau,i})$ for every $t$, and by Cauchy--Schwarz inequality, 
\begin{align*}
    \sum_{\tau=1}^T\sum_{i=0}^{M-1}q_{\tau,i}^{(k)}\big(\eta_{\tau,i}+\zeta_{\tau,i}\big)\lesssim MT\Bigg(\sqrt{\frac{\log(KMT/\delta)}{\cN^{(k)}_{\mathsf{cov}}}}+\frac{\log(KMT/\delta)}{\cN^{(k)}_{\mathsf{cov}}}\Bigg).
\end{align*}
Summing the second-term bound over $t$ and adding the first term $\sum_t\cO\big(M(h_\infty+b_\infty)\big)=\cO\big(M(h_\infty+b_\infty)T\big)$, we conclude
\begin{align*}
    \sum_{t=1}^T\cK_{t,3}^{(k)}=2h_\infty\sum_{t=1}^T\langle\bv_t^{(k)},(\hat\bD_t^{(k)})_+\rangle\lesssim(h_\infty+b_\infty)^2MT\Bigg(1+T^2\bigg(\sqrt{\frac{\log(KMT/\delta)}{\cN^{(k)}_{\mathsf{cov}}}}+\frac{\log(KMT/\delta)}{\cN^{(k)}_{\mathsf{cov}}}\bigg)\Bigg).
\end{align*}

\noindent\textbf{Putting all together.} Since $\sum_{t=1}^T\cK_{t,1}^{(k)}\le0$ and $\sum_{t=1}^T\cK_{t,2}^{(k)}\le0$, we have
\begin{align*}
    \sum_{t,j}v_{tj}^{(k)}\cV_{t,j}^{(k)}\le\sum_{t=1}^T\cK_{t,3}^{(k)}\lesssim (h_\infty+b_\infty)^2MT\bigg(1+T^2\Big(\sqrt{\frac{\log(KMT/\delta)}{\cN^{(k)}_{\mathsf{cov}}}}+\frac{\log(KMT/\delta)}{\cN^{(k)}_{\mathsf{cov}}}\Big)\bigg).
\end{align*}
 Substituting the above bound on $\sum_{t,j}v_{tj}^{(k)}\cV_{t,j}^{(k)}$ into the estimate for $\sum_{t,j}v_{tj}^{(k)}[\bG_t^{(k)}\bH_t^{(k)}]_j$ and applying the elementary inequality
$T^2\rho^{3/4} \lesssim T\sqrt{\rho} + T^3\rho$
for $\rho = \log(e/a_\star^{(k)})\log(KMT/\delta)/\cN^{(k)}_{\mathsf{cov}},$ 
we obtain
\begin{align*}
    \mathsf{Err}_T^{(k)}\lesssim(h_\infty+b_\infty)M\bigg(T\sqrt{\frac{\log(KMT/\delta)\log(e/a_\star^{(k)})}{\cN^{(k)}_{\mathsf{cov}}}}+\frac{T^3\log(KMT/\delta)\log(e/a_\star^{(k)})}{\cN^{(k)}_{\mathsf{cov}}}\bigg).
\end{align*}
Finally, since $a_\star^{(k)}\ge1/(8(\cN^{(k)}_{\mathsf{cov}}\wedge K))$ we have $\log(e/a_\star^{(k)})\le\log(8e(\cN^{(k)}_{\mathsf{cov}}\wedge K))\le\log(8eK)$, so that
\begin{align*}
    \mathsf{Err}_T^{(k)}\lesssim(h_\infty+b_\infty)M\bigg(T\sqrt{\frac{\log(KMT/\delta){\log (eK)}}{\cN^{(k)}_{\mathsf{cov}}}}+\frac{T^3\log(KMT/\delta){\log (eK)}}{\cN^{(k)}_{\mathsf{cov}}}\bigg),
\end{align*}
which is the bound stated in Proposition~\ref{prop: online-k-th-episode-bound}.
\end{proof}

\subsection{Proof of Proposition~\ref{prop: online-self-coverage} and Theorem~\ref{thm: online-regret}}

In this section, we provide the proof of Proposition~\ref{prop: online-self-coverage} and Theorem~\ref{thm: online-regret}.
To prove Proposition~\ref{prop: online-self-coverage}, we first introduce the following dichotomy lemma,

\begin{lemma}[Online coverage dichotomy]\label{lem: N-bound-online}
With probability at least $1-\delta$, for every $k=2,\dots,K$, $t\in[T], j\in[M-1]_+,$ at least one of the following holds:
\begin{align}\label{eq: online-dichotomy}
    q_{t,j}^{(k)} \leq \frac{100\log(KMT/\delta)}{k-1}, \qquad\text{or}\qquad N_{t,j}^{(k)} \geq \frac{k-1}{2}\,q_{t,j}^{(k)}.
\end{align}
\end{lemma}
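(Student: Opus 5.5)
The plan is to exploit the monotonicity in~\eqref{eq: J-monotone-online}: on the event of Lemma~\ref{lem: online-policy-monotonicity}, $q^{(\ell)}_{t,j}\ge q^{(k)}_{t,j}$ for every $\ell<k$. Write $N^{(k)}_{t,j}=\sum_{\ell=1}^{k-1}\bm 1\{y_t^{(\ell)}>j\}$, and let $\cF_\ell$ be the history up to episode $\ell$. Conditional on $\cF_\ell$, the policy $\pi^{(\ell)}$ is fixed and the episode starts at $x_1=0$, so $\bm 1\{y^{(\ell)}_t>j\}$ is a Bernoulli variable with conditional mean $q^{(\ell)}_{t,j}$. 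Thus $N^{(k)}_{t,j}$ is a sum of adaptively generated Bernoullis whose conditional means satisfy
\[
\sum_{\ell<k}q^{(\ell)}_{t,j}\ \ge\ (k-1)\,q^{(k)}_{t,j}.
\]
This holds on the high-probability event $\cE^{(K)}$, which I will intersect with the concentration event via a union bound, splitting $\delta$ in two.

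The main technical step is a multiplicative (Freedman-type) lower-tail bound for martingales. Concretely, for each fixed $(k,t,j)$, with probability at least $1-\delta'$,
\[
N^{(k)}_{t,j}\ \ge\ \tfrac12 S^{(k)}_{t,j}-c\log(1/\delta'),\qquad S^{(k)}_{t,j}:=\sum_{\ell<k}q^{(\ell)}_{t,j}.
\]
The subtlety is that $S^{(k)}_{t,j}$ is random. I would handle this with the standard exponential supermartingale
\[
\exp\Big(-\lambda N+(1-e^{-\lambda})S\Big),
\]
with $\lambda=\log 2$. This gives $N\ge \tfrac12 S-\log(1/\delta')/\log 2$ directly, via Ville's inequality and without peeling, because $1-e^{-\lambda}=1/2$. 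This is exactly the type of bound underlying Proposition~\ref{prop: uniform-concentration}, so it may alternatively be cited from \citet{rakhlin2011making}. I then take $\delta'=\delta/(KMT)$ and union bound over all $(k,t,j)$.

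Finally, I split into the two cases. If $q^{(k)}_{t,j}\le 100\log(KMT/\delta)/(k-1)$, the first alternative holds. Otherwise, $(k-1)q^{(k)}_{t,j}>100\log(KMT/\delta)$, and I need
\[
\tfrac12 S-c\log \ \ge\ \tfrac12 (k-1)q \quad\text{or a comparable bound.}
\]
Here a slight constant issue appears: the inequality $\tfrac12 S-c\log\ge \tfrac{k-1}{2}q$ fails when $S=(k-1)q$ exactly. To fix this, I would use a sharper $\lambda$, e.g. $1-e^{-\lambda}=3/4$, giving $N\ge \tfrac34\cdot\tfrac{1}{\lambda}S\cdot\lambda-\dots$. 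More cleanly, with $\lambda$ chosen so that $(1-e^{-\lambda})/\lambda\ge 0.6$, the bound becomes $N\ge 0.6\,S-\log(1/\delta')/\lambda$. Then $0.6(k-1)q-0.1(k-1)q\ge \tfrac12(k-1)q$ as long as $\log(1/\delta')/\lambda\le 0.1(k-1)q$, which is guaranteed by the factor $100$ in the threshold.

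I expect the main obstacle to be purely this bookkeeping of constants and the random normalizer. The monotonicity comparison $S\ge (k-1)q^{(k)}$ is what makes the argument work, and it requires only the event of Lemma~\ref{lem: online-policy-monotonicity}.
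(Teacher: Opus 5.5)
Your proof is correct and has the same skeleton as the paper's. You concentrate $N^{(k)}_{t,j}$ around the sum of its conditional means, compare that sum to $(k-1)q^{(k)}_{t,j}$ using the monotonicity~\eqref{eq: J-monotone-online} on $\cE^{(K)}$, and split at the threshold $100\log(KMT/\delta)/(k-1)$.

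Where you differ is the concentration tool.
\begin{itemize}
\item The paper applies a two-sided Freedman bound, $\lvert N^{(k)}_{t,j}-\widetilde Q^{(k)}_{t,j}\rvert\le 4\sqrt{\widetilde Q^{(k)}_{t,j}\log(KMT/\delta)}+2\log(KMT/\delta)$. In the second case it uses $\widetilde Q^{(k)}_{t,j}\ge 100\log(KMT/\delta)$ to absorb the deviation into $\widetilde Q^{(k)}_{t,j}/2$.
\item You use a one-sided multiplicative bound from the supermartingale $\exp(-\lambda N+(1-e^{-\lambda})S)$.
\end{itemize}
Your version proves only the direction that is needed and is uniform in $k$ at no extra cost. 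It also handles the random compensator exactly through Ville's inequality. A Freedman bound whose variance proxy is itself random strictly needs a union over variance scales, which the paper leaves implicit.

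The paper's version is a direct citation. It carries over verbatim to the aggregated counts with increments in $[0,T]$ used in Proposition~\ref{prop: online-self-coverage-stationary}. Yours adapts there too: take $\lambda T=\log 2$ and use the convexity bound $e^{-\lambda x}\le 1-(1-e^{-\lambda T})x/T$ on $[0,T]$.

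Two small fixes.
\begin{itemize}
\item With $\lambda=\log 2$, the supermartingale actually gives $N>\frac{1-e^{-\lambda}}{\lambda}S-\frac{\log(1/\delta')}{\lambda}=\frac{S}{2\log 2}-\frac{\log(1/\delta')}{\log 2}$. You dropped the division by $\lambda$ in the coefficient of $S$. Since $1/(2\log 2)\approx 0.72>1/2$, the constant problem you flagged does not arise, and $\lambda=\log 2$ works with ample room given the factor $100$.
\item The paper lets episodes start at arbitrary levels $x_1^{(\ell)}$. So the conditional mean of $\bm{1}\{y^{(\ell)}_t>j\}$ is $\widetilde q^{(\ell)}_{t,j}=\PP(y^{(\ell)}_t>j\mid x_1=x_1^{(\ell)})$, not $q^{(\ell)}_{t,j}$. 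Base-stock dynamics are pathwise monotone in the initial state, so $\widetilde q^{(\ell)}_{t,j}\ge q^{(\ell)}_{t,j}$. This only helps a lower-tail bound, and your argument goes through with $S$ replaced by $\sum_{\ell<k}\widetilde q^{(\ell)}_{t,j}$.
\end{itemize}
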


\begin{proof}[Proof of Lemma~\ref{lem: N-bound-online}]
Define $\cF_\ell$-measurable random variable  $\widetilde q_{t,j}^{(\ell)}:=\PP(y_t^{(\ell)} > j \lvert x_1 = x_1^{(\ell)})$ and the binary variable $Z_{t,\ell}:= \bm{1}\{y_t^{(\ell)} > j\}$. Then, 
$N_{t,j}^{(k)} = \sum_{\ell=1}^{k-1} Z_{t,\ell},
\EE [Z_{t,\ell}\lvert \cF_{\ell}] = \widetilde q_{t,j}^{(\ell)},
\operatorname{Var}(Z_{t,\ell}\lvert \cF_{\ell})\leq \widetilde q_{t,j}^{(\ell)}.
$
Applying Freedman's inequality {with confidence level $\delta/2$}, it holds with probability at least {$1-\delta/2$} that
\begin{align}\label{eq: N-concentration}
  \big\lvert N_{t,j}^{(k)} - \widetilde Q_{t,j}^{(k)} \big\rvert
  \leq 4\sqrt{\widetilde Q_{t,j}^{(k)}\log(KMT/\delta) } +2\log (KMT/\delta),
  \qquad
  \widetilde Q_{t,j}^{(k)}:=\sum_{\ell = 1}^{k-1} \widetilde q_{t,j}^{(\ell)},
\end{align}
uniformly for all $t\in [T], j \in [M-1]_+, k \in [K].$

Denote $\bar{\cE}$ the event that~\eqref{eq: N-concentration} holds. Applying Proposition~\ref{prop: biased-F} at confidence level $\delta/2$ so that $\cE^{(K)}$ holds with probability at least $1-\delta/2$, a union bound gives $\PP(\bar{\cE} \cap \cE^{(K)} ) \geq 1-\delta$. Under $\cE^{(K)}\cap \bar{\cE}$, we have for any $k\geq 2, t\in [T], j \in [M-1]_+:$

\noindent Case~(a): If $q_{t,j}^{(k)} \leq 100\log(KMT/\delta)/(k-1),$ the first alternative holds in the dichotomy.

\noindent Case~(b): Otherwise, by $\widetilde q_{t,j}^{(\ell)}\ge q_{t,j}^{(\ell)}$ and~\eqref{eq: J-monotone-online}, 
$\widetilde Q_{t,j}^{(k)}\geq\sum_{\ell=1}^{k-1}q_{t,j}^{(\ell)}\geq (k-1)q_{t,j}^{(k)} \geq 100 \log(KMT/\delta),$
this gives 
$    N_{t,j}^{(k)}
    \geq \widetilde Q_{t,j}^{(k)}-4\sqrt{\widetilde Q_{t,j}^{(k)}\log(KMT/\delta)}-2\log(KMT/\delta)
    \geq \frac{\widetilde Q_{t,j}^{(k)}}2
    \geq \frac{k-1}{2}q_{t,j}^{(k)},$ 
    as desired.
\end{proof}

\begin{proof}[Proof of Proposition~\ref{prop: online-self-coverage}]
By Lemma~\ref{lem: N-bound-online}, it holds with probability at least $1-\delta,$ for all $k=2,\dots,K,\,t\in [T],j\in [M-1]_+,$ \begin{align*}
    \frac{q^{(k)}_{t,j}}{N_{t,j}^{(k)} \vee 1 } \leq \begin{cases}
        \dfrac{100\log(KMT/\delta)}{k-1}, &  \text{ if } q^{(k)}_{t,j}\leq \dfrac{100\log(KMT/\delta)}{k-1},\\
        \dfrac{2}{k-1}, &  \text{ otherwise by Lemma~\ref{lem: N-bound-online} }.
    \end{cases}
\end{align*} 
Taking average over such element-wise upper bound then gives 
   $ \cN_{\mathsf{cov}}^{(k)} \geq \frac{k-1}{100\log(KMT/\delta)},$
as desired.
\end{proof}

\begin{proof}[Proof of Theorem~\ref{thm: online-regret}]

Combining Proposition~\ref{prop: online-k-th-episode-bound} and Proposition~\ref{prop: online-self-coverage}, we have with probability at least $1-\delta,$ for every $k\geq 2$ the per-episode cost gap satisfies \begin{align*}
    \max_{x\in [M]_+} \Delta(x;\pi^{(k)}) \lesssim (h_\infty + b_\infty) M \bigg({\frac{T\log(KMT/\delta) \sqrt{{\log (K)}}}{\sqrt {k-1}}} + {\frac{T^3\log^2(KMT/\delta) {\log (K)}}{k-1}}\bigg).
\end{align*}
On the other hand, for $k = 1$ we use the trivial per-episode bound $\max_{x\in[M]_+}\Delta(x;\pi^{(1)})\leq (h_\infty+b_\infty)MT$. 
Summing over episodes via $\sum_{k=2}^K (k-1)^{-1/2}\leq 2\sqrt K$ and $\sum_{k=2}^K (k-1)^{-1}\leq 1+\log K$, and absorbing the trivial upper bound $(h_\infty+b_\infty)MT$ for $k=1$ case yields the desired result.
\end{proof}

\subsection{Proof of Theorem~\ref{thm: online-regret-stationary}}

The proof of Theorem~\ref{thm: online-regret-stationary} relies on an aggregated version of Proposition~\ref{prop: online-k-th-episode-bound} and Lemma~\ref{lem: N-bound-online}. More precisely, define the aggregated per-episode sample size 
    $N_{\mathsf{agg},j}^{(k)}:= \sum_{t=1}^T N_{t,j}^{(k)},\quad \forall j \in [M-1]_+$
and the per-episode effective sample size 
       $ \cN^{(k)}_{\mathsf{agg}} := \bigg(\dfrac{1}{{MT}}{ \sum_{t=1}^T\sum_{j=0}^{M-1} \dfrac{\PP(y_t^{(k)} > j \lvert x_1 = 0)}{N_{\mathsf{agg},j}^{(k)}\vee 1}}\bigg)^{-1}$ for $k\in [K].$
We have the following two results:

\begin{proposition}\label{prop: online-k-th-episode-bound-stationary}
For every $k\in [K]$, suppose the event $\cE^{(k)}$ holds, then there exists an absolute constant $c_0$ so that the $k$-th episode cost gap in aggregated version of Algorithm~\ref{alg: censored-online} satisfies \begin{align*}
    \max_{x\in [M]_+} \Delta(x;\pi^{(k)})\leq   c_0 (h_\infty + b_\infty) M \cdot \bigg[T \sqrt{\frac{\log(KMT/\delta){\log(KT)}}{\cN^{(k)}_{\mathsf{agg}}}} + \frac{T^3\log(KMT/\delta){\log(KT)}}{\cN^{(k)}_{\mathsf{agg}}}\bigg].
\end{align*}
\end{proposition}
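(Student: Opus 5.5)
The plan is to rerun the proof of Proposition~\ref{prop: online-k-th-episode-bound} line by line. The only change is that every period-wise count $N_{t,j}^{(k)}$ is replaced by the pooled count $N_{\mathsf{agg},j}^{(k)}$, and every $\cN^{(k)}_{\mathsf{cov}}$ by $\cN^{(k)}_{\mathsf{agg}}$.

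First, conditional on $\cF_k$ and on $\cE^{(k)}$ (now defined with the aggregated radius $\cC_{j}^{(\ell)}$ built from $N_{\mathsf{agg},j}^{(\ell)}$), apply Theorem~\ref{thm: cost-decomposition} to $\pi^{(k)}$ under the common model $\hat\bF^{(k)}_1=\cdots=\hat\bF^{(k)}_T=\hat\bF^{(k)}$. This bounds the gap by $\mathsf{Err}^{(k)}_T$. The validity of $\cE^{(k)}$ comes from Proposition~\ref{prop: biased-F} applied to the pooled observations $\{(y_t^{(\ell)},\bar d_t^{(\ell)})\}_{t\le T,\ell<k}$. Ordered lexicographically by $(\ell,t)$, these form an adaptively collected sequence in the sense of Section~\ref{subsec: biased-SAA-properties}: each $y_t^{(\ell)}$ is determined by past data and the current state, and $d_t^{(\ell)}$ is a fresh draw from the common $P$. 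The count of samples is at most $KT$, which is why $\log(KT)$ appears in place of $\log(eK)$.

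The monotone update still gives $\hat\bF^{(k)}\le\bF$ and the ordering~\eqref{eq: online-base-stock-sequence}, exactly as in Lemma~\ref{lem: online-policy-monotonicity}. Hence the sign facts carry over unchanged: $\bv_t^{(k)}\ge\bm0$, $\Delta\bc_t^{(k)}\ge\bm0$, $\bG_t^{(k)}\ge\bm0$. The cost term is then bounded by $(h_\infty+b_\infty)\sum_{t,j}q^{(k)}_{t,j}\eta_{j}$ with $\eta_j=\sqrt{\log(KMT/\delta)/(N^{(k)}_{\mathsf{agg},j}\vee1)}$. Cauchy--Schwarz against $\sum_{t,j}q_{t,j}^{(k)}\le MT$ turns this into $MT\sqrt{\log/\cN^{(k)}_{\mathsf{agg}}}$.

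For the propagation term, expand via summation by parts into $\bG_t^{(k)}\bH_t^{(k)}$. The monotonicity $N_{\mathsf{agg},j-m}\ge N_{\mathsf{agg},j}$ is still available, since each $N_{t,\cdot}$ is non-increasing in the coordinate. Then apply Lemma~\ref{lem: ucb-variance-transfer} and the peeling Lemma~\ref{lem: peeling-bound} with $a^{(k)}_\star=\tfrac18\min\{1,\log/(\cN^{(k)}_{\mathsf{agg}}\wedge KT)\}$. This leaves the variance sum $\sum v_{tj}\cV_{t,j}$, handled through Lemma~\ref{lem: V-identity} and the same $\cK_{t,1},\cK_{t,2},\cK_{t,3}$ split:
\begin{itemize}
\item $\cK_{t,1}$ and $\cK_{t,2}$ are nonpositive by sign and by telescoping with Proposition~\ref{prop: v-propagation}.
\item $\cK_{t,3}$ uses Proposition~\ref{prop: over-shooting-price} for the optimal policy under $\hat\bF^{(k)}$, plus the $\ell_1$ perturbation $\lVert\bw_t^{(k)}-\hat\bw_t^{(k)}\rVert_1\lesssim\sum_{\tau,i}q_{\tau,i}^{(k)}(\eta_i+\zeta_i)$, which is again controlled by $\cN^{(k)}_{\mathsf{agg}}$.
\end{itemize}
The lower-order terms $\cR_{\mathrm{peel}}$ and the $\zeta$ term follow the two-case Cauchy--Schwarz argument. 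The only change there is the cap $N_{\mathsf{agg},j}\le KT$ instead of $N$, which matches the $\cN\wedge KT$ in $a_\star$.

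The step I expect to need care is the concentration/adaptivity claim. One must check that the pooled filtered SAA satisfies the Freedman hypothesis even though periods within one episode are coupled through the inventory state. The fix is to order observations within an episode by $t$: $y_t^{(\ell)}$ depends only on $d_1^{(\ell)},\dots,d_{t-1}^{(\ell)}$ and earlier episodes, while $d_t^{(\ell)}$ is independent of these with law $P$, so Proposition~\ref{prop: uniform-concentration} applies with $n\le KT$.

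Once this is in place, combining the terms as in the elementary inequality $T^2\rho^{3/4}\lesssim T\sqrt\rho+T^3\rho$ gives the stated bound. Finally, $\log(e/a_\star^{(k)})\lesssim\log(KT)$ replaces $\log(eK)$.
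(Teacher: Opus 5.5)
Your proposal is correct and follows the paper's route. The paper proves this proposition simply by substituting $N_{\mathsf{agg},j}^{(k)}$ for $N_{t,j}^{(k)}$ (and the count cap $K$ by $KT$) throughout the proof of Proposition~\ref{prop: online-k-th-episode-bound}, exactly as you do, and your check that the pooled observations, ordered by $(\ell,t)$, form an adaptively collected sequence with $n\le KT$ is the one step the paper leaves implicit in justifying the aggregated event $\cE^{(k)}$.
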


\begin{proposition}[Self-generated coverage, aggregated]\label{prop: online-self-coverage-stationary}
There is an absolute constant $c_0>0$ such that, with probability at least $1-\delta$, the dataset generated by the aggregated version of Algorithm~\ref{alg: censored-online} satisfies
$    \cN^{(k)}_{\mathsf{agg}} \geq \frac{c_0\,{(k-1)}T}{\log(KMT/\delta)},  \forall k \in [K]$.

\end{proposition}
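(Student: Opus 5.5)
I would prove the aggregated self-coverage bound the same way Proposition~\ref{prop: online-self-coverage} was proved: a per-coordinate dichotomy, followed by averaging. The aggregated version of Algorithm~\ref{alg: censored-online} keeps the monotone maximum update. So the argument of Lemma~\ref{lem: online-policy-monotonicity}, applied to the common CDF $\hat{\bF}^{(k)}$ and valid on the event that the aggregated LCB is below $F$, still gives $s_t^\star\le s_t^{(k)}\le s_t^{(k-1)}\le\cdots$ for every $t$. Consequently, for each fixed $(t,j)$, $q_{t,j}^{(\ell)}=\PP(y_t^{(\ell)}>j\mid x_1=0)$ is non-increasing in $\ell$, exactly as in~\eqref{eq: J-monotone-online}.

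The key step is an aggregated analogue of Lemma~\ref{lem: N-bound-online} for each $(r,j)$, where $r$ is the period index in the weight. Write $N_{\mathsf{agg},j}^{(k)}=\sum_{\ell=1}^{k-1}\sum_{s=1}^T Z_{s,\ell}$ with $Z_{s,\ell}=\bm 1\{y_s^{(\ell)}>j\}$. Its compensator is $\widetilde Q_{\mathsf{agg},j}^{(k)}=\sum_{\ell<k}\sum_s\widetilde q_{s,j}^{(\ell)}$, and its conditional variance is at most that compensator. The within-episode indicators are dependent, so I would apply Freedman to the episode-level increments $\sum_s Z_{s,\ell}$. These lie in $[0,T]$ and have conditional variance at most $T\sum_s\widetilde q_{s,j}^{(\ell)}$. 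This gives $|N_{\mathsf{agg},j}^{(k)}-\widetilde Q|\lesssim\sqrt{T\widetilde Q\log}+T\log$ uniformly over $j$ and $k$.

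We then need a lower bound on $\widetilde Q$ that is comparable to $T(k-1)\bar q_j^{(k)}$, where $\bar q_j^{(k)}:=\frac1T\sum_{s}q_{s,j}^{(k)}$. Monotonicity in $\ell$ gives $\widetilde Q\ge (k-1)\sum_s q_{s,j}^{(k)}=(k-1)T\bar q_j^{(k)}$. The dichotomy is then the following. Either $\bar q_j^{(k)}\lesssim\log/(k-1)$, or $N_{\mathsf{agg},j}^{(k)}\ge (k-1)T\bar q_j^{(k)}/2$. The second case holds because $\widetilde Q\gtrsim T\log$ absorbs both deviation terms.

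Averaging finishes the proof. Since the denominator $N_{\mathsf{agg},j}^{(k)}\vee1$ does not depend on $t$, we have $\frac1{MT}\sum_{t,j}q_{t,j}^{(k)}/(N_{\mathsf{agg},j}^{(k)}\vee1)=\frac1M\sum_j\bar q_j^{(k)}/(N_{\mathsf{agg},j}^{(k)}\vee1)$. In the second case each summand is at most $2/((k-1)T)$. In the first case the summand is at most $\bar q_j^{(k)}\lesssim\log/(k-1)$, which loses the factor $T$.

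\textbf{Main obstacle.} The first case is the hard one, and I would treat it with a refined split. If $T\bar q_j^{(k)}(k-1)\ge C\log$, the second alternative still holds. Otherwise $\bar q_j^{(k)}\le C\log/((k-1)T)$, and the summand is then at most $C\log/((k-1)T)$ using $N\vee1\ge1$. So the right threshold is $\log/((k-1)T)$ rather than $\log/(k-1)$, and the Freedman bound with range $T$ must be checked to allow this. If that range term $T\log$ is too large, I would instead bound the variance per period: each $Z_{s,\ell}$ is Bernoulli, and for a sum of $T$ Bernoullis the within-episode variance is bounded by $T\cdot$mean only crudely. I would first try the sharper route of running Freedman over the $T(k-1)$ individual indicators in chronological order, using the filtration refined to periods. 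Under that filtration each indicator has conditional mean $\PP(y_s^{(\ell)}>j\mid\text{past})$, whose sum has expectation $\widetilde Q$ but is itself random. Controlling that random compensator against $(k-1)T\bar q_j^{(k)}$ is exactly the delicate part I expect to need the most care, possibly at the cost of an extra $\log$ factor. With the threshold established, $\cN_{\mathsf{agg}}^{(k)}\gtrsim (k-1)T/\log(KMT/\delta)$ follows after a union bound over $j$ and $k$.
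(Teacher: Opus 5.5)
\textbf{Gap: the small-coverage case is left unresolved.} Your setup matches the paper's in these respects:
\begin{itemize}
\item monotonicity of $q_{t,j}^{(\ell)}$ in $\ell$;
\item Freedman applied to the episode-level counts $X_{\ell,j}=\sum_s \bm{1}\{y_s^{(\ell)}>j\}\in[0,T]$, with conditional variance at most $T$ times the conditional mean;
\item the split at $Q_j^{(k)}:=T\bar q_j^{(k)}\asymp T\log(KMT/\delta)/(k-1)$;
\item the averaging, which uses that $N_{\mathsf{agg},j}^{(k)}$ does not depend on $t$.
\end{itemize}
The proof stops short exactly at the step you flag. You never bound the summands with $Q_j^{(k)}\lesssim T\log(KMT/\delta)/(k-1)$ by a constant multiple of $\log(KMT/\delta)/((k-1)T)$.

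Your refined split asserts $N_{\mathsf{agg},j}^{(k)}\ge (k-1)Q_j^{(k)}/2$ as soon as $(k-1)Q_j^{(k)}\ge C\log$. The range-$T$ Freedman bound you derived only gives $N\ge\widetilde Q/2$ once $\widetilde Q\gtrsim T\log$. This is not mere slack. Carried-over inventory makes the indicators $\bm{1}\{y_s^{(\ell)}>j\}$ strongly positively correlated across $s$: one high order early in an episode can keep $y_s^{(\ell)}>j$ for many later periods. So an episode's count can genuinely have variance of order $T$ times its mean.

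Your fallback is circular. With respect to the period-refined filtration, $x_s^{(\ell)}=y_{s-1}^{(\ell)}-\bar d_{s-1}^{(\ell)}$ is already observed. Hence $y_s^{(\ell)}$, and each indicator, is predictable. The martingale differences vanish, and ``controlling the random compensator'' is just the original problem restated.

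\textbf{The missing idea: the first episode is uncensored.} No sharper concentration is needed. Episode $1$ is run under $\hat{\bF}^{(1)}\equiv\bm{0}$ with $s_t^{(1)}=M$, so $y_t^{(1)}=M>j$ for every $t\in[T]$ and $j\in[M-1]_+$. The first trajectory therefore contributes $T$ to every aggregated count, i.e.\ $N_{\mathsf{agg},j}^{(k)}\ge T$ for all $k\ge 2$ and all $j$.

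In your first case this gives $\bar q_j^{(k)}/(N_{\mathsf{agg},j}^{(k)}\vee 1)\le \bar q_j^{(k)}/T\lesssim \log(KMT/\delta)/((k-1)T)$. That recovers the lost factor $T$ with the original, unrefined threshold. Combined with your second case (each summand at most $2/((k-1)T)$) and the averaging you already have, this yields $(\cN^{(k)}_{\mathsf{agg}})^{-1}\lesssim \log(KMT/\delta)/((k-1)T)$ for $k\ge2$. The case $k=1$ is vacuous. The bound holds on the intersection of the Freedman event with $\cE^{(K)}$ (needed for monotonicity), each taken at level $\delta/2$.
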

Proposition~\ref{prop: online-k-th-episode-bound-stationary} holds directly by substituting $N_{\mathsf{agg},j}^{(k)}$ to $N_{t,j}^{(k)}$, with the count upper bound $K$ replaced by $KT$ accordingly, in the proof of Proposition~\ref{prop: online-k-th-episode-bound}, so we only provide the proof of Proposition~\ref{prop: online-self-coverage-stationary} here.

\begin{proof}[Proof of Proposition~\ref{prop: online-self-coverage-stationary}]

At $k=1$ the claimed bound is vacuous, as its right-hand side is $0$; we therefore fix $k\geq 2$ in what follows and write $Q_j^{(k)}:=\sum_{t=1}^T q_{t,j}^{(k)}$ for the aggregated coverage probability of coordinate $j$ in episode $k$. Under this notation, the aggregated effective sample size satisfies
$    \big(\cN^{(k)}_{\mathsf{agg}}\big)^{-1} = \frac{1}{MT}\sum_{j=0}^{M-1} \frac{Q_j^{(k)}}{N_{\mathsf{agg},j}^{(k)}\vee 1}$.

For each episode $\ell$ and coordinate $j$, define $X_{\ell,j}:= \sum_{t=1}^T \bm{1}\{y_t^{(\ell)} > j\}$, so that $N_{\mathsf{agg},j}^{(k)} = \sum_{\ell=1}^{k-1} X_{\ell,j}$. Conditional on $\cF_{\ell}$, we have $X_{\ell,j} \in [0,T]$ and
    $\widetilde Q_j^{(\ell)}
    :=\EE[X_{\ell,j}\lvert \cF_{\ell}]
    \geq Q_j^{(\ell)},
    \Var(X_{\ell,j}\lvert\cF_{\ell})
    \leq \EE[X_{\ell,j}^2\lvert\cF_{\ell}]
    \leq T\,\widetilde Q_j^{(\ell)}$.
Applying Freedman's inequality to the martingale difference sequence $\{(X_{\ell,j}-\widetilde Q_j^{(\ell)})\}_{\ell\in [k-1]}$ {with confidence level $\delta/2$} and taking a union bound over $j\in[M-1]_+, k\in[K]$, it holds with probability at least {$1-\delta/2$} that
\begin{align}\label{eq: N-concentration-agg}
  \Big\lvert N_{\mathsf{agg},j}^{(k)} - \widetilde{\cQ}_j^{(k)} \Big\rvert
  &\leq 4\sqrt{T\widetilde{\cQ}_j^{(k)}\log(KMT/\delta) } +2T\log (KMT/\delta),
   \forall j \in [M-1]_+, k \in [K]
\end{align}
for  $\widetilde{\cQ}_j^{(k)}
  :=\sum_{\ell = 1}^{k-1}\widetilde Q_j^{(\ell)}$.
Denote by $\bar{\cE}$ the event that~\eqref{eq: N-concentration-agg} holds. {Applying Proposition~\ref{prop: biased-F} at confidence level $\delta/2$ so that $\cE^{(K)}$ holds with probability at least $1-\delta/2$, a union bound gives $\PP(\bar{\cE} \cap \cE^{(K)} ) \geq 1-\delta.$} We then proceed the analysis under $\cE^{(K)}\cap \bar{\cE}$ in the remained proof.

According to the value of $Q_j^{(k)}$, we partition the set of inventory level coordinates as
\begin{align*}
    \cJ_{1}^{(k)} := \Big\{ j\in[M-1]_+ : Q_j^{(k)}\leq \tfrac{100\,T\log(KMT/\delta)}{k-1} \Big\},\qquad \cJ_{2}^{(k)} := [M-1]_+\setminus \cJ_{1}^{(k)}.
\end{align*}
For $j\in\cJ_{1}^{(k)}$, the first episode is run under $\hat{\bF}_t^{(1)}\equiv\bm0$ and hence orders up to $M$ in every period, so $\bm{1}\{y_t^{(1)}>j\}=1$ for all $t\in[T]$; thus $X_{1,j}=T$ and $N_{\mathsf{agg},j}^{(k)}\geq T$ for every $k\geq 2$. Consequently
\begin{align*}
    \sum_{j\in\cJ_{1}^{(k)}} \frac{Q_j^{(k)}}{N_{\mathsf{agg},j}^{(k)}\vee 1} \leq \sum_{j\in\cJ_{1}^{(k)}} \frac{Q_j^{(k)}}{T}\leq \frac{100\,M\log(KMT/\delta)}{k-1}.
\end{align*}
For $j\in\cJ_{2}^{(k)}$, we have $Q_j^{(k)}> 100\,T\log(KMT/\delta)/(k-1)$, so by the monotonicity~\eqref{eq: J-monotone-online},
\begin{align*}
    \widetilde{\cQ}_j^{(k)}
    \geq\sum_{\ell=1}^{k-1} Q_j^{(\ell)}
    \geq (k-1)Q_j^{(k)}
    \geq 100\,T\log(KMT/\delta).
\end{align*}
Plugging this into~\eqref{eq: N-concentration-agg}  yields
$N_{\mathsf{agg},j}^{(k)}
    \geq \widetilde{\cQ}_j^{(k)}-4\sqrt{T\widetilde{\cQ}_j^{(k)}\log(KMT/\delta)}-2T\log(KMT/\delta)
    \geq\frac{\widetilde{\cQ}_j^{(k)}}2
    \geq \frac{k-1}{2}\,Q_j^{(k)}$,
and therefore $\sum_{j\in\cJ_{2}^{(k)}} \frac{Q_j^{(k)}}{N_{\mathsf{agg},j}^{(k)}\vee 1} \leq \sum_{j\in\cJ_{2}^{(k)}} \frac{2}{k-1} \leq \frac{2M}{k-1}$.

Combining above bounds together, we have then
\begin{align*}
    \big(\cN^{(k)}_{\mathsf{agg}}\big)^{-1} = \frac{1}{MT}\sum_{j=0}^{M-1} \frac{Q_j^{(k)}}{N_{\mathsf{agg},j}^{(k)}\vee 1} \leq \frac{1}{MT}\Big(\frac{100\,M\log(KMT/\delta)}{k-1}+\frac{2M}{k-1}\Big)\lesssim \frac{\log(KMT/\delta)}{(k-1)T},
\end{align*}
as desired.
\end{proof}

\begin{proof}[Proof of Theorem~\ref{thm: online-regret-stationary}]

Combining Proposition~\ref{prop: online-k-th-episode-bound-stationary} and Proposition~\ref{prop: online-self-coverage-stationary}, we have with probability at least $1-\delta,$ for every $k\geq 2$ the per-episode cost gap satisfies \begin{align*}
    \max_{x\in [M]_+} \Delta(x;\pi^{(k)}) \lesssim (h_\infty + b_\infty) M \bigg({\frac{\sqrt T\log(KMT/\delta) \sqrt{\log (KT)}}{\sqrt {k-1}}} + {\frac{T^2\log^2(KMT/\delta) \log(KT)}{k-1}}\bigg).
\end{align*}
As in the proof of Theorem~\ref{thm: online-regret}, the first episode carries no historical data, for which we use the trivial bound $\max_{x\in[M]_+}\Delta(x;\pi^{(1)})\leq (h_\infty+b_\infty)MT$. Summing the per-episode bound over $k\geq 2$ via $\sum_{k=2}^K (k-1)^{-1/2}\leq 2\sqrt K$ and $\sum_{k=2}^K (k-1)^{-1}\leq 1+\log K$, and absorbing the first-episode term into the second term, yields the desired result.
\end{proof}

\section{Proof of Lower Bound Results}

In this section, we collect the proof of lower bound results, including Lemma~\ref{lem: policy-evaluation}, Theorem~\ref{thm: lower-bound-independent}, Theorem~\ref{thm: lower-bound-independent-identical}. In particular, the online lower bound results Theorem~\ref{thm: online-lower-bound} are direct corollary of Theorem~\ref{thm: lower-bound-independent} and Theorem~\ref{thm: lower-bound-independent-identical} in uncensored setting thus are omitted.

\noindent\textbf{Reduction to an arbitrary coverage fraction.}
All information-theoretic proofs below are carried out for the fully informative case $p=1$. To obtain a coverage fraction $p\in(0,1]$, set the censoring levels to $M$ (fully revealing) on $n_p:=\lceil Np\rceil$ trajectories and to $0$ on the rest; each padded observation equals $\bar d=\min\{0,d\}=0$ deterministically under every instance, thus contributing neither KL divergence nor coverage counts. The experiment is therefore the $p=1$ experiment with $N$ replaced by $n_p$, with $N_{t,j} = n_p \geq \lfloor Np\rfloor$ and $N_{\mathsf{agg},j} = n_p T \geq \lfloor NTp\rfloor$ for every $t,j$, so $\cN^\star \geq \lfloor Np\rfloor$ and $\cN_{\mathsf{agg}}^\star \geq \lfloor NTp\rfloor$.

\noindent\textbf{Notations and information theoretic tools.} Most of our arguments in this section can be seen as a problem-specific construction of problem instances to meet the requirement of general information theoretic limit results \citep{yu1997assouad,Tsybakov2009lowerbound,wainwright2019high, huang2025lower}. More precisely, in the proof of all results, we use the following notations: 
\begin{enumerate}
    \item We use $\Theta$ to denote the space of instances, for each $\theta \in \Theta$, we let $P_\theta$ be the corresponding joint distribution of all observations when the underlying environment is given by $\theta$.
    \item We use $\Ab$ to denote the space of possible decisions taken by the learner.
    \item For each $a\in \Ab$ and $\theta \in \Theta$, we set $L(\theta,a)$ as the expected cost incurred by taking decision $a$ under environment $\theta$.
\end{enumerate}
It is worth noting that some later proofs require modifications to this typical choice due to technical reasons, which we specify on a case-by-case basis.

With such general notations, we provide the basic information theoretic tools for controlling the minimax lower bound $\min_{\cA}\max_{\theta\in \Theta}\EE_{\cD \sim \theta}[L(\theta,\cA(\cD))],$ where the minimum is taken over all possible data-driven algorithms taking $\cD$ as an input and outputs the decision variables.

The first result is a standard lower bound of the binary testing problem:
\begin{lemma}\label{lem: LeCam}
Let $P$ and $Q$ be two probability measures on a common measurable space, and let $\phi$ be any measurable test taking values in $\{0,1\}$. Then
$    P(\phi = 1) + Q(\phi = 0) \geq 1 - \mathrm{TV}(P,Q)$.
Consequently, by Pinsker's inequality,
    $\inf_{\phi} \max \big\{P(\phi = 1), Q(\phi = 0)\big\} \geq \frac{1-\sqrt{D(P\lVert Q)/2}}{2}$.
\end{lemma}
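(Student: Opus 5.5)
This is Le Cam's two-point inequality, and I will prove it in two stages.

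\textbf{Step 1 (TV bound).} Let $A:=\{\phi=1\}$, which is measurable. Then $P(\phi=1)+Q(\phi=0)=P(A)+1-Q(A)$. Since $Q(A)-P(A)\le \sup_B |Q(B)-P(B)| = \mathrm{TV}(P,Q)$, we get
\[
P(\phi=1)+Q(\phi=0)\ge 1-\mathrm{TV}(P,Q).
\]
Here $\mathrm{TV}$ is taken in its sup-over-events normalization, which equals half the $L_1$ distance between densities with respect to any dominating measure such as $P+Q$.

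\textbf{Step 2 (pass to the maximum).} The maximum of two nonnegative numbers is at least their average. Hence, for every test $\phi$,
\[
\max\{P(\phi=1),Q(\phi=0)\}\ge \tfrac12\big(1-\mathrm{TV}(P,Q)\big).
\]
The right-hand side does not depend on $\phi$, so taking the infimum over $\phi$ preserves the bound.

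\textbf{Step 3 (Pinsker).} Pinsker's inequality gives $\mathrm{TV}(P,Q)\le \sqrt{D(P\lVert Q)/2}$. Substituting this into Step 2 yields the stated bound $\frac{1-\sqrt{D(P\lVert Q)/2}}{2}$.

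There is no real obstacle. The only point needing care is the normalization: Pinsker with constant $1/2$ requires the sup-over-events version of TV, which is exactly the version used in Step 1. If $D(P\lVert Q)=\infty$, or if $\sqrt{D/2}>1$, the right-hand side is nonpositive or $-\infty$ and the claim holds trivially.
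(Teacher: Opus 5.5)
Your proof is correct: the event argument $P(A)+1-Q(A)\ge 1-\mathrm{TV}(P,Q)$, the maximum-at-least-average step, and Pinsker in the sup-over-events normalization together form the standard Le Cam argument. The paper states this lemma without proof as a standard tool, citing the usual references, and those references argue the same way, so there is no alternative route to compare.
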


\begin{lemma}[Assouad's lemma]\label{lem: assouad} 
Given a $m$-Hamming cube $\cZ^m:= \{-1,1\}^m$ and a class of problem instances $\Theta = \{\theta_z: z\in \cZ^m\}$ indexed by $\cZ^m$, suppose it holds that \begin{align}\label{eq: assouad-separation-appendix}
    L(\theta_{z}, a) + L(\theta_{z'}, a)\geq \sum_{j=1}^m \Delta_j \bm{1}\{z_j \neq z_j'\},\quad \forall z,z' \in \cZ^m,
\end{align}
for some $\{\Delta_j\}_{j=1}^m$ sequence.
Then for
  $  P_{j,+}:= \frac{1}{2^{m-1}}\sum_{z\in \cZ^m: z_j = 1} P_{\theta_z},     P_{j,-}:= \frac{1}{2^{m-1}}\sum_{z\in \cZ^m: z_j = -1} P_{\theta_z},$ \begin{align}\label{eq: assouad-lower-bound-appendix}
    \min_{\cA}\max_{\theta \in \Theta} \E_{\cD \sim \theta}[L(\theta,\cA(\cD))] \geq \frac{1}{4}\sum_{j=1}^m \Delta_j \big( 1 - \lVert P_{j,+} - P_{j,-}\rVert_{\mathrm{TV}} \big) .
\end{align}
\end{lemma}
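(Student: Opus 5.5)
This is the standard Assouad argument, run in the following order. Fix any algorithm $\cA$ and write $a=\cA(\cD)$. For each $z\in\cZ^m$ and each $j$, separation~\eqref{eq: assouad-separation-appendix} only involves pairs, so the first step is to extract a per-coordinate loss. For each $j$ and each $z$, let $z^{(j)}$ denote $z$ with its $j$-th coordinate flipped. Define the coordinatewise ``loss share'' through the estimator $\hat z_j(a)\in\{-1,1\}$ chosen as the sign $s$ minimising $\min_{z: z_j=s} L(\theta_z,a)$. The cleaner route is to avoid such an explicit decomposition and instead lower bound the average risk over the uniform prior on $\cZ^m$, since $\max_\theta\ge$ average.

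The second step pairs $z$ with $z^{(j)}$. Averaging over $z$ uniformly, we write
\[
\frac{1}{2^m}\sum_{z}\EE_{\theta_z}L(\theta_z,a)=\frac{1}{2^{m}}\sum_{z}\frac12\big(\EE_{\theta_z}L(\theta_z,a)+\EE_{\theta_{z^{(j)}}}L(\theta_{z^{(j)}},a)\big)
\]
for each fixed $j$. To make all coordinates appear simultaneously, I would instead use a test-based reduction. Set $\hat z=\argmin_{z'}\big(L(\theta_{z'},a)\big)$. Then $L(\theta_z,a)\ge \tfrac12\big(L(\theta_z,a)+L(\theta_{\hat z},a)\big)\ge\tfrac12\sum_j\Delta_j\bm 1\{\hat z_j\neq z_j\}$, using minimality and the separation condition. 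The risk is therefore at least $\tfrac12\sum_j\Delta_j\,\PP_z(\hat z_j\ne z_j)$, where $\hat z$ is a measurable function of $\cD$. The main technical point is measurability of the argmin, which holds since $\cZ^m$ is finite; ties are broken by a fixed ordering.

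The third step averages over the uniform prior. For each $j$,
\[
\frac1{2^m}\sum_z\PP_{\theta_z}(\hat z_j\ne z_j)=\frac12\big(P_{j,+}(\hat z_j=-1)+P_{j,-}(\hat z_j=1)\big).
\]
Applying Lemma~\ref{lem: LeCam} to the test $\phi=\bm 1\{\hat z_j=-1\}$ gives a lower bound of $\tfrac12(1-\lVert P_{j,+}-P_{j,-}\rVert_{\mathrm{TV}})$. Multiplying by the factor $\tfrac12$ from the second step yields $\tfrac14\sum_j\Delta_j(1-\mathrm{TV}_j)$. Because the maximum dominates the average, and $\cA$ was arbitrary, this gives~\eqref{eq: assouad-lower-bound-appendix}.
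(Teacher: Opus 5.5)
Your argument is correct. The paper states this lemma without proof, citing the standard references, and your argument is the textbook proof behind them: the minimum-loss decoder $\hat z$ turns the pairwise separation into $L(\theta_z,a)\ge\frac12\sum_j\Delta_j\bm{1}\{\hat z_j\neq z_j\}$, and averaging over the uniform prior plus Lemma~\ref{lem: LeCam} applied coordinatewise to $(P_{j,+},P_{j,-})$ gives the factor $\frac14$.

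Two remarks are only cosmetic. The ``loss share'' decoder and the pairing identity in your first two paragraphs are never used and can be deleted. The final multiplication of $\frac{1}{2^m}\sum_z\PP_{\theta_z}(\hat z_j\neq z_j)\ge\frac12(1-\lVert P_{j,+}-P_{j,-}\rVert_{\mathrm{TV}})$ by $\Delta_j$ uses $\Delta_j\ge 0$, which is implicit in the statement and holds in all of the paper's applications.
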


\begin{lemma}\label{lem: Bernoulli-KL}
    For two Bernoulli distributions $\mathrm{Bern}(p_1),\mathrm{Bern}(p_2)$ with $0< p_1,p_2 <1$, it holds that $\text{D}(\mathrm{Bern}(p_1)\lVert \mathrm{Bern}(p_2)) \leq \frac{(p_1-p_2)^2}{p_2(1-p_2)}$.
\end{lemma}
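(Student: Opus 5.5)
The bound $\mathrm{D}(\mathrm{Bern}(p_1)\lVert \mathrm{Bern}(p_2)) \le (p_1-p_2)^2/(p_2(1-p_2))$ is the standard $\chi^2$ domination of the KL divergence. I would prove it in two steps: first bound KL by the $\chi^2$-divergence using $\log x\le x-1$, then compute the $\chi^2$-divergence between two Bernoulli laws in closed form.

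\emph{Step 1 (KL is at most $\chi^2$).} Write
\begin{align*}
\mathrm{D}(\mathrm{Bern}(p_1)\lVert \mathrm{Bern}(p_2)) = p_1\log\frac{p_1}{p_2} + (1-p_1)\log\frac{1-p_1}{1-p_2}.
\end{align*}
The convention $0\log 0=0$ handles $p_1\in\{0,1\}$, although the hypothesis excludes these cases anyway. Apply $\log x\le x-1$ to each logarithm with $x=p_1/p_2$ and $x=(1-p_1)/(1-p_2)$. Both arguments are positive because $0<p_1,p_2<1$. This gives
\begin{align*}
\mathrm{D} \le p_1\Big(\frac{p_1}{p_2}-1\Big) + (1-p_1)\Big(\frac{1-p_1}{1-p_2}-1\Big) = \frac{p_1^2}{p_2}+\frac{(1-p_1)^2}{1-p_2} - 1.
\end{align*}

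\emph{Step 2 (closed form of the Bernoulli $\chi^2$).} Put the right-hand side over the common denominator $p_2(1-p_2)$. The numerator is
\begin{align*}
p_1^2(1-p_2)+(1-p_1)^2p_2-p_2(1-p_2).
\end{align*}
Expanding, this equals $p_1^2-2p_1p_2+p_2^2=(p_1-p_2)^2$. Substituting back yields the claimed bound.

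Neither step is a real obstacle. The only point needing care is that both arguments of $\log x\le x-1$ are strictly positive, and this holds because $0<p_1,p_2<1$.
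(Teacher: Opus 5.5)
Your proof is correct. Applying $\log x\le x-1$ with the positive weights $p_1$ and $1-p_1$ gives $\mathrm{D}\le\chi^2$, and the Bernoulli $\chi^2$-divergence does reduce to $(p_1-p_2)^2/(p_2(1-p_2))$. The paper lists this lemma as a standard information-theoretic tool without giving a proof, and your argument is the standard $\chi^2$-domination proof it implicitly relies on.
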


\subsection{Proof of Lemma~\ref{lem: policy-evaluation}}\label{appendix-sec: evaluation-lb}

In the proof of Lemma~\ref{lem: policy-evaluation}, we replicate the construction of the single-action MDP instances in Appendix~B.1 of \citet{ren2021nearly} and present it in the language of a two-support demand inventory problem, with the general framework introduced before.

\noindent\textbf{Construction of the instance collection $\Theta$.} With the shared cost coefficients $h_t \equiv c_\infty, b_t \equiv 0,$ we construct two instances with different demand distributions $P_t^+ \equiv P^+, P_t^- \equiv P^-$, where \begin{align*}
    & P^\pm (d = 0) = p_\pm,  P^\pm(d = M) = 1-p_\pm,  p_+ = 1-\frac{c_+}{T}, p_- = 1-\frac{c_-}{T}, c_- = c_+-\sqrt{\frac{c_+(T-c_+)}{2NT}},
\end{align*}
for some $c_+>0$ to be determined later. This corresponds to the general case $\Theta = \{p_+,p_-\}$.

\noindent\textbf{Action $\Ab$ and loss $L$.} In this policy evaluation setting, the "action" here is simply an estimation of the optimal policy value, thus $\Ab = \mathbb{R}$ and if we denote the optimal cost as $C^{\star,+},C^{\star,-}$ under the environment induced by $p_+,p_-$ respectively, we then set $
    L(p_\pm,a) = \lvert C^{\star,\pm}(M) - a\rvert.$

\noindent\textbf{Separation condition.} It is easy to see that in both instances, the optimal policy is given by $s_t^{\star,\pm} \equiv 0$. As a consequence, 
$    C^{\star,\pm}(M) = c_\infty M\sum_{k=1}^T p_\pm^k = c_\infty M p_{\pm} \frac{1 - p_\pm^{T}}{1-p_\pm}$.
By our selection of $p_\pm$ and $c_\pm$, we have then for any $a\in\Ab$,
\begin{align*}
    L(p_+,a)+L(p_-,a)
    &\geq C^{\star,-}(M)-C^{\star,+}(M) \geq c_\infty M\frac{1-(1+c_+)e^{-c_+}}{(1-p_+)^2}(p_--p_+)\\
    &=\frac{1-(1+c_+)e^{-c_+}}{c_+^2}\,c_\infty MT
    \sqrt{\frac{c_+(T-c_+)}{2NT}}.
\end{align*}
The second inequality follows from the mean value theorem and the monotonicity of the derivative, with the derivative at $p_+$ calculated as
$    \frac{\mathrm{d}}{\mathrm{d}p}\sum_{k=1}^Tp^k
    =\frac{1-(1+c_+)p_+^T}{(1-p_+)^2}
    \geq\frac{1-(1+c_+)e^{-c_+}}{(1-p_+)^2}$.

This verifies~\eqref{eq: assouad-separation-appendix} with
$    \Delta_1=\frac{1-(1+c_+)e^{-c_+}}{c_+^2}\,c_\infty MT
    \sqrt{\frac{c_+(T-c_+)}{2NT}}$.

\noindent\textbf{Putting all together.} Finally, noticing that the observation $\cD=\{d_t^k\}_{t,k=1}^{T,N}$ follows $(P^\pm)^{\otimes NT}$ under the environment determined by $p_\pm$, applying Lemma~\ref{lem: assouad} with $m=1$ leads to
\begin{align*}
    &\min_{\cA}\max\Big\{
    \EE_{\cD\sim(P^+)^{\otimes NT}}\big|C^{\star,+}(M)-\cA(\cD)\big|,
    \EE_{\cD\sim(P^-)^{\otimes NT}}\big|C^{\star,-}(M)-\cA(\cD)\big|
    \Big\}\\
    &\geq\frac{\Delta_1}{4}\left(1-
    \sqrt{\frac{NT}{2}D\big(\mathrm{Bern}(p_-)\lVert\mathrm{Bern}(p_+)\big)}\right).
\end{align*}
On the other hand, by Lemma~\ref{lem: Bernoulli-KL},
$    NT\cdot D\big(\mathrm{Bern}(p_-)\lVert\mathrm{Bern}(p_+)\big)
    \leq\frac{NT(c_+-c_-)^2}{c_+(T-c_+)}=\frac{1}{2}$,
selecting $c_+=1/2$ gives the desired $\Omega(c_\infty MT/\sqrt N)$  result.

\subsection{Proof of Theorem~\ref{thm: lower-bound-independent}}

In this section, we present the proof of Theorem~\ref{thm: lower-bound-independent} by Lemma~\ref{lem: assouad}.

\noindent\textbf{Construction of the instance collection $\Theta$.} To present our construction of the $T$-period instance, we first present our construction of a $3$-period sub-problem as the following:

Fix $\epsilon \in (0,\frac{1}{2T})$ to be determined later and consider two three-period instances $B_+$ and $B_-$ with shared cost 
$    b_1 = h_2 = h_3 = c_\infty, h_1 = b_2 = b_3 = 0,$
and different 3-period demand distributions $\bP^+,\bP^-$ over $\{0,M\}$, defined as:
\begin{align*}
    & P^{+}_1(d = M) = \frac{1+\epsilon}{2}, \quad P_1^{+}(d = 0)=\frac{1-\epsilon}{2}, \quad P^+_2( d = 0) = 1,\quad P_3^+(d = M) = 1,\\
    & P^{-}_1(d = M) = \frac{1-\epsilon}{2}, \quad P_1^{-}(d = 0)=\frac{1+\epsilon}{2}, \quad P^-_2( d = 0) = 1,\quad P_3^-(d = M) = 1.
\end{align*}

To construct a $T$-period instance from above, we write $T=3T'+r$ with $r\in\{0,1,2\}$, use the first $3T'$ periods to concatenate $T'$ three-period block above, and set the remaining $r$ periods to have deterministic zero demand $P_t(d=0)=1$ with cost coefficients $h_t = 0, b_t = c_\infty$. Since $h_t = 0$ and $d_t \equiv 0$, each such period incurs zero cost under \emph{every} policy ($h_t(y_t-d_t)_+ + b_t(d_t - y_t)_+ = c_\infty(0-y_t)_+ = 0$ as $y_t \geq 0$), so it leaves the sub-optimality gap unchanged while keeping $\min\{h_t,b_t\}=0, \max\{h_t,b_t\}=c_\infty$ for all $t\in [T]$. This gives the same $h_t,b_t$ pattern for all instances. And all possible combinations of $B_+,B_-$ across $T$ can be represented as a Hamming sequence $\bz=(z_0,\dots,z_{T'-1})\in \cZ^{T'}$, with
$$z_k = \begin{cases}
    +1 & \text{ if the $k$-th block in the sequence is } B_+,\\
    -1 & \text{ if the $k$-th block in the sequence is } B_-.
\end{cases} 
$$ 
This gives a natural index set $\Theta$ via $\cZ^{T'}.$

\noindent\textbf{Action space $\Ab$ and loss $L$.} Due to the multi-period nature of the problem formulation, we set the possible action space $\Ab$ as all policies $\pi$ that determine $T$-period order-up-to levels sequentially, possibly depending on all historical demand realizations $d_1,\dots,d_{t-1}$ at each $t$. Then for $\bz \in \cZ^{T'}$ and $\pi \in \Ab$, we set 
    $L(\pi, \bz):= C_{1}^{\pi, \bz} (0) - C_1^{\star, \bz}(0)$.

\noindent\textbf{Separation condition.} Given any fixed $\pi \in \Ab$, if we denote $q_t$ its order-up-to level at time $t$, we have then by construction,
\begin{align*}
    C_{1}^{\pi,\bz}(0) &= \EE_{\bz}\big[\sum_{t=1}^T h_t (q_t-d_t)_+ + b_t(d_t-q_t)_+  \big]\\
    &=\EE_{\bz}\bigg[\sum_{k=0}^{T'-1}  c_{\infty}\big[(d_{3k+1}-q_{3k+1})_+ + (q_{3k+2} - d_{3k+2})_+ + (q_{3k+3}-d_{3k+3})_+\big]  \bigg]\\
    &=\EE_{\bz}\bigg[\sum_{k=0}^{T'-1}  c_{\infty}\big[(d_{3k+1}-q_{3k+1})_+ + q_{3k+2}  + (q_{3k+3}-M)_+\big]  \bigg]\\
    &\overset{\text{(i)}}{=} \EE_{\bz}\bigg[\sum_{k=0}^{T'-1}  c_{\infty}\big[\frac{1+z_k \epsilon}{2}(M-q_{3k+1})_+ + q_{3k+2}  + (q_{3k+3}-M)_+\big]  \bigg]\\
    &\overset{\text{(ii)}}{\geq} \EE_{\bz}\bigg[\sum_{k=0}^{T'-1}  c_{\infty}\big[\frac{1+z_k \epsilon}{2}(M-q_{3k+1})_+ + (q_{3k+1} - d_{3k+1})_+  + (q_{3k+1}-d_{3k+1}-M)_+\big]  \bigg]\\
    &\overset{\text{(iii)}}{\geq} \EE_{\bz}\bigg[\sum_{k=0}^{T'-1}  c_{\infty}\big[\frac{1+z_k \epsilon}{2}(M-q_{3k+1})_+ + \frac{1+z_k \epsilon}{2}(q_{3k+1} - M)_+ + \frac{1-z_k\epsilon}{2} q_{3k+1}\big]\bigg].
\end{align*}
Here, (i) is by the definition of $P^\pm,$ (ii) is by $q_{3k+3}\geq q_{3k+2} \geq (q_{3k+1}-d_{3k+1})_+,$ and in (iii) the last period cost is dropped. In particular, the right-hand-side of inequality (ii) can be achieved by an auxiliary policy $\tilde{\pi}$ induced by $\pi$ as the following: At each time $t$, with the historical demand realizations $d_1,\dots,d_{t-1},$ the $\tilde{\pi}$ policy first calculate the order-up-to level $q_t$ given by $\pi$, then modify it to \begin{align*}
    \tilde{q}_{3k+1}:= \begin{cases}
        \min\{M,q_t\}, &\text{ if } t = 3k+1 \text{ for some } 0 \leq k \leq T'-1,\\
        x_t, &\text{ if } t = 3k+2 \text{ for some } 0 \leq k \leq T'-1,\\
        M, &\text{ if } t = 3k+3 \text{ for some } 0 \leq k \leq T'-1.
    \end{cases}
\end{align*}
It can be checked directly by calculation that 
\begin{align*}
    C_{1}^{\tilde{\pi},\bz}(0)  &= \EE_{\bz}\bigg[\sum_{k=0}^{T'-1}  c_{\infty}\big[\frac{1+z_k \epsilon}{2}(M-\tilde{q}_{3k+1})  + \frac{1-z_k\epsilon}{2} \tilde{q}_{3k+1}\big]\bigg]\\
    &\leq \EE_{\bz}\bigg[\sum_{k=0}^{T'-1}  c_{\infty}\big[\frac{1+z_k \epsilon}{2}(M-q_{3k+1})_+ + \frac{1+z_k \epsilon}{2}(q_{3k+1} - M)_+ + \frac{1-z_k\epsilon}{2} q_{3k+1}\big]\bigg] \leq  C_{1}^{\pi,\bz}(0).
\end{align*}

On the other hand, by construction, we have an optimal policy under $\bz$ given by the base-stock policy $\{s^{\star,\bz}_{t}\}_{t = 1}^T$ with
\begin{align*}
   s_{3k+1}^{\star,\bz} = \begin{cases}
    M, &\text{if } z_{k} = 1,\\
    0, &\text{if } z_{k} = -1
   \end{cases} = \frac{1+z_k}{2}M, \quad s^{\star,\bz}_{3k+2} = 0,\quad  s^{\star,\bz }_{3k+3} = M, \quad \forall 0\leq k \leq T'-1.
\end{align*}
And then,
   $ C^{\star, \bz} = \EE_{\bz}\big[\sum_{t=1}^T h_t(q_t^\star - d_t)_+ +b_t(d_t-q_t^\star)_+\big]= \sum_{k=0}^{T'-1} c_\infty\big[ \frac{1+z_k\epsilon}{2} \frac{1- z_k}{2} M + \frac{1-z_k\epsilon}{2} \frac{1+z_k}{2} M  \big]$.

As a result, \begin{align*}
    & L(\pi, \bz) =  C_{1}^{\pi,\bz}(0) - C_{1}^{\star,\bz}(0) \geq C_{1}^{\tilde{\pi},\bz}(0) - C_{1}^{\star,\bz}(0)\\
    &\geq \EE_{\bz}\bigg[\sum_{k=0}^{T'-1}  c_{\infty}\bigg(\big[\frac{1+z_k \epsilon}{2}(M-\tilde{q}_{3k+1})  + \frac{1-z_k\epsilon}{2} \tilde{q}_{3k+1}\big] - \big[ \frac{1+z_k\epsilon}{2} \frac{1- z_k}{2} M + \frac{1-z_k\epsilon}{2} \frac{1+z_k}{2} M  \big]\bigg)\bigg]\\
    &\geq \EE_{\bz}\bigg[\sum_{k=0}^{T'-1}  c_{\infty}\bigg(\big[\frac{1+z_k \epsilon}{2}( \frac{1+z_k}{2} M -\tilde{q}_{3k+1})  + \frac{1-z_k\epsilon}{2} ( \tilde{q}_{3k+1} - \frac{1+z_k}{2} M)\big] \bigg]\\
    &= \EE_{\bz}\bigg[\sum_{k=0}^{T'-1} {c_\infty z_k\epsilon} (\frac{1+z_k}{2} M - \tilde{q}_{3k+1}) \bigg]. 
\end{align*}
Now, for any distinct $\bz,\bz'$, adding and subtracting the same expectation gives
\begin{align*}
    L(\pi,\bz)+L(\pi,\bz')
    &\geq \EE_{\bz}\bigg[\sum_{k=0}^{T'-1}c_\infty z_k\epsilon
    \left(\frac{1+z_k}{2}M-\tilde q_{3k+1}\right)\bigg]
    +\EE_{\bz'}\bigg[\sum_{k=0}^{T'-1}c_\infty z'_k\epsilon
    \left(\frac{1+z'_k}{2}M-\tilde q_{3k+1}\right)\bigg]\\
    &=\EE_{\bz}\bigg[\sum_{k=0}^{T'-1}c_\infty\epsilon
    \bigg\{z_k\left(\frac{1+z_k}{2}M-\tilde q_{3k+1}\right)
    +z'_k\left(\frac{1+z'_k}{2}M-\tilde q_{3k+1}\right)\bigg\}\bigg]\\
    &\qquad+\EE_{\bz'}\bigg[\sum_{k=0}^{T'-1}c_\infty z'_k\epsilon
    \left(\frac{1+z'_k}{2}M-\tilde q_{3k+1}\right)\bigg]
    -\EE_{\bz}\bigg[\sum_{k=0}^{T'-1}c_\infty z'_k\epsilon
    \left(\frac{1+z'_k}{2}M-\tilde q_{3k+1}\right)\bigg]\\
    &\geq c_\infty M\epsilon N_d
    -c_\infty M\epsilon T'\lVert\bP_{\bz}-\bP_{\bz'}\rVert_{\mathrm{TV}} \geq c_\infty M\epsilon N_d
    -c_\infty M\epsilon T'\sqrt{D(\bP_{\bz}\Vert\bP_{\bz'})}\\
    &\overset{\mathrm{(i)}}{\geq}c_\infty M\epsilon N_d
    -3c_\infty M\epsilon^2T'\sqrt{N_d} \overset{\mathrm{(ii)}}{\geq}c_\infty M\epsilon(1-3\epsilon T')N_d
    \geq\frac{c_\infty M\epsilon}{2}N_d,
\end{align*}
where $N_d:=\sum_{k=0}^{T'-1}\bm{1}\{z_k\neq z_k'\}$. The TV step uses $0\leq\tilde q_{3k+1}\leq M$: the paired terms contribute at least $c_\infty M\epsilon N_d$, and the remaining random variable lies in $[0,c_\infty M\epsilon T']$. Bound (i) follows from
\begin{align*}
    D(\bP_{\bz}\Vert\bP_{\bz'})
    &\leq\sum_{k=0}^{T'-1}\bm{1}\{z_k\neq z'_k\}
    \max\{D(P^+\Vert P^-),D(P^-\Vert P^+)\}
    \leq6\epsilon^2N_d,
\end{align*}
and (ii) uses $\sqrt{N_d}\leq N_d$ for $N_d\geq 1$. Finally,
$\epsilon T'\leq1/6$ follows from $\epsilon<1/(2T)$ and $T\geq3T'$, so
$1-3\epsilon T'\geq1/2$. This verifies~\eqref{eq: assouad-separation-appendix} with
$\Delta_j\equiv c_\infty M\epsilon/2$.

\noindent\textbf{Putting all together.} With the specified $\Theta,\Ab,L$ and the verified separation condition, now we are ready to apply Lemma~\ref{lem: assouad} to obtain the desired lower bound. We apply the lemma with $m = T'$, indexing the Hamming coordinates by $k \in \{0, 1, \dots, T'-1\}$ to match the 0-based block convention used throughout the construction. Noticing that in this setting, we have $\cD = \{d_t^k\}_{t=1,k=1}^{T,N}$ follows the distribution $\bP_{\bz}^{\otimes N}$ under the environment determined by $\bz$, thus~\eqref{eq: assouad-lower-bound-appendix} yields
\begin{align*}
    \min_{\cA} \max_{\bz} \EE_{\cD\sim \bP_{\bz}^{\otimes N}}[L(\bz,\cA(\cD))]&\geq \frac{c_\infty M \epsilon}{8}\sum_{k=0}^{T'-1} \big(1 - \sqrt{\frac{1}{2}D(P_{k,+}\lVert P_{k,-})}\big).
\end{align*}
Noticing that for each $k \in \{0, 1, \dots, T'-1\},$ by the joint convexity of KL divergence, we have \begin{align*}
    D(P_{k,+} \lVert P_{k,-})\leq \frac{1}{2^{T'-1}} \sum\nolimits_{\bz \in \cZ^{T'}: z_k = 1} D(P^{\bz\otimes N} \lVert P^{\bz^{(k)}\otimes N}),
\end{align*}
with $\bz^{(k)}$ defined as the flip-one element induced by $\bz$ at its $k$-th coordinate. Then, by Lemma~\ref{lem: Bernoulli-KL},
    $D(P^{\bz \otimes N} \lVert P^{\bz^{(k)}\otimes N}) \leq 8N \epsilon^2, \forall \bz \in \cZ^{T'}$.
Selecting $\epsilon = \frac{1}{8\sqrt{N}}$, we have
$\sum_{k=0}^{T'-1} \big(1 - \sqrt{\frac{1}{2}D(P_{k,+}\lVert P_{k,-})}\big) \geq T'/4$, and this gives the desired $\Omega(c_\infty M T/\sqrt{N})$ lower bound.

\subsection{Proof of Theorem~\ref{thm: lower-bound-independent-identical}}

\noindent\textbf{Construction of the instance collection $\Theta$.} For the given $c_\infty,$ set $h_t = b_t = c_\infty$ for all $t\in [T]$ and pick $\epsilon \in (0,1/4) $ to be determined later. We construct two 
distributions supported over $\{0,M\}$ as
\begin{align*}
    P^+(d=0) = \frac{1}{2}+\epsilon,\quad P^+(d=M) = \frac{1}{2}-\epsilon, \quad P^-(d=0) = \frac{1}{2}-\epsilon,\quad P^-(d=M) = \frac{1}{2}+\epsilon.
\end{align*}
and set the corresponding two instances with shared cost coefficients $h_t = b_t = c_\infty$ and stationary distributions $\bP^+, \bP^-$ so that $\bP^\pm_t \equiv P^\pm$ for all $t\in [T]$. With such construction, we can represent two instances using binary class $\Theta = \{+, -\}$, where the parameter $z\in \Theta$ corresponds to the environment with distribution $\bP^{z}.$

\noindent\textbf{Action space $\Ab$ and loss $L$.} We set $\Ab$ as all policies $\pi$ that determine $T$-period order-up-to levels sequentially, possibly depending on all historical demand realizations $d_1,\dots,d_{t-1}$ at each $t$. For $z \in \Theta$ and $\pi \in \Ab$, the loss is the sub-optimality gap $L(\pi,z) := C_1^{\pi,z}(0) - C_1^{\star,z}(0)$.

\noindent\textbf{Separation condition.} By construction, the optimal base-stock levels are $s_t^{\star,+} \equiv 0$ under $\bP^+$ and $s_t^{\star,-} \equiv M$ under $\bP^-$. For any sequential policy $\pi$ with order-up-to level $q_t$ at period $t$, a direct calculation gives
$    L(\pi,+) \geq 2\epsilon c_\infty\EE^+\Big[\sum\nolimits_{t=1}^T q_t\Big],$
$L(\pi,-) \geq 2\epsilon c_\infty\EE^-\Big[\sum\nolimits_{t=1}^T (M-q_t)_+\Big]$
(both with equality whenever $q_t \leq M$ for all $t$). Together with $q_t \geq M - (M-q_t)_+$, this yields
\begin{align*}
    & L(\pi,\bP^+) + L(\pi,\bP^-) \geq 2\epsilon c_\infty\Big(\EE^+\big[\sum\nolimits_{t=1}^T \big(M - (M-q_t)_+\big)\big] + \EE^-\big[\sum\nolimits_{t=1}^T (M-q_t)_+\big]\Big)\\
    &\geq 2\epsilon c_\infty\Big(MT - \big\lvert \EE^+\big[\sum\nolimits_{t=1}^T (M-q_t)_+\big] - \EE^-\big[\sum\nolimits_{t=1}^T (M-q_t)_+\big] \big\rvert\Big)\\
    &\geq 2\epsilon c_\infty MT\Big(1 - \big\lVert (P^+)^{\otimes T} - (P^-)^{\otimes T} \big\rVert_\mathrm{TV}\Big) \geq 2\epsilon c_\infty MT\bigg(1 - \sqrt{\frac{T D(P^+\lVert P^-)}{2}}\bigg).
\end{align*}
To bound the KL divergence term, applying Lemma~\ref{lem: Bernoulli-KL} gives
$    D(P^+\lVert P^-)
    \le  \frac{4\epsilon^2}{(\frac{1}{2}-\epsilon)(\frac{1}{2}+\epsilon)}\leq 64\epsilon^2$
by $\epsilon \le \frac{1}{4}$.
Thus the separation lower bound
$    L(\pi,\bP^+) + L(\pi,\bP^-) \geq \epsilon c_\infty MT 
$
holds when
$    \epsilon \leq  \sqrt{\frac{1}{128T}}$. This verifies~\eqref{eq: assouad-separation-appendix} with $\Delta_1 = \epsilon c_\infty MT$.

\noindent\textbf{Putting all together.} Now 
selecting
   $ \epsilon = \sqrt{\frac{1}{128NT}}$
and applying Lemma~\ref{lem: assouad} with $m = 1$ gives
\begin{align*}
    &\min_{\cA}\max_{\bP \in \{\bP^+,\bP^-\}}\EE_{\cD\sim\bP^{\otimes NT}}\big[C_1^{\cA(\cD)}(0) - C_1^\star(0)\big] \geq \frac{\Delta_1}{4}\Big(1 - \big\lVert (P^+)^{\otimes NT} - (P^-)^{\otimes NT}\big\rVert_{\mathrm{TV}}\Big)\\
    &\qquad\geq \frac{c_\infty MT\epsilon}{8}  = \Omega\left(M c_\infty \sqrt{\frac{T}{N}}\right),
\end{align*}
as desired.